\newtheorem{theorem}{Theorem}
\newtheorem{lemma}[theorem]{Lemma}
\newtheorem{proposition}[theorem]{Proposition}
\newtheorem{corollary}[theorem]{Corollary}
\theoremstyle{definition}
\newcommand{\br}{\mathbb{R}}
\newcommand{\bn}{\mathbb{N}}
\newcommand{\bs}{\mathbb{S}}
\newcommand{\baru}{\bar{u}}
\newcommand{\barv}{\bar{v}}
\newcommand{\cd}{\mathcal{D}}
\newcommand{\lam}{\lambda}
\newcommand{\norm}[1]{\left\lVert#1\right\rVert}
\newcommand{\set}[1]{\left\{#1\right\}}
\newcommand{\inner}[2]{\langle#1,#2\rangle}
\newcommand{\tr}{\operatorname{Tr}}
\newcommand{\cl}{\mathrm{cl}}
\newcommand{\rank}{\operatorname{rank}}
\newcommand{\sgn}{\mathrm{sgn}}
\newcommand{\GD}{\mathrm{GD}}
\newcommand{\cupn}{\cup_{N=0}^\infty}
\title{Gradient Descent with Large Step Sizes: Chaos and Fractal Convergence Region} 
\author{Shuang Liang\\UCLA\\\texttt{liangshuang@g.ucla.edu}
    \And
    Guido Mont\'ufar\\UCLA \& MPI MiS\\\texttt{montufar@math.ucla.edu}
}
\date{}
\begin{document} 

\maketitle 

\begin{abstract} 
    We examine gradient descent in matrix factorization and show that under large step sizes the parameter space develops a fractal structure. We derive the exact critical step size for convergence in scalar-vector factorization and show that near criticality the selected minimizer depends sensitively on the initialization. Moreover, we show that adding regularization amplifies this sensitivity, generating a fractal boundary between initializations that converge and those that diverge. The analysis extends to general matrix factorization with orthogonal initialization. Our findings reveal that near-critical step sizes induce a chaotic regime of gradient descent where the training outcome is unpredictable and there are no simple implicit biases, such as towards balancedness, minimum norm, or flatness. 
\end{abstract}




\section{Introduction}

Understanding the properties of gradient descent in non-convex overparametrized optimization has been a central pursuit in modern machine learning. 
The step size, or learning rate, is a critical factor determining the dynamics and convergence of gradient descent optimization. In particular, it has a major influence on the returned solution and its generalization performance \citep{nar2018step, Jastrzebski2020The, lewkowycz2020large, cohen2021gradient}. 
Large step sizes have been associated with flat and balanced minimizers of the training objective \citep{wu2018sgd, wang2022large, menon2024geometry}, 
sparse feature representations \citep{nacson2022implicit, andriushchenko2023sgd}, 
smooth solution functions \citep{mulayoff2020unique, nacson2023implicit}, 
and improved generalization \citep{ba2022highdimensional, qiao2024stable, sadrtdinov2024large}. 
Yet, the theoretical understanding of large step sizes remains limited, even in simple convex settings. 
Our investigation is motivated by two fundamental questions: 
\begin{center}
    \emph{Given an initial parameter, what is the critical (largest) step size that allows convergence?} 
    
    \emph{What kind of implicit biases are induced by gradient descent with near-critical step size?} 
\end{center}

Addressing these questions is challenging, since large step sizes can produce highly complex, non-monotonic, and even chaotic trajectories. 
In particular, trajectories may not converge to stationary points but instead enter periodic or chaotic oscillations \citep{chen2023beyond,chen2024from,ghosh2025learning}, 
or converge to a statistical distribution \citep{kong2020stochasticity}; 
trajectories that eventually converge to a minimizer may still undergo chaotic oscillations during early training \citep{zhu2023understanding, kreisler2023gradient, song2023trajectory}; 
and trajectories with nearby initializations can diverge exponentially from one another \citep{herrmann2022chaotic, jimenez2025leveraging}. 
Moreover, empirically, the set of step sizes and the set of initializations leading to convergence can form fractal structures \citep[see, respectively,][]{sohl2024boundary,zhu2023understanding}. 
In this work, we provide precise answers to the above questions in the context of matrix factorization problems with rigorous theoretical characterizations.

We begin by examining gradient descent in a simplified problem to factor a scalar target as the inner product of two vectors. 
We show that two striking phenomena emerge at near-critical step sizes: 
(i) 
initializations in arbitrarily small sets can converge to global minimizers with arbitrarily large norm, sharpness or imbalance, or to a saddle, indicating a lack of simple implicit biases; 
and 
(ii) the set of initializations that converge, that is, the convergence region, has a fractal structure, indicating that the convergence is unpredictable near the boundary (see Figure~\ref{fig:intro}). 
Interestingly, while in the unregularized setting the convergence region is regular (in the almost everywhere sense), it becomes fractal once regularization is introduced. 
Further quantifying the unpredictability of the training outcome, we show that the topological entropy of the gradient descent system is at least $\log 3$. 
Also, we show that the fractal nature of the boundary of the convergence region is captured by a self-similar curve whose fractal dimension is estimated as $1.249$. 
To our knowledge, beyond the univariate training loss setting studied by \citet{kong2020stochasticity, chen2024from}, this is the first rigorous characterization of chaos in gradient descent optimization.

\begin{figure}[t]
    \centering
    \includegraphics[width=0.75\linewidth]{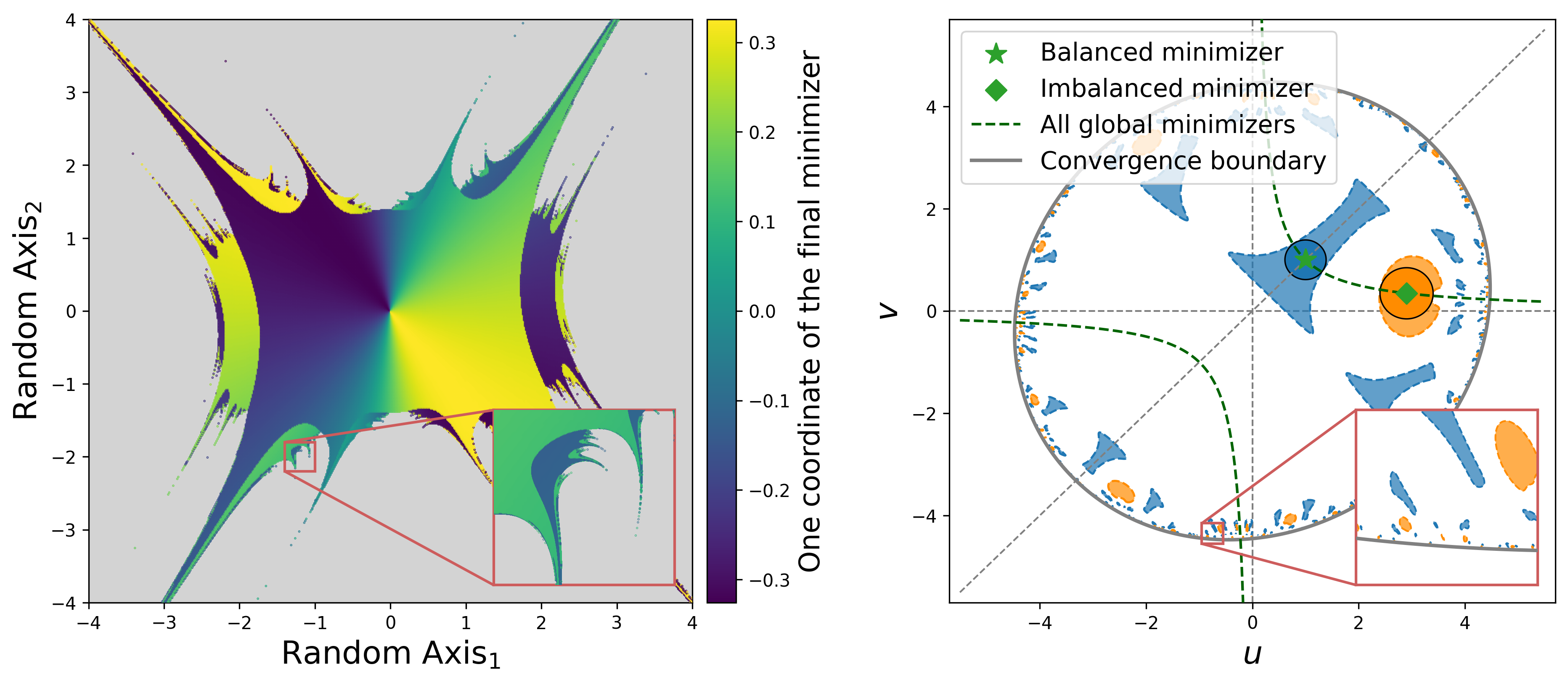}
    \caption{
    The training outcome of gradient descent depends sensitively on initializations near the convergence boundary. 
    Left: Gradient descent applied to $L(u,v)=(u^\top v-1)^2+0.3(\|u\|_2^2+\|v\|_2^2)$ with $(u,v)\in \br^{10}$. 
    Shown is a random two-dimensional slice of $\mathbb{R}^{10}$. 
    Gray points are initializations leading to divergence; 
    other points are colored by a coordinate value of the converged minimizer. 
    The convergence boundary has a fractal structure.  
    Right: Gradient descent applied to $L(u,v)=(uv-1)^2$ with $(u,v)\in \br^{2}$. 
    Shown are a balanced minimizer $p$ (green star), its neighborhood $O_p$ (blue disk), and the preimage of $O_p$ under $\GD^6$ (blue region with dashed boundary). 
    An imbalanced minimizer (green diamond) is shown similarly, with its neighborhood and preimages depicted in orange. 
    The convergence boundary is smooth but the converged point is sensitive to initialization. 
    }
    \label{fig:intro}
\end{figure}

We then extend our analysis to general matrix factorization by showing that, when the initialization lies in a subspace defined by a set of orthogonal conditions, the gradient descent dynamics decouples
into several independent scalar factorization dynamics. 
Hence, all results established for scalar factorization remain valid on this subspace.  
This includes the commonly used identity initialization, as well as the training of linear residual networks \citep{hardt2017identity, bartlett2018gradient}.

We further analyze the mechanisms underlying these phenomena 
and trace them to a folding behavior of the update map 
$\GD(\theta)=\theta-\eta \nabla L(\theta)$: 
the map $\GD$ sends a region $\mathcal{C}$ onto a superset of $\mathcal{C}$ in a multi-fold covering manner. 
Consequently, if $\mathcal{C}$ contains a convergence boundary that is invariant under $\mathrm{GD}$, then the boundary exhibits self-similarity; 
moreover, $\mathrm{GD}$ is mixing on the boundary, giving rise to the sensitivity to initialization. 
We show that for general neural networks with polynomial activations $\GD$ indeed acts as a covering map on the parameter space outside a measure-zero~set.

Overall, our results imply that near-critical step sizes place gradient descent in a chaotic regime where the training outcome is unpredictable: infinitesimal perturbations on the initial conditions can lead to substantially different outcomes. 
This contrasts sharply with the stable dynamics observed at smaller step sizes. 
We empirically validate the presence of chaos in matrix factorization with general initializations, deep matrix factorization, and deep ReLU networks trained on real-world datasets.

\subsection{Main Contributions} 

The goal of this article is to provide rigorous insights into the dynamics of gradient descent with large step sizes in matrix factorization. 
Our contributions can be summarized as follows:  

\begin{itemize}[leftmargin=*]
    \item 
    We derive the exact critical step size for convergence in scalar factorization and show that the convergence region is equal almost everywhere to a bounded and smooth domain. 
    At critical step sizes, we show that infinitesimal perturbations to the initialization can lead to global minimizers with arbitrarily large norm, or to a saddle point. 
    Moreover, for initializations randomly sampled from arbitrarily small sets, the distribution of the sharpness at the converged minimizers has a support containing all possible values below $2/\eta$, which evidences 
    chaos at a distributional level. 
    Also, we show that the topological entropy of gradient descent system is at least $\log 3$.

    \item 
    We show that when using $\ell_2$ regularization, gradient descent selects either the minimal distance solution or the maximal distance solution among all global minimizers. 
    At critical step size, infinitesimal perturbations of the initialization can switch this selection from one to the other. 
    Furthermore, adding $\ell_2$ regularization yields a fractal convergence boundary whose geometry is captured by a self-similar shape in $\br^2$ after symmetry reduction. 
    
    \item 
    We extend these results to general matrix factorization with initializations in a subspace that includes the identity initialization. 
    In particular, gradient descent exhibits chaos and fractal convergence boundary on this subspace.

    \item 
    We reason that chaos arises from a folding behavior of the gradient descent update map. 
    For general neural networks with polynomial activations, we show that gradient descent is a covering map on each connected component of the parameter space after removing a measure-zero set. 
    We empirically validate these chaotic phenomena in deep ReLU networks trained on real-world data. 
    
\end{itemize}

\subsection{Related Work}

\paragraph{Gradient Descent Dynamics Under Large Step Sizes} 

A main line of research on large-step-size gradient descent focuses on the non-monotonic convergence of the loss and its impact on the final model. 
Key perspectives include the \emph{Edge of Stability} \citep{cohen2021gradient, ma2022multiscale, agarwala2023second, damian2023selfstabilization, ahn2022understanding, ahn2023learning, zhu2023understanding, wang2023good} and the catapult phenomenon \citep{lewkowycz2020large, kalra2023phase, meltzer2023catapult, zhu2024catapults, zhu2024quadratic}. 
Compared to these works, our analysis extends to even larger (near-critical) step sizes. 
%
Another line of work shows how a large step size can enhance feature learning in one step of gradient descent 
\citep{ba2022highdimensional,JMLR:v25:23-1543,moniri2025theorynonlinearfeaturelearning}, also comparing different parametrizations  \citep{sonthalia2025low}. 
Similar observations about the role of the step size in feature learning have also been made for SGD \citep{andriushchenko2023sgd, lu2024benign} and pre-training \citep{sadrtdinov2024large}. 
%
\citet{ziyin2022sgd} observed that for a certain range of step sizes, SGD can have undesirable behavior, such as convergence to local maxima. 
For linear networks, \citet{kreisler2023gradient} identified a monotonically decreasing quantity (sharpness) along the gradient descent trajectories. 
\citet{wang2022large} showed large step size induces an implicit bias towards balanced minimizers in matrix factorization. 
\citet{cruaciun2024convergence} proved the existence of a step size threshold above which the algorithm diverges. 
A similar problem is studied by \citet{marion2024deep}. 
Large-step-size gradient descent has also been investigated in logistic regression \citep{wu2023implicit, pmlr-v247-wu24b, meng2024gradient}, and some of the analysis has been further extended to shallow networks \citep{cai2024large}.

\paragraph{Chaos in Optimization}
\citet{van2012chaotic} empirically observed chaos, specifically positive finite-time Lyapunov exponents, for several variants of steepest descent methods. 
The phenomenon named \emph{period-doubling bifurcation route to chaos} has been widely observed in recent literature \citep{kong2020stochasticity, chen2023beyond, chen2024from, meng2024gradient, danovski2024dynamical, ghosh2025learning}. 
Among them, only \citet{kong2020stochasticity, chen2024from} provided rigorous analyses for the chaotic dynamics. 
They showed the emergence of \emph{Li-Yorke chaos}, i.e., the existence of periodic orbits of arbitrary periods, for univariate training losses. 
In comparison, our setting is high-dimensional. 
Additionally, we establish not only the existence of all periodic orbits, but also the sensitivity of the limiting point to initialization, which is more relevant to practical optimization, particularly the implicit bias of the optimization algorithm.


\section{Preliminaries}
\label{sec:pre}

We focus on the following shallow matrix factorization problem with $\ell_2$ regularization: 
\begin{equation}\label{eq:general-fac}
    \min_{\theta=(U,V)} L(\theta)
    =\frac{1}{2}\norm{U^\top V - Y}_F^2  + \frac{\lambda}{2}(\norm{U}_F^2 + \norm{V}_F^2),  
\end{equation}
where $\lambda \geq0$, $U,V\in \br^{d \times d_y}$ and the target matrix $Y\in \br^{d_y \times d_y}$ is a diagonal matrix. 
The diagonality of $Y$ is a weak assumption that can be achieved by reparametrization. 
Specifically, for arbitrary $Y$ consider the singular value decomposition $Y=P_Y \Sigma_Y Q_Y^\top$ 
and the rotations $U=\tilde{U}P_Y^\top$ and $V=\tilde{V}Q_Y^\top$. 
The objective then becomes $\tilde{L}(\tilde{U},\tilde{V})=\frac{1}{2}\|\tilde{U}^\top\tilde{V}-\Sigma_Y\|_F^2+ \frac{\lambda}{2}(\|\tilde{U}\|_F^2 + \|\tilde{V}\|_F^2)$, 
where the target is diagonal. Moreover, the optimization dynamics in minimizing $\tilde{L}(\tilde{U},\tilde{V})$ are identical to those in minimizing $L(U,V)$ up to the rotation (see details in Appendix~\ref{app:diag-Y}).

Gradient optimization in problem \eqref{eq:general-fac} has been extensively studied, especially in small-step-size regimes \citep[see, e.g.,][]{saxe2013exact, arora2018a, yun2021a, du2018algorithmic, li2021towards, min2023convergence, chen2024gradient}. 
Its landscape enjoys a favorable structure: the global minimum is attained and every stationary point is either a global minimizer or a strict saddle \citep{li2019symmetry, pmlr-v108-valavi20a, zhou2022optimization}. 
Nevertheless, this problem retains a key complexity typical of neural network optimization: 
global minimizers can differ substantially (although they yield the same end-to-end matrix). 
In particular, in the unregularized case, both the parameter norm $\|U\|_F^2+\|V\|_F^2$ and the imbalance $\|U U^\top - V V^\top\|_F^2$ can be arbitrarily large on the set of minimizers. 
This makes the problem a natural testbed for studying how hyperparameter choices affect the implicit biases of parameter optimization algorithms. 
We note that other forms of regularization have also been studied in the literature, such as $\|U U^\top - V V^\top\|_F^2$ \citep{tu2016low, ge2017no}.

We consider gradient descent with constant step size $\eta$ to solve problem~\eqref{eq:general-fac}: 
\begin{equation*}
    U_{t+1} = U_t - \eta V_t(V_t^\top U_t-Y^\top)-\eta \lambda U_t, \quad V_{t+1}=V_t - \eta U_t(U_t^\top V_t-Y)-\eta \lambda V_t. 
\end{equation*}

We define the gradient descent update map $\GD_\eta(\theta) = \theta-\eta\nabla L(\theta)$ so that $(U_{t+1},V_{t+1})=\GD_\eta(U_t,V_t)$. 
The \emph{basin of attraction} of a stationary point $\theta^*$ of $L$ is the set of all initializations that converge to $\theta^*$, $\{\theta \colon \lim_{N\to \infty}\GD^N_\eta(\theta) = \theta^*\}$.\footnote{In dynamical systems the basin of attraction is often defined for attractors. Here we extend the terminology to include all stationary points, such as saddles, for simplicity of presentation.}  
The \emph{convergence region} for step size $\eta$, $\mathcal{D}_\eta$, is the union of the basins of attraction of all global minimizers. 
The \emph{critical step size} $\eta^*(\bar{U},\bar{V})$ for an initialization $(\bar{U},\bar{V})$ is defined as $\eta^*(\bar{U},\bar{V}) = \sup\set{\eta \colon \lim_{N\to \infty}\GD^N_\eta(\theta)\in \mathcal{M}}$, i.e., the supremum of the step sizes that allow convergence to a global minimizer, where $\mathcal{M}=\set{\theta\colon L(\theta)=\min_{\theta'}L(\theta')}$ denotes the set of all global minimizers. 
A set $S$ is said to be invariant under $\GD_\eta$ if $\GD_\eta(S)\subset S$.

We introduce notions for describing fractal geometry. 
A fractal is typically defined as a shape that exhibits self-similarity and fine structure at arbitrarily small scales. 
Formally, we say a set $S\subset \br^n$ is \emph{self-similar with degree k} if there exist $k$ homeomorphisms, $\phi_i\colon S\to S, i=1,\cdots,k$, that satisfy 
(i) $S=\cup_{i=1}^k \phi_i(S)$ and 
(ii) there exists an open set $O\subset S$ such that $\cup_{i=1}^k \phi_i(O)\subset O$ and $(\phi_i(O))_{i=1}^k$ are pairwise disjoint. 
Condition (i) states that $S$ can be 
covered by $k$ smaller copies of itself. Condition (ii), which is known as the open set condition, 
ensures that those copies do not overlap much. 
This definition is closely related to an Iterated Function System (IFS), a standard tool for analyzing fractals \citep[see, e.g.,][]{hutchinson1981fractals, falconer2013fractal}. 
However, unlike IFS where the maps are required to be contractive and the set $S$ to be compact, the shapes considered in our study may be unbounded.

Finally, we introduce notions related to chaos. 
Although there is no universal definition of chaos, one common characterization of chaos is the sensitivity to initialization, which is often known as the \emph{butterfly effect}. 
In the context of optimization, 
this means that infinitesimal perturbations of the initial condition can lead to substantially different training outcomes (e.g., turning convergence into divergence, or shifting convergence from one minimizer to another qualitatively different minimizer). 
In dynamical systems, 
a key measure of chaos is the \emph{topological entropy}. 
Informally, the topological entropy $h(F)$ of a dynamical system $F$ measures the exponential growth rate of the number of distinct trajectories of $F$ as a function of the trajectory length. 
We defer the formal definition to Appendix~\ref{app:entropy}. 
A positive topological entropy is widely regarded as a hallmark of chaos \citep[see, e.g.,][]{katok1995introduction, robinson1998dynamical, vries2014topological}. 
In this paper, we adopt the above notions for fractals and chaos. 
We note that different definitions and settings exist, and discuss their relation to our study in Appendix~\ref{app:chaos-literature}.


\section{Simplified Matrix Factorization} 
\label{sec:scalar}

In this section, we study gradient descent in the special case of problem~\eqref{eq:general-fac} where $d_y=1$, i.e., 
factorizing a scalar $y$ as the inner product of two vectors as $u^\top v$. 
This and similar scalar factorization settings have served as canonical models for understanding large-step-size dynamics \citep{lewkowycz2020large, wang2022large, kreisler2023gradient, ahn2023learning, zhu2023understanding, kalra2025universal}. 
Compared to these, our analysis extends to the full spectrum of step sizes, rather than restricting to bounded step sizes. 
Proofs for results in this section are in Appendix~\ref{app:scalar}.

\subsection{Chaos at Large Step Size} \label{sec:unreg} 

Consider the unregularized scalar factorization problem: 
\begin{equation}\label{eq:scalar-fac}
    \min_{\theta=(u,v)\in \br^{2d}}\ L(\theta) = \frac{1}{2}(u^\top v- y)^2,
\end{equation}
where $u,v\in\br^d$, $d\geq 1$ and $y\in \br$. 
Problem~\eqref{eq:scalar-fac} retains several key complexities of the general problem \eqref{eq:general-fac}, including non-convexity, high-dimensionality, non-Lipschitz gradients, and an unbounded set of global minimizers $\mathcal{M}=\set{\theta\colon u^\top v=y}$.

In the following result, we characterize the critical step size for problem~\eqref{eq:scalar-fac} and the emergence of chaos under critical step sizes. 
We use $B(\theta,\varepsilon)$ to denote the ball of radius $\varepsilon$ centered at $\theta$.

\begin{theorem}
    \label{thm:scalar-unreg}
    Consider gradient descent with step size $\eta$ for solving problem~\eqref{eq:scalar-fac}. 
    The following holds: 
    \begin{itemize}[leftmargin=*]
        \item \textbf{Critical Step Size:} 
        For almost all initializations $(\bar{u},\barv)\in \br^{2d}$, 
        the algorithm converges to a global minimizer if $\eta< \eta^*(\bar{u},\barv)$ and fails to converge to any minimizer if $\eta>\eta^*(\bar{u},\barv)$, where the critical step size is given by (when $y=0$, we adopt the convention $1/0=+\infty$):
        \begin{equation}\label{eq:critical-scalar}
            \eta^*(\baru,\barv)
            =
            \min\set{\frac{1}{|y|}, \ 
            \frac{8}{ \norm{\baru}_2^2+\norm{\barv}_2^2+  \sqrt{(\norm{\baru}_2^2+\norm{\barv}_2^2)^2- 16 y(\bar{u}^\top\bar{v}-y)}}}. 
        \end{equation}
        Therefore, when $\eta$ satisfies $\eta |y|<1$, 
        the convergence region $\mathcal{D}_\eta$ is equal almost everywhere to 
        $\mathcal{D}'_\eta=\set{(u,v)\in\br^{2d}\colon \|u\|_2^2+\|v\|_2^2+\sqrt{(\|u\|_2^2+\|v\|_2^2)^2-16y(u^\top v-y)}<\frac{8}{\eta}}$.

        \item \textbf{Sensitivity to Initialization:} 
        Fix a step size $\eta$ that satisfies $\eta |y|<1$. 
        Let $\gamma_{\min} = \min\set{\|\theta\|\colon \theta\in \mathcal{M}}$ be the minimal norm over all global minimizers. 
        Given arbitrary $\theta \in \partial \mathcal{D}'_\eta$, $\varepsilon, K>0$ 
        and $\gamma \in [\gamma_{\min},\infty)$, 
        there exist 
        $\theta', \theta'',\theta''' \in B(\theta,\varepsilon)$ such that, as $N$ tends to infinity, 
        $\GD^N_\eta(\theta')$ converges to a global minimizer with norm $\gamma$, 
        $\GD^N_\eta(\theta'')$ converges to a global minimizer with $\|uu^\top-vv^\top\|_F>K$, 
        and $\GD_\eta^N(\theta''')$ converges to $(\boldsymbol{0},\boldsymbol{0})$, which is a saddle when $y\neq 0$. 
        
        \item \textbf{Trajectory Complexity:} 
        Assume $\eta |y|<1$. The topological entropy of the gradient descent system $\GD_\eta$ satisfies $h(\GD_\eta) \geq \log 3$. 
        Moreover, $\GD_\eta$ has periodic orbits of any positive integer period. 
    \end{itemize} 
\end{theorem}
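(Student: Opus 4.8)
The plan is to exhibit a compact $\GD_\eta$-invariant set on which the dynamics has the degree-$3$ Chebyshev map as a one-dimensional topological factor, and then read off both the entropy bound and the periodic orbits. First I would reduce: writing $r_t := u_t^\top v_t - y$ and passing to $p_t := u_t + v_t$, $q_t := u_t - v_t$ diagonalizes the recursion to $p_{t+1} = (1-\eta r_t)p_t$, $q_{t+1} = (1+\eta r_t)q_t$ with $r_t = \tfrac14(\norm{p_t}_2^2 - \norm{q_t}_2^2) - y$, so the orbit is governed by the scalars $P_t := \norm{p_t}_2^2$, $Q_t := \norm{q_t}_2^2$. Set $\zeta_t := \eta(\norm{u_t}_2^2 + \norm{v_t}_2^2) - 4 - \eta^2 y r_t$; a direct computation from the recursions for $\norm{u_t}_2^2+\norm{v_t}_2^2$ and $r_t$ yields the key identity
\[
  \zeta_{t+1} \;=\; \zeta_t\,\bigl(1 + \eta^2 r_t(r_t + y)\bigr).
\]
Since $\eta|y|<1$, the factor obeys $1 + \eta^2 r_t(r_t+y) \ge 1 - \tfrac14\eta^2 y^2 > 0$, so $X := \set{\zeta_0 = 0}$ is forward invariant under $\GD_\eta$; it is essentially the boundary $\partial\mathcal D'_\eta$ from the first part of the theorem. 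It is also compact: Cauchy--Schwarz gives $\norm{u}_2^2+\norm{v}_2^2 \ge 2\abs{u^\top v}$, which combined with $\zeta_0=0$ forces $\sigma := \eta r \in [-2,2]$ (the inequality $4+\eta^2 y r \ge 2\abs{\eta(r+y)}$ is equivalent to $\sigma\in[-2,2]$ when $\abs{\eta y}<2$), whence $\norm{u}_2^2+\norm{v}_2^2 = 4/\eta + y\sigma$ is bounded. Finally, the continuous map $\pi(u,v) := \eta(u^\top v - y) = \sigma$ is onto $[-2,2]$, since for each $\sigma\in[-2,2]$ the constraints $u^\top v = y+\sigma/\eta$ and $\norm{u}_2^2+\norm{v}_2^2 = 4/\eta+y\sigma$ are compatible with $\norm{u}_2^2+\norm{v}_2^2\ge 2\abs{u^\top v}$.

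Then I would identify the factor. Substituting $\eta(\norm{u_t}_2^2+\norm{v_t}_2^2) = \zeta_t + 4 + \eta^2 y r_t$ into $r_{t+1} = r_t\bigl(1 - \eta(\norm{u_t}_2^2+\norm{v_t}_2^2) + \eta^2 r_t(r_t+y)\bigr)$ collapses it to $\sigma_{t+1} = \sigma_t(\sigma_t^2 - 3 - \zeta_t)$, which on $X$ reads $\sigma_{t+1} = \sigma_t^3 - 3\sigma_t =: g(\sigma_t)$; here $g$ maps $[-2,2]$ onto itself and equals the Chebyshev polynomial $T_3$ after the linear conjugacy $\sigma = 2x$ (one has $g(2x)=2T_3(x)$). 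Thus $\pi$ is a topological factor map from $(X,\GD_\eta|_X)$ onto $([-2,2],g)$, so $h(\GD_\eta) \ge h(\GD_\eta|_X) \ge h(g) = \log 3$; the last equality is classical ($g$ has exactly three laps $[-2,-1],[-1,1],[1,2]$, each mapped homeomorphically onto all of $[-2,2]$, so $g^n$ has $3^n$ laps and $h(g)=\lim_n\tfrac1n\log 3^n$ by Misiurewicz--Szlenk). This establishes $h(\GD_\eta)\ge\log 3$.

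For the periodic orbits I would work with the planar conjugate of $\GD_\eta$: for $d=1$, $\GD_\eta$ is linearly conjugate on $\br^2$ to $F(p,q) := \bigl((1-\eta r)p,(1+\eta r)q\bigr)$, $r=\tfrac14(p^2-q^2)-y$; for $d\ge 2$ one restricts $\GD_\eta$ to the invariant plane $\set{(ae,be):a,b\in\br}$ ($\norm e_2=1$), where it acts as the same $F$. Now $E := X\cap\br^2$ is the invariant ellipse $\tfrac\eta4(2-\eta y)p^2 + \tfrac\eta4(2+\eta y)q^2 = 4-\eta^2 y^2$, parametrized by $\sigma\in[-2,2]$ through explicit nonnegative $P(\sigma),Q(\sigma)$, and $\pi|_E$ is four-to-one onto $[-2,2]$ (two-to-one only at $\sigma=\pm2$, where $P$ or $Q$ vanishes). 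Being conjugate to $T_3$, $g$ has for every $n$ periodic points realizing every itinerary over the Markov partition $\set{[-2,-1],[-1,1],[1,2]}$ (labelled $1,2,3$). Given a period-$n$ point $\sigma_\ast$ of $g$ with $n\ge 2$, its orbit lies in $(-2,2)$, so its $\pi|_E$-fibre is $\set{(\pm p_\ast,\pm q_\ast)}$ with $p_\ast,q_\ast\ne0$, and since $r_t=\sigma_t/\eta$ along the orbit, $F^n$ acts on this fibre by $(p,q)\mapsto(Ap,Bq)$ with $A=\prod_{t<n}(1-\sigma_t)$, $B=\prod_{t<n}(1+\sigma_t)$; invariance of $E$ returns $F^n(p_\ast,q_\ast)$ to the fibre, forcing $A,B\in\set{\pm1}$ with $\sgn A = (-1)^{\#\set{t:\sigma_t>1}}$, $\sgn B=(-1)^{\#\set{t:\sigma_t<-1}}$. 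Hence if the itinerary of $\sigma_\ast$ uses the symbols $3$ and $1$ each an even number of times, then $A=B=1$, $F^n$ fixes the fibre pointwise, and (as $\sigma_\ast$ has exact $g$-period $n$) these four points have exact $F$-period $n$. To cover every $n$: for $n=1$ take $\sigma_\ast=0$ (a fixed point of $F$); for $n=2$ take $\sigma_\ast=2$, whose two-point fibre $(\pm p_\ast,0)$ is swapped by $F$; for $n\ge3$ take the period-$n$ point with symbolic word $(2,\dots,2,3,3)$ if $n$ is even and $(1,1,2,\dots,2)$ if $n$ is odd --- both are primitive and have an even number of $1$'s and of $3$'s. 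Thus $F$, hence $\GD_\eta$, has periodic orbits of every positive integer period.

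The hard part will be the last step: the whole argument hinges on the identity $\zeta_{t+1}=\zeta_t(1+\eta^2 r_t(r_t+y))$, which cheaply produces the compact invariant set $X$ and the Chebyshev factor $g$, but promoting periodic points of $g$ back to periodic points of $\GD_\eta$ \emph{of the same period} requires the monodromy/parity bookkeeping above, since a naive lift a priori only controls the period up to a factor of $4$ coming from the fibres of $\pi|_E$.
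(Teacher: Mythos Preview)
Your proposal addresses only the \textbf{Trajectory Complexity} bullet; the \textbf{Critical Step Size} and \textbf{Sensitivity to Initialization} claims are left untouched. For those, the paper works in the quotient system obtained from $T(u,v)=(u^\top v - y,\,\|u\|^2+\|v\|^2)$, builds a Lyapunov-like function $Q(z,w)=w+\sqrt{w^2-16\mu z}$ on the reduced plane, shows $Q$ is monotone on the regions $\{Q\lessgtr 8\}$ (yielding the exact critical step size), and then uses a detailed analysis of the three inverse branches of the quotient map to prove that every neighborhood of a boundary point contains preimages of any prescribed point of $\Omega$ (yielding sensitivity). None of this is implied by your Chebyshev factor.

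For the part you do treat, your reduction coincides with the paper's. Your invariant set $X=\{\zeta=0\}$ is exactly $\partial\mathcal D_\eta'$ (the paper's line $\{w=\mu z+4\}$ after rescaling), the identity $\zeta_{t+1}=\zeta_t\bigl(1+\eta^2 r_t(r_t+y)\bigr)$ is correct, and your one-dimensional factor $g(\sigma)=\sigma^3-3\sigma$ is the paper's $\tilde F$. Your entropy computation (Chebyshev conjugacy and Misiurewicz--Szlenk lap counting) is equivalent to the paper's (conjugacy to a slope-$\pm3$ piecewise-linear map and a cited entropy formula).

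Where you go beyond the paper is the lift of periodic orbits, and this is a genuine contribution. The paper exhibits periodic orbits of every period for the one-dimensional factor $\tilde F$ (via Li--Yorke) and then asserts that $\GD_\eta$ inherits them; but the semi-conjugacy $T$ has generic fibre $S^{d-1}\times S^{d-1}$, on which $\GD_\eta^n$ acts as $(p,q)\mapsto(Ap,Bq)$ with $A,B\in\{\pm1\}$, and for $d\ge2$ the antipodal map on $S^{d-1}$ has no fixed point --- so without controlling the signs the period could \emph{a priori} double or quadruple. Your parity bookkeeping (choosing itineraries with an even count of each of the symbols $1$ and $3$, and handling $n\in\{1,2\}$ separately via $\sigma_\ast\in\{0,2\}$) forces $A=B=+1$ and guarantees the lifted orbit has the exact period $n$. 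The specific words $(2,\dots,2,3,3)$ and $(1,1,2,\dots,2)$ are primitive and meet the parity requirement, so the construction is sound; this step is glossed over in the paper.
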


Theorem~\ref{thm:scalar-unreg} provides a necessary and sufficient convergence condition for gradient descent in problem~\eqref{eq:scalar-fac}. 
The critical step size \eqref{eq:critical-scalar} consists of two terms. The first term arises because: when $\eta|y| > 1$, all global minimizers become unstable and only attract a measure-zero set. 
The second term characterizes the full convergence region, which is equal almost everywhere to an ellipsoid $\mathcal{D}_{\eta}'$ (see right panel of Figure~\ref{fig:intro}). 
Note that this convergence result is global: it holds for all step sizes $\eta$ and initializations $\theta_{0}$, except for a null set in the $(\eta, \theta_0)$-space. 
In contrast, the previous work of \citet{wang2022large} only showed convergence for step sizes in the range $\eta<1/(3|y|)$, and initializations in a strict subset of $\mathcal{D}_\eta'$ (see Figure~\ref{fig:wang}). 
A comparison between the proof techniques used for Theorem~\ref{thm:scalar-unreg} and those used by \citet{wang2022large} is provided in Appendix~\ref{app:proof-unreg}.

By Theorem~\ref{thm:scalar-unreg}, gradient descent in problem~\eqref{eq:scalar-fac} exhibits a strong form of sensitivity to initial condition. 
Note that in problem~\eqref{eq:scalar-fac}, the squared norm of the parameter coincides with the loss sharpness $\lambda_{\max}(\nabla^2 L)$ at global minimizers (see Appendix~\ref{app:proof-unreg}). 
Hence, Theorem~\ref{thm:scalar-unreg} shows that at critical step size, infinitesimal perturbations of the initialization can send the trajectory to a minimizer with arbitrarily large norm, sharpness or imbalance, or to a saddle. 
This is a hallmark of unpredictability: it is impossible to reduce the error in the prediction of the converging point by improving the precision in the specification of the initialization. 
We remark that this form of reachability from an arbitrarily small range of initial values is familiar in chaos theory, 
such as the Julia sets in complex systems and the Wada basin boundaries 
\citep[see, e.g., ][]{devaney1987introduction, nusse1996wada, aguirre2001wada}. 
However, these classical frameworks depend on properties not satisfied by our setting, such as complex differentiability or invertibility of the system map, and thus do not apply here.

Theorem~\ref{thm:scalar-unreg} also quantitatively measures chaos in gradient descent in problem~\eqref{eq:scalar-fac}: 
the topological entropy is positive and is at least $\log 3$. 
Roughly speaking, the number of distinct gradient descent trajectories of length $N$ grows at a rate of $3^N$ 
(further interpretation for topological entropy is provided in Appendix~\ref{app:entropy}). 
Beyond entropy, another aspect of trajectory complexity is shown: 
gradient descent admits periodic orbits of any positive integer period. 
This is closely related to Li-Yorke chaos and was shown for gradient descent in univariate loss functions \citep{kong2020stochasticity, chen2024from}.

While Theorem~\ref{thm:scalar-unreg} identifies the convergence region up to a measure-zero set, we now provide a complete description of the region. 
In Appendix~\ref{app:pre-unreg}, we show that every minimizer with squared norm larger than $2/\eta$ is Lyapunov-unstable and that the basins of attraction of unstable minimizers and of the saddle have measure zero. 
By Theorem~\ref{thm:scalar-unreg}, these measure-zero basins intersect $\partial \mathcal{D}_\eta'$ in arbitrarily small neighborhoods and exhibit fractal structure (see left panel of Figure~\ref{fig:sec31}). 
The full convergence region $\mathcal{D}_\eta$ is then the union of the smooth domain $\mathcal{D}_\eta'$ and the basins of unstable minimizers, excluding the basin of the saddle $(\boldsymbol{0},\boldsymbol{0})$ when $y\neq0$.

\begin{figure}[t]
    \centering
    \includegraphics[width=0.70\linewidth]{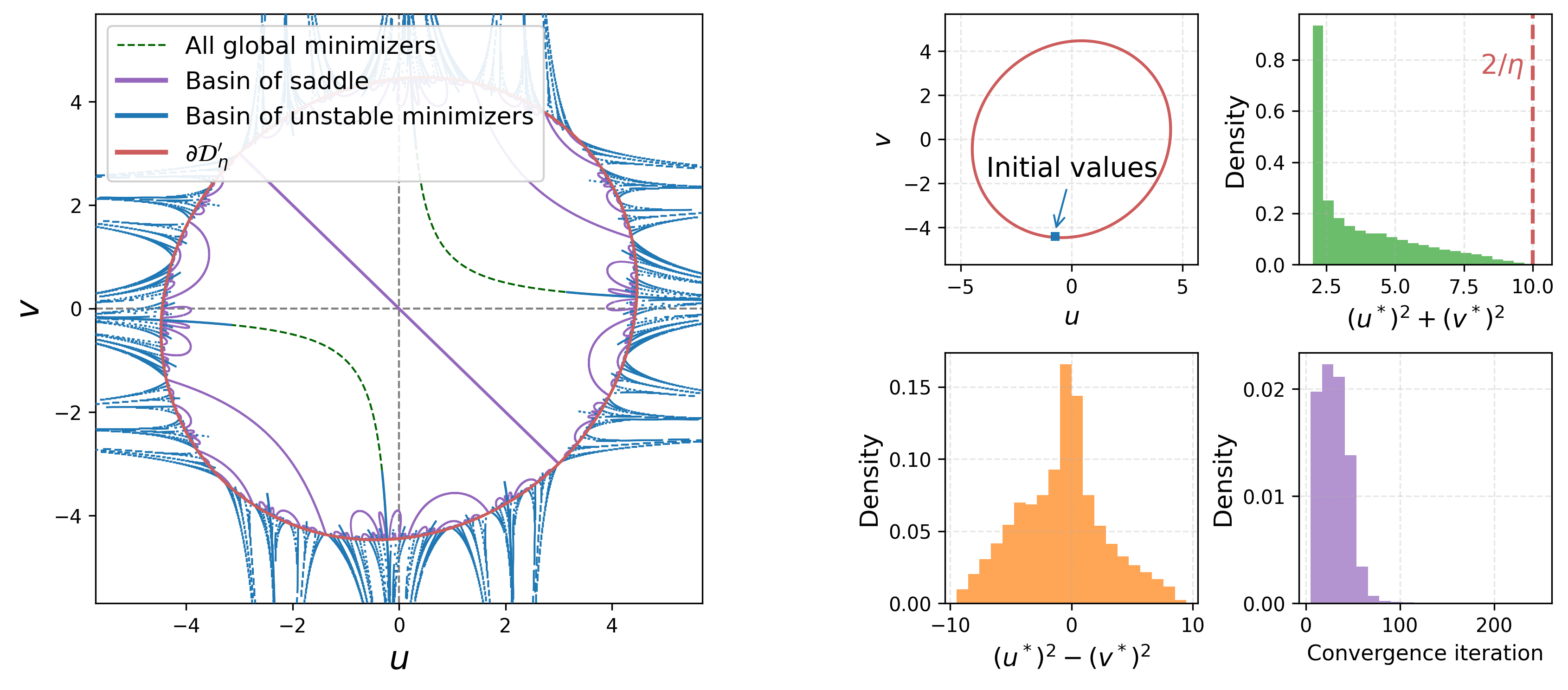}
    \caption{Gradient descent applied to $L(u,v)=(uv-1)^2$ with $(u,v)\in \br^{2}$. 
    Left: Blue lines and purple lines represent the basins of attraction of unstable minimizers and of the saddle $(\boldsymbol{0}, \boldsymbol{0})$, respectively. 
    Right: 
    Initializations are evenly sampled from a small set intersecting $\partial \mathcal{D}_\eta'$ (the blue square). 
    However, the distributions of the squared norm and imbalance of the converged minimizer $(u^*,v^*)$, and of the number of iterations to reach a loss below $10^{-8}$, have wide supports.}
    \label{fig:sec31}
\end{figure}

The next result shows that gradient descent in problem~\eqref{eq:scalar-fac} exhibits chaos even after removing the measure-zero basins of attraction of unstable minimizers, 
i.e., it is chaotic at the distributional level.

\begin{theorem}
\label{thm:chaos-distribution}
    Under the same conditions and notations as Theorem~\ref{thm:scalar-unreg} and assuming $\eta|y|<1$, given arbitrary $\theta \in \partial \mathcal{D}_\eta', \varepsilon>0$ and open sub-interval $E\subset [\gamma_{\min}, 2/\eta]$, 
    there exists a set of positive measure $O \subset B(\theta, \varepsilon)$ such that for any $\theta'\in O$, $\GD_\eta^N(\theta')$ converges to a solution with norm in $E$. 
\end{theorem}

Consider initializations randomly sampled from a distribution whose support contains an \emph{arbitrarily small} neighborhood of the convergence boundary.
Theorem~\ref{thm:chaos-distribution} then implies that the sharpness (or norm) of the converged minimizer must have a support containing the \emph{entire} interval $(\gamma_{\min}, 2/\eta)$ (see right panel of Figure~\ref{fig:sec31}). 
In other words, at near-critical step size and even after discarding the basins of unstable minimizers, the best prediction for the final sharpness is the \emph{entire} interval $(\gamma_{\min}, 2/\eta)$.

\subsection{Regularization Induces Fractal Convergence Boundary}
\label{sec:reg}

Consider the scalar factorization problem with $\ell_2$ regularization:
\begin{equation}\label{eq:scalar-fac-reg}
    \min_{\theta=(u,v)\in \br^{2d}}\ L(\theta)= \frac{1}{2}(u^\top v- y)^2+ \frac{\lambda}{2}(\|u\|_2^2+\norm{v}_2^2), 
\end{equation}
where $u,v\in\br^d$, $d\geq 1$ , $\lambda \geq 0$ and $y\in \br$. 
The added regularization makes the set of global minimizers a bounded set. 
In particular, for problem~\eqref{eq:scalar-fac-reg}, 
$\mathcal{M}=\{u=\sgn(y)v, \|u\|_2^2=|y|-\lambda\}$ when $\lambda<|y|$ and $\mathcal{M}=\set{(\boldsymbol{0}, \boldsymbol{0})}$ when $\lambda \geq |y|$. 
Regularization is commonly used to mitigate unbounded minimizers and to establish convergence results \citep{cabral2013unifying,ge2017no,li2019non}. 
However, and rather remarkably, we will show that for the regularized problem the global dynamics of gradient descent becomes even more unpredictable than for the unregularized problem: 
not only is the limiting point of convergent trajectories unpredictable but also the convergence itself.

The predictability of convergence depends on the geometry of the boundary of the convergence region. 
Two difficulties arise in analyzing this geometry: 
(i) the presence of the basin of the saddle; and 
(ii) the high-dimensionality of the convergence boundary. 
Specifically, we observe that the basin of attraction of the saddle point intricately penetrates $\mathcal{D}_\eta$ and creates topological boundaries ``within'' $\mathcal{D}_\eta$ 
(see Figure~\ref{fig:bry-lambda}). 
However, such boundaries are not of interest, 
since they do not separate points inside $\mathcal{D}_\eta$ from points outside, i.e., both sides lie in $\mathcal{D}_\eta$. 
This motivates us to instead consider the boundary of $\mathcal{D}_\eta'' = \mathcal{D}_\eta \cup \mathcal{S}_\eta$, where $\mathcal{S}_\eta$ is the basin of the saddle. 
Note, the smooth domain $\mathcal{D}_\eta'$ in the unregularized problem plays an analogous role in clarifying the geometry of convergence region.

The second difficulty is the high dimensionality of the boundary $\partial \mathcal{D}_\eta'' \subset \br^{2d}$. 
To address this, we identify and reduce the symmetry in $\partial \mathcal{D}''_\eta$. 
We introduce the map $T\colon \br^{2d}\to \br^2$, $T(u,v)=(u^\top v, \|u\|^2_2+\|v\|^2_2)$. 
The fiber of $T$, i.e., the preimage of a point in $T(\br^{2d})$, is generically a manifold diffeomorphic to $\bs^{d-1}\times \bs^{d-1}$ and hence has a regular shape (see Appendix~\ref{app:quotient}). 
In the following, we show that gradient descent dynamics are captured by their evolution across these fibers.

\begin{proposition}\label{prop:quotient}
    Let $(u_t,v_t)_{t\geq0}$ denote the gradient descent trajectory in problem~\eqref{eq:scalar-fac-reg} with $\lam \geq 0$. 
    Let $(z_t,w_t)=T(u_t,v_t)$. 
    There exists a planar map $F\colon \br^2\to \br^2$ that only depends on $\eta,\lambda,y$ such that $(z_{t+1},w_{t+1})=F(z_t,w_t)$ holds for all $t\geq 0$. 
    In particular, $(u_t,v_t)$ converges to $\mathcal{M}$ if and only if $(z_t,w_t)$ converges to $T(\mathcal{M})$, and it converges to $(\boldsymbol{0},\boldsymbol{0})$ if and only if $(z_t,w_t)$ converges to $(0,0)$. 
\end{proposition}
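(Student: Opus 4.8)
The plan is to show that the gradient descent update on $(u_t,v_t)$ descends to a well-defined update on the two invariants $z_t=u_t^\top v_t$ and $w_t=\|u_t\|_2^2+\|v_t\|_2^2$. First I would write out the gradient descent recursion explicitly: from \eqref{eq:scalar-fac-reg} we have $\nabla_u L = v(u^\top v - y) + \lambda u$ and $\nabla_v L = u(u^\top v - y) + \lambda v$, so
\begin{align*}
    u_{t+1} &= (1-\eta\lambda)u_t - \eta(z_t - y)v_t, \\
    v_{t+1} &= (1-\eta\lambda)v_t - \eta(z_t - y)u_t.
\end{align*}
The key observation is that this is a \emph{linear} map applied to the pair $(u_t,v_t)$ whose coefficients depend on $(u_t,v_t)$ only through the scalar $z_t$ (and the constants $\eta,\lambda,y$). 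I would then compute $z_{t+1} = u_{t+1}^\top v_{t+1}$ and $w_{t+1} = \|u_{t+1}\|_2^2 + \|v_{t+1}\|_2^2$ by expanding these bilinear/quadratic forms. Writing $a = 1-\eta\lambda$ and $b = -\eta(z_t-y)$, one gets $u_{t+1}^\top v_{t+1} = a^2 (u_t^\top v_t) + b^2(u_t^\top v_t) + ab(\|u_t\|^2 + \|v_t\|^2) = (a^2+b^2)z_t + ab\, w_t$, and similarly $\|u_{t+1}\|^2 + \|v_{t+1}\|^2 = (a^2+b^2)w_t + 4ab\, z_t$ (the cross terms $u_t^\top v_t$ appear with coefficient $2ab$ in each of $\|u_{t+1}\|^2$ and $\|v_{t+1}\|^2$). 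Substituting back $a=1-\eta\lambda$, $b=-\eta(z_t-y)$ exhibits $F(z,w)$ as an explicit polynomial map of $(z,w)$ depending only on $\eta,\lambda,y$, which establishes the first claim: $(z_{t+1},w_{t+1})=F(z_t,w_t)$.

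For the second part, I would translate the two convergence equivalences through the continuous surjection $T$. Since $T$ is continuous, $(u_t,v_t)\to\mathcal{M}$ implies $(z_t,w_t)=T(u_t,v_t)\to T(\mathcal{M})$; recalling from the preliminaries that $\mathcal{M}=\{u=\sgn(y)v,\ \|u\|_2^2=|y|-\lambda\}$ when $\lambda<|y|$, one checks $T(\mathcal{M})\subset\{z=y\}$ (indeed $u^\top v = \sgn(y)\|u\|^2 = \sgn(y)(|y|-\lambda)$; one should verify this equals $y$ only in the relevant regime, or more carefully note $T(\mathcal{M})$ is a single point on the line $z=y$), and likewise $T(\mathbf{0},\mathbf{0})=(0,0)$. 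For the converse directions I would argue that if $z_t\to y$ and $w_t\to w_\infty$, then the loss $L$ along the trajectory converges to its value on $\{z=y\}$, and then invoke the landscape fact (used earlier in the paper, via the convergence results for this setting) that bounded gradient descent trajectories converge to stationary points; combined with $z_t\to y$ this forces $(u_t,v_t)\to\mathcal{M}$. The $(0,0)$ case is analogous: $z_t\to 0$ and $w_t\to 0$ forces $\|u_t\|,\|v_t\|\to 0$, i.e. $(u_t,v_t)\to(\mathbf 0,\mathbf 0)$ directly from $w_t\to 0$, with no landscape argument needed.

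I expect the main obstacle to be the converse direction of the convergence equivalence, specifically ensuring that $(z_t,w_t)\to\{z=y\}$ genuinely pins down the limit of $(u_t,v_t)$ in $\mathcal{M}$ rather than merely forcing it into the level set $\{u^\top v = y\}$ (which, in the regularized problem, contains non-stationary points). The resolution is that $\{z=y\}$ is where the \emph{data} term vanishes but the regularizer does not; the limit point must additionally be stationary, and the only stationary points with $u^\top v = y$ are in $\mathcal{M}$ — but one must be careful that the trajectory does not drift off to infinity along $\{z=y\}$ (that would correspond to $w_t\to\infty$, hence $(z_t,w_t)$ would not converge in $\br^2$, so requiring convergence of $(z_t,w_t)$, not just of $z_t$, rules this out). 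Making this rigorous cleanly — ideally by reusing the stable-manifold/convergence machinery already developed for the regularized problem in the appendix rather than reproving it — is the delicate point; the algebraic derivation of $F$ itself is routine bilinear bookkeeping.
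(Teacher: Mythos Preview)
Your derivation of $F$ via the substitution $a=1-\eta\lambda$, $b=-\eta(z_t-y)$ is correct and is exactly the paper's approach: direct bilinear expansion of $u_{t+1}^\top v_{t+1}$ and $\|u_{t+1}\|_2^2+\|v_{t+1}\|_2^2$. Your $a,b$ bookkeeping is in fact tidier than the paper's line-by-line expansion.

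For the convergence equivalences you are over-working the problem, and your hesitation (``one should verify this equals $y$ only in the relevant regime'') is well founded. The paper's appendix restates and proves the proposition with ``$\{u^\top v=y\}$'' in place of ``$\mathcal{M}$''; with that substitution the equivalence is immediate, since $z_t=u_t^\top v_t$ by definition and $T^{-1}(0,0)=\{(\mathbf 0,\mathbf 0)\}$ because $w=0$ forces $u=v=0$. No landscape argument, no stationary-point reasoning, no drift-to-infinity concern is invoked. Your suspicion was right: when $\lambda>0$ the global minimizers satisfy $u^\top v=\sgn(y)(|y|-\lambda)\neq y$, so $T(\mathcal{M})$ does \emph{not} lie on $\{z=y\}$ and the main-text phrasing with ``$\mathcal{M}$'' is literally correct only for $\lambda=0$. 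The obstacle you identify in your last paragraph is therefore aimed at a claim the paper neither proves nor uses downstream; all subsequent arguments (e.g.\ Propositions~\ref{prop:convergence} and~\ref{prop:conv-reg}) work with convergence to $\{z=0\}$ in the shifted coordinates, i.e.\ to $\{u^\top v=y\}$, not to $\mathcal{M}$ per se.
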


The map $F$ describes how the gradient descent trajectory evolves across the fibers of $T$. Its formulation is given in Appendix~\ref{app:quotient}. 
By Proposition~\ref{prop:quotient}, all points lying in the same fiber share the same convergence behavior. 
Therefore, roughly, the boundary 
$\partial \mathcal{D}_\eta''$ can be constructed by attaching fibers of $T$ to the projected boundary $T(\partial \mathcal{D}''_\eta)$; 
an example will be given below. 
Note, as the fibers are generically smooth manifolds, any geometric complexity of $\partial \mathcal{D}_\eta''$ will be captured by $T(\partial D_\eta'')$.

In the following result, we show that the convergence boundary of problem~\eqref{eq:scalar-fac-reg} has a self-similar structure and gradient descent exhibits sensitivity to initialization near the convergence boundary.

\begin{theorem}
\label{thm:scalar-reg}
    Consider gradient descent with step size $\eta$ for problem~\eqref{eq:scalar-fac-reg} with $0< \lambda \leq \min\{(1/\eta)- |y|,1/(2\eta)\}$. 
    Consider the map $T(u,v)=(u^\top v, \|u\|^2_2+\|v\|^2_2)$. 
    Let $\mathcal{S}_\eta$ be the basin of attraction of $(\boldsymbol{0},\boldsymbol{0})$, which is a saddle if $\lambda<|y|$, 
    and let $\mathcal{D}_\eta''=\mathcal{D}_\eta\cup \mathcal{S}_\eta$. 
    The following holds: 
    
    \begin{itemize}[leftmargin=*]
        
        \item \textbf{Self-similarity:} 
        $\mathcal{S}_\eta$ has measure zero and $T(\partial \mathcal{D}_\eta'')$ is self-similar with degree three.

        \item \textbf{Unboundedness}: 
        When $y=0$, there exist constants $a,b>0$ such that 
        almost all initializations $(\baru, \barv)$ with 
        $|\baru^\top \barv| < a \exp(-b (\|\baru\|_2^2+\|\barv\|_2^2))$ 
        converge to a global minimizer.

        \item \textbf{Sensitivity to Initialization}: 
        For any $\theta \in \mathcal{D}_\eta$, the algorithm converges either to the closest global minimizer $p^-(\theta)=\arg\min_{p\in \mathcal{M}} \|p-\theta\|^2$, or the farthest $p^+(\theta)=\arg\max_{p\in\mathcal{M}}\|p-\theta\|^2$.\footnote{When $\theta \in \mathcal{D}_\eta$, both $\min_{p\in \mathcal{M}} \|p-\theta\|^2$ and $\max_{p\in\mathcal{M}}\|p-\theta\|^2$ admit unique solutions. Moreover, $p^+(\theta)\neq p^-(\theta)$ when $\lambda<|y|$.} 
        Moreover, there exist infinitely many points on $\partial \mathcal{D}_\eta''$ such that for any open set $O$ containing such a point, there exist $\theta',\theta''\in O$ such that $\GD_\eta^N(\theta')$ converges to $p^-(\theta')$ and $\GD_\eta^N(\theta'')$ converges to $p^+(\theta'')$, as $N$ tends to infinity. 
    
    \end{itemize}
\end{theorem}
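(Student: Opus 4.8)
The plan is to reduce everything to the planar map $F$ from Proposition~\ref{prop:quotient}, and then analyze $F$ on the two-dimensional quotient $\br^2$ where the self-similarity and sensitivity live. Since $T(u,v)=(u^\top v,\|u\|_2^2+\|v\|_2^2)$ conjugates the $\GD$ dynamics to $F$, the set $\mathcal{D}_\eta''$ is the full preimage under $T$ of the convergence-plus-saddle region of $F$, and $\partial\mathcal{D}_\eta''$ maps onto $T(\partial\mathcal{D}_\eta'')$. I would first establish the global picture of $F$: it has a fixed point $q^*$ corresponding to $\mathcal{M}$ (the segment $\{z=y\}$ collapses to a single attracting fixed point of $F$ once one also tracks $w$, using the constraint $u=\sgn(y)v$ on $\mathcal{M}$) and a fixed point at $(0,0)$ corresponding to the origin in parameter space. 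The hypothesis $0<\lambda\le\min\{(1/\eta)-|y|,1/(2\eta)\}$ should be exactly what makes the minimizer fixed point attracting and the origin a saddle (for $\lambda<|y|$) in the planar picture, while also placing the relevant region of $F$ in a regime where $F$ restricted to a suitable trapping region $\mathcal{C}\subset\br^2$ is a three-fold branched cover onto a region containing $\mathcal{C}$ — this is the ``folding'' mechanism advertised in the introduction.

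For \textbf{self-similarity}: I would exhibit the open set $\mathcal{C}$ (a neighborhood in $\br^2$ of the relevant minimizer fixed point and the origin), show $F(\mathcal{C})\supset\overline{\mathcal{C}}$ with exactly three ``branches'' — three open subregions $\mathcal{C}_1,\mathcal{C}_2,\mathcal{C}_3$ on each of which $F$ is a diffeomorphism onto a region covering $\mathcal{C}$ — and then define $\phi_i := (F|_{\mathcal{C}_i})^{-1}$, restricted appropriately, as the three homeomorphisms. Setting $S := T(\partial\mathcal{D}_\eta'')$, which is $F$-invariant ($F(S)\subseteq S$ and $S\subseteq F^{-1}(S)$ up to the measure-zero saddle basin), I would check (i) $S=\bigcup_{i=1}^3\phi_i(S)$ from surjectivity of each branch onto $S\cap\mathcal{C}$ and invariance, and (ii) the open set condition with the open set being $\mathrm{int}(\mathcal{C})\cap S$ or rather an $F$-compatible open subset, using that the three branches $\mathcal{C}_i$ are pairwise disjoint so their images under $\phi_i$ are disjoint. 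The fact that $\mathcal{S}_\eta$ has measure zero I would get by a standard transversality/hyperbolicity argument: the origin is a saddle in the plane under $F$, its stable manifold is one-dimensional, and its full preimage under the (locally) finite-to-one map $F$ is a countable union of curves, hence measure zero in $\br^2$; pulling back through $T$ (whose fibers are positive-dimensional but the base set is measure zero) keeps it measure zero in $\br^{2d}$.

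For \textbf{unboundedness} (case $y=0$): here $\mathcal{M}=\{(\mathbf0,\mathbf0)\}$, and I would analyze $F$ near the origin directly. With $y=0$ the regularization term dominates near $0$, and one iteration of $\GD$ contracts provided the cross term $u^\top v$ is small relative to the norm; quantitatively, along a trajectory with $z_0$ tiny and $w_0$ possibly large, I would show $w_t$ decreases geometrically (rate $\sim(1-\eta\lambda)$ or so) as long as $|z_t|$ stays controlled, and that $|z_t|$ can be kept controlled provided the initial ratio $|z_0|/\exp(b w_0)$ is small enough — the exponential appears because the ``danger'' region one must avoid at step $t$ shrinks geometrically in the number of steps $w_0$ takes to enter the basin, which is $\propto w_0$. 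Choosing $a,b$ to encode this bookkeeping gives the claim, and then pulling back through $T$ (noting $|u^\top v|\le\frac12(\|u\|^2+\|v\|^2)$ so the region $\{|\baru^\top\barv|<a\exp(-b(\|\baru\|^2+\|\barv\|^2))\}$ is genuinely a thin unbounded neighborhood of the coordinate-type locus) gives the parameter-space statement; the ``almost all'' caveat absorbs the measure-zero set of initial $z$ that land exactly on the bad set.

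For \textbf{sensitivity to initialization}: the first part — convergence only to $p^-(\theta)$ or $p^+(\theta)$ — follows because the fiber of $T$ through a convergent trajectory's limit is determined by the limiting value of $(z,w)=T(\theta_t)$; on $\mathcal{M}$ with $\lambda<|y|$ the fiber $\{u=\sgn(y)v,\ \|u\|^2=|y|-\lambda\}$ is a single sphere, and the two possible limit points in parameter space that are consistent with the (forward-orbit-determined) limiting fiber and with the geometry of the $\GD$ update within a fiber are exactly the nearest and farthest points of $\mathcal{M}$ from $\theta$ (an argument about how $\GD$ acts within-fiber, essentially that it moves radially in the $u\pm\sgn(y)v$ coordinates, preserving which ``side'' you are on). For the sensitivity statement proper I would pull the self-similar structure back: the infinitely many special points on $\partial\mathcal{D}_\eta''$ are preimages under $T$ of the ``interweaving'' points of $S$ — e.g. points in $S\cap(\phi_{i_1}\circ\cdots\circ\phi_{i_k}(\mathcal{C}))$ for all prefixes, i.e. points whose symbolic itinerary realizes both a branch whose continuation falls into the $p^-$-basin and one falling into the $p^+$-basin in every neighborhood. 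Concretely, near any such point every open set contains, by the open set condition and the covering property, full small copies of $\mathcal{C}$ that are mapped by some $F^N$ across the whole region, hence contain points converging to $p^-$ and points converging to $p^+$; pulling back through $T$ and using that $T$ is an open map on the generic stratum transfers this to $\br^{2d}$.

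\medskip

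I expect the \textbf{main obstacle} to be rigorously establishing the three-fold covering structure of $F$ on the trapping region $\mathcal{C}$: one must pin down $\mathcal{C}$ explicitly, verify $F(\mathcal{C})\supset\overline{\mathcal{C}}$, and count branches — i.e. show the critical set of $F$ (where $\det DF=0$) cuts $\mathcal{C}$ into exactly three pieces on each of which $F$ is injective onto a region covering $\mathcal{C}$. The degree-three count is presumably forced by the structure of $\nabla L$ (a cubic-type map in the relevant coordinates, whose folding produces three preimages), but making the open set condition quantitative — choosing $\mathcal{C}$ so the three image branches genuinely cover $\mathcal{C}$ and the $\phi_i$ are honest contraction-free homeomorphisms onto $S$ — will require a careful, somewhat delicate estimate tying $\mathcal C$'s size to the window $0<\lambda\le\min\{(1/\eta)-|y|,1/(2\eta)\}$. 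The within-fiber argument needed for the $p^-/p^+$ dichotomy is a secondary but nontrivial point, since one has to rule out the trajectory drifting around the sphere fiber in a way that would allow a third limit point.
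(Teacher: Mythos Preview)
Your proposal is correct and follows essentially the same route as the paper: reduce to the planar quotient $F$, establish a three-branch inverse structure $G_0,G_1,G_2$ to get self-similarity of the $F$-invariant boundary, prove forward invariance of an exponential cone $\{|z|<a\exp(-bw)\}$ for unboundedness, and use the $u\pm v$ (the paper's $p,q$) coordinates together with the product formula $p_k=p_0\prod_j(1-z_j-\nu)$ for the $p^{\pm}$ dichotomy and the branch-itinerary construction for sensitivity. The paper's main refinement over your sketch is that instead of a trapping region $\mathcal{C}$ it localizes $T(\partial\mathcal{D}_\eta'')$ via the Lyapunov-like function $Q(z,w)=w+\sqrt{w^2-16\mu z}$ (the boundary sits in $\{Q\ge 8-4\nu\}$, which is exactly where all three inverse branches exist), and the three-fold count you flag as the main obstacle is handled by an explicit quintic preimage equation combined with properness and the Hadamard inverse function theorem.
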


\begin{figure}
    \centering
    \includegraphics[width=0.78\linewidth]{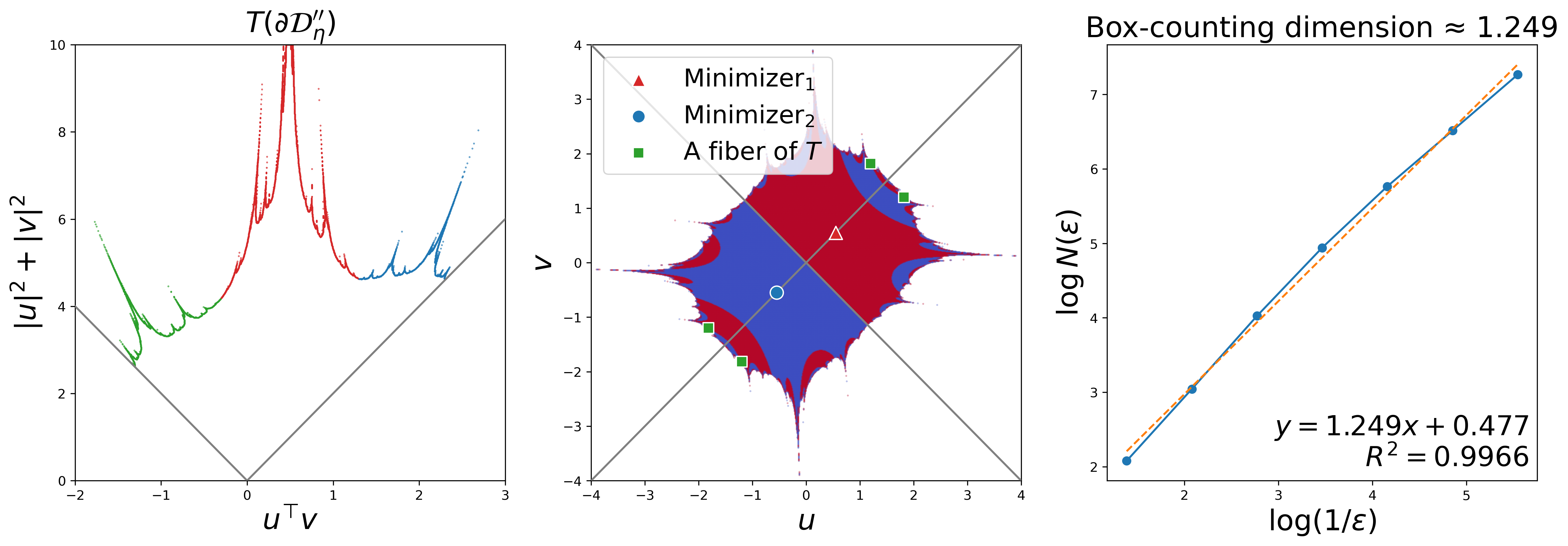}
    \caption{
    Gradient descent is applied to $L(u,v)=(uv-0.5)^2/2+0.1(u^2+v^2)$, where $(u,v)\in \br^2$. 
    Left: The projected convergence boundary $T(\partial\mathcal{D}_\eta'')$ is self-similar with degree three: it is covered by three smaller copies of itself (green, red, blue). 
    Middle: 
    The convergence boundary $\partial \mathcal{D}_\eta''$ consists of four replicates of $T(\partial\mathcal{D}_\eta'')$, separated by gray lines. 
    The only two minimizers are shown as a red triangle and a blue circle. 
    Points are colored red if they converge to the red triangle, and blue if they converge to the blue circle. 
    Right: The box-counting dimension of $T(\partial\mathcal{D}_\eta'')$ is estimated as $1.249$. 
    }
    \label{fig:sec32}
\end{figure}

Theorem~\ref{thm:scalar-reg} indicates that the convergence boundary for problem~\eqref{eq:scalar-fac-reg}, $\partial \mathcal{D}_\eta''$, 
can be constructed by attaching fibers of $T$ to a self-similar set $T(\partial \mathcal{D}_\eta'')$ with degree three. 
$T(\partial \mathcal{D}_\eta'')$ is displayed in the left panel of Figure~\ref{fig:sec32}, and its box-counting dimension is estimated to be $1.249$, as shown in the right panel. 
The case of $d=1$, i.e., $u,v\in \br$, is shown in the middle panel of Figure~\ref{fig:sec32}. There, the fibers of $T$ generically consist of four discrete points. $\partial \mathcal{D}_\eta''$ is then constructed by attaching four points to $T(\partial \mathcal{D}_\eta'')$. Hence, $\partial \mathcal{D}_\eta''$ is simply the union of four copies of $T(\partial \mathcal{D}_\eta'')$. 
The fractal boundary marks unpredictability in convergence: 
given an initial point near the boundary, it is almost impossible to determine whether the point is inside or outside the convergence region. 
This unpredictability is also quantified by the box-counting dimension, as explained in Appendix~\ref{app:fractal-dim}.

Theorem~\ref{thm:scalar-reg} shows that when $y=0$, the convergence region has an unbounded interior up to a null set.
This sharply contrasts with the convergence region in the unregularized case, which coincides almost everywhere with a bounded domain. 
By Theorem~\ref{thm:scalar-reg}, gradient descent converges provided that the $u^\top v$ decays exponentially fast as a function of the squared norm $\norm{u}^2_2+\norm{v}^2_2$. 
Geometrically, this creates an outward spike in the convergence region. Then, the self-similarity replicates this spike infinitely many times and at multiple scales, giving rise to the spiky convergence boundary observed in Figure~\ref{fig:sec32}. 
Although Theorem~\ref{thm:scalar-reg} shows the unboundedness only for the case $y=0$, 
we observe qualitatively the same geometry for general targets (for an example, see left panel in Figure~\ref{fig:intro}).

Fractal basin boundaries have been extensively studied in dynamical systems \citep[see, e.g.,][]{grebogi1983fractal, mcdonald1985fractal}. 
However, these classical approaches are either largely case-specific or rely on properties that do not hold in our settings, for instance, invertibility of the system map. 
A more detailed discussion is in Appendix~\ref{app:chaos-literature}. 
To our knowledge, our result provides the first rigorous characterization of a fractal convergence region in the context of machine learning optimization.

Theorem~\ref{thm:scalar-reg} also shows that, although regularization eliminates unbounded global minimizers, the selected minimizer remains unpredictable. 
Specifically, while the algorithm always selects either the minimal distance solution or the maximal distance solution, 
this selection becomes unpredictable near the convergence boundary, as both choices occur in arbitrarily small sets 
(see middle panel of Figure~\ref{fig:sec32}). 
This stands in contrast with gradient descent under small step sizes, 
which typically exhibits a distance-minimization bias \citep[see, e.g.,][]{gunasekar2018characterizing, boursier2022gradient}. 
Indeed, in the following, we show that with sufficiently small step size, gradient descent in problem~\eqref{eq:scalar-fac-reg} always selects the minimal distance solution.

\begin{theorem}\label{thm:reg-small}
    Under the same conditions and notations as Theorem~\ref{thm:scalar-reg} and letting $Q(u,v)= \norm{u}_2^2+\norm{v}_2^2+  \sqrt{(\norm{u}_2^2+\norm{v}_2^2)^2- 16 y(u^\top v-y)}$,
    the following holds for almost all initializations $(\baru,\barv)$: 
    If $\eta<8/(4\lam+Q(\baru,\barv))$, then gradient descent converges to a global minimizer; 
    If $\eta < 4/(4\lam+Q(\baru,\barv))$, then the particular minimizer it converges to is $p^-(\baru,\barv)$. 
\end{theorem}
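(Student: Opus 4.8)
The plan is to reduce the claim, via Proposition~\ref{prop:quotient}, to controlling the single scalar sequence $z_t=u_t^\top v_t$ along the trajectory and keeping it below a threshold determined by $\eta,\lam,y$; the two thresholds relevant for the two conclusions will come out to $y-\lam+2/\eta$ and $y-\lam+1/\eta$. We may assume $0<\lam<|y|$ (if $\lam\ge|y|$ then $\mathcal{M}=\{(\mathbf 0,\mathbf 0)\}$ and $p^-=p^+$, so the claim is trivial) and $y>0$ (the case $y<0$ follows from the reflection $v\mapsto-v$). The key structural device is the change of variables $s_t=u_t+v_t$, $r_t=u_t-v_t$, in which the gradient descent recursion for~\eqref{eq:scalar-fac-reg} separates into $s_{t+1}=\mu_t s_t$, $r_{t+1}=\nu_t r_t$ with $\mu_t=1-\eta\lam-\eta(z_t-y)$, $\nu_t=1-\eta\lam+\eta(z_t-y)$ and $z_t=\tfrac14(\|s_t\|_2^2-\|r_t\|_2^2)$. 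Since $s_t,r_t$ only rescale, whenever the trajectory converges to $\mathcal{M}$ one has $r_t\to\mathbf 0$ and $s_t\to\pm2\sqrt{y-\lam}\,(\baru+\barv)/\|\baru+\barv\|_2$, so the limit equals $p^-(\baru,\barv)$ exactly when the number of negative factors $\mu_k$ is even. Hence the second conclusion reduces to showing $\mu_t>0$, i.e. $z_t<y-\lam+1/\eta$, for all $t$.

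The core step is thus the control of $z_t$. I would first establish that $\{(u,v):Q(u,v)\le Q(\baru,\barv)\}$ is forward invariant under $\GD_\eta$ whenever $\eta(4\lam+Q(\baru,\barv))<8$; in the $(z,w)=T(u,v)$ coordinates of Proposition~\ref{prop:quotient} this is the scalar inequality $Q(F(z,w))\le Q(z,w)$ on that sublevel set, the regularized analogue of the forward invariance of $\mathcal{D}'_\eta=\{Q<8/\eta\}$ behind Theorem~\ref{thm:scalar-unreg}, with the only new feature being the contraction factor $1-\eta\lam$ inside $\mu_t,\nu_t$, which I would track through the same estimates. Granting invariance, the pinning is elementary: $Q$ is nondecreasing in $w\ge0$, one always has $w\ge2|z|$, and $Q(z,2z)=2z+2|z-2y|$ for $z\ge0$; combined with the standing hypothesis $\eta(|y|+\lam)\le1$ from Theorem~\ref{thm:scalar-reg} — which, since $1/\eta\ge|y|+\lam$, makes both thresholds at least $2y$ — one obtains $Q(u,v)<8/\eta-4\lam\Rightarrow u^\top v<y-\lam+2/\eta$ (i.e. $\mu>-1$) and $Q(u,v)<4/\eta-4\lam\Rightarrow u^\top v<y-\lam+1/\eta$ (i.e. $\mu>0$).

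Assembling: if $\eta(4\lam+Q(\baru,\barv))<8$, the orbit stays in the compact set $\{Q\le Q(\baru,\barv)\}$ — hence is bounded — and $\sup_t z_t<y-\lam+2/\eta$, so $\mu_t>-1$ for all $t$; moreover $\eta\le1/(|y|+\lam)<1/(|y|-\lam)$ together with $\eta\lam\le\tfrac12$ makes $\mathcal{M}$ Lyapunov-stable. Since every stationary point of $L$ is a global minimizer or a strict saddle, $(\mathbf 0,\mathbf 0)$ is the only stationary point besides $\mathcal{M}$, and its basin has measure zero by Theorem~\ref{thm:scalar-reg}, a landscape plus strict-saddle-avoidance argument on the bounded orbit yields convergence to a global minimizer for almost every $(\baru,\barv)$; this is the first conclusion. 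If moreover $\eta(4\lam+Q(\baru,\barv))<4$, the pinning improves to $\sup_t z_t<y-\lam+1/\eta$, so every $\mu_t>0$ and, by the structural observation of the first paragraph, the limit is $p^-(\baru,\barv)$.

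The main obstacle is the forward-invariance inequality $Q\circ\GD_\eta\le Q$ on the sublevel set: $Q$ is not a global Lyapunov function for $\GD_\eta$, because near $\mathcal{M}$ the iterate oscillates in $w$, so the estimate must preclude any overshoot of the level $Q(\baru,\barv)$. This becomes tractable because $Q$ attains its minimum value $4|y|$ exactly on the balanced submanifold $\{u=\sgn(y)v\}$ within $\{\|u\|_2^2+\|v\|_2^2<4|y|\}$ — on and near which the late-stage oscillatory dynamics is confined — so the oscillation of $\|s_t\|_2^2$ cannot raise $Q$; the remaining pieces (the $s$-$r$ separation, the cubic map $g(s)=(1-\eta\lam-\eta(s^2/4-y))\,s$ controlling the limiting radial dynamics, and the {\L}ojasiewicz/strict-saddle input) are routine.
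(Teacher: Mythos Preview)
Your approach is essentially the paper's: reduce to the quotient dynamics, show $Q$ is nonincreasing on the sublevel set $\{Q<8-4\eta\lam\}$ (this is exactly Lemma~\ref{lem:lya} in the paper, and is the ``main obstacle'' you correctly flag), read off the bound on $u^\top v$ from the level-set geometry, and use the product formula $p_k=p_0\prod_j(1-\eta(z_j-y)-\eta\lam)$ to conclude that all factors positive forces the limit to be $p^-$. These pieces are all right and line up with the paper's Proposition~\ref{prop:conv-reg}.

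The one genuine weakness is the first conclusion (convergence). You write that boundedness, Lyapunov stability of $\mathcal{M}$, and ``landscape plus strict-saddle-avoidance'' together yield convergence to a global minimizer, with {\L}ojasiewicz as a backup. That reasoning does not go through: for discrete gradient descent at step sizes this large the loss need not decrease, so {\L}ojasiewicz-type descent arguments are unavailable, and strict-saddle-avoidance only tells you that the saddle's basin has measure zero, not that bounded orbits converge to anything---periodic or wandering bounded orbits are not ruled out by those tools. The paper closes this gap differently: since $Q$ is nonincreasing and bounded below, $Q(z_t,w_t)$ converges, hence $\Delta Q(z_t,w_t)\to 0$; by uniform continuity on the compact sublevel set the orbit must accumulate on $\{\Delta Q=0\}$, which inside $\{Q<8-4\nu\}$ is exactly the segment $\{Q=4|\mu|\}$ (i.e.\ $u=\operatorname{sgn}(y)v$ with bounded norm). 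On that invariant segment the dynamics is the one-dimensional map $\kappa(w)=w(w/2-1+\nu-|\mu|)^2$, which one checks has a globally attracting fixed point at the minimizer and a repelling one at the origin; hence almost every orbit in the sublevel set converges to $\mathcal{M}$. Your cubic map $g(s)$ is essentially this $\kappa$, so you have the right final piece---but the bridge to it must be LaSalle/monotone-$Q$ rather than landscape/strict-saddle.
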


Finally, we present another implication of the chaos in gradient descent. In Appendix~\ref{app:invariant}, we show for the case $d=1$, i.e., $L(u,v)=\frac{1}{2}(uv-y)^2+\frac{\lambda}{2}(u^2+v^2)$, $u,v \in \br, \lambda \geq 0$, that, 
any continuous \emph{dynamical invariant} must be constant. 
In particular, the \emph{imbalance} $u^2 - v^2$, which is known to be (approximately) preserved under gradient descent with small step sizes \citep{du2018algorithmic, arora2018a}, fails dramatically under large step sizes. 
Although this result does not directly extend to the case $d\geq 2$, 
we anticipate that the chaos strongly constrains the form of dynamical invariants.


\section{Matrix Factorization and Beyond}

In this section, 
we first extend results in Section~\ref{sec:scalar} to general matrix factorization with orthogonal initializations. 
We then analyze the underlying mechanism that gives rise to the chaotic phenomena, and discuss to what extent this mechanism may extend to more general settings. 
Finally, we present experiments showing chaos in a real-world machine learning setting.

\subsection{Matrix Factorization with Orthogonal Initializations} 

Consider gradient descent in matrix factorization \eqref{eq:general-fac} with initializations in the following subspace: 
\begin{equation}\label{eq:slice}
    \mathcal{W}=\set{(U,V)\in \br^{2d \cdot d_y }\colon\ \inner{u^i}{u^j} = \inner{u^i}{v^j}=\inner{v^i}{v^j}=0,\ \forall i\neq j},
\end{equation}
where $u^i,v^i$ denote the $i$th column of $U, V$. 
The subspace $\mathcal{W}$ includes several commonly studied initialization schemes, such as the scaled identity initializations $\bar{U}=\alpha I_d, \bar{V}=\beta I_d$ \citep{chou2024gradient, ghosh2025learning}, zero-asymmetric initialization \citep{wu2019global}, and those used in training linear residual networks \citep{hardt2017identity, bartlett2018gradient}. 
Note that in the low-rank setting $d < d_y$, constraints in \eqref{eq:slice} require at least $d_y-d$ pairs of $(u^i,v^i)$ to be initialized to zero. 

A key observation is that results of Section~\ref{sec:scalar} 
precisely characterize the gradient descent dynamics on $\mathcal{W}$: 
with initialization in $\mathcal{W}$, the trajectory remains in $\mathcal{W}$, and the dynamics decouple column-wise: each pair $(u^i,v^i)$ evolves independently according to the scalar factorization dynamics with target $y_i$, the $i$th diagonal entry of $Y$. 
Thus, all results in Section~\ref{sec:scalar} extend verbatim, and gradient descent exhibits chaos and a fractal convergence boundary on $\mathcal{W}$ (see details in Appendix~\ref{app:general}).

A full characterization of the dynamics outside $\mathcal{W}$ requires additional investigations, for instance, whether $\mathcal{W}$ attracts or repels nearby trajectories. 
However, we point out that, the presence of chaos on $\mathcal{W}$ already suggests that the global dynamics can be unpredictable. 
For instance, due to the continuity of $\GD_\eta$, we expect that initializations in the vicinity of the convergence boundary in $\mathcal{W}$, will inherit the sensitivity to initial conditions, 
at least during the initial phase before potentially escaping $\mathcal{W}$. 
This phenomenon is known as \emph{transient chaos} in dynamical systems \citep[see, e.g., ][]{tel1990transient}. 
Experimentally, we observe that the chaotic phenomena indeed persist for general initializations, and also in \emph{deep} matrix factorization (Appendix~\ref{app:additional}). 
A rigorous characterization of gradient descent dynamics in these settings is an intriguing direction that we leave for future work.

\subsection{General Mechanism Behind Chaos} 
We now explain the mechanism giving rise to chaos in scalar factorization. 
For simplicity, consider the case $d=1$: $L(u,v)=(uv-y)^2/2 + \lambda(u^2+v^2)/2$ with $(u,v)\in \br^2$. 
In this setting, there exists a set $\mathcal{C}\subset \br^2$ such that $\GD_\eta$ behaves as a $3$-covering map from $\mathcal{C}$ onto $\GD_\eta(\mathcal{C}) \supset \mathcal{C}$. 
Roughly speaking, $\GD_\eta$ stretches and folds $\mathcal{C}$ to cover $\GD_\eta(\mathcal{C})$ three times. Furthermore, $\mathcal{C}$ contains the convergence boundary 
$\partial \mathcal{D}_\eta$. 
The boundary satisfies $\GD_\eta(\partial \mathcal{D}_\eta)=\partial \mathcal{D}_\eta$, meaning that $\partial \mathcal{D}_\eta$ can be stretched and folded three times to cover \emph{itself}, thereby exhibiting self-similarity. 
Additionally, due to its folding behavior, $\GD_\eta$ is \emph{transitive} on $\partial \mathcal{D}_\eta$, 
i.e., points on the boundary $\partial \mathcal{D}_\eta$ are mixed under the iterations of $\GD_\eta$ (via multiple stretches and folds). 
This mixing leads to the sensitivity to initializations near the boundary. 
A visualization of these properties are provided in Figure~\ref{fig:toyfold}.

The key ingredient above is the existence of a set $\mathcal{C}$ in the parameter space on which $\GD_\eta$ acts as a covering map. 
In fact, the described chaotic phenomena arise given such a set $\mathcal{C}$ and any invariant subset $\mathcal{A}\subset \mathcal{C}$ that satisfies $\GD_\eta(\mathcal{A})=\mathcal{A}$, a condition typically satisfied by the boundary of a basin of attraction.
Then we ask: \emph{When does such a set $\mathcal{C}$ exist?} 
The following result partially addresses this for general neural networks with polynomial activation functions.

\begin{proposition}\label{prop:fold}
    Let $f_\theta(\cdot)$ be a polynomial neural network of arbitrary depth and width, parametrized by $\theta \in \br^p$. 
    Consider a loss function $L(\theta)= \sum_{j=1}^m \ell(f_\theta(x_j), y_j)$ with polynomial loss $\ell$ and training data $(x_j,y_j)_{j=1}^m$. 
    For all $\eta>0$ except for at most finitely many values, 
    there exists a measure-zero set $\mathcal{K}_\eta\subset \br^p$ such that 
    $\GD_\eta$ is a covering map on any connected component of $\br^p \setminus \mathcal{K}_\eta$. 
\end{proposition}

The proof is provided in Appendix~\ref{app:general}. 
We remark that, while Proposition~\ref{prop:fold} suggests a general folding behavior of $\GD_\eta$, it does not address whether 
$\GD_\eta(\mathcal{C})$ contains $\mathcal{C}$ and whether $\mathcal{C}$ contains a basin boundary. 
We leave these intriguing questions for future research.

\subsection{Chaos and Fractals in Neural Networks} 

We show that the chaotic phenomena persist in real-world training settings. 
We trained a depth-three ReLU network on a $2$-class subset of CIFAR-10 \citep{krizhevsky2009learning}. 
In Figure~\ref{fig:relumse} we consider the mean squared error and report how the step size affects the norm and sharpness of the parameter that is returned at the end of training. 
For gradient descent without momentum, two distinct regimes of step sizes can be observed: 
(i) EoS regime: both the final norm and sharpness lie close to a smooth curve, indicating predictability. 
In particular, the final sharpness is close to $2/\eta$, aligning with \citet{cohen2021gradient}.  
(ii) Chaotic regime: the norm and sharpness become sensitive to the step size, indicating chaos and unpredictability. 
In particular, the final sharpness spans almost all values below $2/\eta$, aligning with our Theorem~\ref{thm:chaos-distribution}. 
A similar phenomenon appears for gradient descent with Polyak momentum \citep{polyak1964some}. However, in the small-step-size regime, the final sharpness forms a cluster rather than aligning with $(2+2\beta)/\eta$ as predicted by \citet{cohen2021gradient}.
Experiments with cross-entropy loss are provided in Appendix~\ref{app:additional-cifar}, where a similar two-regime phenomenon is observed.

\begin{figure}[ht]
    \centering
    \includegraphics[width=1\linewidth]{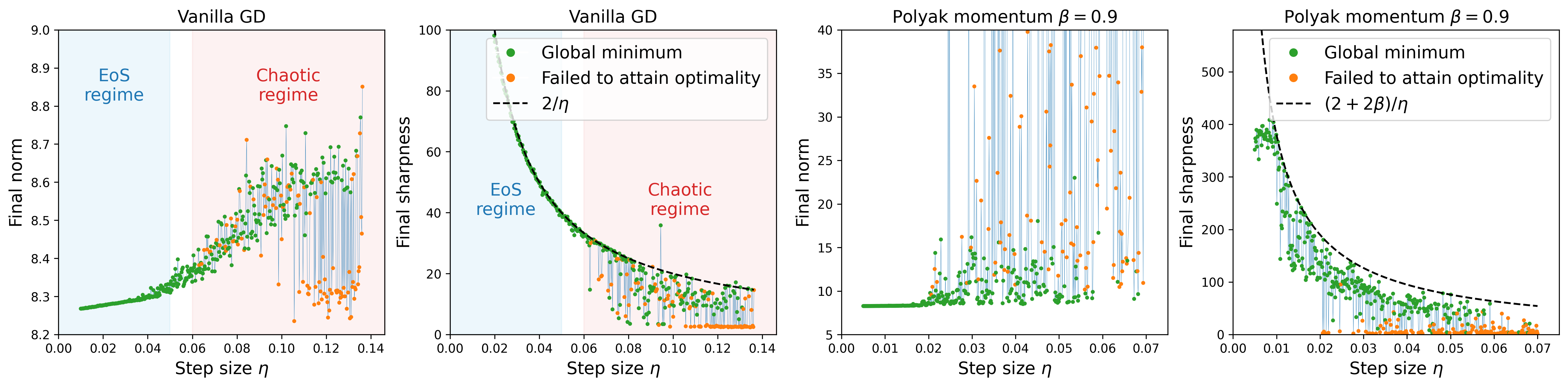}
    \caption{
    GD without and with momentum in training a depth-3 ReLU network on a subset of CIFAR-10 for $5000$ iterations under mean squared error. 
    The initialization is randomly sampled once and then kept fixed across all panels. 
    At large step sizes, the final norm and final sharpness 
    are sensitive to step size. 
    Dashed black lines show the final sharpness predicted by \citet{cohen2021gradient}. 
    }
    \label{fig:relumse}
\end{figure}

In Figure~\ref{fig:fractal-weightdecay}, we show how the parameter initialization of gradient descent affects the final loss and sharpness when using weight decay. 
For both quantities, we observe fractal structures in the parameter space, indicating that the training outcome is highly sensitive to the initialization. 
Further experiments in Appendix~\ref{app:additional-cifar} show qualitatively similar fractal patterns also appear without weight decay. 
Experiment details of Figures~\ref{fig:relumse} and \ref{fig:fractal-weightdecay} are provided in Appendix~\ref{app:additional-cifar}.

\begin{figure}[ht]
    \centering
    \includegraphics[width=1\linewidth]{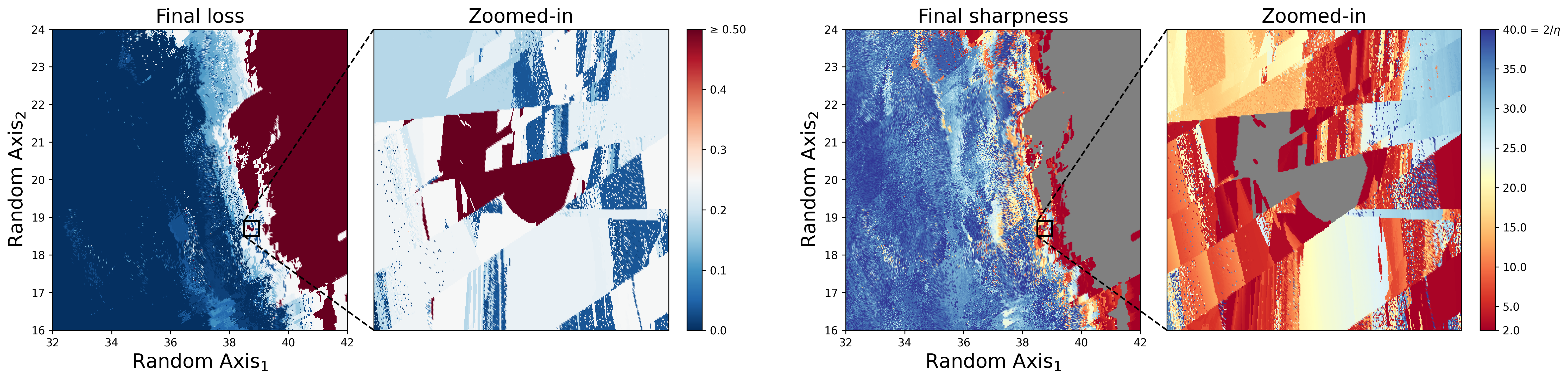}
    \caption{
    GD with weight decay in training a depth-3 ReLU network on a subset of CIFAR-10 for $3000$ iterations. 
    Shown is a random two-dimensional slice of the parameter space. Each initial parameter is colored by the value of the final loss and final sharpness, respectively. 
    Left two panels: fractal basins of global minimizers (dark blue), sub-optimal solutions (white), and a region leading to divergence (dark red). 
    Right two panels: the final sharpness is sensitive to the initialization, spanning a wide range of values below $2/\eta$. Gray points are initializations from which the algorithm diverged. 
    }
    \label{fig:fractal-weightdecay}
\end{figure}


\section{Conclusion}

We offered a rigorous characterization of gradient descent with large step sizes in matrix factorization. 
Our results reveal two striking phenomena: near the convergence boundary, the selection of the minimizer is unpredictable, and adding regularization can induce a fractal convergence boundary that makes the convergence itself unpredictable. 
As a driver of this complexity, we suggested a covering map structure exhibited by the gradient descent update map on the parameter space. 

\paragraph{Limitations} 
Although our characterizations substantially expand the state of knowledge in non-convex overparametrized optimization in the particular setting of matrix factorization, 
further research is needed to rigorously characterize the dynamics of large-step-size gradient descent in other settings, such as general initializations, deep matrix factorization, or neural networks with nonlinear activation functions. 
We believe the contributed insights can aid in the development of such programs. 

\paragraph{Future Directions} 
We showed that at large step sizes there may not exist any simple algorithmic biases, but observed that biases could still be studied in a distribution sense. Further analyzing the properties of the distribution over global minimizers that is induced by a distribution of initializations is an interesting direction for future work. In particular, are there cases in which the distribution is uniform over a subset of minimizers, or cases in which it will concentrate in a predictable way?

\paragraph{Reproducibility Statement}
Code to reproduce our experiments is made available at 
\url{https://github.com/shuangliang15/chaos-matrix-factorization}.

\subsection*{Acknowledgment} 
This project has been supported by NSF CCF-2212520 and NSF DMS-2145630. 
GM was partially supported by DARPA AIQ project HR00112520014, DFG SPP 2298 project 464109215, and BMFTR in DAAD project 57616814 (SECAI). 
We thank Ryan Tibshirani for insightful discussions.

\bibliographystyle{apalike}
\bibliography{main}
\clearpage

\appendix

\section*{Appendix}

The appendix is organized into the following sections. 

\begin{itemize}
    \item Appendix \ref{app:chaos-literature}: Relation to classical theory for chaos and fractals

    \item Appendix \ref{app:chaos-measure}: Measure of chaos in dynamical systems 

    \item Appendix \ref{app:diag-Y}: Diagonality of the target matrix 

    \item Appendix \ref{app:quotient}: Quotient dynamics of gradient descent

    \item Appendix \ref{app:scalar}: Proofs for Section~\ref{sec:scalar} 

    \item Appendix \ref{app:invariant}: Non-existence of continuous dynamical invariant

    \item Appendix \ref{app:general}: General matrix factorization
    
    \item Appendix \ref{app:experiment}: Experiment details
    
    \item Appendix \ref{app:additional}: Additional experiments on matrix factorization 

    \item Appendix \ref{app:additional-cifar}: Additional experiments on real-world data

\end{itemize}

\section{Relation to classical theory for chaos and fractals}

\label{app:chaos-literature}

\subsection{Chaotic dynamical systems}

A common definition of chaotic dynamical systems is as follows \citep{devaney1987introduction}. 
A dynamical system $F\colon X\to X$, where $X$ is the state space, is chaotic if: (i) it is sensitive to initialization, (ii) it is topological transitive, and (iii) periodic points are dense in $X$. 
Here, sensitivity to initialization requires that there exists $\delta >0$ such that for any $x\in X$ and any neighborhood of $x$, there exists $N>0$ and $y$ in the neighborhood such that 
$\operatorname{dist}(F^N(x),F^N(y))> \delta$. 
This is weaker than the property we presented in Theorem~\ref{thm:scalar-unreg} and Theorem~\ref{thm:scalar-reg}: 
when the converged points of two trajectories are different, 
the trajectories must differ by a positive difference at some time $N$, 
but not vice versa. 
In Proposition~\ref{prop:long-term-unreg}, we show that the boundary $\partial \mathcal{D}_\eta'$, as defined in Theorem~\ref{thm:scalar-unreg}, is invariant under gradient descent.
In Proposition~\ref{prop:bry-dynamic}, we show that when restricted to $\partial \mathcal{D}_\eta'$, the gradient descent system is semi-conjugate to a one-dimensional system that is precisely Devaney chaotic. 
However, we show in Proposition~\ref{prop:not-devaney} that the original gradient descent system is not Devaney chaotic when $d\geq 2$ as it fails to be topological transitive.

A system $F\colon X\to X$ is topological transitive if for any pair of non-empty open sets $U,V$, there exists $N$ such that $F^N(U)\cap V \neq \varnothing$. 
A family of dynamical systems that exhibit transitivity is the family of \emph{Axiom-A diffeomorphisms} \citep{bams/1183529092}. 
These are dynamical systems where the set of non-wandering points is hyperbolic and is equal to the closure of the set of periodic points. 
A closed set $\Lambda$ is hyperbolic if it is forward invariant and at each point $x\in\Lambda$ the tangent space of the ambient space splits as a direct sum of stable and unstable subspaces.
It is known that an Axiom-A diffeomorphism is always transitive on each of its \emph{basic sets} \citep[Chapter 3]{bowen2008equilibrium}. 
However, gradient update maps typically are not expected to satisfy this definition as they in general are not global diffeomorphisms.

Another definition of chaotic dynamical system is by \citet{li1975period}.
They considered a dynamical system $F\colon X\to X$ with $X\subset \br$ being an interval chaotic if $F$ has a periodic orbit with period three. 
They showed that if such a periodic orbit exists, 
then (i) $F$ has periodic orbit with any period; 
(ii) there exists an uncountable set $S\subset J$ such that, for every $p,q\in S$ with $p\neq q$, 
$$
\limsup_{N \to \infty} |F^N(p)-F^N(q)| >0,\ \liminf_{N \to \infty} |F^N(p)-F^N(q)| = 0,
$$
and (iii) for every $p\in S$ and a periodic point $q\in J$, 
$$
\limsup_{N\to \infty} |F^N(p)-F^N(q)|>0. 
$$
In Proposition~\ref{prop:bry-dynamic}, we showed that the restricted system $\GD_\eta|_{\partial \mathcal{D}_\eta'}$ is semi-conjugate to a one-dimensional system that is Li-Yorke chaotic. 
In general, Devaney chaos and Li-Yorke chaos do not imply each other. 
For a detailed comparison between different notions of chaotic dynamical systems, see the work of \citet{elaydi2007discrete}.

\subsection{Fractal basin boundary} 
The fractal convergence boundary studied in this work 
falls into a more general notion, called a fractal basin boundary. 
The seminal classification given by 
\citet{mcdonald1985fractal} divides fractal basin boundaries into three categories: 
quasicircles, locally connected but not quasicircles, and locally disconnected. 
The most regular type, quasicircle, is typical in the Julia sets of complex analytic maps. 
However, as noted by \citet{mcdonald1985fractal}, properties of complex analytic maps do not generalize to real maps, and hence quasicircle is uncommon in real systems. 
We observe that the convergence boundary in our study falls in the second category, locally connected but not quasicircles.   
This type of boundary has been observed in several planar maps, i.e., dynamical systems defined on regions of $\br^{2}$. 
A well-known example is the following system: 
$$
x_{n+1}=\lambda_x x_n  \mod(1),\quad y_{n+1}=\lambda_y y_n + \cos(2\pi x_n). 
$$
The basin boundary of this system is precisely the Weierstrass curve. 
\citet{mcdonald1985fractal} argued that a typical characteristic of this type of boundaries is the local stratification structure, which also appears in the convergence region in our case (see left panel in Figure~\ref{fig:intro}). 
To our knowledge, however, all examples of locally connected boundaries appearing in the literature, including those presented by \citet{mcdonald1985fractal, hunt1999sporadically, rosa1999mixed}, are bounded, whereas the convergence region in our case is shown to be unbounded. 
Classical approaches do not apply to our study, as most of those theoretical studies are case-specific. 
The last category has the most complicated structure and, as noted by \citet{aguirre2009fractal}, turns out to appear more commonly in physical systems. 
Boundaries in this category typically exhibit a Cantor set structure. Examples include the famous H\'enon map and the horseshoe map. 
For a recent review of the fractal boundaries, we refer readers to \citet{aguirre2009fractal}.

\section{Measure of chaos in dynamical systems} 
\label{app:chaos-measure}

We introduce two measures of chaos in dynamical systems. 
In Appendix~\ref{app:entropy} we introduce the topological entropy of a dynamical system and, 
in Appendix~\ref{app:fractal-dim} we discuss how the fractal dimension of the basin boundary implies unpredictability.

\subsection{Topological entropy}
\label{app:entropy}

Let $F\colon X\to X$ be a dynamical system, where $X$ is the state space with a metric $d$. 
The idea behind topological entropy is to measure how fast the number of ``distinct'' trajectories increases as the trajectory length increases. 
To measure the difference between two trajectories of length $N$, consider
$$
d_N(x,y) = \max_{0\leq i\leq N-1} d(F^i(x),F^i(y)). 
$$
Then, the number of ``distinct'' trajectories of length $N$ is measured by
$$
r(N,\varepsilon)= \max \set{ |S|\colon \  d_N(x,y)>\varepsilon, \forall x,y\in S, x\neq y },
$$
where $|S|$ is the number of elements in $S$. 
The topological entropy of $F$, denoted $h(F)$, measures the exponential growth rate of $r(N,\varepsilon)$ as $N$ increases. Specifically, $h(F)$ is defined as follows: 
$$
h(F)= \lim_{\varepsilon\to 0^+} \limsup_{N\to \infty} \frac{ \log r(N,\varepsilon)}{N}. 
$$

We give an example to provide more intuition. 
Consider the following symbolic dynamical systems:
$$
\sigma\colon \set{0,1}^\infty \to \set{0,1}^\infty,\ \sigma (s_0s_1s_2\cdots)=(s_1s_2\cdots ). 
$$
Here the state space $\set{0,1}^\infty$ denotes the set of all infinite sequence of two symbols $0$ and $1$, whose metric is defined by
$$
d((s_0s_1\cdots),(s_0's_1'\cdots))= \sum_{j=0}^\infty \frac{|s_j-s_j'|}{2^j}. 
$$
The system $\sigma$ is called the \emph{full-shift on two symbols}. 
Despite its simple definition, this system is unpredictable and chaotic \citep[see, e.g.,][]{devaney1987introduction}. 
In particular, periodic points are dense in the state space, and, there exists a trajectory that is dense in the state space, i.e., there is a single trajectory that can come arbitrarily close to any point. 
In terms of predictability, consider two points $\boldsymbol{s}=(s_0s_1\cdots),\boldsymbol{s}'=(s_0's_1'\cdots)$ that have the same first $m$ elements,  but differ starting from the $(m+1)$th element. 
By the definition of the distance, we have
$d(\boldsymbol{s},\boldsymbol{s}')\leq \sum_{j=0}^\infty 1/2^{m+j}=1/2^{m+1}$.
However, for all $N\geq m$, we have $d(\sigma^N(\boldsymbol{s}), \sigma^N(\boldsymbol{s}) )>1/2$.  
Therefore, even if two initial points are arbitrarily close to each other, one can not make any prediction on how close their trajectories will remain in the long term. 
This unpredictability stems from the richness of ``distinct'' trajectories. In fact, one can show that $h(\sigma)=\log 2>0$ \citep[see, e.g.,][]{vries2014topological}. Note, the topological entropy of the gradient descent in matrix factorization is at least $\log 3$ (Theorem~\ref{thm:scalar-unreg}).

\subsection{Box-counting dimension and unpredictability}
\label{app:fractal-dim}

There have been numerous investigations discussing how a non-integer fractal dimension implies unpredictability in dynamical systems \citep[see, e.g.,][]{tel1990transient, aguirre2009fractal}. 
Here we provide a brief introduction to this topic.

Recall that the \emph{box-counting dimension} of a set $S$ is defined as the following limit if it exists: 
$$
D_B(S)= \lim_{\varepsilon\to 0 }\frac{\log (N(\varepsilon))}{\log(1/\varepsilon)} , 
$$
where $N(\varepsilon)$ is the number of boxes of side length $\varepsilon$ needed to cover the set $S$. 
For a dynamical system $F\colon X\to X$ where $X\subset \br^D$ is the state space,  
let $D_B$ be the box-counting dimension of a basin boundary and $D$ be the topology dimension of the state space. 
Consider a collection of trajectories and randomly perturb their initial points by a scale $\varepsilon$. 
Let $f(\varepsilon)$ denote the fraction of the trajectories that converge to a different point, i.e., whose initial point lies in a different basin of attraction after the perturbation. 
Thus, $f(\varepsilon)$ can be roughly viewed as the chance of making an error in predicting the converged point when the precision in specifying the initial point is $\varepsilon$. 
In general, the following scaling relation holds \citep{grebogi1983final}: 
$$
f(\varepsilon) \sim \varepsilon^{D-D_B},
$$
where $D-D_B$ is known as the \emph{uncertainty exponent}. 
When the boundary is smooth, we have $D_B=D-1$ and thus $f(\varepsilon)\sim \varepsilon$, i.e., the accuracy of the prediction of the converged point is proportional to the precision on the initial point. 
In contrast, when the boundary has a non-integer dimension, we have $D-1<D_B<D$ and hence $D-D_B<1$. This implies that a substantial increase in the precision in specifying the initial point leads to only a very small increase in the accuracy of the prediction. This marks sensitivity to initialization and unpredictability. 
Note, the box-counting dimension of the projected boundary $T(\partial \mathcal{D}_\eta'')$ is estimated as $1.249$, 
yielding an uncertainty exponent $2-1.249=0.751$ (see Section~\ref{sec:reg}).

\section{Diagonality of the target matrix}

\label{app:diag-Y}

We show that in matrix factorization~\eqref{eq:general-fac}, one may assume without loss of generality that the target matrix is diagonal. This simplification is a standard technique that has been widely adopted in the literature.

Let $Y=P_Y \Sigma_Y Q_Y^\top$ be the singular value decomposition of $Y$, where $P_Y,Q_Y\in O(d_y)$ and $\Sigma_Y\in \br^{d_y\times d_y}$ is diagonal. 
Consider the change of coordinates $U=\tilde{U}P_Y^\top$ and $V=\tilde{V}Q_Y^\top$. 
Recall that the $U$-update in minimizing $L(U,V)$ is given by 
$$
U_{t+1} = U_t - \eta V_t(V_t^\top U_t-Y^\top)-\eta \lambda U_t. 
$$
In the new coordinate, we have
\begin{equation}\label{eq:app-W}
    \begin{aligned}
\tilde{U}_{t+1} &= U_{t+1} P_Y \\
&= U_t P_Y - \eta V_t(V_t^\top U_t-Y^\top)P_Y-\eta \lambda U_tP_Y\\
&= \tilde{U}_t - \eta \tilde{V}_t Q_Y^\top  (Q_Y  \tilde{V}_t^\top \tilde{U}_t-Q_Y\Sigma_Y^T)-\eta \lambda \tilde{U}_t\\
&= \tilde{U}_t - \eta \tilde{V}_t   ( \tilde{V}_t^\top \tilde{U}_t-\Sigma_Y^T)-\eta \lambda \tilde{U}_t . 
    \end{aligned}
\end{equation}
On the other hand, since the Frobenius norm is invariant under left- or right-multiplication by orthogonal matrices, the loss function in the new coordinates is given by
\begin{align*}
\tilde{L}(\tilde{U},\tilde{V})&=\frac{1}{2}\|P_Y\tilde{U}^\top\tilde{V}Q_Y^\top-Y\|_F^2+ \frac{\lambda}{2}(\|\tilde{U}P_Y^\top\|_F^2 + \|\tilde{V}Q_Y^\top\|_F^2)\\
&= \frac{1}{2}\|\tilde{U}^\top\tilde{V}-\Sigma_Y\|_F^2+ \frac{\lambda}{2}(\|\tilde{U}\|_F^2 + \|\tilde{V}\|_F^2). 
\end{align*}
Note the update iteration \eqref{eq:app-W} coincides with the $\tilde{U}$-update in minimizing $\tilde{L}(\tilde{U},\tilde{V})$ with gradient descent. 
An analogous calculation shows the same holds for the $\tilde{V}$-update. 
Therefore, one may directly study the gradient descent dynamics in minimizing $\tilde{L}(\tilde{U},\tilde{V})$.

\section{Quotient dynamics of gradient descent}
\label{app:quotient}

We show that the gradient descent dynamics in the scalar factorization problem can be described by a quotient system, and we further establish key properties of this system.

Consider the map 
$$
T\colon \br^{2d} \to \br^2, \ T(u,v)=(u^\top v-y, \norm{u}_2^2+\norm{v}_2^2). 
$$
Note that this definition differs from the one introduced in Section~\ref{sec:reg} by a constant shift of $-y$, where $y\in \br$ is target scalar of problem~\eqref{eq:scalar-fac}. This adjustment is made purely for convenience in presenting the proof. All results stated here extend trivially to the original formulation.

We will show that the gradient descent dynamics are fully captured by their evolution across the fibers of the map $T$. 
In other words, different initializations in the same fiber produce qualitatively identical trajectories. 
This reflects an inherent symmetry of the system. 
The quotient system factors out this symmetry and describes the fiber-wise dynamics. 
The term \emph{quotient dynamical system} is borrowed from the theory of equivariant dynamical systems \citep[see, e.g.,][]{golubitsky2003symmetry}.

\subsection{Quotient dynamical system}

We first introduce two properties of the map $T$: (i) the preimage of any measure-zero set has measure zero and (ii) the fiber of $T$ is generically a smooth manifold. 

\begin{proposition}\label{prop:preimage-null}
    The preimage of any measure-zero set under the map $T$ is a measure-zero set. 
\end{proposition}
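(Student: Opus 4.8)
The plan is to exploit the fact that $T(u,v)=(u^\top v-y,\ \norm{u}^2+\norm{v}^2)$ is a polynomial (hence $C^\infty$) map that is a submersion away from a measure-zero critical set, and that $C^1$ submersions pull back Lebesgue-null sets to Lebesgue-null sets. So the first step is to locate the critical locus. I would compute the Jacobian
$DT(u,v)$, a $2\times 2d$ matrix whose two rows are $(v^\top,u^\top)$ and $2(u^\top,v^\top)$, and check when these rows are linearly dependent. Solving $\lambda_1(v,u)+\lambda_2(2u,2v)=0$ for $(\lambda_1,\lambda_2)\neq(0,0)$ forces either $u=v=0$, or $u=v$, or $u=-v$. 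Hence the critical set is $\mathcal{C}=\set{u=v}\cup\set{u=-v}$, a union of two $d$-dimensional linear subspaces of $\br^{2d}$, which has Lebesgue measure zero because $d<2d$ (recall $d\geq1$).

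Next I would handle the complement. On the open set $\br^{2d}\setminus\mathcal{C}$ the map $T$ is a submersion, so by the local submersion (constant-rank) theorem every point has an open neighborhood $U$ and a $C^\infty$ diffeomorphism $\psi\colon U\to\psi(U)\subset\br^{2d}$ with $T\circ\psi^{-1}=\pi$, the projection onto the first two coordinates. For a Lebesgue-null set $A\subset\br^2$, the cylinder $\pi^{-1}(A)$ is null in $\br^{2d}$ by Fubini's theorem, diffeomorphisms preserve null sets, and therefore $T^{-1}(A)\cap U$ is null. Since $\br^{2d}\setminus\mathcal{C}$ is second countable, it is covered by countably many such neighborhoods, so $T^{-1}(A)\setminus\mathcal{C}$ is a countable union of null sets, hence null. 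Finally $T^{-1}(A)\cap\mathcal{C}\subset\mathcal{C}$ is null, and $T^{-1}(A)$ is the union of these two pieces, which finishes the argument.

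I do not expect a serious obstacle here: the only content is the elementary Jacobian computation pinning down $\mathcal{C}$, plus the standard submersion-pullback fact. If one prefers to avoid manifold language altogether, an equally clean route is to diagonalize via the orthogonal change of variables $a=(u+v)/\sqrt2$, $b=(u-v)/\sqrt2$, under which $T$ becomes an affine (hence invertible) function of $(\norm{a}^2,\norm{b}^2)$; then for null $B\subset\br^2$ a polar-coordinate computation gives that the Lebesgue measure of $\set{(a,b)\colon(\norm{a}^2,\norm{b}^2)\in B}$ equals $c_d^2\iint_B\sigma^{d/2-1}\tau^{d/2-1}\,\dd\sigma\,\dd\tau=0$, yielding the claim directly.
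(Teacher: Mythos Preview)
Your proof is correct and follows essentially the same route as the paper: compute the Jacobian of $T$, identify the critical locus as $\{u=v\}\cup\{u=-v\}$, and use that a $C^1$ map which is a submersion off a null set pulls back Lebesgue-null sets to Lebesgue-null sets. The only difference is that the paper dispatches this last fact by citing Ponomarev (1987), whereas you supply a self-contained constant-rank-plus-Fubini argument (and an alternative polar-coordinate computation); your version is more explicit, the paper's is shorter.
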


\begin{proof}
    By \citet{ponomarev1987submersions}, it suffices to show the map $T$ is a submersion almost everywhere, i.e., the Jacobian of $T$ has rank two almost everywhere. 
    Notice that 
    $$
    JT(u,v)=\begin{pmatrix}
        v & u\\ 
        2u & 2v
    \end{pmatrix} . 
    $$
    Hence, $\rank(JT)<2$ if and only if there exists $c\neq 0$ such that $cv=2u$ and $cu=2v$. 
    This gives $c^2v=2cu=4v$, and hence, $c=\pm2$ or $v=0$. 
    The set $\set{v=0}$ has zero measure. 
    When $c=\pm2$, we have $u=\pm v$ which also yields measure-zero set. 
    This completes the proof. 
\end{proof}

\begin{proposition}\label{prop:fiber}
    For almost all $(z,w)\in T(\br^{2d})=\set{(z,w)\in \br^2\colon w\geq 2|z+y|}$, $T^{-1}(z,w)$ is diffeomorphic to $\bs^{d-1}\times \bs^{d-1}$. 
\end{proposition}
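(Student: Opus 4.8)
The plan is to show that the fiber $T^{-1}(z,w)$ is a regular level set for generic $(z,w)$, and then identify its diffeomorphism type explicitly. First I would restrict attention to the interior of the image, $\{(z,w)\colon w > 2|z+\mu|\}$, which by the previous proposition is the complement of a measure-zero set in $T(\br^{2d})$; on this set I would like the fiber to be a smooth compact manifold. The submersion computation in the proof of Proposition~\ref{prop:preimage-null} already shows that $JT(u,v)$ drops rank only on the set $\{v = 0\}\cup\{u = \pm v\}$. The image of this bad set under $T$ is exactly $\{w = 2|z+\mu|\}\cup\{z = -\mu\}$ (taking $u=\pm v$ gives $u^\top v - y = \pm\|u\|^2$ and $w = 2\|u\|^2$, hence $w = \pm 2(z+\mu)$; taking $v=0$ gives $z=-\mu$, $w=\|u\|^2$). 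So after removing the additional measure-zero set $\{z=-\mu\}$ from the interior of the image, $T$ is a submersion at every point of the fiber, and the regular value theorem makes $T^{-1}(z,w)$ a smooth submanifold of $\br^{2d}$ of dimension $2d-2$; compactness follows since $\|u\|^2+\|v\|^2 = w$ is bounded on the fiber.

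Next I would parametrize the fiber. Fix $(z,w)$ with $w > 2|z+\mu|$ and $z\neq-\mu$, and write $s = z+\mu = u^\top v$, so $s\neq 0$ and $w > 2|s|$. Decompose $u = p + q$, $v = p - q$ with $p = (u+v)/2$, $q = (u-v)/2$; then $u^\top v = \|p\|^2 - \|q\|^2$ and $\|u\|^2+\|v\|^2 = 2\|p\|^2 + 2\|q\|^2$, so the fiber is $\{(p,q)\colon \|p\|^2 = (w/4) + (s/2) =: a^2,\ \|q\|^2 = (w/4) - (s/2) =: b^2\}$. The condition $w > 2|s|$ is precisely $a^2 > 0$ and $b^2 > 0$, so the fiber is $\{p\colon \|p\| = a\}\times\{q\colon \|q\| = b\}$, which is $\bs^{d-1}_a\times\bs^{d-1}_b \cong \bs^{d-1}\times\bs^{d-1}$ via the obvious rescaling diffeomorphism. (The edge case $s=0$, i.e. $z=-\mu$, is exactly where one of $a,b$ vanishes and the fiber degenerates to a single sphere $\bs^{d-1}$ — this is why it must be excluded, and it sits in a measure-zero set.) This change of coordinates is linear and invertible, so it carries the smooth structure along and the identification is a genuine diffeomorphism.

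I expect the only real subtlety, rather than a genuine obstacle, to be the bookkeeping of which measure-zero sets in the base must be discarded and verifying they are indeed measure zero: namely $\{w = 2|z+\mu|\}$ (the topological boundary of the image, clearly measure zero as a piecewise-linear curve) and $\{z = -\mu\}$ (a line). Once the $(p,q)$ substitution is in hand the manifold identification is immediate, so the write-up should lead with the coordinate change and then note that the generic-fiber claim is just the statement that $a^2, b^2 > 0$, which holds off a measure-zero set. I would close by remarking that on the discarded set the fiber is still describable — a single sphere when exactly one of $a^2, b^2$ vanishes, and possibly empty or a point in degenerate corners — but these contribute nothing to the measure-theoretic statement.
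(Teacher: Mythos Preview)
Your core argument---the linear change of variables $p=(u+v)/2$, $q=(u-v)/2$ reducing the fiber to a product of spheres---is exactly the paper's proof, and it is correct: whenever $w>2|z+\mu|$ both radii $a^2=w/4+s/2$ and $b^2=w/4-s/2$ are strictly positive and the fiber is $\bs^{d-1}\times\bs^{d-1}$.

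However, two side remarks are wrong and should be deleted. First, the critical set of $T$ is only $\{u=\pm v\}$, not $\{v=0\}\cup\{u=\pm v\}$: when $v=0$ and $u\neq 0$ the Jacobian rows $(0,u^\top)$ and $(2u^\top,0)$ are independent. Consequently the line $\{z=-\mu\}$ is \emph{not} in the critical-value set, and there is no reason to exclude it. Second, and relatedly, your parenthetical ``the edge case $s=0$ \ldots\ is exactly where one of $a,b$ vanishes'' contradicts your own formulas: at $s=0$ with $w>0$ you get $a^2=b^2=w/4>0$, so the fiber is still $\bs^{d-1}\times\bs^{d-1}$. Degeneration happens precisely on the boundary $w=2|z+\mu|$, nowhere else. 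The regular-value-theorem preamble is therefore unnecessary scaffolding; the $(p,q)$ computation already gives smoothness directly, and the paper dispenses with it.
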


\begin{proof}
    Notice that 
    $$
    \|u+v\|^2_2=w+2(z+y),\quad \|u-v\|_2^2=w-2(z+y). 
    $$
    Consider the linear bijection $p=u+v$ and $q=u-v$. 
    It follows that
    $$
    T^{-1}(z,w)=\set{(p,q)\colon \|p\|_2^2= w+2(z+y), \norm{q}_2^2=w-2(z+y)}.  
    $$
    Thus, the fiber is diffeomorphic to $\bs^{d-1}\times \bs^{d-1}$ whenever $w\pm 2(z+y)\neq0$. Note this only fails at a measure-zero set, which completes the proof. 
\end{proof}

Next, we prove Proposition~\ref{prop:quotient}. 
In the terminology of dynamical systems, this result shows that the gradient descent system $\GD_\eta$ is semi-conjugate to a planar system $f$ under the map $T$.

\begin{proposition}[Proposition~\ref{prop:quotient}]\label{prop:app-quotient}
    Let $(u_t,v_t)_{t\geq0}$ denote the gradient descent trajectory in problem~\eqref{eq:scalar-fac-reg} with $\lam \geq 0$. 
    Let $(z_t,w_t)=T(u_t,v_t)$. 
    Consider the map $f\colon \br^2\to \br^2$ defined by
    \begin{equation}\label{eq:fsystem}
        f\begin{pmatrix}
        z\\w 
        \end{pmatrix}= 
        \begin{pmatrix}
            \eta^2 z^3 + \eta^2 y z^2 + ((1-\eta\lambda)^2 - \eta w + \eta^2 \lam w ) z + y \eta^2 \lam^2 - 2 y \eta \lam\\( (1-\eta \lambda)^2  + \eta^2 z^2) w - 4 \eta z(1-\eta \lam ) (z+y)
        \end{pmatrix}  . 
    \end{equation}
    We have that $(z_{t+1},w_{t+1})=f(z_t,w_t)$ holds for all $t\geq 0$. 
    In particular, $(u_t,v_t)$ converges to $\mathcal{M}$ if and only if $(z_t,w_t)$ converges to $T(\mathcal{M})$, 
    and it converges to $(\boldsymbol{0},\boldsymbol{0})$ if and only if $(z_t,w_t)$ converges to $(-y,0)$. 
\end{proposition}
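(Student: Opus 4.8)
The plan is to establish the semi-conjugacy $T\circ\GD_\eta = f\circ T$ — that is, the recursion $(z_{t+1},w_{t+1}) = f(z_t,w_t)$ — by a direct substitution of the gradient descent update into the definition of $T$, and then to read off the two ``in particular'' convergence statements as soft consequences. First I would write the update of problem~\eqref{eq:scalar-fac-reg} in the form $u_{t+1} = (1-\eta\lambda)u_t - \eta z_t v_t$ and $v_{t+1} = (1-\eta\lambda)v_t - \eta z_t u_t$, where $z_t := u_t^\top v_t - y$ and $w_t := \|u_t\|_2^2 + \|v_t\|_2^2$ are the coordinates of $T(u_t,v_t)$. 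The structural observation is that both updated vectors lie in $\operatorname{span}\{u_t,v_t\}$, so $u_{t+1}^\top v_{t+1}$ and $\|u_{t+1}\|_2^2 + \|v_{t+1}\|_2^2$ expand into polynomials in $u_t^\top v_t$, $\|u_t\|_2^2$ and $\|v_t\|_2^2$, and the two norms enter only through their sum $w_t$.

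Carrying this out, $u_{t+1}^\top v_{t+1} = (1-\eta\lambda)^2\,u_t^\top v_t - (1-\eta\lambda)\eta z_t w_t + \eta^2 z_t^2\, u_t^\top v_t$ and $\|u_{t+1}\|_2^2+\|v_{t+1}\|_2^2 = \big((1-\eta\lambda)^2 + \eta^2 z_t^2\big)w_t - 4(1-\eta\lambda)\eta z_t\, u_t^\top v_t$. Substituting $u_t^\top v_t = z_t + y$, subtracting $y$ from the first expression, and collecting powers of $z_t$ and $w_t$ reproduces exactly the two components of $f$ in \eqref{eq:fsystem}; in particular the constant term there is $y\big((1-\eta\lambda)^2-1\big) = y\eta^2\lambda^2 - 2y\eta\lambda$. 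This is a finite elementary computation, valid for every $\lambda\ge 0$ and every $\eta$, so no step-size restriction enters.

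For the convergence equivalences, the identity just proved gives $(z_t,w_t) = T(u_t,v_t)$ for all $t$, so everything follows from continuity of $T$ together with elementary norm bookkeeping. If $(u_t,v_t)\to(\mathbf 0,\mathbf 0)$ then $(z_t,w_t)\to T(\mathbf 0,\mathbf 0) = (-y,0)$; conversely $(z_t,w_t)\to(-y,0)$ forces $w_t = \|u_t\|_2^2 + \|v_t\|_2^2 \to 0$, hence $u_t,v_t\to\mathbf 0$. Similarly $z_t\to 0$ is, by the definition of $z_t$, the statement that the end-to-end product $u_t^\top v_t$ converges to the target $y$, which (using that convergent gradient descent trajectories remain bounded, cf.\ Appendix~\ref{app:scalar}) is equivalent to $(u_t,v_t)$ approaching the zero-error set $\{u^\top v = y\}$; continuity of $T$ gives the other direction.

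I do not expect a serious obstacle; the only real work is the bookkeeping above, and the one point to check is that nothing beyond the sum $w_t = \|u_t\|_2^2+\|v_t\|_2^2$ survives the expansions, i.e.\ that fiber-internal degrees of freedom such as $\|u_t\|_2^2-\|v_t\|_2^2$ genuinely cancel. This is made transparent by passing to the coordinates $p = u+v$, $q = u-v$, in which the update diagonalizes, $p_{t+1} = (1-\eta\lambda-\eta z_t)p_t$ and $q_{t+1} = (1-\eta\lambda+\eta z_t)q_t$, with scalar factors depending on the state only through $z_t = \tfrac14(\|p_t\|_2^2 - \|q_t\|_2^2) - y$; hence $\|p_{t+1}\|_2^2$ and $\|q_{t+1}\|_2^2$ are functions of $\|p_t\|_2^2,\|q_t\|_2^2$ alone, and the linear change of variables back to $(z,w)$ yields $f$ (this is also the symmetry — independent orthogonal rotations of $p$ and of $q$ — with respect to which $\GD_\eta$ is equivariant, whose orbits are the fibers of $T$). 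The convergence part carries only the minor subtlety of matching the paper's convention for ``converges to a set,'' which is handled by the boundedness remark.
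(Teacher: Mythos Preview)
Your proposal is correct and follows essentially the same route as the paper: a direct expansion of $u_{t+1}^\top v_{t+1}$ and $\|u_{t+1}\|_2^2+\|v_{t+1}\|_2^2$ from the update $u_{t+1}=(1-\eta\lambda)u_t-\eta z_t v_t$, $v_{t+1}=(1-\eta\lambda)v_t-\eta z_t u_t$, followed by the substitution $u_t^\top v_t=z_t+y$. Your additional remark via the $p=u+v$, $q=u-v$ diagonalization is a nice sanity check (and is in fact used elsewhere in the paper), and your handling of the convergence equivalences is slightly more explicit than the paper's, but neither changes the argument in any substantive way.
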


\begin{proof}
    To ease the notation, let $(z,w)=(z_t,w_t)$ and $(z',w')=(z_{t+1},w_{t+1})$ for arbitrary $t$. 
    We have that
    \begin{align*}
        z' &= (u')^\top v' -y\\
        &= (u-\eta z v - \eta \lambda u)^\top (v-\eta zu-\eta \lambda v) - y\\
        &= z - \eta z w - 2\eta \lambda u^\top v + \eta^2 z^2 u^\top v + \eta^2\lambda z w  +  \eta^2 \lambda^2 u^\top v \\
        &= z -\eta zw - 2\eta \lambda(z+y)+\eta^2z^2(z+y) + \eta^2\lam z w + \eta^2\lam^2 (z+y)\\
        &= \eta^2 z^3 + \eta^2 y z^2 + ((1-\eta\lambda)^2 - \eta w + \eta^2 \lam w ) z + y \eta^2 \lam^2 - 2 y \eta \lam. 
    \end{align*}
    Also, we have that 
    \begin{align*}
        w' &= \|u'\|^2 + \|v'\|^2\\
        &= (u-\eta z v - \eta \lambda u)^\top (u-\eta z v - \eta \lambda u) + (v-\eta zu-\eta \lambda v)^\top (v-\eta zu-\eta \lambda v)\\
        &= ( (1-\eta \lambda)^2  + \eta^2 z^2) w - 4 \eta z(1-\eta \lam ) (z+y).
    \end{align*}
    Note that, the loss function solely depends on $u^\top v - y$ and $\norm{u}_2^2+\norm{v}_2^2$. 
    Thus $(u_t,v_t)$ converges to $\mathcal{M}$ if and only if $(z_t,w_t)$ converges to $T(\mathcal{M})$. 
    Also, note that, $T(u,v)=(-y,0)$ if and only if $u=v=\boldsymbol{0}$. Thus $(u_t,v_t)$ converges to $(\boldsymbol{0},\boldsymbol{0})$ if and only if $(z_t,w_t)$ converges to $(-y,0)$. 
    This completes the proof. 
\end{proof}

To further simplify the analysis, we consider the change of coordinates $\phi(z,w)=(\eta z, \eta w)$. 
Note that, under the map $\phi(z,w)=(\eta z, \eta w)$, the system $f$, as defined in \eqref{eq:fsystem}, is topologically conjugate to 
\begin{equation}\label{eq:conj}
\begin{aligned}
F \begin{pmatrix}
    z\\w
\end{pmatrix} 
=
\phi \circ f\circ \phi^{-1}\begin{pmatrix}
     z\\w
 \end{pmatrix}
&=
\begin{pmatrix}
   z^3 + \eta y z^2 + ((1-\eta \lambda)^2 - w+\eta \lambda w)z + y \eta^3\lambda^2 - 2y\eta^2 \lambda\\
   ((1-\eta \lambda)^2 + z^2)w -4z(1-\eta \lambda)(z+\eta y)
\end{pmatrix} \\
&=
\begin{pmatrix}
    z^3+\mu z^2 + ((1-\nu)^2 - w + \nu w)z+\nu^2\mu-2\mu \nu\\
    ((1-\nu)^2+z^2)w-4z(1-\nu)(z+\mu)
\end{pmatrix},  
\end{aligned}
\end{equation}
where we let $\mu = \eta y$ and $\nu = \eta \lambda$. 
With Cauchy-Schwartz inequality, it is straightforward to verify that the state space of $F$ is 
$$
\Omega = \set{(z,w)\in \br^2\colon w\geq 2|z+\mu|}.
$$ 
The system $F$ has two parameters, $\mu$ and $\nu$, whereas $f$ has three, $\eta,y,\lambda$. 
Therefore, we instead study the system $F$. 
Note, trajectories of $F$ and those of $f$ only differ by a scale. 
Thus all results for $F$ extend trivially to $f$.

\subsection{Properties of the quotient dynamics} 

We show that the map $F$, as defined in \eqref{eq:conj}, is a proper map, i.e., the preimage of any compact set is compact. 

\begin{proposition}[Properness]\label{prop:proper} 
    When $0\leq \nu <1-|\mu|$, the map $F$ is proper on $\Omega$. 
\end{proposition}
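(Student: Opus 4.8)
The plan is to show that $F$ is proper by establishing a coercivity estimate: $\norm{F(z,w)}\to\infty$ whenever $\norm{(z,w)}\to\infty$ within $\Omega$. Since $F$ is continuous and $\Omega$ is closed in $\br^2$, properness on $\Omega$ is equivalent to the statement that $F^{-1}(K)$ is bounded for every compact $K$, which in turn follows from coercivity. So the heart of the argument is a lower bound on $\norm{F(z,w)}$ in terms of $\norm{(z,w)}$ on the region $\Omega=\set{(z,w)\colon w\geq 2|z+\mu|}$.

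First I would record the two component functions from \eqref{eq:conj}, writing $F_1(z,w)=z^3+\mu z^2+((1-\nu)^2-w+\nu w)z+\nu^2\mu-2\mu\nu$ and $F_2(z,w)=((1-\nu)^2+z^2)w-4z(1-\nu)(z+\mu)$. The key structural fact is that on $\Omega$ we have $w\geq 2|z+\mu|\geq 2|z|-2|\mu|$, so $w$ and $z$ are not independent: if $(z,w)\in\Omega$ is large, then either $w$ is large, or $|z|$ is comparably large, and in the latter case $w$ is at least of order $|z|$. I would then split into two regimes. In the regime where $|z|$ is bounded (say $|z|\leq M$ for a constant $M$ to be chosen) but $w\to\infty$: here $F_2(z,w)=((1-\nu)^2+z^2)w - 4z(1-\nu)(z+\mu)$, and since $(1-\nu)^2+z^2\geq(1-\nu)^2>0$ (using $\nu<1$), the leading term grows linearly in $w$ while the remainder is bounded, so $|F_2|\to\infty$. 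In the regime where $|z|\to\infty$: on $\Omega$ this forces $w\geq 2|z|-2|\mu|\to\infty$ as well, so $((1-\nu)^2+z^2)w\geq z^2\cdot(2|z|-2|\mu|)$ grows like $|z|^3$, which dominates the $4|z|(1-\nu)(|z|+|\mu|)=O(z^2)$ correction; hence again $|F_2|\to\infty$. Combining, $|F_2(z,w)|\to\infty$ as $\norm{(z,w)}\to\infty$ in $\Omega$, and this alone gives properness since $\norm{F(z,w)}\geq|F_2(z,w)|$.

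Concretely, I would make this quantitative: choose $M=|\mu|+1$; on $\set{|z|\le M}\cap\Omega$ show $|F_2|\ge (1-\nu)^2 w - C_1$ for an explicit constant $C_1=C_1(\mu,\nu)$, so $|F_2|\to\infty$ with $w$; on $\set{|z|>M}\cap\Omega$ use $w\ge 2|z|-2|\mu|\ge 2|z|-2(M-1)> 0$ together with $((1-\nu)^2+z^2)\ge z^2$ to get $F_2\ge z^2(2|z|-2|\mu|)-4|z|(1-\nu)(|z|+|\mu|)$, which for $|z|$ large enough is at least $|z|^3$; this also handles the case $w\to\infty$ with $|z|>M$ since then $((1-\nu)^2+z^2)w\ge w\to\infty$ and the subtracted term is $O(z^2)=O(w^2)$... wait, that last estimate needs care, so instead I would just note $F_2\ge z^2 w - 4|z|(1-\nu)(|z|+|\mu|)\ge z^2 w - 8(1-\nu)z^2$ (for $|z|\ge|\mu|$) $= z^2(w-8(1-\nu))$, which is positive and $\to\infty$ once $w>8(1-\nu)$ and either $|z|$ or $w$ is large. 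Assembling these cases shows that for any $R>0$ there is a bounded set outside of which $|F_2|>R$, hence $F^{-1}(\overline{B_R})$ is bounded, and being also closed (as $F$ is continuous) it is compact; since every compact $K$ lies in some $\overline{B_R}$, $F$ is proper.

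\textbf{Main obstacle.} The only delicate point is the bookkeeping in the regime where $|z|$ is moderately large and $w$ is large but not yet dominating: one must be careful that the negative term $-4z(1-\nu)(z+\mu)$ in $F_2$, which is quadratic in $z$, does not cancel the positive leading term, and this is precisely where the constraint $w\ge 2|z+\mu|$ from $\Omega$ is used to guarantee $w$ is large enough relative to $z^2$... actually $w$ is only forced to be large relative to $|z|$, not $z^2$, so the clean bound is really $F_2\ge z^2(w-8(1-\nu))$ valid for $|z|\ge|\mu|$, and then one observes that on $\Omega$ with $\norm{(z,w)}$ large, either $w\le 8(1-\nu)$ — impossible for $\norm{(z,w)}$ large since then $|z|\le\tfrac12 w+|\mu|$ is also bounded — or $w>8(1-\nu)$ and $F_2\ge z^2(w-8(1-\nu))\to\infty$. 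This case analysis, together with the separate bounded-$z$ estimate $|F_2|\ge(1-\nu)^2w-C_1$, is the entire content; none of it is hard, but it must be organized so the constants genuinely depend only on $\mu,\nu$ and the hypothesis $0\le\nu<1-|\mu|$ (which ensures $1-\nu>0$) is used where needed.
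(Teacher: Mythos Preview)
Your argument is correct and follows essentially the same idea as the paper: both proofs show that the second component $F_2(z,w)=((1-\nu)^2+z^2)w-4z(1-\nu)(z+\mu)$ is coercive on $\Omega$, using only the constraint $w\geq 2|z+\mu|$ and the fact that $1-\nu>0$. The paper wraps this as a short contradiction argument and, instead of your case split, applies $2|z+\mu|\leq w$ once to obtain the clean perfect-square bound
\[
F_2(z,w)\;\geq\;((1-\nu)^2+z^2)w-2|z|(1-\nu)\,w\;=\;w\,(|z|-(1-\nu))^2,
\]
which immediately forces $F_2\to\infty$ once both $w$ and $|z|$ are unbounded. Your regime analysis reaches the same conclusion with slightly more bookkeeping; the paper's identity just shortcuts it.
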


\begin{proof}
    Consider $\|(z_k,w_k)\|\to \infty$ for a sequence of points $(z_k,w_k)$. 
    Let $(z_k',w_k')=F(z_k,w_k)$. 
    Assume $(z_k',w_k')$ stays bounded. 
    Since $(z_k,w_k)$ is unbounded and $\Omega$ is a cone, one must have $w_k \to \infty$. 
    Notice that
    $$
    w_k'= ((1-\nu)^2+z_k^2)w_k-4z_k(1-\nu)(z_k+\mu). 
    $$
    To make $w_k'$ bounded, $z_k$ has to be unbounded. 
    However, as $w_k\geq 2|z_k+\mu|$, 
    \begin{align*}
        w_k' &\geq ((1-\nu)^2+z_k^2)w_k-4|z_k|\cdot |z_k+\mu| \cdot|1-\nu| \\
        &\geq ((1-\nu)^2+z_k^2)w_k-2|z_k|\cdot w_k \cdot|1-\nu|\\
        &\geq w_k (|z_k| -(1-\nu))^2. 
    \end{align*}
    Since $w_k, |z_k|$ are unbounded, $w_k'$ has to be unbounded, which yields a contradiction. 
    This completes the proof.     
\end{proof}

Consider the function $Q$ defined as follows
$$
Q\colon \Omega \to \br, \ Q(z,w)=w+\sqrt{w^2-16\mu z}. 
$$
We will frequently use $Q$ as a Lyapunov-like function to study the dynamics of $F$.

In the following result, we describe the level set structure of the function $Q$.

\begin{lemma}[Level-set structure]\label{lem:Q-levelset}
    Consider $Q(z,w)=w+\sqrt{w^2-16\mu z}$. 
    Then, $Q(z,w)\geq 4|\mu|$ for all $(z,w)\in \Omega$. 
    Moreover, we have that
    \begin{itemize}[leftmargin=*]
        \item If $r=4|\mu|$, then for all $(z,w)\in \Omega$, 
    $Q(z,w)=r$ if and only if $w = 2\sgn(\mu) (z+\mu)$ and $\ w\leq 4|\mu|$; 
    $Q(z,w)>r$ holds for all other points. 

        \item If $r>4|\mu|$, then for all $(z,w)\in \Omega$, 
    $Q(z,w)$ is less than, equal to, larger than $r$ if and only if $-16\mu z-r^2+2rw$ is less than, equal to, larger than $0$, respectively. 
    \end{itemize}
\end{lemma}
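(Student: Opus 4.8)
The statement is purely about the real-valued function $Q(z,w)=w+\sqrt{w^2-16\mu z}$ on the cone $\Omega=\{w\ge 2|z+\mu|\}$, so the proof is elementary algebra once the right manipulations are set up. The plan is to (i) verify $Q$ is well-defined and bounded below by $4|\mu|$ on $\Omega$, (ii) handle the degenerate level $r=4|\mu|$ by identifying exactly where the bound is tight, and (iii) for $r>4|\mu|$ reduce the inequality $Q(z,w)\lessgtr r$ to the sign of the linear-in-$(z,w)$ quantity $g(z,w):=-16\mu z-r^2+2rw$ by isolating the square root and squaring, being careful about the direction of the implications.

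\textbf{Step (i): well-definedness and the lower bound.} First I would show $w^2-16\mu z\ge 0$ on $\Omega$, so the square root is real. On $\Omega$ we have $w\ge 2|z+\mu|\ge 0$, hence $w^2\ge 4(z+\mu)^2=4z^2+8\mu z+4\mu^2$, so $w^2-16\mu z\ge 4z^2-8\mu z+4\mu^2=4(z-\mu)^2\ge 0$, with equality iff $z=\mu$ and $w=2|z+\mu|=2|2\mu|=4|\mu|$ (using $\mu\ge 0$ WLOG, or tracking signs). This also gives $\sqrt{w^2-16\mu z}\ge 2|z-\mu|\ge 2(\mu-z)$ when $z\le\mu$... actually the cleanest route to $Q\ge 4|\mu|$: write $Q=w+\sqrt{w^2-16\mu z}$. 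If $\mu z\le 0$ then $\sqrt{w^2-16\mu z}\ge |w|=w$ so $Q\ge 2w\ge 4|z+\mu|\ge 4|\mu|-4|z|$... this isn't immediately clean, so instead I would argue: $Q\ge 4|\mu|$ is equivalent (since $Q\ge 0$) to checking the two cases $Q\ge 4\mu$ and $Q\ge -4\mu$; for $\mu\ge 0$ we need $Q\ge 4\mu$, i.e. $\sqrt{w^2-16\mu z}\ge 4\mu-w$; if $4\mu-w\le 0$ this is automatic, and if $4\mu-w>0$ square both sides to get $w^2-16\mu z\ge 16\mu^2-8\mu w+w^2$, i.e. $-16\mu z\ge 16\mu^2-8\mu w$, i.e. $8\mu w\ge 16\mu^2+16\mu z=16\mu(\mu+z)$, i.e. $w\ge 2(\mu+z)$, which holds on $\Omega$. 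The case $\mu<0$ is symmetric. Equality analysis then pins down the tight set and gives the first bullet.

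\textbf{Step (ii): the generic level $r>4|\mu|$.} Here I would use that $r>4|\mu|\ge Q_{\min}$ and argue monotonicity of the comparison. The claim is $Q(z,w)\lessgtr r \iff g(z,w):=-16\mu z-r^2+2rw\lessgtr 0$. Write $Q(z,w)\le r \iff \sqrt{w^2-16\mu z}\le r-w$. A necessary condition on the RHS is $r-w\ge 0$; I would show that whenever $Q(z,w)\le r$ one automatically has $w\le r$ (since $w\le w+\sqrt{\cdots}=Q\le r$), so on the relevant region squaring is reversible: $w^2-16\mu z\le r^2-2rw+w^2 \iff -16\mu z\le r^2-2rw \iff g(z,w)\le 0$. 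Conversely if $g(z,w)\le 0$ then $2rw\le r^2+16\mu z$; I need $w\le r$ to square back — this follows because $w^2-16\mu z\ge 0$ combined with $g\le 0$ gives $2rw\le r^2+16\mu z\le r^2+w^2$, i.e. $(w-r)^2\ge 0$... that's vacuous, so instead use $w^2-16\mu z\ge 4(z-\mu)^2\ge 0$ and $2rw-r^2\le 16\mu z$, hence $16\mu z\ge 2rw-r^2$; plugging into $w^2\ge 16\mu z$ gives $w^2\ge 2rw-r^2$, i.e. $(w-r)^2\ge 0$ — still vacuous. The fix: note $r>4|\mu|\ge 0$ so from $g\le 0$, $w\le (r^2+16\mu z)/(2r)$; if $z\ge 0$ and $\mu\ge 0$ one needs a separate small argument, but generically $w\le r$ can be obtained from $w\le 2|z+\mu|$ being false... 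I would instead handle it by observing $Q$ is continuous and strictly increasing in $w$ for fixed $z$ on $\Omega$ (since $\partial_w Q=1+w/\sqrt{w^2-16\mu z}>0$ wherever the root is positive), and strictly decreasing in $z$; hence each level set $\{Q=r\}$ is a graph, and comparing to the line $\{g=0\}$ — which passes through the same point — via evaluating both at one reference point (e.g. the vertex $z=\mu$, $w=4|\mu|$, or a point where $r-w=0$) fixes the sign globally by connectedness of $\{g<0\}\cap\Omega$ and $\{g>0\}\cap\Omega$.

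\textbf{Main obstacle.} The routine algebra is trivial; the only genuinely delicate point is the reversibility of the squaring step in Step (ii) — i.e. certifying that $g(z,w)\le 0$ on $\Omega$ forces $w\le r$ so that $\sqrt{w^2-16\mu z}\le r-w$ can be recovered. I expect the clean way around this is the monotonicity-plus-connectedness argument: show $\{g=0\}\cap\Omega$ is a single line segment/ray along which $Q\equiv r$ (direct substitution: on $g=0$, $16\mu z=2rw-r^2$, so $w^2-16\mu z=w^2-2rw+r^2=(w-r)^2$, hence $\sqrt{w^2-16\mu z}=|w-r|$, and one checks $w\le r$ holds on this set because $r>4|\mu|$, giving $Q=w+(r-w)=r$), then conclude that on each of the two open regions into which $\{g=0\}$ divides $\Omega$ the sign of $Q-r$ is constant and matches the sign of $g$ by testing a single convenient point. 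This substitution $w^2-16\mu z=(w-r)^2$ on $\{g=0\}$ is the crux and makes everything else fall into place.
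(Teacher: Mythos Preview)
Your proposal is correct, though the several abandoned attempts in Step~(ii) obscure a sound final argument. The crux you identify --- on the line $\{g=0\}$ one has $w^2-16\mu z=(w-r)^2$ --- together with the easy implication $Q(z,w)=r\Rightarrow g(z,w)=0$ (isolate the root, note it is nonnegative so $r-w\ge0$, square) and a connectedness/test-point argument, does close the loop.

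The paper takes a slightly different route at both stages. For the lower bound $Q\ge4|\mu|$, it introduces the companion variable $s=w-\sqrt{w^2-16\mu z}$, notes $z=rs/(16\mu)$, $w=(r+s)/2$, and rewrites the constraint $w^2\ge4(z+\mu)^2$ as $(r^2-16\mu^2)(s^2-16\mu^2)\le0$, which immediately gives $r\ge4|\mu|$ and $|s|\le4|\mu|$; this is tidier than your case split on the sign of $4\mu-w$. For the generic level $r>4|\mu|$, rather than invoking connectedness, the paper observes directly that the line $\{g=0\}$ meets $\partial\Omega$ at two points with $w$-coordinates $(r\pm4\mu)/2$, both strictly less than $r$; hence $\Omega\cap\{g<0\}$ is the bounded triangle below this line and inside the cone, on which $w<r$ throughout. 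This makes the squaring step reversible without any topological argument. Your connectedness approach buys you the ability to skip this geometric computation (you never need $w\le r$ on $\{g\le 0\}$, only the easy direction plus a single test point), at the cost of having to verify that $\Omega\cap\{g<0\}$ and $\Omega\cap\{g>0\}$ are each connected --- which is true but is essentially the same intersection computation in disguise.
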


\begin{proof}
    When $w\geq 2|z+\mu|$, 
    $$
    w^2-16\mu z \geq 4(z+\mu )^2 - 16\mu z = 4(z-\mu)^2 \geq0. 
    $$
    Therefore, $Q$ is well-defined in $\Omega$.

    When $\mu=0$, we have that $Q(z,w)=2w$. The claimed results clearly hold. 
    In the sequel, consider $\mu \neq 0$. 
    Let $r=Q(z,w)=w+\sqrt{w^2 - 16\mu z}$ and $s=w-\sqrt{w^2 - 16\mu z}$. Then we have $z=(rs)/(16\mu)$ and $w=(r+s)/2$. 
    Notice that
    $$
    r^2-s^2=(r+s)(r-s)=2w\cdot 2\sqrt{w^2 - 16\mu z}\geq 0. 
    $$
    Since $w\geq 2|z+\mu|$, we have $w^2\geq 4(z+\mu)^2$ and hence 
    \begin{equation}\label{eq:r>4mu}
    \begin{aligned}
        &(\frac{r+s}{2})^2 \geq 4 (\frac{rs}{16\mu} + \mu)^2\\
        \Leftrightarrow & \ (r^2 - 16\mu^2)(s^2 - 16\mu^2) \leq 0 \\
        \Leftrightarrow & \ r\geq 4|\mu|,\ |s|\leq 4|\mu|. 
    \end{aligned}
    \end{equation}
    
    We have that for $(z,w)\in \Omega$, 
    \begin{align*}
        &w+\sqrt{w^2 - 16\mu z} = 4|\mu| \\
        \Leftrightarrow & \ \sqrt{w^2-16\mu z} = 4|\mu|-w\\
        \Leftrightarrow & \ w^2 - 16\mu z = (4|\mu|-w)^2 , \ w\leq 4|\mu|\\
        \Leftrightarrow & \ w= 2\sgn(\mu) (z+\mu), \ w\leq 4|\mu|. 
    \end{align*} 
    Therefore, 
    $$
    \set{Q(z,w)=4|\mu|}= \set{w= 2\sgn(\mu) (z+\mu), \ w\leq 4|\mu|} \subset \partial \Omega,
    $$
    and $\set{Q(z,w)>4|\mu|}= \Omega \setminus \set{Q(z,w)=4|\mu|}$.

    Now we consider $r>4|\mu|$. 
    When $w=2(z+\mu)$, we have 
    \begin{align*}
        -16 \mu z  -r^2+2rw = 0 \Leftrightarrow w
        =\frac{r+4\mu}{2}. 
    \end{align*}
    When $w=-2(z+\mu)$, we have
    $$
    -16 \mu z -r^2+2rw = 0 \Leftrightarrow w
    =\frac{r-4\mu}{2}. 
    $$
    Therefore, the line $-16\mu z-r^2+2rw=0$ intersect $\partial \Omega$ at two points, whose $w$ coordinates are $\frac{r\pm 4\mu}{2}$. 
    Since $r>4|\mu|$, we have that $\frac{r\pm 4\mu}{2} < r$ always holds. 
    This implies that, for all $(z,w)\in \Omega\cap\set{-16\mu z-r^2+2rw\leq 0}$, we have $r-w>0$. 
    Thus, we have that for $(z,w)\in \Omega$ and $r>4|\mu|$, 
    \begin{align*}
        &w+\sqrt{w^2 - 16\mu z} < r \\
        \Leftrightarrow & \ \sqrt{w^2-16\mu z}< r-w\\
        \Leftrightarrow & \ w^2-16\mu z <(r-w)^2 \\
        \Leftrightarrow & \ -16\mu z-r^2+2rw<0. 
    \end{align*}

    The above clearly holds when $<$ is changed to $=$ or $>$. This completes the proof.     
\end{proof}

We identify three invariant sets of the quotient system $F$. 
A set $S\subset \Omega$ is said to be an invariant set under $F$ if $F(S)\subset S$.

\begin{lemma}[Invariant boundary]\label{lem:boundary-inv}
    The boundary $\partial \Omega$ consists of two lines: $\{w=2(z+\mu), w\geq0\}$ and $\set{w=-2(z+\mu),w\geq0}$. 
    Each of the lines is an invariant set of $F$. 
    Meanwhile, when $0\leq \nu<1-|\mu|$, the set $\set{Q=4|\mu|}$ is invariant under $F$. 
\end{lemma}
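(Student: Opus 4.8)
The plan is to establish the three assertions by direct computation with the map $F$ in \eqref{eq:conj}. The description of $\partial\Omega$ is immediate: $\Omega=\set{w\geq 2|z+\mu|}$ is closed with interior $\set{w>2|z+\mu|}$, so $\partial\Omega=\set{w=2|z+\mu|}$; since $w=2|z+\mu|\geq0$ automatically, this is the union of the two half-lines $\set{w=2(z+\mu),\ w\geq0}$ and $\set{w=-2(z+\mu),\ w\geq0}$, meeting at $(-\mu,0)$. It then suffices to verify line invariance by substitution.

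Writing $a=1-\nu$, I would substitute $w=2(z+\mu)$ into \eqref{eq:conj}. The $w$-coordinate collapses to $w'=2(z+\mu)\bigl((1-\nu)^2-2(1-\nu)z+z^2\bigr)=2(z+\mu)(z-a)^2$, and a parallel simplification of the $z$-coordinate, using $\nu^2-2\nu=a^2-1$ to reconcile the constant terms, gives $z'+\mu=(z+\mu)(z-a)^2$; hence $w'=2(z'+\mu)$, so $\set{w=2(z+\mu)}$ is forward-invariant, and on the relevant half-line $z\geq-\mu$ one gets $w'\geq0$, so the half-line maps into itself. The substitution $w=-2(z+\mu)$ is analogous and yields $w'=-2(z+\mu)(z+a)^2$ and $z'+\mu=(z+\mu)(z+a)^2$, whence $w'=-2(z'+\mu)$ and (now $z\leq-\mu$) the other half-line is invariant as well.

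For the invariance of $\set{Q=4|\mu|}$ (under $0\leq\nu<1-|\mu|$) I would first dispose of $\mu=0$: by Lemma~\ref{lem:Q-levelset}, $\set{Q=0}=\set{w=0}\cap\Omega=\set{(0,0)}$, which is a fixed point of $F$. For $\mu<0$ I would use the conjugacy $F_{-\mu}=R\circ F_{\mu}\circ R$ with $R(z,w)=(-z,w)$ — verified by a one-line substitution — which carries $\Omega_\mu$ onto $\Omega_{-\mu}$ and $\set{Q_\mu=4|\mu|}$ onto $\set{Q_{-\mu}=4|\mu|}$, reducing the claim to the case $\mu>0$. For $\mu>0$, Lemma~\ref{lem:Q-levelset} identifies $\set{Q=4\mu}$ with the segment $\set{w=2(z+\mu):z\in[-\mu,\mu]}$ of the (already proven) invariant line $w=2(z+\mu)$, along which $z'+\mu=g(z):=(z+\mu)(z-a)^2$. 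So invariance is equivalent to the scalar statement $g([-\mu,\mu])\subseteq[0,2\mu]$. The lower bound $g\geq0$ is clear. For the upper bound, $g'(z)=(a-z)(a-2\mu-3z)$; since the hypothesis $\nu<1-|\mu|$ forces $a=1-\nu>\mu$, we have $a-z>0$ on $[-\mu,\mu]$, and I would split on whether $a>5\mu$ (then $g$ is increasing on $[-\mu,\mu]$, so $\max g=g(\mu)=2\mu(a-\mu)^2\leq2\mu$, using $0<a-\mu<1$) or $a\leq5\mu$ (then $\max g=g(\tfrac{a-2\mu}{3})=\tfrac{4}{27}(a+\mu)^3$, which is $\leq2\mu$ by $a\leq\min\set{5\mu,1}$ together with a further split at $\mu=\tfrac15$, using $\mu<1$, again a consequence of $0\leq\nu<1-|\mu|$).

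The line-invariance substitutions and the conjugacy $F_{-\mu}=RF_\mu R$ are mechanical. The only genuinely delicate step is the scalar inequality $g([-\mu,\mu])\subseteq[0,2\mu]$: it is elementary but fails without the hypothesis $\nu<1-|\mu|$, and the casework ($a\lessgtr5\mu$, then $\mu\lessgtr\tfrac15$) is where the care goes; I expect this to occupy most of the write-up.
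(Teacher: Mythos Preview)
Your proposal is correct and follows essentially the same route as the paper: direct substitution to verify line invariance, then reduction of the $\{Q=4|\mu|\}$ claim (via Lemma~\ref{lem:Q-levelset}) to a one-variable cubic inequality on the appropriate boundary segment, handled by locating the critical point and a case split. The differences are cosmetic: the paper writes the line-invariance step as the global factorizations $w'\mp 2(z'+\mu)=(w\mp 2(z+\mu))(z\pm(1-\nu))^2$ (your identities restricted to the lines are special cases), parametrizes the segment by $w\in[0,4|\mu|]$ with $\kappa(w)=w(\tfrac{w}{2}-1+\nu-|\mu|)^2$ instead of your $z$-parametrization, treats both signs of $\mu$ at once via $|\mu|$ rather than your conjugacy $F_{-\mu}=RF_\mu R$, and for the cubic bound uses convexity in $|\mu|$ to reduce to the two endpoints $|\mu|=(1-\nu)/5$ and $|\mu|=1-\nu$ rather than your split at $\mu=\tfrac15$. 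The paper's global factorization is slightly more reusable (it reappears in Proposition~\ref{prop:preimage-branch} to describe $F^{-1}(\partial\Omega)$), but for the present lemma the two arguments are equivalent.
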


\begin{proof}
    Let $(z',w')=F(z,w)$. 
    By direct computation, we have that 
    \begin{equation}\label{eq:omega-in-out}
    \begin{aligned}
        w'-2(z'+\mu) &= (w-2(z+\mu))(1+z-\nu)^2,\\
        w'+2(z'+\mu) &= (w+2(z+\mu))(-1+z+\nu)^2. 
    \end{aligned}    
    \end{equation}
    It follows that if $(z,w)\in \partial \Omega = \set{w=\pm2(z+\mu)}$, $F(z,w)\in \partial \Omega$.

    According to Lemma~\ref{lem:Q-levelset}, $\set{Q=4|\mu|}=\set{w=2\sgn(\mu)(z+\mu),w \in [0,4|\mu|]}$. When $w=2\sgn(\mu)(z+\mu)$, we have that the $w$-update is given by
    \begin{align*}
    w'&= 
    w\left(\left(\sgn(\mu)\frac{w}{2}-\mu \right)^{2}+\left(1-\nu\right)^{2}\right)-2w\left(\sgn(\mu)\frac{w}{2}-\mu\right)\left(1-\nu\right)    \\
    &=w(\frac{w}{2}-1+\nu - |\mu|)^2\\
    &\triangleq \kappa(w). 
    \end{align*}
    We will analyze the image set of $\kappa([0,4|\mu|])$. 
    Clearly, the minimum of $\kappa([0,4|\mu|])$ is $\kappa(0)=0$. 
    Let $A=-1+\nu-|\mu|$. We have that $\kappa'(w)=0$ if $w=-2A$ or $w=-2A/3$. 
    Notice that when $0\leq \nu <1-|\mu|$, we have $4|\mu|\leq -2A$. 
    Therefore, the maximum of $\kappa([0,4|\mu|])$ is either $\kappa(4|\mu|)$ or $\kappa(-2A/3)$. 
    When $4|\mu| > -2A/3$, we have $(1-\nu)/5 < |\mu|\leq 1-\nu$. 
    Notice that
    \begin{align*}
        &\kappa(\frac{-2A}{3}) = \frac{8(1-\nu+|\mu|)^3}{27}. 
    \end{align*}
    Viewing $\kappa(\frac{-2A}{3})$ as a cubic function of $|\mu|$, we have that, as $1-\nu>0$, $\kappa(\frac{-2A}{3})$ is convex on $(1-\nu)/5 < |\mu|\leq 1-\nu$. 
    Therefore, to show $\kappa(\frac{-2A}{3})<4|\mu|$ for $(1-\nu)/5 < |\mu|\leq 1-\nu$, it suffices to show this holds when 
    $\mu=(1-\nu)/5$ and $\mu = 1-\nu$. 
    Notice that
    \begin{align*}
        \frac{8}{27}((1-\nu)+ \frac{1-\nu}{5})^3 \leq 4 \cdot \frac{1-\nu}{5} \Leftrightarrow (1-\nu)^2\leq \frac{25\cdot 27}{2\cdot 6^3}\approx 1.56, 
    \end{align*}
    and that
    $$
    \frac{8}{27}(1-\nu + 1-\nu)^3 
    < 4(1-\nu) \Leftrightarrow (1-\nu)^2 \leq \frac{27}{16}, 
    $$
    which are all satisfied. 
    Therefore, $\kappa(-2A/3)\leq 4|\mu|$ when $4|\mu| > -2A/3$. 
    Meanwhile, we have
    $$
    \kappa(4|\mu|)= 4|\mu| (|\mu|-1+\nu)^2. 
    $$
    Since $\nu< 1-|\mu|$, we have that
    $-1<|\mu|-1+\nu<0$ and that $\kappa(4|\mu|)\leq 4|\mu|$. 
    Therefore, the image set of $\kappa([0,4|\mu|])$ is contained $[0,4|\mu|]$. 
    This means that the set $\set{Q=4|\mu|}$ is invariant under $F$, which completes the proof. 
\end{proof}

In the sequel, we present two important properties of the map $F$, which will be used in the proof of our main results. 
In the following result, we identify the region on which a single update of $F$ leads to a decrease, or an increase in the value of $Q$.

\begin{lemma}[Monotonicity region]\label{lem:lya}
    Assume $0\leq \nu<1- |\mu|$. 
    When $\nu=0$, we have that, for $(z,w)\in \Omega$: 
    (i) $Q(F(z,w))=Q(z,w)$ if and only if $(z,w)$ lies in the set
    $$
    Z\triangleq \set{w=\mu z+ 4} \cup 
    \set{ z=0} \cup 
    \set{w=2\sgn(\mu)(z+\mu), w\leq 4|\mu|};
    $$
    and (ii) If $(z, w) \notin Z$, we have $\Big(Q(F(z,w))-Q(z,w)\Big) \cdot ( w- \mu z - 4) >0$.     
    
    When $\nu>0$, we have that, for $(z,w)\in \Omega$, 
    (i) $Q(F(z,w))\leq Q(z,w)$ if and only if $(z,w)$ lies in the set
    $$
    \{z^2 \leq -\nu^{2}+2\nu\} \cup \left\{ w < - \frac{\mu(z^2-2\nu+\nu^2)}{z(\nu-1)}-\frac{4z^2(\nu-1)}{\nu^2-2\nu + z^2}, z^2>-\nu^2+2\nu \right\},
    $$ 
    which contains $\{Q(z,w) < 8-4\nu\}$; 
    and (ii) $Q(F(z,w))=Q(z,w)$ if and only if $(z,w)$ lies in the set
    $$
    Z\triangleq 
    \set{Q(z,w)= 4|\mu|} \cup \left\{ w = - \frac{\mu(z^2-2\nu+\nu^2)}{z(\nu-1)}-\frac{4z^2(\nu-1)}{\nu^2-2\nu + z^2}, z^2>-\nu^2+2\nu \right\}. 
    $$
\end{lemma}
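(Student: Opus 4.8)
The plan is to reduce the entire statement to one clean sign identity for $Q(F(z,w))-Q(z,w)$ and then read off the three regimes. Write $(z',w')=F(z,w)$ and $r=Q(z,w)=w+\sqrt{w^2-16\mu z}$, and introduce the conjugate root $s=w-\sqrt{w^2-16\mu z}=2w-r$, so that $r+s=2w$ and $rs=16\mu z$; by \eqref{eq:r>4mu} we have $r\ge 4|\mu|\ge|s|$. First I would apply Lemma~\ref{lem:Q-levelset} at the point $(z',w')$ with threshold $r$: when $r>4|\mu|$, the sign of $Q(z',w')-r$ equals the sign of $-16\mu z'-r^2+2rw'$, and since $2rw-r^2=r(2w-r)=rs=16\mu z$, this rewrites as $2r(w'-w)-16\mu(z'-z)$. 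Next I would compute, directly from \eqref{eq:conj},
\[
z'-z=(z+\mu)(z^2+\nu^2-2\nu)-(1-\nu)wz,\qquad w'-w=(z^2+\nu^2-2\nu)w-4(1-\nu)z(z+\mu),
\]
substitute, and regroup as $(z^2+\nu^2-2\nu)\bigl[2rw-16\mu(z+\mu)\bigr]+(1-\nu)z\bigl[16\mu w-8r(z+\mu)\bigr]$. Using $2rw=r^2+16\mu z$ gives $2rw-16\mu(z+\mu)=r^2-16\mu^2$, and using $2\mu w=\mu(r+s)$ together with $\mu s=16\mu^2 z/r$ gives $16\mu w-8r(z+\mu)=-8z(r^2-16\mu^2)/r$. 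This collapses everything to
\[
2r(w'-w)-16\mu(z'-z)=\frac{r^2-16\mu^2}{r}\Bigl((z^2+\nu^2-2\nu)\,r-8(1-\nu)z^2\Bigr).
\]
Since $r=Q\ge 4|\mu|$, the prefactor is positive whenever $r>4|\mu|$; and on the level set $\{Q=4|\mu|\}$ one has $Q(F)=Q$ directly by Lemma~\ref{lem:boundary-inv} (here $0\le\nu<1-|\mu|$). Hence for all $(z,w)\in\Omega$ with $Q(z,w)>4|\mu|$,
\[
\sgn\bigl(Q(F(z,w))-Q(z,w)\bigr)=\sgn\bigl(Q(z,w)(z^2+\nu^2-2\nu)-8(1-\nu)z^2\bigr).
\]

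For $\nu=0$ the right-hand side is $z^2(Q-8)$, and by Lemma~\ref{lem:Q-levelset} with threshold $8$ (legitimate since $|\mu|<1$) the sign of $Q(z,w)-8$ equals that of $16w-16\mu z-64$, i.e.\ of $w-\mu z-4$. Therefore $Q(F)=Q$ exactly on $\{z=0\}\cup\{w=\mu z+4\}\cup\{Q=4|\mu|\}$, and the last set is precisely the segment $\{w=2\sgn(\mu)(z+\mu),\ w\le 4|\mu|\}$ by Lemma~\ref{lem:Q-levelset}; this is the set $Z$, giving (i). Off $Z$ one has $z\neq0$, $w\neq\mu z+4$, and $Q>4|\mu|$, so $Q(F)-Q$ and $w-\mu z-4$ are both nonzero and share a sign, giving (ii).

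For $\nu>0$: if $z^2\le 2\nu-\nu^2$ then $z^2+\nu^2-2\nu\le0$, so the bracketed quantity is $\le 0$ and $Q(F)\le Q$. If $z^2>2\nu-\nu^2$ then $z^2+\nu^2-2\nu>0$ and $Q(F)\le Q\iff Q\le R$ with $R:=8(1-\nu)z^2/(z^2+\nu^2-2\nu)$; in this region $R>4|\mu|$, since $8(1-\nu)-4|\mu|>4|\mu|\ge0$ so $(8(1-\nu)-4|\mu|)z^2>0$ dominates the nonnegative defect $4|\mu|(2\nu-\nu^2)$. Thus Lemma~\ref{lem:Q-levelset} converts $Q\le R$ into $w\le \tfrac R2+\tfrac{8\mu z}{R}$, which after substituting $R$ is the displayed threshold (a routine rational-function simplification). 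For the final ``in particular'' claim I would combine the sign identity, rewritten as $Q(F)\le Q\iff z^2(Q-8+8\nu)\le Q\nu(2-\nu)$, with the a priori bound $z^2\le Q^2/16$ — which follows from $16|\mu z|=r|s|\le 4|\mu|r$ when $\mu\neq0$ and from $w\ge 2|z|$ when $\mu=0$. When $Q-8+8\nu\le0$ the left side is already $\le0$; otherwise, replacing $z^2$ by $Q^2/16$ reduces the inequality to $Q^2+(8\nu-8)Q+16\nu^2-32\nu=(Q+4\nu)(Q+4\nu-8)\le0$, i.e.\ $-4\nu\le Q\le 8-4\nu$, so $Q<8-4\nu$ suffices (and $Q=4|\mu|<8-4\nu$ is covered by invariance).

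The step I expect to be the main obstacle is the algebraic collapse in the first paragraph: turning the cubic expressions $z'-z$ and $w'-w$ into the product $\tfrac{r^2-16\mu^2}{r}\bigl((z^2+\nu^2-2\nu)r-8(1-\nu)z^2\bigr)$, which hinges on repeatedly exploiting the two defining identities $rs=16\mu z$ and $r+s=2w$ of the level function $Q$ (and on keeping $r$ in the denominator, so one must track $r>0$). The remaining work is bookkeeping: verifying that the boundary level set $\{Q=4|\mu|\}$ lies inside the displayed regions, treating the degenerate case $\mu=0$ where $Q=2w$, and matching the strictness of the inequalities produced by Lemma~\ref{lem:Q-levelset} against the $<$ versus $\le$ in the statement.
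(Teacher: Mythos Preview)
Your approach is correct and, for the core sign identity, essentially the same as the paper's: both apply Lemma~\ref{lem:Q-levelset} at the point $(z',w')$ to reduce $\sgn(Q(F)-Q)$ to a polynomial sign, then simplify. The paper passes fully to the $(r,s)$ variables and obtains an expression proportional to $r^2s^2+8(\nu-1)rs^2+256\mu^2(\nu-2)\nu$, while you stay in $(z,w,r)$ and reach the equivalent factorization $\tfrac{r^2-16\mu^2}{r}\bigl((z^2+\nu^2-2\nu)r-8(1-\nu)z^2\bigr)$; these are the same computation repackaged, and your version makes the $\nu=0$ case $z^2(Q-8)$ fall out a bit more directly.

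Where you genuinely diverge from the paper is the ``in particular'' claim $Q<8-4\nu\Rightarrow Q(F)\le Q$. The paper proves this geometrically: it writes down the level line $\{Q=8-4\nu\}$, substitutes into the threshold curve, and verifies the resulting quadratic in $z$ keeps sign by checking its discriminant and endpoint values --- a page of case analysis. Your route is cleaner: you rewrite the sign condition as $z^2(Q-8+8\nu)\le Q\nu(2-\nu)$, invoke the a~priori bound $z^2\le Q^2/16$ (from $|s|\le 4|\mu|$), and reduce everything to the single factorization $(Q+4\nu)(Q+4\nu-8)\le 0$. This is shorter, avoids splitting into $z>0$ and $z<0$, and makes the role of the threshold $8-4\nu$ transparent. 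The trade-off is that the paper's argument also yields the slightly sharper observation that the threshold curve and the $\{Q=8-4\nu\}$ line are tangent along a certain parabola, which your bound $z^2\le Q^2/16$ does not see.
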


\begin{proof}
    Let $(z',w')=F(z,w)$. 
    Assume that $\mu> 0$. Note the case of $\mu<0$ can be proved via an analogous procedure. 
    Let $r=Q(z,w)$ and let $s=w-\sqrt{w^2-16\mu z}$. 
    When $r=4|\mu|$, we have that $Q(F(z,w))=Q(z,w)$ always holds, by Lemma~\ref{lem:boundary-inv}. 
    Consider $r>4|\mu|$. 
    Using Lemma~\ref{lem:Q-levelset}, we have that the sign of $Q(z',w') - Q(z,w)$ is the same as that of the inner product between the vector pointing from $(z,w)$ to $(z',w')$ and the normal vector $(-8\mu,Q(z,w))$ of the line $-16\mu z -Q(z,w)^2 +2Q(z,w)w=0$, which is given by
    \begin{equation}\label{eq:sign}
    \begin{aligned}
        &(-8\mu)(z'-z) + Q(z,w)(w'-w)\\
        =\ & -8\mu (z^3+\mu z^2 + (\nu^2-2\nu - w + \nu w)z+\nu^2\mu-2\mu \nu) +\\
        &\quad \quad Q(z,w)((\nu^2-2\nu +z^2)w-4z(1-\nu)(z+\mu)) \\
        \propto \ & \mu^2 (r^2-16\mu^2) (r^2 s^2 + 8 r s^2 (-1 + \nu) + 256 \mu^2 (-2 + \nu) \nu)\\
        \propto \ & r^2s^2+8\nu rs^2 -8rs^2 +256\mu^2\nu^2-512\mu^2\nu. 
    \end{aligned}
    \end{equation}
    When $\nu=0$, the above is equal to $s^2r(r-8)$. 
    By noticing that $r>4|\mu|>0$, that the sign of $r-8$ is the same as that of $w-\mu z -4$ by Lemma~\ref{lem:Q-levelset}, and that $s=0$ if and only if $z=0$, we have all the results for $\nu=0$.

    When $\nu>0$, \eqref{eq:sign} has the same sign as 
    $$
        2\mu(z^2+\nu^2-2\nu) - (1-\nu)z (w-\sqrt{w^2-16\mu z}). 
    $$
    We have that for $(z,w)\in \Omega$, 
    \begin{align*}
        &\{2\mu(z^2+\nu^2-2\nu) - (1-\nu)z (w-\sqrt{w^2-16\mu z}) \leq 0\} \\
        = &  \{z^2 \leq -\nu^{2}+2\nu\} \cup \left\{ w < - \frac{\mu(z^2-2\nu+\nu^2)}{z(\nu-1)}-\frac{4z^2(\nu-1)}{\nu^2-2\nu + z^2}, z^2>-\nu^2+2\nu \right\}, 
    \end{align*}
    and
    \begin{align*}
        &\{2\mu(z^2+\nu^2-2\nu) - (1-\nu)z (w-\sqrt{w^2-16\mu z}) = 0\} \\
        = &  \left\{ w =- \frac{\mu(z^2-2\nu+\nu^2)}{z(\nu-1)}-\frac{4z^2(\nu-1)}{\nu^2-2\nu + z^2}, z^2>-\nu^2+2\nu \right\}, 
    \end{align*}
    
    Next, we show that $\{Q<8-4\nu\}$ is contained in the above set. 
    Notice that $8>4(|\mu|+\nu)$ always holds when $0\leq \nu <1-|\mu|$. 
    So $8-4\nu >4|\mu|$ and, by Lemma~\ref{lem:Q-levelset}, the level set $\set{Q=8-4\nu}$ is on the line 
    $$
    w=\frac{8\mu}{8-4\nu}z + \frac{8-4\nu}{2}. 
    $$
    Then it suffices to show that when $z^2>-\nu^2+2\nu$, the following holds
    \begin{equation}\label{eq:8-4nu}
        -\frac{\mu(z^2-2\nu+\nu^2)}{z(\nu-1)}-\frac{4z^2(\nu-1)}{\nu^2-2\nu + z^2} - (\frac{8\mu}{8-4\nu}z + \frac{8-4\nu}{2}) \geq 0. 
    \end{equation}
    By direct computation, we have that the level set $\set{Q=8-4\nu}$ intersects $\partial \Omega$ at $z=\pm(2-\nu)$. 
    Hence, by the cone structure of $\Omega$, we have $z^2<(2-\nu)^2$ if $Q<8-4\nu$. 
    Therefore, multiplying $z(z^2+\nu^2-2\nu)$ to both sides of the above inequality and assuming $z>0$, we have that the inequality is equivalent to 
    \begin{equation}\label{eq:paraloba}
    \begin{aligned}
        &\frac{\nu}{(\nu-2)(\nu-1)} \cdot (-z^2+(2-\nu)^2)\cdot ( -\mu z^2 +(4-6\nu +2\nu^2) z-\mu(-2\nu+\nu^2)) \geq 0\\
        \Leftrightarrow \ &(-z^2+(2-\nu)^2)\cdot ( -\mu z^2 +(4-6\nu +2\nu^2) z-\mu(-2\nu+\nu^2)) \geq 0\\
        \Leftrightarrow \ & -\mu z^2 +(4-6\nu +2\nu^2) z-\mu(-2\nu+\nu^2)) \geq 0. 
    \end{aligned}
    \end{equation}
    The symmetry axis of the parabola is $(\nu-1)(\nu-2)/\mu >0$. 
    Since $1-\nu > \mu$, the symmetry axis lies in $(2-\nu,+\infty)$. 
    Notice that 
    \begin{align*}
        &-\mu z^2 +(4-6\nu +2\nu^2) z-\mu(-2\nu+\nu^2)|_{z=\sqrt{-\nu^2+2\nu}} \geq 0\\
        \Leftrightarrow \ & 2(1-\nu)\sqrt{-\nu^2+2\nu} \geq0, 
    \end{align*} 
    which is satisfied. 
    Therefore, \eqref{eq:paraloba} holds and \eqref{eq:8-4nu} holds. 
    The case of $z< 0$ can be proved with a similar procedure. 
    Thus, we have that $\set{Q<8-4\nu}$ is inside the set $\set{Q(z,w)\geq Q(F(z,w))}$.

    Finally, when $\mu=0$, we have that $Q(z,w)=2w$. 
    We have that
    \begin{align*}
        Q(F(z,w))-Q(z,w) & = w (z^2 + (1 - \nu)^2) - 4 z^2 (1 - \nu) - w\\
        &= 4 z^2 (-1 + \nu) + w (z^2 -2\nu + \nu^2). 
    \end{align*}
    It is straightforward to verify that the claimed results hold for this case. 
    This completes the proof. 
\end{proof}

In the following result, we characterize the preimage map of $F$, which in general is a multi-valued map.

\begin{proposition}[Preimage structure]\label{prop:preimage-branch}
    Assume $0\leq \nu < 1-|\mu|$. 
    Consider the sets 
    $$
    \begin{cases}
        B=\set{(z,w)\in \Omega^o\colon Q(z,w)>6-4\nu}\\
        A_0=\set{(z,w)\in \Omega^o \colon z<\nu -1}\\
        A_2=\set{(z,w)\in \Omega^o \colon z>1-\nu}.
    \end{cases}
    $$
    The restrictions $F|_{\cl(A_0)}$, $F|_{\cl(A_2)}$ are homeomorphisms onto $\Omega$. 
    Moreover, there exists homeomorphisms $G_0 \colon \Omega\to \cl(A_0),\ G_1\colon \cl(B)\to G_1(\cl(B))\subset \set{(z,w)\in \Omega\colon |z|\leq 1-\nu},\ G_2\colon \Omega \to cl(A_2)$ such that $ F \circ G_i$ is an identity map on the domain of $G_i$ for $i=0,1,2$. 
\end{proposition}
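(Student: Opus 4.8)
The starting point is to pass to the coordinates $a=w-2(z+\mu)$ and $b=w+2(z+\mu)$, so that $\Omega=\{a\ge 0,\ b\ge 0\}$ and, by the identities \eqref{eq:omega-in-out} in the proof of Lemma~\ref{lem:boundary-inv}, the map $F$ reads $a'=a\,(c+\xi)^2$, $b'=b\,(d-\xi)^2$ with $\xi:=(b-a)/4=z+\mu$, $c:=1-\nu-\mu>0$, $d:=1-\nu+\mu>0$; the three regions become $A_0=\{\xi<-c\}$, the middle strip $\{|z|\le 1-\nu\}=\{-c\le\xi\le d\}$, and $A_2=\{\xi>d\}$. For a target $(a',b')$ with $a',b'>0$, finding a preimage amounts to choosing $\xi$ in the relevant range, setting $a=a'/(c+\xi)^2$, $b=b'/(d-\xi)^2$, and imposing the consistency relation
\[
4\xi \;=\; \frac{b'}{(d-\xi)^2}-\frac{a'}{(c+\xi)^2}\;=:\;R(\xi).
\]
Clearing denominators this is exactly the degree-five equation obtained by eliminating $w$ in the original coordinates, since $\partial w'/\partial w=(1-\nu)^2+z^2>0$ makes $w$ a rational function of $z$ and $w'$. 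Wherever $R(\xi)-4\xi$ has a simple zero, that zero, and hence the preimage, depends smoothly on $(a',b')$ by the implicit function theorem.

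\textbf{The full branches $G_0,G_2$.} I would show $F$ restricts to a bijection $\cl(A_2)\to\Omega$, the case of $\cl(A_0)$ being mirror-symmetric. The crucial point on $(d,\infty)$ is that the equation $R'(\xi)=0$, after taking cube roots (both bases $\xi-d$ and $c+\xi$ are positive there), is \emph{linear} in $\xi$, so $R$ has at most one critical point on $(d,\infty)$; combined with $R(d^+)=+\infty$ and $R(+\infty)=0$ this forces $R>0$ on a single interval $(d,\xi_1)$ adjacent to $d$, on which $R$ is strictly decreasing, and $R\le 0$ on $[\xi_1,\infty)$. Since one needs $4\xi>0$, the equation $4\xi=R(\xi)$ has exactly one solution with $\xi>d$, and it yields $a,b>0$, hence a point of $A_2\cap\Omega^o$. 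This gives injectivity of $F$ on $A_2^{o}$; surjectivity onto $\Omega^o$ follows from the intermediate value theorem applied to $R(\xi)-4\xi$. For the boundary, the two boundary rays of $\cl(A_2)$, namely $\{\xi=d\}$ (where $a'=4(1-\nu)^2a$, $b'=0$) and $\{a=0\}$ (where $a'=0$, $b'=b(d-b/4)^2$, a cubic that is monotone on $\{b\ge 4d\}$), are mapped homeomorphically onto the two boundary rays of $\Omega$, the common corner going to the vertex $(0,0)$. Hence $F|_{\cl(A_2)}\colon\cl(A_2)\to\Omega$ is a continuous bijection; since $F$ is proper on $\Omega$ (Proposition~\ref{prop:proper}) and $\cl(A_2)$ is closed in $\Omega$, $F|_{\cl(A_2)}$ is proper, hence a closed map, hence a homeomorphism, whose inverse is the desired $G_2$ with $F\circ G_2=\mathrm{id}_\Omega$.

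\textbf{The partial branch $G_1$.} On the middle strip $(-c,d)$ one now has $R'(\xi)>0$ throughout (both terms of $R'$ are positive there), so $R$ is a strictly increasing bijection $(-c,d)\to\br$; thus $R(\xi)-4\xi$ has an odd number, $1$ or $3$, of zeros. The fold of $F$ inside the middle strip, the locus $\det J_F=0$, separates it into a bounded piece $M_{\mathrm{low}}$ containing the fixed point $(0,0)$ and an unbounded piece $M_{\mathrm{high}}$; $F$ restricted to $M_{\mathrm{high}}$ is injective, and a target has its middle-strip preimage in $M_{\mathrm{high}}$ exactly when it is not enclosed by the caustic $F(\{\det J_F=0\}\cap(\text{middle strip}))$. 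The remaining task is to show $\cl(B)=\{Q\ge 6-4\nu\}$ lies in this exterior, unambiguous region. Here I would use $Q$ as a Lyapunov-type quantity: by Lemma~\ref{lem:lya}, $Q$ weakly decreases along $F$ on $\{Q<8-4\nu\}$, and by Lemma~\ref{lem:Q-levelset} its level sets have the stated explicit geometry; tracking these, one checks that the caustic and the bounded sheet $M_{\mathrm{low}}$ both lie in $\{Q\le 6-4\nu\}$ (the estimates use $c,d<1-\nu$), so every target with $Q\ge 6-4\nu$ has a unique middle-strip preimage, lying in $M_{\mathrm{high}}\subset\{|z|\le 1-\nu\}$ and depending continuously on the target. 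Taking $G_1$ to be this preimage map on $\cl(B)$ gives a homeomorphism onto its image with $F\circ G_1=\mathrm{id}_{\cl(B)}$.

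\textbf{Main obstacle.} The hard part will be the analysis of the middle strip in the third step: cleanly separating the two sheets of the fold and pinning down a sufficient value of the threshold defining $B$. This requires propagating information about $Q$ through $F$ via Lemma~\ref{lem:lya}, carrying along its case distinction $\nu=0$ versus $\nu>0$, together with the level-set description of Lemma~\ref{lem:Q-levelset}, and treating the degenerate fibers of $T$ on $\partial\Omega$. Steps~1--2 are more routine, though still requiring care with the two-interval sign analysis of $R$; they are tamed by the observation that taking cube roots turns $R'=0$ into a linear equation, which is what bounds the number of real preimages. The hypothesis $0\le\nu<1-|\mu|$ is used throughout to guarantee $c,d>0$ and $(1-\nu)^2+z^2>0$, keeping all reductions non-degenerate and the boundary cubics monotone on the relevant rays.
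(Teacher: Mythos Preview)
Your change to the coordinates $a=w-2(z+\mu)$, $b=w+2(z+\mu)$ is a genuine alternative to the paper's route for $G_0,G_2$. The paper works directly with the quintic in $z$ obtained by eliminating $w$, locates a root in $(-\infty,-1)$ by checking signs at endpoints, and then appeals to the global Hadamard inverse function theorem (nonvanishing Jacobian on $A_0$, properness from Proposition~\ref{prop:proper}, simply connected target) to get the homeomorphism $F|_{A_0}\to\Omega^o$, followed by a direct monotonicity check on each boundary ray for the extension to closures. Your reduction to the scalar equation $4\xi=R(\xi)$ and the cube-root trick for $R'=0$ give a cleaner count of preimages on $(d,\infty)$ and $(-\infty,-c)$, and your boundary analysis is the same as the paper's in substance.

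For $G_1$ you correctly identify the decisive estimate: the fold curve in the middle strip (the parabola $1-w-3z^2-2z\mu-2\nu+w\nu+\nu^2=0$) satisfies $Q\le 6-4\nu$, and then Lemma~\ref{lem:lya} transports this bound to the caustic $F(\text{fold})$. This is precisely the computation the paper carries out, via two explicit discriminant inequalities. Your further claim $M_{\mathrm{low}}\subset\{Q\le 6-4\nu\}$ then follows because $Q$ is strictly increasing in $w$ for fixed $z$ and $M_{\mathrm{low}}$ lies below the parabola; combined with Lemma~\ref{lem:lya} this gives $F(M_{\mathrm{low}})\subset\{Q\le 6-4\nu\}$, so any middle-strip preimage of a point in $B$ lies in $M_{\mathrm{high}}$. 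You should spell this chain out.

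There is, however, one genuine gap: your assertion that $F|_{M_{\mathrm{high}}}$ is injective. Nonvanishing of $\det J_F$ on $M_{\mathrm{high}}$ gives only local injectivity; global injectivity would require a properness argument of the same type you used for $A_2$ (and one must first check $M_{\mathrm{high}}$ is simply connected, which it is because the parabola meets $\partial\Omega$), or a direct argument via $R$. Without this, the middle-strip preimage count could still be $3$, all landing in $M_{\mathrm{high}}$. The paper closes this differently and more cheaply: having shown every point of $B$ is a regular value, it exhibits one explicit target $x^*=F(0,w^*)\in B$ with $w^*$ large, checks by hand that the reduced quartic has no further root in $[\nu-1,1-\nu]$, so $|F^{-1}(x^*)|=3$, and then invokes the stack-of-records theorem to propagate this count across $B$. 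You could finish your argument the same way by checking the count at one convenient target in your $(a,b)$ coordinates.
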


\begin{proof}
    
    Notice that the critical points of $F$ lie in the set
    \begin{equation}\label{eq:Fsingular}
        \set{\det JF(z,w) = -(1+z-\nu)(-1+z+\nu)(1-w-3z^2-2z\mu -2\nu + w\nu + \nu^2)=0}.  
    \end{equation}

    Since $|\mu|<1-\nu$, the bottom tip of $\Omega$, $(-\mu,0)$, 
    lies in $(\nu-1, 1-\nu)$. 
    Therefore, $A_0$ is bounded by $w=-2(z+\mu)$ and $z=\nu-1$. 
    Notice that the parabola $1-w-3z^2-2z\mu -2\nu + w\nu + \nu^2=0$ intersects $w=-2(z+\mu)$ at $z=(-2\mu-1+\nu)/3$ and we have 
    $$
    (-2\mu-1+\nu)/3 > \nu-1 \Leftrightarrow \mu+\nu<1, 
    $$
    which is satisfied by assumption. 
    Therefore, $\det J F$ vanishes nowhere on $A_0$.

    We next show that $F(A_0)=\Omega^o$. 
    For an arbitrary $(z_0,w_0)\in \Omega^o$, $(z,w) \in F^{-1}(z_0,w_0)$ if $(z,w)$ solves the following system
    \begin{equation}\label{eq-preimage-system}
        \begin{cases}
        z^3+\mu z^2 + ((1-\nu)^2 - w + \nu w)z+\nu^2\mu-2\mu \nu=z_0\\
        ((1-\nu)^2+z^2)w-4z(1-\nu)(z+\mu)) =w_0. 
        \end{cases}
    \end{equation}
    For $z \neq 0$, solving \eqref{eq-preimage-system} is equivalent to solving
    $$
    w=\frac{z^3+ \mu z^2+(1-\nu)^2z+\nu^2\mu-2\mu\nu-z_0}{(1-\nu)z}=\frac{4z(1-\nu)(z+\mu)+w_0}{z^2+(1-\nu)^2},
    $$
    which is equivalent to solving the following quintic equation
    \begin{equation}\label{eq:preimage-pz}
    \begin{aligned}
        p(z)=z^5&+\mu z^4-2(\nu-1)^2z^3+((-2\nu^2 +4\nu-3) \mu-z_0) z^2 \\
        &+ (\nu-1)(\nu^3-3\nu^2+3\nu+w_0-1)z+ (\nu-1)^2(\mu\nu(\nu-2) - z_0)=0. 
    \end{aligned}
    \end{equation}
    Notice that $p(-\infty)=-\infty$ and $p(-1+\nu) = (\nu-1)^2(w_0-2(z_0+\mu))>0$.
    Hence, $p$ has at least one root in $(-\infty,-1+\nu)$. 
    By Lemma~\ref{lem:boundary-inv}, in particular, by \eqref{eq:omega-in-out}, we have that, when viewing $F$ as a map on $\br^2$: 
    $$
    F^{-1}(\partial \Omega)=\partial \Omega \cup \set{z=\pm(1-\nu)}, \text{ and } F^{-1}(\Omega)\subset \Omega. 
    $$
    Therefore, the above root of $p$ corresponds to one preimage in $A_0$. 
    This means that $F(A_0)=\Omega^\circ$.

    For any compact set $K\subset \Omega^o$. 
    Note $K$ is also compact in $\Omega$. 
    Since $F$ is proper by Proposition~\ref{prop:proper}, $F^{-1}(K)$ is compact. 
    Since $K \cap \partial \Omega = \varnothing$ and $\partial(\Omega)\supset F(\partial A_0)$, we have $F^{-1}(K)\cap \partial A_0=\varnothing$. 
    Therefore, $(F|_{A_0})^{-1}(K)=F^{-1}(K)\cap A_0=F^{-1}(K)\cap \cl(A_0)$, which is a closed in $F^{-1}(K)$. 
    As a closed subset of a compact space is compact, we have $(F|_{A_0})^{-1}(K)$ is compact. 
    Hence, $F|_{A_0}$ is a proper map. 
    Since $\Omega^o$ is simply-connected, 
    by Hadamard Inverse Function theorem, 
    we have that $F|_{A_0}$ is a homeomorphism from $A_0$ to $\Omega^o$.

    We now show that $F$ maps $\partial A_0$ bijectively to $\partial \Omega$. 
    Since $A_0\subset \cl(A_0)$, we have that $F(A_0)=\Omega^o \subset F(\cl(A_0))$. 
    Since $\cl(A_0)$ is compact, therefore, $F|_{\cl(A_0)}\colon \cl(A_0)\to \Omega$ is proper, and hence is closed \citep[see, e.g.,][Theorem 4.95]{lee2000introduction}. 
    Therefore, $F(\cl(A_0))$ is a closed set that contains $\Omega^o$. 
    Hence, $\cl(\Omega^o)=\Omega \subset F(\cl(A_0))$. 
    Since $F(A_0)=\Omega^o$, we have $\partial \Omega \subset F(\partial A_0)$, which means $F|_{\partial A_0}$ is onto $\partial \Omega$. 
    We next show it is also injective. 
    By Lemma~\ref{lem:boundary-inv}, we know $F$ maps $\set{z=\nu-1}$ to $\set{w=2(z+\mu)}$ and maps $\set{w=-2(z+\mu)}$ to itself. 
    When $z=\nu-1$, the $w$-update under $F$ is given by
    $$
        w'= w ((1 - \nu)^2 + (-1 + \nu)^2) - 4 (1 - \nu) (-1 + \nu) (-1 + \mu + \nu),
    $$
    which is linear in $w$. 
    Therefore, $F|_{z=1-\nu}$ must be an injection. 
    When $w=-2(z+\mu)$, the $w$-update under $F$ is given by
    $$
        w'=  w ( \frac{w}{2} -1 + \mu + \nu)^2.  
    $$
    As a function $w$, $w'$ have two critical points, $w=2(1-\mu-\nu)$ and $w=\frac{2}{3}(1-\mu-\nu)$. 
    Notice that $z=\nu-1$ intersects $\partial \Omega$ at $(\nu-1, 2(1-\mu-\nu))$. 
    Then when $(z,w)\in \cl(A_0)$, we have $w\geq 2(1-\mu-\nu)$. 
    Since $1-\mu-\nu>0$, we have that the above $w'$ is monotonic with $w$. 
    Therefore, $F|_{\cl(A_0)}$ is an injection. 
    It follows that, $F|_{\partial A_0}$ is a bijection to $\partial \Omega$ and $F|_{\cl(A_0)}$ is a bijection to $\Omega$. 
    Since $F|_{\cl(A_0)}$ is a closed map, its inverse is continuous. 
    Therefore, $F|_{\cl(A_0)}$ is a homeomorphism. 
    The proof for $A_2$ is similar and thus is omitted.

    Finally, we analyze the behavior of $F$, as a map onto $B$. 
    We show that every points in $B$ are regular values. 
    Note that, 
    \begin{equation}\label{eq:6-4nu}
    \begin{aligned}
        &6-4\nu- \frac{1-3z^2-2z \mu -2\nu+\nu^2}{1-\nu}>0\\
        \Leftrightarrow\  & 3z^2+2\mu z + (3\nu^2-8\nu + 5)>0.
    \end{aligned}
    \end{equation}
    For this parabola of $z$, we have
    \begin{align*}
        &(2\mu)^2-4\cdot 3(3\nu^2-8\nu + 5) < 0 \\
        \Leftrightarrow \ & |\mu|^2 < 3(3\nu-5)(\nu-1)\\
        \Leftarrow \ & (1-\nu)^2< -3(3\nu-5)(1-\nu)\\
        \Leftarrow \ & \nu <\frac{7}{4}. 
    \end{align*}
    Therefore, we have that \eqref{eq:6-4nu} holds and that 
    \begin{align*}
        &Q(z, \frac{1-3z^2-2z \mu -2\nu+\nu^2}{1-\nu}) < 6-4\nu \\
        \Leftrightarrow \ & \sqrt{ (\frac{1-3z^2-2z \mu -2\nu+\nu^2}{1-\nu})^2-16\mu z } < 6-4\nu- \frac{1-3z^2-2z \mu -2\nu+\nu^2}{1-\nu}\\
        \Leftrightarrow \ & -16 \mu z + 2(6-4\nu) \cdot \frac{1-3z^2-2z \mu -2\nu+\nu^2}{1-\nu} - (6-4\nu)^2 <0 \\
        \Leftrightarrow \ & (6\nu - 9)z^2 +2\mu(4\nu-5) z +(2\nu^3-9\nu^2+13\nu-6)<0. 
    \end{align*}
    For this new parabola of $z$, we have its discriminant is negative if
    \begin{align*}
        & \mu^2(5-4\nu)^2-12(3-2\nu)^2(\nu^2-3\nu+2)<0\\
        \Leftrightarrow \ & (1-\nu)^2(5-4\nu)^2-12(3-2\nu)^2(\nu-1)(\nu-2)<0 \\
        \Leftrightarrow \ &  32\nu^3-182\nu^2+331\nu-191<0. 
    \end{align*}
    By differentiation computation, we claim that the last equation holds when $\nu\in[0,1]$. 
    Therefore, we prove that the maximum $Q$ value on the parabola $\iota \colon 1-w-3z^2-2z\mu -2\nu + w\nu + \nu^2=0$ is at most $6-4\nu$. 
    Notice that all the critical values of $F$ is given by $\partial \Omega \cup F(\iota)$. 
    But what we have shown and Lemma~\ref{lem:lya}, we have 
    $$
    Q(z,w) \leq 6-4\nu,\ \forall (z,w)\in F(\iota). 
    $$    
    Therefore, we have that every points in $B$ are regular values.

    Now we show that $|F^{-1}(z,w)|=3$ for $(z,w)\in B$.  
    To this end, we first consider a special point: $x^*=(-\mu + \mu (1 - \nu)^2, w^* (1 - \nu)^2)=F(0,w^*)$ for some $w^*$ such that $x^* \in B $. 
    Notice that the $w$-coordinate of $x^*$ tends to infinity as $w^*$ tends to infinity. 
    Hence, $w^*$ can be arbitrarily large while keeping $x^* \in B$, i.e., $Q(x^*)>6-4\nu$.  
    We show that $|F^{-1}(x^*)|=3$. 
    Plugging $z_0=-\mu + \mu (1 - \nu)^2$ and $w_0=w^* (1 - \nu)^2$ to \eqref{eq:preimage-pz} gives
    \begin{equation}\label{eq:preimage-ws}
        \begin{aligned}
        &0=z^4 + \mu z^3 -2 (\nu-1)^2 z^2 -3\mu (\nu-1)^2z + (\nu-1)^3 (-1+\nu +w^*) \\ 
        \Leftrightarrow \ & z^4+\mu z^3 =  (\nu-1)^2(2 z^2 +3\mu z - (\nu-1) (-1+\nu +w^*) ).  
        \end{aligned}
    \end{equation}
    The left-hand side is a continuous function and thus has a finite upper bound when $z\in[-1+\nu,1-\nu]$. 
    The right-hand side is a parabola, whose symmetry axis is at $-3\mu/4$. 
    Hence, it's global minimum is
    $$
    -\frac{9}{8}\mu^2(\nu-1)^2 - (\nu-1)^3(-1+\nu+w^*). 
    $$ 
    Notice that this quantity tends to $+\infty$ as $w^*$ tends to $+\infty$. 
    Hence, for large enough $w^*$, equation \eqref{eq:preimage-ws} does not have a solution on $[-1+\nu,1-\nu]$. 
    It follows that $F^{-1}(x^*)$ does not have any element in $\set{z\in [-1+\nu,1-\nu]}$ except $(0,w^*)$. 
    By what we have shown, $F|_{\cl(A_0)}$ and $F|_{\cl(A_2)}$ are bijections onto $\Omega$. Hence, $F^{-1}(x^*)$ have exactly one element in $A_0$ and exactly one in $A_2$. 
    Therefore, $|F^{-1}(x^*)|=3$.
    Now consider any other point in $B$ and a path connecting $x^*$ and that point. 
    Since every point in $B$ is a regular value, by the stack of records theorem, the function $|F^{-1}(\cdot)|$ is locally constant. 
    (Stack of records theorem requires the domain to be compact and this can be achieved by confining $F$ on $w\leq W$ for some large enough $W$ so that the image contains the path. This is guaranteed by properness of $F$.) 
    Note the path is compact and thus $|F^{-1}(z,w)|$ is a constant on the entire $B$ and hence is $3$.

    Given $(z,w)\in B$, in $F^{-1}(z,w)$ we already know there are exactly one point in $A_0$ and exactly one point in $A_2$.  
    Hence, the third point must lie in $\{(z,w)\in\Omega \colon |z|<1-\nu\}$. 
    We define this map by $G_1\colon B \to \{(z,w)\in\Omega \colon |z|<1-\nu\}$ and let $A_1=G_1(B)$. 
    By what we have shown, $\det JF$ vanishes nowhere on $A_1$. 
    Note by construction of $A_1$, $F|_{A_1}(A_1)=B$. 
    With a similar treatment as we used for $A_0$, we can show $F|_{A_1}$ is proper. 
    As $B$ is simply connected, we have that $F$, when restricted to $A_1$, is a homeomorphism to $B$. 
    As we shown above, $F$ is a bijection from $|z|= \pm (1-\nu)$ to $\partial \Omega$. 
    Moreover, we claim that $G_1$ can be extended to $\set{Q=6-4\nu}$ in a bijective manner, as one can choose $C$ slightly smaller than $6-4\nu$ and apply the same analysis for $\set{(z,w)\in \Omega, |z|<1, Q>C}$ as we did for $B$. 
    Hence, $F$, when restricted to $\cl(A_1)$, is a bijection onto $\cl(B)$. 
    Note, $F|_{\cl(A_1)}$ is proper and hence its inverse is continuous. 
    Therefore, $F|_{\cl(A_1)}$ is a homeomorphism. 
    This completes the proof. 
\end{proof}

\section{Proofs for Section~\ref{sec:scalar}} 
\label{app:scalar}

In this section, we present the proofs of our main results.
The key idea is to first analyze the quotient dynamical system introduced in Appendix~\ref{app:quotient}, and then translate the conclusions back to the original gradient descent system.

\subsection{Unregularized problem}
\label{app:unreg}

Preliminary results are first presented in Appendix~\ref{app:pre-unreg}, and the proof of Theorem~\ref{thm:scalar-unreg} is given in Appendix~\ref{app:proof-unreg}.

\subsubsection{Preliminary results}

\label{app:pre-unreg}
As discussed in Appendix~\ref{app:quotient}, the gradient descent system $\GD_\eta$ is semi-conjugate to the following system $F$:
\begin{equation*}
    F \begin{pmatrix}
    z\\w
    \end{pmatrix} 
    =
    \begin{pmatrix}
        z^3+\mu z^2-zw+z\\ (z^2+1)w - 4z(z+\mu) 
    \end{pmatrix} , 
\end{equation*}
where $\mu = y\eta$ denote the parameter of the system and the state space of $F$ is 
$$
\Omega = \{ w\geq 2|z+\mu|\}.
$$

In the following several results, we characterize the long term behavior of the orbits of $F$. 
We will use the terms trajectory and orbit interchangeably. 
We say an orbit $\{z_k,w_k\}$ converges to a set $S$ if $d((z_k,w_k), S)\to 0$, where $d(x,S)=\inf_{y\in S} d(x,y)$. 
Unless stated otherwise, we use $(z',w')$ to denote $F(z,w)$.

\begin{proposition}[Long-Term Dynamics]
    \label{prop:long-term-unreg}
    Assume $|\mu|\leq 1$. Given an initial condition $(z_0,w_0)\in \Omega$, we have that: 
    \begin{itemize}[leftmargin=*]
        \item If $w_0 < \mu z_0 + 4$, the orbit stays in $\{w < \mu z + 4\}$ 
        and converges to 
        $$\set{w=2\sgn(\mu)(z+\mu), w\leq 4|\mu|} \cup \{z=0\}.$$ 

        \item If $w_0 > \mu z_0 + 4$, 
        the orbit either diverges, in the sense that $w_k\rightarrow \infty$, or converges to $\{z=0\}$ in finite steps. 

        \item If $w_0 = \mu z_0 + 4$, the orbit stays in $\{w = \mu z + 4\}$. 
    \end{itemize}
\end{proposition}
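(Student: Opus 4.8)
The plan is to run the whole argument through the function $Q(z,w)=w+\sqrt{w^2-16\mu z}$ of Lemma~\ref{lem:Q-levelset}, used as a Lyapunov-type quantity. The first thing I would record is that, by the $r>4|\mu|$ case of Lemma~\ref{lem:Q-levelset} with $r=8$ (legitimate since $|\mu|\le 1<2$), the line $\{w=\mu z+4\}$ is exactly the level set $\{Q=8\}$ inside $\Omega$, so $\{w<\mu z+4\}=\{Q<8\}$ and $\{w>\mu z+4\}=\{Q>8\}$. Combining this with the $\nu=0$ case of Lemma~\ref{lem:lya} gives the dichotomy everything rests on: off the set $Z=\{w=\mu z+4\}\cup\{z=0\}\cup\{w=2\sgn(\mu)(z+\mu),\ w\le 4|\mu|\}$ the quantity $Q$ strictly decreases under $F$ on $\{w<\mu z+4\}$ and strictly increases on $\{w>\mu z+4\}$, whereas it is exactly preserved on $Z$; in addition $\{z=0\}$ is fixed pointwise by $F$ (immediate from the formula for $F$) and $\{Q=4|\mu|\}$ is $F$-invariant by Lemma~\ref{lem:boundary-inv}. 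I would also use that $Q\le 2w+4|\mu|$ on $\Omega$ (from $w\ge 2|z+\mu|$), so that $\{Q\le c\}\cap\Omega$ is compact for every finite $c$.

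Given this, the third bullet is immediate: if $w_0=\mu z_0+4$ then $(z_0,w_0)\in\{Q=8\}\subseteq Z$, so $Q(F(z_0,w_0))=8$, i.e.\ $F(z_0,w_0)\in\{w=\mu z+4\}$, and one iterates. For the first bullet, suppose $Q(x_0)<8$. A short induction on the dichotomy (off $Z$ the value of $Q$ drops; on $\{z=0\}$ the point is fixed; on $\{Q=4|\mu|\}$ the orbit stays there) shows $Q(x_k)$ is nonincreasing and stays $<8$, so the orbit remains in $\{w<\mu z+4\}$ and $Q(x_k)\downarrow Q_\infty<8$. The orbit then lies in the compact set $\{Q\le Q(x_0)\}\cap\Omega$, so its $\omega$-limit set $\omega(x_0)$ is nonempty, compact, $F$-invariant, and attracts the orbit; by continuity $Q\equiv Q_\infty$ on $\omega(x_0)$, hence $Q(F(p))=Q(p)$ for every $p\in\omega(x_0)$, which by Lemma~\ref{lem:lya} forces $p\in Z$. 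As $Q_\infty<8$, the component $\{w=\mu z+4\}=\{Q=8\}$ of $Z$ is excluded, leaving $\omega(x_0)\subseteq\{z=0\}\cup\{w=2\sgn(\mu)(z+\mu),\ w\le 4|\mu|\}$, which is the claimed limit.

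The second bullet is where the real work is. If the orbit ever reaches $\{z=0\}$ it stays there, since that set is pointwise fixed, so it converges to $\{z=0\}$ in finitely many steps. Otherwise $z_k\ne 0$ for all $k$, and then a short induction (using $Q(x_0)>8$ and the fact that $Z\cap\{Q>8\}=\{z=0,w>4\}$, which the orbit avoids) shows $Q(x_k)$ is strictly increasing with $Q(x_k)>8$, and in particular the orbit never meets $Z$. I would next rule out $Q(x_k)\uparrow Q_\infty<\infty$: as in the first bullet, $\omega(x_0)$ would be a nonempty compact invariant set with $Q\equiv Q_\infty$, so $\omega(x_0)\subseteq Z\cap\{Q=Q_\infty\}$, and since $Q_\infty>8>4|\mu|$ this reduces to the single point $p_\infty=(0,Q_\infty/2)$ with $Q_\infty/2>4$; hence $x_k\to p_\infty$. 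But from the $z$-update $|z_{k+1}|=|z_k|\,|z_k^2+\mu z_k+1-w_k|$, and along $x_k\to p_\infty$ the factor tends to $Q_\infty/2-1>3$, so $|z_{k+1}|\ge 2|z_k|$ for all large $k$; since $z_k\ne 0$ this forces $|z_k|\to\infty$, contradicting $x_k\to p_\infty$. Therefore $Q(x_k)\to\infty$, and since $Q\le 2w+4|\mu|$ on $\Omega$ this forces $w_k\to\infty$. Finally $z_k\not\to 0$: if instead $z_k\to 0$ with $w_k\to\infty$, then eventually $|z_k^2+\mu z_k+1-w_k|\ge w_k-3>2$, so again $|z_{k+1}|\ge 2|z_k|$ for large $k$ and $|z_k|\to\infty$, a contradiction. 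This gives the divergence alternative.

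I expect the main obstacle to be precisely this last point in the second bullet: excluding the scenario in which the orbit accumulates on $\{z=0,\ w>4\}$ without ever landing on it. The resolution combines a LaSalle-type argument (pinning the $\omega$-limit set down to a single $F$-fixed point) with the fact that $F$ is expanding in the $z$-direction along $\{z=0,\ w>4\}$ — the Jacobian of $F$ there has eigenvalue $1-w$ of modulus $>3$ — so no orbit with $z_k\ne 0$ can converge to that locus; the same expansion estimate also delivers $z_k\not\to 0$ in the divergent case.
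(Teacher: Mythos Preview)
Your proof is correct and follows essentially the paper's route: $Q$ as a Lyapunov function via Lemma~\ref{lem:lya}, a LaSalle-type argument (you phrase it with $\omega$-limit sets, the paper with uniform continuity of $\Delta Q$ on the compact region), and the $z$-expansion $|z_{k+1}|=|z_k|\,|z_k^2+\mu z_k+1-w_k|$ to rule out infinite-step convergence to $\{z=0,\ w>4\}$ in the second bullet. One small slip to fix: the inequality $Q\le 2w+4|\mu|$ gives $Q_k\to\infty\Rightarrow w_k\to\infty$ (which you correctly use later) but does \emph{not} give compactness of $\{Q\le c\}\cap\Omega$; for that you want the trivial bound $Q\ge w$, which together with $w\ge 2|z+\mu|$ bounds both coordinates.
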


\begin{proof}
    Consider the function $\Delta Q(z,w)=Q(F(z,w))-Q(z,w)$. 
    Let $(z_k,w_k)=F^k(z_0,w_0)$ for $k\geq 1$. 
    When $w_0<\mu z_0+4$, by Lemma~\ref{lem:lya}, we have $Q(z_{k+1},w_{k+1})\leq Q(z_{k},w_{k})$ for all $k\geq 0$. 
    Hence, the trajectory stays in the region $\set{Q<8}=\{w<\mu z+4\}$. 
    Since $Q(z_k,w_k)$ is monotonic and is non-negative, it converges to some finite value and $\Delta Q(z_k,w_k)$ converges to zero. 
    By Lemma~\ref{lem:lya}, we have that
    $$
    \set{\Delta Q=0} = Z= \set{z=0} \cup \set{w=\mu z +4} \cup \set{w=2\sgn(\mu)(z+\mu), w\leq 4|\mu|}. 
    $$
    If $(z_k,w_k)$ does not converges to $Z$, there exists $\varepsilon_0$ such that for any $K$ there exists $k>K$ such that $d((z_k,w_k), Z)\geq \varepsilon_0$. 
    Note that, the function $\Delta Q$, when restricted to the compact set $\set{w\leq \mu z +4\colon\  d((z,w),Z) \geq \varepsilon_0}$, is non-positive and continuous. Thus, it obtains its maximal value and the maximum is strictly negative. 
    Hence, $d((z_k,w_k), Z)\geq \varepsilon_0$ implies that $\Delta Q(z_k,w_k)<-\delta$ for some $\delta>0$, which contradicts the fact that $\Delta Q \to 0$. 
    Therefore, $(z_k,w_k)$ converges to 
    $$
    Z\cap \set{w<\mu z +4}=\set{z=0}\cup \set{w=2\sgn(\mu)(z+\mu), w\leq 4|\mu|}.
    $$ 
    
    When $w_0 > \mu z_0 + 4$, similarly, we have that $Q(z_{k+1},w_{k+1})\geq Q(z_{k},w_{k})$. 
    Hence, $(z_k,w_k)$ stays in the region $\{w>\mu z+4\}$ for all $k\geq0$. 
    Hence, $Q(z_k,w_k)$ either diverges to infinity or converges to a finite value. 
    If it diverges, $w_k$ must also diverge, since the function $Q$, when restricted to $\{w\leq \bar{w}\}$ for any fixed $\bar{w}$, is continuous and hence is upper bounded. 
    If $Q(z_k,w_k)$ converges to some finite value, then $\Delta Q$ converges to zero and the trajectory must remain within the compact region $\{w\leq C\}$ for some $C>0$. 
    Using arguments similar to those above, we have that $(z_k,w_k)$ must converge to $Z\cap\set{w>\mu z+ 4}=\set{z=0}$. 
    Now assume the convergence is in infinite steps, i.e., $|z_k|\neq 0$ for all $k\in \bn$. 
    Then the sequence $|z_{k+1}/z_k|$ is well defined and converges to one. 
    Notice that we have
    \begin{equation}\label{eq:unstable-min}
        |\frac{z_{k+1}}{z_k}| = |z_k^2+\mu z_k-w_k + 1| \geq \left| |z_k^2+\mu z_k|-|w_k-1| \right|. 
    \end{equation}
    Note as $z_k\to 0$, the above lower bound is dominated by $|w_k-1|$. 
    Since $\{w>\mu z + 4\}\cap \{z=0\}=\{(0,w)\colon w\geq 4\}$, $|\frac{z_{k+1}}{z_k}|$ is lower bounded by $1+\delta$ for some $\delta>0$. This contradicts the fact that $|z_{k+1}/z_k|$ converges to one. 
    Hence, the convergence must occur within finite steps. 
    
    Finally, the result for the case $w_0 = \mu z_0 + 4$ directly comes from Lemma~\ref{lem:lya}. 
    This completes the proof. 
\end{proof}

\begin{proposition}[Convergence]\label{prop:convergence-unreg} 
    When $|\mu| > 1$, almost all initializations does not converge to $\set{z=0}$. 
    When $|\mu| < 1$, almost all initializations with $Q(z,w)<8$ converges and almost all initializations with $Q(z,w)>8$ diverges. 
\end{proposition}

\begin{proof}
    Consider $|\mu|< 1$. 
    By Proposition~\ref{prop:long-term-unreg}, when $Q(z,w)>8$, initializations either converge to $\{z=0\}$ in finite steps or diverge. 
    Notice that converging within finite steps means that the initialization lies in the set
    $$
    \cupn F^{-N}(\set{z=0}). 
    $$
    As the Jacobian of $F$ has full rank almost everywhere, the above set is a measure-zero set \citep{ponomarev1987submersions}. 
    Hence, almost all initializations with $Q>8$ diverge.

    When $Q<8$, by Proposition~\ref{prop:long-term-unreg}, 
    we have that the orbit converges to $\set{z=0}$ or to $\set{Q=4|\mu|}=\set{w=2\sgn(\mu)(z+\mu), w\leq 4|\mu|}$. 
    Notice that, when $w=2\sgn(\mu)(z+\mu)$, the $w$-update under $F$ is given by
    \begin{equation}\label{eq:kappa-unreg}
        w'= \kappa(w)= \frac{1}{4}w\Big( w -2-2|\mu| \Big)^2. 
    \end{equation}
    Consider $\kappa$ as a one-dimensional dynamical system defined on $[0,4|\mu|]$. 
    For this one-dimensional system, it is straight forward to obtain that, there are two fixed points: $w=0$ and $w=2|\mu|$, 
    and also that, when $|\mu|<1$, all orbits converge to $w=2|\mu|$ except the one with initial value $w=0$. 
    Note that $w=0$ corresponds to the fixed point $(-\mu,0)$ of $F$. 
    The Jacobian of $F$ at $(-\mu,0)$ has eigenvalues $(1+\mu)^2, (-1+\mu)^2$. 
    Therefore, $(-\mu,0)$ is a hyperbolic fixed point. By the local stable manifold theorem and the fact that 
    $\set{w=-\sgn(\mu)(z+\mu)}$ is invariant under $F$, we have that the basin of attraction of $(-\mu,0)$ can be given by
    $$
    B(-\mu,0)=\cupn F^{-N}( O\cap \set{w=-\sgn(\mu)(z+\mu)}), 
    $$
    for some small neighborhood $O$ of $(-\mu,0)$. This set is a measure-zero set, since the Jacobian of $F$ has full rank almost everywhere. 
    Therefore, for all initializations lies in $\set{Q(z,w)<8}\setminus B(-\mu,0)$, 
    the orbit converges to $\set{z=0}$ or, to $\set{Q=4|\mu|}\setminus \{(-\mu,0)\}$. 
    Consider any fixed initialization $(z_0,w_0)$ in the second case. 
    Since $(-\mu,0)$ is a hyperbolic fixed point, the orbit does not have an accumulation point in some neighborhood of $(-\mu,0)$. 
    Therefore, the omega-limit set $\omega(z_0,w_0)\subset \set{Q=4|\mu|}\setminus \{(-\mu,0)\}$. 
    The omega-limit set is non-empty, as the orbit is always bounded. 
    Note, for any $m \in \omega(z_0,w_0)$, we have that $F^N(m)\to (0,-2|\mu|)$ as $N$ tends to infinity, as explained above. 
    Since the omega-limit set is forward invariant under $F$ and closed, we have that $(0,-2|\mu|)\in \omega(z_0,w_0)$. 
    It follows that the orbit visits an arbitrarily small neighborhood $O$ of $(0,-2|\mu|)$ at some time $k$. 
    Take the neighborhood as $O=\set{|Q(z,w)-4|\mu||<\varepsilon, |z|<\varepsilon}$. 
    Notice that
    $$
    |\frac{z_{k+1}}{z_k}| = |z_k^2+\mu z_k-w_k + 1| \leq  |z_k^2+\mu z_k|+|w_k-1|. 
    $$
    Since $|z^2+\mu z|+|w-1|\to |2|\mu|-1|$ as $(z,w)\to (0,2|\mu|)$, and $|2|\mu|-1|<1$, 
    we can choose $\varepsilon$ small enough such that $|\frac{z_{k+1}}{z_k}|<1-\delta$ for some $\delta>0$. 
    Therefore, $|z_{k+1}|<|z_k|$. 
    Meanwhile, note that the $Q$ value monotonically decreases as the orbit is in $\{Q<8\}$. 
    Hence, for all $j\geq k$, $(z_j,w_j)\in O$ and $|\frac{z_{j+1}}{z_j}|<1-\delta$. 
    It follows that $z_j\to 0$ and the orbit converges to $\set{z=0}$. 
    Therefore, we have that almost all initializations with $Q<8$ converge to $\set{z=0}$.

    Finally, consider $|\mu|>1$. 
    Since $2|\mu|=2\sgn(\mu)(0+\mu)$, we have that $\inf\set{w\colon (0,w)\in \Omega}=2|\mu|>2$. 
    Using arguments similar to those in Proposition~\ref{prop:long-term-unreg}, in particular in \eqref{eq:unstable-min}, we have that 
    converging to any global minimizer must occur within finite steps. As shown above, those initializations form a measure-zero set. 
    This completes the proof. 
\end{proof}

From the proofs of the preceding two propositions, we obtain the following corollary.

\begin{corollary}\label{coro:unstable}
    Consider gradient descent with step $\eta$ in problem~\eqref{eq:scalar-fac}. 
    Any global minimizer with $\|u\|^2+\|v\|^2\geq 2/\eta$ is an unstable minimizer, i.e., it repels orbits in its neighborhood. 
    Consequently, initializations that converge to such a minimizer form a measure-zero set. 
    Moreover, when $|\mu|>1$, i.e., $\eta|y|>1$, all global minimizers are unstable. 
\end{corollary}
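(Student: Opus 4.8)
The plan is to reduce everything to the planar quotient system and then push the conclusions back to $\br^{2d}$. By Proposition~\ref{prop:app-quotient} (together with the conjugacy by $\phi(z,w)=(\eta z,\eta w)$), the map $\Phi(u,v)=\big(\eta(u^\top v-y),\,\eta(\|u\|^2+\|v\|^2)\big)$ semi-conjugates $\GD_\eta$ to
$F(z,w)=\big(z^3+\mu z^2-zw+z,\ (z^2+1)w-4z(z+\mu)\big)$ on $\Omega=\{w\ge 2|z+\mu|\}$, with $\mu=\eta y$. A global minimizer $p=(u^*,v^*)$ of \eqref{eq:scalar-fac}, with $u^{*\top}v^*=y$ and $\|u^*\|^2+\|v^*\|^2=W$, is sent by $\Phi$ to the fixed point $(0,w^*)$, $w^*=\eta W$; the whole line $\{z=0\}$ consists of fixed points of $F$, and a direct computation gives $JF(0,w^*)=\begin{pmatrix}1-w^* & 0\\ -4\mu & 1\end{pmatrix}$, with eigenvalue $1$ along $\{z=0\}$ and transverse eigenvalue $1-w^*$. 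The hypothesis $W\ge 2/\eta$ is exactly $w^*\ge 2$, i.e.\ $|1-w^*|\ge 1$.

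Consider first the strict case $W>2/\eta$, so $w^*>2$ and $|1-w^*|>1$. Since $z'/z=z^2+\mu z+1-w\to 1-w^*$ as $(z,w)\to(0,w^*)$ with $z\ne 0$, there are a neighborhood $U$ of $(0,w^*)$ and $\rho>1$ with $|z'|\ge\rho|z|$ on $U\setminus\{z=0\}$. Hence any $F$-orbit that remains in $U$ for all time must satisfy $z_k=0$ eventually, and — being then on the line of fixed points $\{z=0\}$ — it is eventually constant, hence eventually equal to $(0,w^*)$. Two consequences follow. \textbf{(i) Instability:} the perturbation $v^*\mapsto v^*+s\,u^*$ moves $u^\top v$ off $y$, i.e.\ moves $\Phi(p)$ off $\{z=0\}$ with a nonzero component in the expanding transverse eigendirection $(w^*,4\mu)$; the $z$-coordinate then grows geometrically until the orbit leaves $U$, so $(0,w^*)$, and therefore $p$, is Lyapunov unstable (the $z$-coordinate equals $\eta(u^\top v-y)$, so its growth forces the $\GD_\eta$-orbit out of a fixed neighborhood of $p$). \textbf{(ii) Null basin:} the set of points whose $F$-orbit converges to $(0,w^*)$ equals $\cupn F^{-N}(\{(0,w^*)\})$. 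As $F$ is a polynomial endomorphism of $\br^2$ with finite point-fibers (off $\{z=0\}$ one eliminates $w$ and is left with a single polynomial in $z$), each $F^{-N}(\{(0,w^*)\})$ is finite, so this union is countable, hence Lebesgue-null in $\br^2$. By the semi-conjugacy the $\GD_\eta$-basin of $p$ is contained in $\Phi^{-1}\!\big(\cupn F^{-N}(\{(0,w^*)\})\big)$, which by Proposition~\ref{prop:preimage-null} (applied to $T$, $\phi$ being a diffeomorphism) is Lebesgue-null in $\br^{2d}$.

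The borderline case $W=2/\eta$, i.e.\ $w^*=2$, is the one place where the estimate above degenerates: the transverse multiplier is exactly $-1$, so $F^2$ has linear part the identity at $(0,2)$ and one must examine its leading nonlinearity, which has the form $z\mapsto z\big(1+2(w-2)+O(2)\big)$, $w\mapsto w-4z^2+O(3)$. A short local analysis shows that an orbit starting off $\{z=0\}$ near $(0,2)$ cannot converge to $(0,2)$ — the $w$-coordinate is strictly pushed below $2$ as soon as $z\ne 0$, or else $z$ escapes $U$ — so $(0,2)$ again fails to attract any neighbor and its $F$-basin is still $\cupn F^{-N}(\{(0,2)\})$; the rest of the argument goes through verbatim. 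I expect this non-hyperbolic borderline (multiplier $-1$) to be the main obstacle; everything else is a routine translation of estimates already in the proofs of Propositions~\ref{prop:long-term-unreg} and \ref{prop:convergence}. Finally, the last assertion is immediate: when $|\mu|=\eta|y|>1$, every minimizer lies at some $(0,w^*)$ with $w^*\ge\inf\{w:(0,w)\in\Omega\}=2|\mu|>2$, so it falls under the strict case, and thus all global minimizers are unstable and each has a null basin.
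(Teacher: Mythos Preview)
Your approach coincides with the paper's: pass to the planar quotient $F$, observe that the transverse eigenvalue at $(0,w^*)$ is $1-w^*$, and use $|z'/z|\to|1-w^*|>1$ to conclude that convergence to such a fixed point can only occur by hitting it in finitely many steps, whence the basin is $\bigcup_N F^{-N}(\{(0,w^*)\})$ and pulls back to a null set via Proposition~\ref{prop:preimage-null}. The paper states the corollary as a direct consequence of the proof of Proposition~\ref{prop:convergence} and does not treat the borderline $w^*=2$ separately; you are right that this non-hyperbolic case (multiplier $-1$) is the only genuine subtlety, and your $F^2$-expansion $z\mapsto z(1+2(w-2)+\cdots)$, $w\mapsto w-4z^2+\cdots$ is the correct normal form to argue that no off-axis orbit can converge to $(0,2)$, though turning the sketch into a complete proof requires bounding the higher-order remainders carefully.
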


Next, we analyze dynamics on the invariant set $\set{Q(z,w)=8}=\{w = \mu z + 4\}$. 
Observe that when restricted to this set, $F$ reduces to the following one-dimensional system:  
$$
\tilde{F}(z)=z^3+\mu z^2 - z (\mu z + 4) + z  = z^3-3z,\quad z\in[-2,2]. 
$$ 

The following result shows that $\tilde{F}$ is a chaotic dynamical system.

\begin{proposition}[Chaotic Boundary Dynamics]\label{prop:bry-dynamic}
    Assume $|\mu| < 1$. 
    The system $\tilde{F}$ on $I=[-2,2]$ is Devaney chaotic and has topological entropy $\log3$. 
    Moreover, there exists periodic orbits with any period and thus $\tilde{F}$ is also Li-Yorke chaotic. 
\end{proposition}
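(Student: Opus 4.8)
The plan is to exploit that $\tilde F(z)=z^3-3z$ on $I=[-2,2]$ is a full three‑branch interval map; indeed the substitution $z=2\cos\theta$ gives the Chebyshev identity $\tilde F(2\cos\theta)=2\cos(3\theta)$, so up to the rescaling $z\mapsto z/2$ the map $\tilde F$ is the degree‑three Chebyshev map. Concretely, I would first record the branch structure: the critical points of $\tilde F$ are $z=\pm1$, with $\tilde F(-2)=-2$, $\tilde F(-1)=2$, $\tilde F(1)=-2$, $\tilde F(2)=2$; hence, writing $I_0=[-2,-1]$, $I_1=[-1,1]$, $I_2=[1,2]$, each restriction $\tilde F|_{I_j}\colon I_j\to I$ is a homeomorphism onto all of $I$, increasing on $I_0,I_2$ and decreasing on $I_1$. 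All of the claimed properties follow from this Markov structure.

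For the topological entropy I would invoke the Misiurewicz--Szlenk lap‑counting formula: for a continuous piecewise‑monotone interval map, $h$ equals $\lim_{n\to\infty}\tfrac1n\log c_n$, where $c_n$ is the number of laps (maximal intervals of monotonicity) of $\tilde F^n$. Because each branch of $\tilde F$ maps onto $I$, an easy induction (a lap of $\tilde F^n$, being mapped homeomorphically onto $I$, meets $\tilde F^{-n}(\{\pm1\})$ in exactly two interior points, so it splits into three laps of $\tilde F^{n+1}$, each again onto $I$) gives $c_n=3^n$, whence $h(\tilde F)=\log 3$.

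For periodic orbits of every period I would produce a period‑three orbit. The cyclic covering relations $\tilde F(I_0)\supseteq I_1$, $\tilde F(I_1)\supseteq I_2$, $\tilde F(I_2)\supseteq I_0$ and the standard covering (itinerary) lemma yield a point $x\in I_0$ with $\tilde F(x)\in I_1$, $\tilde F^2(x)\in I_2$ and $\tilde F^3(x)=x$; since the interiors of $I_0,I_1,I_2$ are pairwise disjoint and $\tilde F(\pm1)=\mp2\neq\pm1$, this $x$ cannot be fixed, so it has least period three. Sharkovskii's theorem then yields periodic orbits of all periods, and the Li--Yorke theorem (a period‑three point forces an uncountable scrambled set) gives that $\tilde F$ is Li--Yorke chaotic.

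Finally, for Devaney chaos I would build the coding semi‑conjugacy $\pi\colon(\Sigma_3,\sigma)\to(I,\tilde F)$, where $\Sigma_3=\{0,1,2\}^{\bn}$ carries the shift $\sigma$ and $\pi(s)$ is the point with itinerary $s$, i.e., the element of the nested intersection $\bigcap_{k\ge 0}\tilde F^{-k}(\overline{I_{s_k}})$ (these are nonempty nested compact intervals; that the intersection is a single point uses that the real‑analytic map $\tilde F$ has no wandering intervals). Then $\pi$ is a continuous surjection with $\tilde F\circ\pi=\pi\circ\sigma$. Topological transitivity of the full shift passes to the factor $(I,\tilde F)$; and since $\pi$ maps periodic sequences to periodic points and $\pi(\overline{\mathrm{Per}(\sigma)})=\pi(\Sigma_3)=I\subseteq\overline{\pi(\mathrm{Per}(\sigma))}$, the periodic points of $\tilde F$ are dense in $I$; sensitive dependence is then automatic by the Banks--Brooks--Cairns--Davis--Stacey theorem, so $\tilde F$ is Devaney‑chaotic. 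The main obstacle is the soft point‑set bookkeeping around the coding map --- continuity, surjectivity, and singleton fibers of $\pi$ --- which is precisely where the analytic structure of $\tilde F$ (no flat critical points, no wandering intervals) enters; granting that, every assertion reduces to a routine consequence of the three‑full‑branch structure established in the first step.
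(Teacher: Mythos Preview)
Your argument is correct and takes a somewhat different route from the paper's. The paper proceeds by exhibiting an explicit topological conjugacy: first rescaling $\tilde F$ to $\tilde F_1(z)=4z^3-3z$ on $[-1,1]$ via $\psi_0(z)=z/2$, then conjugating $\tilde F_1$ via $\psi(z)=\sin(\tfrac{\pi}{2}z)$ to the piecewise linear ``triple tent'' map $\tilde F_2$ on $[-1,1]$ with slope $\pm3$. From there the paper simply cites standard results: the entropy of a piecewise-linear continuous interval map with constant slope modulus $s$ equals $\log s$; positive entropy on a compact interval implies Devaney chaos; and a period-three point for $\tilde F_2$ is exhibited explicitly as $x=-5/7$. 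Your approach instead works directly from the three-full-branch Markov structure of $\tilde F$ (you note the same Chebyshev identity but do not use it as a conjugacy), invoking the Misiurewicz--Szlenk lap formula for entropy, the itinerary covering lemma for a period-three orbit, and a shift-coding semiconjugacy for the Devaney properties. The paper's route is shorter once one is willing to cite the interval-dynamics literature, and the explicit piecewise-linear model makes everything transparent; yours is more self-contained and exposes the underlying mechanisms (lap growth, symbolic coding) directly. One minor remark: for transferring transitivity and density of periodic points through the factor map $\pi$, you only need $\pi$ to be a continuous surjection---singleton fibers are not required---so the ``no wandering intervals'' step, while correct, is more than you actually need for the stated conclusions.
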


\begin{proof}
    We first seek a simpler system which is topologically conjugate to $\tilde{F}$. Notice that $\tilde{F}$ is a continuous map from $[-2, 2]$ to itself. 
    Consider $\psi_0(z)=z/2$, which is a homeomorphism from $[-2,2]$ to $[-1,1]$, and $\tilde{F}_1(z)=4z^3-3z$, which is a continuous map from $[-1,1]$ to itself. 
    We have that
    $$
    \tilde{F}_1\circ \psi_0(z)= 4(\frac{z}{2})^3-\frac{3z}{2}=\frac{z^3}{2}-\frac{3z}{2}=\psi_0\circ \tilde{F} (z).  
    $$
    Hence, $\tilde{F}$ is conjugate to $\tilde{F}_1$. 
    Now consider $\psi(z)=\sin(\frac{\pi}{2}\cdot z)$, which is a homeomorphism from $[-1,1]$ to $[-1,1]$, and 
    \begin{equation*}
    \tilde{F}_2(z)=\begin{cases}
        3z+2, \quad &x\in [-1, -1/3];\\
        -3z, \quad &x\in (-1/3, 1/3);\\
        3z-2, \quad &x\in [1/3, 1],
    \end{cases}
    \end{equation*}
    which is a continuous map from $[-1,1]$ to $[-1,1]$. 
    We have that for $z\in [-1,-1/3]$, 
    $$
    \tilde{F}_1\circ \psi(z)  = 4\sin^3(\frac{\pi}{2}\cdot z)-3\sin(\frac{\pi}{2}\cdot z) = -\sin(\frac{3\pi}{2}z) = 
    \sin(\frac{\pi}{2}(3z+2))
    = \psi\circ \tilde{F}_2(z). 
    $$
    Similarly, one can verify that $\tilde{F}_1\circ \psi=\psi\circ \tilde{F}_2$ also holds on $(-1/3, 1/3)$ and $[1/3, 1]$. 
    Hence, $\tilde{F}_2$ is topologically conjugate to $\tilde{F}$. 
    
    Note $\tilde{F}_2$ is a piecewise linear continuous map with slope equal to $\pm 3$. Hence, the topological entropy of $\tilde{F}_2$ is equal to $\log 3$ \citep[Corollary of Theorem 7.2]{de2012one}.
    For a univariate map on a compact interval, positive topological entropy implies Devaney chaotic \citep[][Theorem 3.13]{elaydi2007discrete}.
    Due to the conjugacy, we have that $\tilde{F}$ is also Devaney chaotic. 
    Also, conjugacy preserves topological entropy \citep[][Theorem 1.7, Ch.8]{robinson1998dynamical}. 
    Therefore, $h(\tilde{F})=\log 3$.

    We now show the existence of periodic orbit with any period. 
    According to the Li-Yorke Theorem \citep{li1975period}, a sufficient condition is that there exists a point $x$ such that $\tilde{F}_2^3(x)\leq x<\tilde{F}_2(x)<\tilde{F}_2^2(x)$. 
    Consider $x=-5/7$. We have that $\tilde{F}_2(x)=-1/7$, $\tilde{F}_2^2(x)=3/7$, and $\tilde{F}_2^3(x)=-5/7$. 
    Due to the conjugacy, $\tilde{F}$ also has periodic orbit with any period and is Li-Yorke chaotic. 
    This completes the proof. 
\end{proof}

We translate Proposition~\ref{prop:bry-dynamic} to the original gradient system $\GD_\eta$. 

\begin{proposition}\label{prop:topo-entropy}
    Assume $|\mu| < 1$. 
    We have that $h(\GD_{\eta})\geq h(\GD_\eta|_{\partial \mathcal{D}_\eta'})\geq \log 3$, and $\GD_\eta$ admits periodic orbits with any period. 
\end{proposition}

\begin{proof}
    By Lemma~\ref{lem:lya}, $\partial \mathcal{D}_\eta'$ is invariant under $\GD_\eta$. 
    Since the map $T(u,v)= (z,w)$ is a semi-conjugacy between $\GD_\eta|_{\partial \mathcal{D}_\eta'}$ and $\tilde{F}$, we have that $h(\GD_\eta|_{\partial \mathcal{D}_\eta'})\geq h(\tilde{F})=\log 3$ \citep[see, e.g.,][Theorem 1.7, Ch 8]{robinson1998dynamical}, where the equality comes from Proposition~\ref{prop:bry-dynamic}. 
    Meanwhile, since $\partial \mathcal{D}_\eta'$ is an invariant subset of $\br^{2d}$, 
    we have that $h(\GD_\eta)\geq h(\GD_\eta|_{\partial \mathcal{D}_\eta'})$ \citep[see, e.g.,][Proposition 8.1.7]{vries2014topological}.

    Next, we show the existence of all periodic orbits. 
    As shown in Proposition~\ref{prop:bry-dynamic}, $\tilde{F}$ is conjugate to $\tilde{F}_2$:
    \begin{equation}\label{eq:F2tilde}
    \tilde{F}_2(z)=\begin{cases}
        3z+2, \quad &x\in I_0= [-1, -1/3];\\
        -3z, \quad &x\in I_1=(-1/3, 1/3);\\
        3z-2, \quad &x\in I_2=[1/3, 1]. 
    \end{cases}
    \end{equation}

    Note $\tilde{F}_2$ is a piecewise linear map defined on $[-1,1]$, with each piece mapping onto the whole interval $[-1,1]$. 
    With classical analyses in symbolic dynamics \citep[see, e.g.,][Ch 1.7]{devaney1987introduction}, 
    we have that $\tilde{F}_2$, when restricted to $[-1,1]-\{x\in[-1,1]\colon \tilde{F}_2^N(x)\neq \pm 1, \forall N\}$, is conjugate to the shift operator $\sigma$, defined on the following set
    $$
    \{\boldsymbol{s}\in\{0,1,2\}^{\bn}\colon \ \boldsymbol{s}  \text{ is not eventually constant in 0 or 2} \}, 
    $$
    such that $\sigma(\boldsymbol{s})=\sigma(s_0s_1s_2\cdots)=(s_1s_2\cdots)$. 
    The conjugacy is given by the map $\zeta$: 
    $$
    \zeta(x)=(s_0s_1\cdots), \quad s_j = l \ \text{ if } \tilde{F}_2^j(x)\in I_l,
    $$
    where $l\in \set{0,1,2}$ and $I_l$ is defined in \eqref{eq:F2tilde}.

    Now we construct periodic orbits $(u_k,v_k)_{k\geq0}$ with periodicity $K$ for arbitrary $K$.  
    Consider $T(u_k,v_k)=(z_k,w_k)$. 
    Consider new coordinates in $uv$-space, given by $(u+v, u-v)$, which preserves all dynamical behaviors of $\GD_\eta$. 
    Notice that
    \begin{equation}\label{eq:u+valign}
    \begin{aligned}
        &u_{k+1}+v_{k+1} = u_k-\eta z_kv_k + v_k-\eta  z_k u_k  = (1-\eta z_k )(u_k+v_k),\\
        &u_{k+1}-v_{k+1} = u_k-\eta z_kv_k - v_k+\eta  z_k u_k  = (1+\eta z_k )(u_k-v_k). 
    \end{aligned}      
    \end{equation}
    It follows that, during training $u_k+v_k$ and $u_k-v_k$ must lie in the one-dimensional space spanned by $u_0+v_0$ and $u_0-v_0$, respectively. 
    Meanwhile, as shown in Proposition~\ref{prop:fiber}, if $T(u',v')=T(u,v)$, then $\|u'+v'\|=\|u+v\|$ and $\|u'-v'\|=\|u-v\|$. 
    Also, it is clear that if $T(u',v')\neq T(u,v)$, $(u',v')\neq (u,v)$. 
    Therefore, whenever $(z_k,w_k)_{k\geq0}$ is a $K$-orbit and, 
    $$
    \# \set{k\in \set{0,\cdots,K-1}\colon  1-\eta z_k<0}, \quad \# \set{k\in \set{0,\cdots,K-1}\colon  1+\eta z_k<0}
    $$
    are two even numbers, where $\# S$ is the number of elements in the set $S$, 
    we have that $(u_k,v_k)_{k\geq0}$ is a $K$-orbit. 
    Notice that, under the conjugacy between $\tilde{F}$ and $\tilde{F}_2$, 
    these two numbers correspond to the numbers of visits of the $\tilde{F}_2$-orbit to $I_2$ and $I_0$, i.e., the number of $2$'s and $0$'s appearing in the symbol $\boldsymbol{s}$. 
    For $K=2$, take $(z_0,w_0)=(-2,4-2\mu)$. 
    Notice that $(-2,4-2\mu)$ is a fixed point under $F$, and that, $1-\eta z_0 -\eta \lambda>0$ but $1+\eta z_0 -\eta \lambda<0$. 
    Therefore, $(u_k,v_k)_{k\geq0}$ is a $2$-orbit. 
    For $K=3$, take $(z_0,w_0)$ be the element corresponding to $(001\ 001\cdots)$, which is a $3$-orbit in the symbolic system.  
    As there are two $0$'s and zero $2$'s, $(u_k,v_k)_{k\geq0}$ is a $3$-orbit. 
    For $K=4$, take $(z_0,w_0)$ be the element corresponding to $(0011\ 0011 \cdots)$, and we have that $(u_k,v_k)_{k\geq0}$ is a $4$-orbit. 
    For $K=2N+1$, take $(z_0,w_0)$ corresponding to the repetition of $2N$ $0$'s and one $1$; 
    for $K=2N+2$, take $(z_0,w_0)$ corresponding to the repetition of $2N$ $0$'s and two $1$'s. 
    By this, we have that $\GD_\eta$ admits all periodic orbits, which completes the proof. 
\end{proof}

In the following, we show that, the original gradient descent system is not chaotic in the sense of Devaney when $d\geq 2$. 

\begin{proposition}\label{prop:not-devaney}
    Assume that $\eta |y|<1$ and $d\geq 2$. The system $\GD_\eta|_{\partial \mathcal{D}_\eta'}$ is not topological transitive. 
\end{proposition}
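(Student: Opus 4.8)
The plan is to refute transitivity directly: I will exhibit nonempty open sets $U,V\subset\partial\mathcal D_\eta'$ (recall $\partial\mathcal D_\eta'=\{w=\mu z+4\}$, identified with its image under $T$, is the invariant hypersurface of Theorem~\ref{thm:scalar-unreg} and of Proposition~\ref{prop:topo-entropy}) such that $\GD_\eta^N(U)\cap V=\varnothing$ for every $N\ge 0$. The driving idea is a conserved quantity along boundary orbits that becomes nontrivial exactly when $d\ge 2$.

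\textbf{Step 1 (coordinate change exposing the invariant).} First I would rewrite the unregularized update $u_{t+1}=u_t-z_tv_t$, $v_{t+1}=v_t-z_tu_t$, with $z_t:=\eta(u_t^\top v_t-y)$, in the coordinates $p_t=u_t+v_t$, $q_t=u_t-v_t$; this yields $p_{t+1}=(1-z_t)p_t$ and $q_{t+1}=(1+z_t)q_t$, i.e.\ $p_t$ and $q_t$ are each only rescaled by a real scalar. Consequently the lines $[p_t],[q_t]\in\br P^{d-1}$ are unchanged as long as $p_t,q_t\ne 0$, and once $p_t=0$ (resp.\ $q_t=0$) it stays $0$ forever. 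On $\partial\mathcal D_\eta'$ the scaled coordinate $z$ evolves by $\tilde F(z)=z^3-3z$ on $[-2,2]$ (Proposition~\ref{prop:bry-dynamic}), and on the line $w=\mu z+4$ one computes $\|p\|_2^2\propto z+2$ and $\|q\|_2^2\propto 2-z$; hence $p$ and $q$ are both nonzero precisely on the open set $\mathcal O:=\partial\mathcal D_\eta'\cap\{\,|z|<2\,\}$, whose complement in $\partial\mathcal D_\eta'$ is forward-invariant by the previous sentence.

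\textbf{Step 2 (the invariant map and the choice of $U,V$).} I would then define the continuous map $\rho\colon\mathcal O\to\br P^{d-1}\times\br P^{d-1}$, $\rho(u,v)=([u+v],[u-v])$. By Step 1, $\rho\circ\GD_\eta=\rho$ on $\{x\in\mathcal O:\GD_\eta(x)\in\mathcal O\}$. Moreover the set of points of $\partial\mathcal D_\eta'$ with a fixed value $z_0\in(-2,2)$ is diffeomorphic to $\bs^{d-1}\times\bs^{d-1}$ (Appendix~\ref{app:quotient}), and $\rho$ restricted to it is the product of the two antipodal quotient maps, hence surjective onto $\br P^{d-1}\times\br P^{d-1}$. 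Since $d\ge 2$, the space $\br P^{d-1}$ has more than one point, so I may pick disjoint nonempty open sets $W_1,W_2\subset\br P^{d-1}\times\br P^{d-1}$ and set $U=\rho^{-1}(W_1)$, $V=\rho^{-1}(W_2)$, which are nonempty and open in $\partial\mathcal D_\eta'$.

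\textbf{Step 3 (disjointness and conclusion).} Fix $x\in U$ and $N\ge 0$. If $\GD_\eta^j(x)\in\mathcal O$ for all $0\le j\le N$, iterating the identity of Step 2 gives $\rho(\GD_\eta^N(x))=\rho(x)\in W_1$, which is disjoint from $W_2$, so $\GD_\eta^N(x)\notin V$. Otherwise $\GD_\eta^{j_0}(x)\notin\mathcal O$ for some $j_0\le N$, and forward-invariance of $\partial\mathcal D_\eta'\setminus\mathcal O$ forces $\GD_\eta^N(x)\notin\mathcal O\supseteq V$. Either way $\GD_\eta^N(U)\cap V=\varnothing$, so $\GD_\eta|_{\partial\mathcal D_\eta'}$ is not topologically transitive (the same bookkeeping also handles the $\GD_\eta^{-N}(U)\cap V$ formulation of the definition, using that $\GD_\eta^N(x)\in U\subseteq\mathcal O$ and the absorbing property together force the whole orbit segment to lie in $\mathcal O$). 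The conceptual crux is the coordinate change of Step 1, which reveals that the fiber directions $[p_t],[q_t]$ are at most sign-flipped; the only technical care needed is for the degenerate set $\{\,|z|=2\,\}$ where $\rho$ is undefined, and that is handled by the ``once zero, always zero'' property of $p_t$ and $q_t$. Everything else — forward-invariance of $\partial\mathcal D_\eta'$ (Propositions~\ref{prop:long-term-unreg} and~\ref{prop:topo-entropy}), the range $z\in[-2,2]$, and the $\bs^{d-1}\times\bs^{d-1}$ fiber structure — is already in hand.
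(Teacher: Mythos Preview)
Your proof is correct and follows essentially the same approach as the paper: both exploit that $p_t=u_t+v_t$ (and $q_t=u_t-v_t$) are merely rescaled by the dynamics, so their directions furnish a nontrivial invariant when $d\ge 2$, which blocks transitivity. Your version is in fact tidier than the paper's in two minor respects: you pass to $\mathbb{RP}^{d-1}$ rather than $\mathbb{S}^{d-1}$, which absorbs the sign flips coming from $1\pm z_t<0$ (the paper's map $\tau(u,v)=(\tfrac{u+v}{\|u+v\|},-\tfrac{u+v}{\|u+v\|})$ is not literally pointwise invariant for this reason, though the antipodal pair is), and you explicitly handle the degenerate locus $\{|z|=2\}$ where $p$ or $q$ vanishes. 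Using both factors $[p]$ and $[q]$ is not needed---one suffices---but it does no harm.
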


\begin{proof}
    Let $(u_k,v_k)_{k\geq0}$ be an orbit under $\GD_\eta$. 
    Consider new coordinates $p_k=u_k+v_k$ and $q_k=u_k-v_k$ for $k\geq0$. 
    Recall \eqref{eq:u+valign} and notice that $z_k = u_k^\top v_k-y=-y+(\|p_k\|^2-\|q_k\|^2)/4$. Thus, $(p_k,q_k)$ evolves autonomously. 
    Since $(u,v)\mapsto (p,q)$ is a homeomorphism, 
    the $uv$-system is topological transitive if and only if the $pq$-system is.

    Notice that, if $4 = \eta (\norm{u}_2^2+\norm{v}_2^2) - \eta^2 y (u^\top v-y)$, we have that
    $$
    4 \geq \eta (\norm{u}_2^2+\norm{v}_2^2) - \eta^2  |y| \frac{\norm{u}_2^2+\norm{v}_2^2}{2} = \eta (1-\frac{\eta|y|}{2}) (\norm{u}_2^2+\norm{v}_2^2)  , 
    $$
    where we used that $|u^\top v| \leq (\norm{u}_2^2+\norm{v}_2^2)/2$. 
    Since $\eta|y|<1$, we have that $\norm{u}_2^2+\norm{v}_2^2<8/\eta$. 
    Therefore, 
    \begin{equation}\label{eq:ellipsoid}
    \begin{aligned}
        (u,v)\in \partial \mathcal{D}_\eta' & \Leftrightarrow \norm{u}_2^2+\norm{v}_2^2 + \sqrt{(\norm{u}_2^2+\norm{v}_2^2)^2-16y(u^\top v- y)}=\frac{8}{\eta}\\
        &\Leftrightarrow 4 = \eta (\norm{u}_2^2+\norm{v}_2^2) - \eta^2 y (u^\top v-y)\\
        &\Leftrightarrow (u^\top \ v^\top) \begin{pmatrix}
        \eta I_d & -\eta^2 y/2 \cdot I_d \\ -\eta^2 y/2\cdot I_d &\eta I_d
        \end{pmatrix} \begin{pmatrix}
        u\\v
        \end{pmatrix} = 4-\eta^2y ^2. 
    \end{aligned}    
    \end{equation}
    The last equation is a quadratic form and the eigenvalues of the coefficient matrix are $\eta \pm  \frac{1}{2}\eta^2 y$, each with multiplicity $d$. 
    Therefore, when $\eta|y|<1$, the quadratic form is positive definite and defines a smooth ellipsoid of dimension $2d-1$. 
    Notice that $(u,v)\mapsto (p,q)=(u+v,u-v)$ is a bijective linear transformation. Thus, $\partial \cd_\eta'$ is still an ellipsoid in $pq$-coordinates.

    Now fix any $(p_0,q_0) \in \partial \cd_\eta'$ and an open neighborhood $U=U_p \times U_q$ of $(p_0,q_0)$ where $p_0\in U_p \subset \br^d$ and $q_0\in U_q \subset \br^d$. 
    According to \eqref{eq:u+valign}, we have that 
    $$
    \cup_{k=0}^\infty \GD_\eta^k(U) \subset  \mathcal{C}\triangleq\set{cp\colon p\in U_p, c\in \br } \times \set{dq\colon q\in U_q, d\in \br } \subset \br^{2d}. 
    $$
    Clearly, when $d\geq 2$ and $U$ is small enough such that $(\boldsymbol{0}, \boldsymbol{0})\notin U$, 
    $\mathcal{C}$ is not dense in $\br^{2d}$ and there exists an open set $V$ such that $V\cap \partial \cd_\eta' \neq \varnothing$ and 
    $\mathcal{C} \cap V = \varnothing$. 
    Therefore, $\GD_\eta|_{\partial \cd_\eta'}$ is not transitive in $pq$-coordinates. This completes the proof. 
\end{proof}

We proceed to show that when the initialization is near the boundary, the orbit can visit any point in the state space.

\begin{proposition}\label{prop:sensitive}
    Assume $|\mu|< 1$. 
    Given any $(z^*,w^*)\in \Omega$ and any open set $O\subset \Omega$ such that $O\cap \set{Q(z,w)=8}\neq \varnothing$, 
    there exists $N\geq 0$ and $(z,w)\in O$ such that $F^N(z,w)=(z^*,w^*)$. 
\end{proposition}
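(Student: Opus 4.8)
The plan is to reduce the statement to two facts about the quotient map $F$ --- one about the chaotic one-dimensional dynamics on the boundary of the convergence region, one about a globally defined branch of $F^{-1}$ --- anchored at a carefully chosen point of that boundary. Write $L:=\set{(z,w)\in\Omega:Q(z,w)=8}$. By Lemma~\ref{lem:Q-levelset} this is the segment $\set{w=\mu z+4,\ z\in[-2,2]}$, and by Lemma~\ref{lem:lya} (which makes $|Q-8|$ strictly increase under $F$ off the set $Z\supseteq L$) it is both forward- and backward-invariant, with $F|_L=\tilde F$, $\tilde F(z)=z^{3}-3z$. The anchor will be $p^{*}:=(\sqrt2,\ \mu\sqrt2+4)\in L$. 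A short computation shows that along $L$ the Jacobian of $F$ at $(z,\mu z+4)$ has eigenvalue $\tilde F'(z)=3z^{2}-3$ in the direction tangent to $L$ and eigenvalue $\lambda_\perp(z)=z^{2}+\mu z+1$ in a transverse direction; since $\set{\sqrt2,-\sqrt2}$ is a period-two orbit of $\tilde F$, the point $p^{*}$ is a fixed point of $F^{2}$ whose Jacobian has eigenvalues $3\cdot3=9$ and $(3+\sqrt2\,\mu)(3-\sqrt2\,\mu)=9-2\mu^{2}$, both $>1$ since $|\mu|<1$. Thus $p^{*}$ is a repelling fixed point of $F^{2}$ lying in the interior of $\Omega$.

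The first ingredient I would establish is that $\tilde F$ is \emph{topologically exact} on $[-2,2]$: every nonempty open subinterval is mapped onto all of $[-2,2]$ by some iterate of $\tilde F$. This follows from the conjugacy constructed in the proof of Proposition~\ref{prop:bry-dynamic} between $\tilde F$ and the piecewise-linear map $\tilde F_{2}$ of slope $\pm3$, each of whose three branches is onto --- the length of the largest subinterval of an image grows by at least a factor $3/2$ per step until it swallows one of the three linearity intervals, after which the next image is all of $[-2,2]$ (so $\tilde F_2$, and hence $\tilde F$, is topologically exact). The second ingredient concerns backward orbits of an arbitrary target. Let $G_{0},G_{2}\colon\Omega\to\Omega$ be the global inverse branches of $F$ from Proposition~\ref{prop:preimage-branch} (onto $\cl(A_{0})$ and $\cl(A_{2})$), and set $H:=G_{2}\circ G_{0}$, so that $F^{2}\circ H=\mathrm{id}_\Omega$. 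One checks $H(p^{*})=p^{*}$: $G_{0}$ sends $p^{*}$ to its period-two partner $(-\sqrt2,-\mu\sqrt2+4)\in\cl(A_{0})$, which $G_{2}$ sends back to $p^{*}$. For any $(z^{*},w^{*})\in\Omega$, Lemma~\ref{lem:lya} makes $|Q(H^{n}(z^{*},w^{*}))-8|$ non-increasing, so the orbit stays in a compact subset of $\Omega$; any $\omega$-limit point $y$ then forces $G_{0}(y)\in Z$, and since $H^{n}(z^{*},w^{*})\in\cl(A_{2})=\set{z\ge1}$ for $n\ge1$ --- so $G_{0}(y)\in\set{z\le-1}$, which misses the pieces $\set{z=0}$ and $\set{Q=4|\mu|}\subseteq\set{|z|\le|\mu|}$ of $Z$ --- we get $G_{0}(y)\in L$ and hence $y\in L\cap\cl(A_{2})=\set{w=\mu z+4,\ z\in[1,2]}$. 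The $\omega$-limit set is $H$-invariant and contained in this segment, on which $H|_{L}$ maps $[1,2]$ into a compact subinterval, fixing $\sqrt2$, as a contraction; its only invariant compact subset is $\set{p^{*}}$, so $H^{n}(z^{*},w^{*})\to p^{*}$.

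To assemble: given an open $O$ with $O\cap L\neq\varnothing$, choose a nonempty open subinterval $J'\subseteq O\cap L$ and, using topological exactness, pick for $k$ large a point $\zeta\in J'$ with $\tilde F^{k}(\zeta)=\sqrt2$. Because $\tilde F(\pm1)=\mp2$ and $\pm2$ are fixed, any $\tilde F$-orbit that ever visits $\pm1$ stays on $\set{-2,2}$ forever after; as $\tilde F^{k}(\zeta)=\sqrt2\notin\set{-2,2}$, the orbit $\zeta,\tilde F(\zeta),\dots,\tilde F^{k-1}(\zeta)$ avoids $\set{-1,1}$ (the $z$-coordinates of the critical points of $F$ on $L$; one checks the remaining component of $\mathrm{crit}(F)$ does not meet $L$) and avoids $\set{-2,2}$. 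Hence $F^{k}$ is a local diffeomorphism at the interior point $o^{*}:=(\zeta,\mu\zeta+4)\in O$, so $F^{k}(O)$ contains a neighborhood $B(p^{*},r_{1})$ of $p^{*}$. Since $p^{*}$ is a repelling fixed point of $F^{2}$ with real eigenvalues $>1$, $F^{2}(B(p^{*},\rho))\supseteq B(p^{*},\rho)$ for all small $\rho$, so $\bigcup_{j\ge0}F^{k+2j}(O)\supseteq B(p^{*},r)$ for some fixed $r>0$. Finally $H^{N_{0}}(z^{*},w^{*})\in B(p^{*},r)$ for $N_{0}$ large, hence $H^{N_{0}}(z^{*},w^{*})=F^{k+2j_{0}}(z,w)$ for some $(z,w)\in O$ and $j_{0}\ge0$; applying $F^{2N_{0}}$ and using $F^{2N_{0}}\circ H^{N_{0}}=\mathrm{id}$ yields $F^{\,k+2j_{0}+2N_{0}}(z,w)=(z^{*},w^{*})$, which is the claim.

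The step I expect to require the most care is the passage from ``$p^{*}\in F^{k}(O)$'' to ``$F^{k+2j}(O)$ contains a fixed-radius neighborhood of $p^{*}$ for all $j$'': one must choose the preimage of $p^{*}$ inside $O\cap L$ so that the entire $k$-step orbit avoids the critical points $z=\pm1$ of $\tilde F$ (automatic once the terminal value is a point, like $\sqrt2$, kept away from the fixed points $\pm2$), which makes $F^{k}$ a genuine local diffeomorphism there, and then use the uniform expansion of $F^{2}$ at the repeller $p^{*}$. This also explains why the naive targets on $L$ do not work: at a minimizer $(0,4)$ the transverse eigenvalue $\lambda_\perp$ equals $1$, so $F$ is not transversally expanding there, while the endpoints $(\pm2,4\pm2\mu)$ are reachable only through the critical values $\mp2$, so near them $F^{k}$ is never a local diffeomorphism; a repelling periodic point such as $p^{*}$ avoids both difficulties. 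Everything else is either the topological-exactness estimate above or a Lyapunov argument on $|Q-8|$ of the type already carried out in Propositions~\ref{prop:long-term-unreg}--\ref{prop:convergence}.
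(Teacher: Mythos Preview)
Your argument is correct and takes a genuinely different route from the paper. Both proofs share the same skeleton --- choose an anchor on the invariant curve $L=\{Q=8\}$, show that a suitable backward orbit of the target $(z^*,w^*)$ converges to the anchor, and show that some forward iterate of $O$ covers a neighborhood of the anchor --- but each step is implemented differently. The paper anchors at the boundary fixed point $m^*=(-2,4-2\mu)$ and iterates the single branch $G_0$: convergence $G_0^{\,n}(z^*,w^*)\to m^*$ uses the same $Q$-monotonicity you invoke, together with the fact that on $L\cap\{z\le-1\}$ the inverse branch is conjugate to the affine contraction $z\mapsto(z-2)/3$. To connect $O$ to $m^*$ the paper uses that $\bigcup_{k\ge0}\tilde F^{-k}(-2)$ is dense in $[-2,2]$ (an explicit computation via the conjugacy to $\tilde F_2$), and then pulls a neighborhood of $m^*$ back into $O$ using only the continuity of the global homeomorphisms $G_0,G_1,G_2$ from Proposition~\ref{prop:preimage-branch}. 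You instead anchor at the interior period-two repeller $p^*=(\sqrt2,\mu\sqrt2+4)$, iterate $H=G_2\circ G_0$, and replace the inverse-branch continuity step by a forward local-diffeomorphism argument, which requires verifying that the orbit reaching $p^*$ avoids $\mathrm{crit}(F)$ and computing the repelling eigenvalues of $DF^2(p^*)$.

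The paper's version is leaner --- one inverse branch, no eigenvalue check, no critical-point avoidance, and the neighborhood step comes for free from Proposition~\ref{prop:preimage-branch}. Your version has the conceptual advantage that the anchor is a hyperbolic repeller in $\Omega^o$, so the ``images of $O$ contain a fixed ball around the anchor'' step is transparent, and it would adapt more readily to settings where the invariant curve has no accessible boundary fixed point. One small simplification: once you have $F^k(O)\supseteq B(p^*,r_1)$ and $H^{N_0}(z^*,w^*)\in B(p^*,r_1)$ for large $N_0$, you can conclude directly with $N=k+2N_0$; the $F^2$-expansion estimate and the extra index $j_0$ are not actually needed.
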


\begin{proof}
    As in Proposition~\ref{prop:preimage-branch}, let $G_0 \colon \Omega \to \cl(A_0)$ denote the inverse of $F|_{\cl(A_0)}$. 
    Let $m_0=(z^*,w^*)$ and $m_k = G_0^k(m_0)$ for $k\geq 1$. 
    We first show that $\lim_{k\to \infty} m_k =m^*=(-2, 4-2\mu)$. 
    Note for all $k\geq 1$, $m_k \notin \set{Q=4|\mu|}$. 
    Therefore, by Lemma~\ref{lem:lya}, we know that 
    $Q(m_k)$ either stays at $8$ or monotonically approaches $8$.  
    Hence, as we shown in the proof of Proposition~\ref{prop:long-term-unreg}, $m_k$ must converge to the compact set
    $$
    Z=\set{\Delta Q=0} =  \set{z=0} \cup \set{w=\mu z +4} \cup \set{w=2\sgn(\mu)(z+\mu), w\leq 4|\mu|}. 
    $$
    As $G_0(\Omega)\subset \cl(A_0)$, $m_k$ must converge to the set 
    $$
    Z\cap \cl(A_0)=\set{z\leq -1,  w=\mu z +4}. 
    $$
    It follows that the omega-limit set $\omega(m_0)\subset Z\cap \cl(A_0)$. 
    Notice that, when restricted to $\{w=\mu z +4\}$, the system $F$ reduces to $\tilde{F}(z)= z^3-3z, \quad z\in[-2,2]$. 
    Then $G_0|_{Z\cap \cl(A_0)}$ corresponds to the branch of $\tilde{F}^{-1}$ whose image is $[-2,-1]$. 
    Using the conjugacy between $\tilde{F}$ and $\tilde{F}_2$ as shown in Proposition~\ref{prop:bry-dynamic}, 
    it's clear that all orbits under $G_0|_{Z\cap \cl(A_0)}$ converge to $z=-2$. 
    Therefore, for any $q\in \omega(m_0)$, $G_0^N(q)\to (-2,4-2\mu)$ as $N\to \infty$. 
    Since the omega-limit set is invariant and closed, 
    we have that $(-2,4-2\mu)\in \omega(m_0)$, which implies that $m_k$ visits an arbitrarily small neighborhood of $(-2,4-2\mu)$. 
    Note, the eigenvalues of $JF(-2,4-2\mu)$ are $9$ and $5-2\mu$. Hence, $(-2,4-2\mu)$ attracts all orbits of $G_0$ in some neighborhood of itself. 
    Hence, $m_k\to m^*$ as $k\to \infty$.

    Now consider any open set $O$ that satisfies $O\cap \set{Q=8}=O\cap \set{w=\mu z +4}$ is not empty. 
    We show that there exists $x\in O$ and $n$ such that $F^{n}(x)=m^*$. 
    Recall that $F|_{\set{Q=8}}$ is topologically conjugate to the piece-wise linear map $\tilde{F}_2$ as shown in Proposition~\ref{prop:bry-dynamic}. 
    For $\tilde{F}_2$, we have that
    $$
    \cupn \tilde{F}_2^{-N}(\set{\pm1}) =  \cup_{k\geq 0} \{ -1+\frac{2j}{3^k}\colon j=0,1,\cdots,3^k\}, 
    $$
    which is dense in $[-1,1]$. 
    By symmetry arguments, we have that the preimage of $-1$ is dense. 
    Therefore, using the conjugacy, we have that there exists $x\in O$ and $n$ such that $x\in F^{-n}(m^*)$, i.e., $F^{n}(x)=m^*$. 
    Notice $\set{Q=8}\subset \cl(B)$, by Proposition~\ref{prop:preimage-branch}, we have that there exists $i_{1},\cdots, i_{n_1}\in \set{0,1,2}$ such that 
    $G_{i_{n_1}} \circ \cdots \circ G_{i_1}(m^*) = x.$ 
    Since the composition of $G_i$'s is continuous, 
    there exists a neighborhood $\tilde{O}$ of $m^*$ such that $G_{i_{n_1}} \circ \cdots \circ G_{i_1}(\tilde{O})\subset O$. 
    Since $m_k \to m^*$, there exists $n_2$ such that $m_{n_2}=G_0^{n_2}(m_0)\in \tilde{O}$. 
    Taken together, we have that
    $$
    G_{i_{n_1}} \circ \cdots \circ G_{i_1} \circ G_0^{n_2}(m_0) \triangleq \hat{m} \in O. 
    $$
    By the definitions of $G_i$'s as in Proposition~\ref{prop:preimage-branch}, it follows that
    $$
    F^{n_1+n_2}(\hat{m})= m_0,
    $$
    which completes the proof. 
\end{proof}

\subsubsection{Proof of Theorem~\ref{thm:scalar-unreg} and Theorem~\ref{thm:chaos-distribution}} 
\label{app:proof-unreg}

\begin{proof}[Proof of Theorem~\ref{thm:scalar-unreg}] 

    According to Proposition~\ref{prop:preimage-null}, any measure-zero event in system $F$ corresponds to a measure-zero event in system $\GD_\eta$. 
    According to Proposition~\ref{prop:app-quotient}, the orbit of $\GD_\eta$ converges to $\set{u^\top v=y}$ if and only if the orbit of $F$ converges to $\{z=0\}$, and the former converges to $(\boldsymbol{0},\boldsymbol{0})$ if and only the latter converges to $(-y,0)$. 
    According to Proposition~\ref{prop:convergence-unreg}, 
    when $|\mu|<1$ and for almost all initializations $(z,w)$, the orbit converges to $\{z=0\}$ if $Q(z,w)<8$. 
    Notice that $\mu = \eta y$ and due to the conjugacy~\eqref{eq:conj}, 
    \begin{align*}
        Q(z,w) < 8 &\Leftrightarrow \eta (\|u\|_2^2+\|v\|_2^2) + \sqrt{\eta^2 (\|u\|_2^2+\|v\|_2^2)^2 - 16 \eta y \cdot \eta(u^\top v-y) } <8 \\
        & \Leftrightarrow \|u\|_2^2+\|v\|_2^2 + \sqrt{ (\|u\|_2^2+\|v\|_2^2)^2 - 16  y \cdot (u^\top v-y) } < \frac{8}{\eta}. 
    \end{align*}
    Also, by Proposition~\ref{prop:convergence-unreg}, when $Q(z,w)>8$ or $|\mu|>1$, almost all initializations do not converge. 
    This gives the critical step size~\eqref{eq:critical-scalar}.

    We now show the sensitivity to initialization. 
    Consider any open neighborhood $W\subset \br^{2d}$ such that $W\cap \partial \mathcal{D}'_\eta \neq \varnothing$. 
    Notice that the Jacobian of the map $T$ drops rank if and only if $\set{u=\pm v}$. 
    Also, as shown in \eqref{eq:ellipsoid}, $\partial \cd_\eta'$ defines a smooth ellipsoid of dimension $2d-1$. 
    Notice that, $\set{u=\pm v}$ is the union of two linear subspace with dimension $d$. 
    Therefore, since $2d-1\geq d$ and an ellipsoid is curved everywhere, there always exists a point  
    $\bar{\theta} \in W \cap \partial \mathcal{D}'_\eta \setminus \set{u=\pm v}$ and a neighborhood $\bar{W}$ of $\bar{\theta}$ such that $\bar{W}\subset W$ and $\bar{W} \cap \set{u=\pm v}=\varnothing$. 
    The Jacobian of $T$ is full rank at all points in $\bar{W}$, so by constant rank theorem, $T(\bar{W})$ is an open set. 
    Meanwhile, $T(\bar{\theta})\in T(\partial \mathcal{D}_\eta')$. 
    Hence, under the conjugacy~\eqref{eq:conj}, we have $T(\bar{W})\cap \set{w=\mu z +4}\neq \varnothing$. 
    According to Proposition~\ref{prop:sensitive}, there exists $(z',w'), (z'',w'') \in T(\bar{W})$ such that 
    $F^N(z',w')$ converges to $(0,w^*)$ with any $w^* \in [2|\mu|,\infty)$ and $F^N(z'',w'')$ converges to $(-\mu,0)$, as $N$ tends to infinity. 
    Therefore, there exists $\theta',\theta''\in W$ such that $\GD_\eta^N(\theta')$ converges to a global minimizer with squared norm in $[2|y|, \infty)$ and $\GD_\eta^N(\theta'')$ converges to $(\boldsymbol{0},\boldsymbol{0})$. 
    Notice that
    $$
    \|u\|^2+\|v\|^2 \geq 2 \|u\|\cdot \|v\| \geq 2  |u^\top v|.  
    $$
    Therefore, the minimal squared norm at $\set{u^\top v=y}$ is $2|y|$. 
    Also, notice that 
    \begin{align*}
        \|uu^\top - vv^\top\|_F^2 &= \tr( (uu^\top - vv^\top)(uu^\top - vv^\top)  ) \\ 
        &= \|u\|^4+\|v\|^4 - 2 (u^\top v)^2\\
        &= (\|u\|^2+\|v\|^2)^2-2\|u\|^2\|v\|^2 - 2 (u^\top v)^2\\
        &\geq \frac{(\|u\|^2+\|v\|^2)^2}{2} - 2(u^\top v)^2. 
    \end{align*}
    Hence, at any global minimizer, the imbalance is lower bounded by the squared norm.  
    Hence, by what we have shown, arbitrarily large imbalance can be also attained by initializations in $\bar{W}$.

    Finally, the lower bound for the topological entropy and the existence of periodic orbit of any period 
    directly come from Proposition~\ref{prop:topo-entropy}. This completes the proof. 
\end{proof}

Next, we present a basic property for the unregularized scalar factorization problem: the sharpness coincides with the squared norm at the set of global minimizers. 
This has been proved by \citet{wang2022large}. Below we restate their results. 

\begin{proposition}[{\citealp[Theorem F.2]{wang2022large}}]\label{prop:normshaprness}
    For the unregularized scalar factorization problem~\eqref{eq:scalar-fac}, the eigenvalues of the Hessian $\nabla^2L$ are $\pm(u^\top v-y)$, each with multiplicity $d-1$, and 
    $\frac{1}{2}(\|u\|^2+\|v\|^2 \pm \sqrt{(\|u\|^2+\|v\|^2)^2+4(u^{\top} v-y)^2+8(u^{\top} v-y) u^{\top} v})$. 
    Consequently, when $u^\top v=y$, we have that 
    $$
    \lambda_{\max}(\nabla^2 L(u,v))=\tr(\nabla^2 L(u,v)) = \|u\|^2+\|v\|^2. 
    $$
\end{proposition}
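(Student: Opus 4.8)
The plan is to compute the Hessian of $L(u,v)=\frac12(u^\top v-y)^2$ explicitly and diagonalize it by exhibiting its eigenspaces. Writing $g=u^\top v-y$ for the residual, one has $\nabla_u L=gv$ and $\nabla_v L=gu$, so differentiating once more gives the block form
\[
\nabla^2 L=\begin{pmatrix} vv^\top & gI+vu^\top\\ gI+uv^\top & uu^\top\end{pmatrix}.
\]
I would first treat the generic case in which $u,v$ are linearly independent and $g\neq 0$. Solving $(\nabla^2 L-gI)(a,b)^\top=0$ and adding the two block equations yields $(v^\top a+u^\top b)(u+v)=0$; since $u+v\neq0$ this forces $v^\top a+u^\top b=0$, and then the equations collapse to $a=b$ together with $(u+v)^\top a=0$. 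Hence $g$ is an eigenvalue with eigenspace $\{(a,a)\colon (u+v)^\top a=0\}$, of dimension $d-1$; the symmetric computation for $-g$ gives the eigenspace $\{(a,-a)\colon (u-v)^\top a=0\}$, again of dimension $d-1$.

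Because $\nabla^2 L$ is symmetric, the orthogonal complement of these two eigenspaces is a two-dimensional invariant subspace, and a short computation identifies it as the span of $p_1=(u+v,\,u+v)$ and $p_2=(u-v,\,v-u)$. I would then apply $\nabla^2 L$ to $p_1$ and $p_2$ and re-express the images in the basis $\{p_1,p_2\}$ (using the identity $(v,u)=\tfrac12 p_1-\tfrac12 p_2$) to read off the $2\times2$ matrix $M$ of the restriction. One finds $\tr M=\|u\|^2+\|v\|^2$ and, after the rank-one terms cancel, $\det M=-g^2-2g\,u^\top v=-(u^\top v-y)^2-2(u^\top v-y)u^\top v$. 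The roots of $\lambda^2-(\tr M)\lambda+\det M=0$ are then exactly $\tfrac12\big(\|u\|^2+\|v\|^2\pm\sqrt{(\|u\|^2+\|v\|^2)^2+4(u^\top v-y)^2+8(u^\top v-y)u^\top v}\big)$, and together with the two $(d-1)$-fold blocks these account for all $2d$ eigenvalues. To lift the computation to all $(u,v)$ I would invoke continuity: the eigenvalues of a symmetric matrix depend continuously on its entries, the claimed expressions are continuous in $(u,v)$, and the generic configurations are open and dense, so the identification propagates to every parameter (and for $d=1$ there is nothing to extend, since then the $\pm g$ blocks are empty and $\nabla^2 L$ is literally the $2\times2$ matrix above).

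For the ``consequently'' statement, one simply specializes to $u^\top v=y$, i.e.\ $g=0$: there $\nabla^2 L=\binom{v}{u}\binom{v}{u}^\top$ is rank one, so its unique nonzero eigenvalue equals its trace $\|u\|^2+\|v\|^2$, whence $\lambda_{\max}(\nabla^2 L)=\tr(\nabla^2 L)=\|u\|^2+\|v\|^2$ (consistently, the general formulas also degenerate to $0$ and $\|u\|^2+\|v\|^2$ when $g=0$). The only delicate point I anticipate is the bookkeeping around degenerate configurations — $u\parallel v$, $g=0$, or an accidental coincidence between $\pm g$ and one of the two ``interesting'' roots — where eigenspace dimensions could jump; the continuity argument handles this cleanly, but it is the step that must be stated carefully to turn the generic computation into a proof valid for all $u,v,y$.
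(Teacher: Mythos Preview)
The paper does not prove this proposition; it cites the result as Theorem~F.2 of Wang et al.\ (2022) and presents only the statement. Your direct approach---computing the block Hessian, exhibiting the two $(d-1)$-dimensional eigenspaces for $\pm g$, reducing to a $2\times2$ problem on their orthogonal complement spanned by $p_1,p_2$, and closing the degenerate cases by continuity of eigenvalues of symmetric matrices---is correct and is the natural way to establish the claim.
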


\begin{proof}[Proof of Theorem~\ref{thm:chaos-distribution}]
    Assume $E=(e,e')\subset [\gamma_{\min}, 2/\eta]$. 
    Let $E'=(e\eta, e'\eta)$. 
    We first prove that, in the $F$-system, there exists a positive-measure set of initializations that converge to $\set{(0,w)\colon w\in E'}$.
    Notice that
    \begin{equation}\label{eq:dist-proof}
        |\frac{z_{k+1}}{z_k}| = |z_k^2+\mu z_k-w_k + 1| \leq 1 
    \end{equation}
    is equivalent to 
    $$
    w_k \geq z_k^2+\mu z_k, \ \text{and} \ w_k \leq z_k^2+\mu z_k+2. 
    $$ 
    Notice that (i) the convex parabola $w=z^2+\mu z+2$ intersects the $w$-axis at $(0,2)$, 
    (ii) the level set $\set{Q(z,w)=c}$ are straight lines intersecting the $w$-axis at $(0, c/2)$. 
    Therefore, there exists $\delta\in (0,1), C\in (4|\mu|, 4)$ such that all points in 
    $$
    T=\set{(z,w)\in \Omega \colon |z|<\delta, Q(z,w)<C} 
    $$
    satisfies $|z_{k+1}/z_k|<\gamma$ for some constant $\gamma\in (0,1)$. 
    Notice that, this holds for all sufficiently small $\delta$ and thus we fix $\gamma$. 
    For $(z_k,w_k)\in T$ we have that $|z_{k+1}|<\delta$. 
    Meanwhile, since $Q(z_{k+1},w_{k+1})\leq Q(z_k,w_k)$ by Lemma~\ref{lem:lya}, we have that $(z_{k+1},w_{k+1})\in T$. 
    It follows that if the initialization $(z_0,w_0)$ is in $T$, the entire trajectory remains within $T$.
    Hence, we have that $|z_k|\leq |z_0|\gamma^k$ for all $k\geq0$, and that 
    $$
    |w_{k+1}-w_k| = |z_k^2(w_k -4) -4 z_k \mu | \leq |z_k|^2|w_k-4|+4|\mu||z_k| \leq  C_1|z_k|
    $$
    where $C_1$ is independent of $\delta$. 
    We have that, 
    $$
    |w_\infty -w_0|\leq C_1 \sum_{k=0}^\infty |z_k|\leq \frac{C_1 \delta}{1-\gamma}. 
    $$
    Hence, $|w_\infty -w_0|\to 0$ as $\delta \to 0$. 
    Then, there must exist a sufficiently small sub-interval $E''\subset E'$ and a sufficiently small $\delta$ such that, 
    for all initializations in 
    $$
    A = \set{(z,w)\in \Omega\colon |z|< \delta, |w|\in E''},
    $$ 
    which is a positive-measure set, the trajectory converges to $\set{(0,w)\colon w\in E'}$.

    Consider any open neighborhood $W\subset \br^{2d}$ such that $W\cap \partial \mathcal{D}'_\eta \neq \varnothing$. 
    As shown in the above proof of Theorem~\ref{thm:scalar-unreg}, 
    there always exits a point  $\bar{\theta} \in W \cap \partial \mathcal{D}'_\eta \setminus \set{u=\pm v}$ and a neighborhood $\bar{W}$ of $\bar{\theta}$ such that $\bar{W}\subset W$, $T$ has full rank on $\bar{W}$, and $T(\bar{W})$ is an open set satisfying $T(\bar{W})\cap \set{w=\mu z +4}\neq \varnothing$. 
    According to Proposition~\ref{prop:sensitive}, there exists $(z',w') \in T(\bar{W})$ such that $F^N(z',w')\in A$. 
    By continuity of $F^N$, there exists a neighbor $A'$ of $(z',w')$ satisfying $A'\subset T(\bar{W})$ and $F^N(A')\subset A$. 
    Since $T|_{\bar{W}}\colon \bar{W}\to T(\bar{W})$ is continuous, 
    $T|_{\bar{W}}^{-1}(A')$ is open in $\bar{W}$ and has positive measure. 
    By the conjugacy~\eqref{eq:conj} and what we have shown, for all points in $T|_{\bar{W}}^{-1}(A')$, the final norm lies in $E$. 
    This completes the proof. 
\end{proof}

Lastly, we discuss the technical novelties in the proof of Theorem~\ref{thm:scalar-unreg}, in comparison with the techniques of \citet{wang2022large}. 
While their analysis is also based on a careful study of the gradient descent system, it is restricted to a specific subset of the parameter space. 
In comparison, our approach is built on a global analysis of the system. The key ingredients are: 
(i) we identified an equivariance symmetry in $\mathrm{GD}_\eta$ and the corresponding quotient dynamical system defined by the map $F$ (Proposition~\ref{prop:app-quotient});  
(ii) we analyzed all the branches of the inverse map $F^{-1}$ (which is a multi-valued map) and carefully studied the inverse dynamics defined by $F^{-1}$ (Proposition~\ref{prop:preimage-branch} and Proposition~\ref{prop:sensitive}); 
(iii) we used symbolic dynamics to show that $\mathrm{GD}_\eta$, when restricted to the convergence boundary, is semi-conjugate to a Devaney chaotic system (Proposition~\ref{prop:bry-dynamic}) and that $\GD_\eta$ admits periodic orbits with any period (Proposition~\ref{prop:topo-entropy}).


\subsection{Regularized problem}
\label{app:reg}

Similar to the previous section, preliminary results are first presented in Appendix~\ref{app:pre-reg}, and the proofs of Theorem~\ref{thm:scalar-reg} and Theorem~\ref{thm:reg-small} is given in Appendix~\ref{app:proof-reg}.

\subsubsection{Preliminary results}
\label{app:pre-reg}

Unless stated otherwise, we use $(z',w')$ to denote $F(z,w)$. 
We first show that when the step size is small enough, the quotient dynamics $F$ is predictable. 

\begin{proposition}\label{prop:convergence-reg}
    Assume $0< \nu < 1-|\mu|$. 
    For almost all $(z,w)\in \Omega$, if $Q(z,w)<8-4\nu$, we have that $F^N(z,w)$ converges to $T(\mathcal{M})$ as $N\to \infty$. 
    If $Q(z,w)<4-4\nu$, for any $(u,v)$ that satisfies $T(u,v)=(z,w)$, $\GD_\eta^N(u,v)$ converges to $p^-(u,v)$, as defined in Theorem~\ref{thm:scalar-reg}, as $N\to \infty$. 
\end{proposition}

\begin{proof}
    By Lemma~\ref{lem:lya}, we have that $Q(F^N(z,w))$ monotonically decreases. 
    Since $Q(F^N(z,w))$ is non-negative, it converges to some finite value. 
    It follows that $F^N(z,w)$ converges to $\set{\Delta Q=0}\cap \set{Q<8-4\nu}$, which is equal to $\set{Q=4|\mu|}$ according to Lemma~\ref{lem:lya}. 
    Note $\set{Q=4|\mu|}=\set{w=2\sgn(\mu)(z+\mu),w\leq 4|\mu|}$ by Lemma~\ref{lem:Q-levelset}. 
    The $w$-update under $F$ on $\set{Q=4|\mu|}$ is given by
    $$
    w' = \kappa(w)=w(\frac{w}{2}-1+v-|\mu|)^2. 
    $$
    When $|\mu|>\nu$, $\kappa(w)$ has two fixed points on $[0,4|\mu|]$: $w=0$ and $w=2(|\mu|-\nu)$. 
    It is straight forward to obtain that $w=2(|\mu|-\nu)$ attracts all orbits on $[0,4|\mu|]$ except the one with initial value $w=0$. 
    Note $w=0$ corresponds to the saddle $(-\mu,0)$, where the Jacobian of $F$ has eigenvalues: $(1+\mu-\nu)^2$ and $(-1+\mu+\nu)^2$. 
    Hence, $(-\mu,0)$ is a hyperbolic fixed point of $F$. 
    Note also that, $w=2(|\mu|-\nu)$ corresponds to $(-\sgn(\mu)\nu, 2(|\mu|-\nu))$, which is the only element of $T(\mathcal{M})$. 
    The Jacobian of $F$ at this point has eigenvalues: $(-1 + 2 \nu)^2,$ and $ 1 - 2 |\mu| + 2 \nu$. 
    Hence, $( -\sgn(\mu)\nu, 2(|\mu|-\nu))$ attracts nearby orbits. 
    Then with the omega-limit set arguments similar to those in the proof of Proposition~\ref{prop:convergence-unreg}, 
    we have that, 
    for all initializations that do not lie in the basin of attraction of $(-\mu,0)$: 
    $$
    \cupn F^{-N}(O\cap \set{w=-2\sgn(\mu)(z+\mu)}),
    $$
    where $O$ is a neighborhood of $(-\mu,0)$, the orbit converges to $T(\mathcal{M})$. 
    The above basin has measure-zero since the Jacobian of $F$ has full rank almost everywhere. 
    Therefore, for almost all initializations with $Q(z,w)<8-4\nu$, the orbit converges to $T(\mathcal{M})$. 
    When $|\mu|\leq \nu$, $\kappa(w)$ has only one fixed point $w=0$, which attracts all orbits $[0,4|\mu|]$. 
    Note in this case, $(-\mu,0)$ attracts nearby orbits, since both eigenvalues of $JF$ have norm smaller than one. 
    Then, with the omega-limit set arguments, we have that for all initializations with $Q(z,w)<8-4\nu$, the orbit converges to $T(\mathcal{M})$.

    Next, we identify the converged point under $\GD_\eta$ when $Q<4-4\nu$. 
    Without loss of generality, assume $y\geq0$. 
    Let $(z_k,w_k)_{k\geq0}$ be the orbit and $T(u_k,v_k)=(z_k,w_k)$. 
    Consider new coordinates $p=(u+v)/\sqrt{2}$ and $q=(u-v)/\sqrt{2}$. 
    Note that when $y>\lambda$, the set of global minimizers is given by 
    $$
    \set{u=v, \|u\|_2^2=y-\lambda}=\set{q=0, \|p\|^2=2(y-\lambda)}. 
    $$
    Notice that
    \begin{align*}
        \sqrt{2}\cdot p_{k+1} = u_{k+1}+v_{k+1} &= u_k-\eta z_kv_k -\eta\lambda u_k + v_k-\eta  z_k u_k - \eta \lambda v_k\\ 
        &= (1-\eta z_k -\eta \lambda)(u_k+v_k) =  (1-\eta z_k -\eta \lambda) \sqrt{2}\cdot p_k. 
    \end{align*}
    Under the conjugacy~\eqref{eq:conj}, we have
    \begin{equation}\label{eq:1-z-nu}
            p_k = p_0\Pi_{j=0}^{k-1}(1-z_j-\nu). 
    \end{equation}
    Therefore, the converged point is either $(\sqrt{2(y-\lambda)} \frac{p_0}{\|p_0\|},0)$ or $(-\sqrt{2(y-\lambda)}\frac{p_0}{\|p_0\|},0)$. 
    Since the global minimizer is given by $\set{q=0, \|p\|^2=2(y-\lambda)}$, the former point is the minimal distance solution and the latter point is the maximal distance solution, under the $pq$-coordinates.
    Note the change of coordinates $p=(u+v)/\sqrt{2}$ and $q=(u-v)/\sqrt{2}$ is given by an orthogonal transformation which preserves distance. 
    Therefore the same statement holds in the $uv$-coordinates.

    Note that, 
    \begin{align*}
         Q(z,w)=4-4\nu 
        \Leftrightarrow  -2 \mu z - 2(1-\nu)^2 + (1-\nu) w=0. 
    \end{align*}
    The above line intersects with $\partial \Omega$ at $(1-\nu, 2(1-\nu+\mu))$ and $(\nu-1, -2(\nu-1+\mu))$. 
    Therefore, for all initializations that satisfy $Q(z,w)<4-4\nu$, we have $|z|<1-\nu$. 
    Meanwhile, since $4-4\nu<8-4\nu$, we have that the $\{Q<4-4\nu\}$ is forward invariant by Lemma~\ref{lem:lya}, i.e., $|z|<1-\nu$ holds on the entire orbit. 
    This implies that, when $Q(z_0,w_0)<4-4\nu$, we have that $1\pm z_j-\nu>0$ for all $j\geq0$. 
    By \eqref{eq:1-z-nu}, the converged minimizer in $pq$-coordinate has to be $(\sqrt{2(y-\lambda)} \frac{p_0}{\|p_0\|},0)$, which is the minimal distance solution. This completes the proof. 
\end{proof}

We now proceed to show the projected boundary $T(\partial \mathcal{D}''_\eta)$ is self-similar. 

\begin{proposition}[Self-similarity]\label{prop:self-similar}
    Assume $0< \nu < 1-|\mu|$. 
    The boundary $T(\partial \mathcal{D}_\eta'')$ is self-similar with degree three. 
\end{proposition}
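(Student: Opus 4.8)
The plan is to pass to the quotient coordinates of Appendix~\ref{app:quotient}. By Proposition~\ref{prop:app-quotient}, after an affine change of variables (the shift by $y$ and the rescaling $\phi(z,w)=(\eta z,\eta w)$) the set $T(\mathcal D_\eta'')$ is identified with
$$\mathcal R=\set{(z,w)\in\Omega\colon F^N(z,w)\to\{z=0\}\ \text{or}\ F^N(z,w)\to(-\mu,0)},$$
where $F$ is the planar map of Appendix~\ref{app:quotient} and $\Omega=\set{w\ge 2|z+\mu|}$ its state space; correspondingly $T(\partial\mathcal D_\eta'')$ is identified with $\partial\mathcal R$ (boundary taken inside $\Omega$). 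Since affine maps are homeomorphisms, it suffices to show $\partial\mathcal R$ is self-similar with degree three, and the three homeomorphisms will be the inverse branches $\phi_0=G_0$, $\phi_1=G_1$, $\phi_2=G_2$ of Proposition~\ref{prop:preimage-branch}, restricted to $\partial\mathcal R$.

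The first step is to locate $\partial\mathcal R$ inside the region $\cl(B)$ on which $F$ is three-to-one. By Proposition~\ref{prop:conv-reg} every orbit started in $\{Q<8-4\nu\}$ converges, either to the minimizer or (on the measure-zero set $\mathcal S_\eta$) to the saddle $(-\mu,0)$; hence $\{Q<8-4\nu\}\subseteq\mathcal R$, and since this is an open set it lies in $\operatorname{int}\mathcal R$, so $\partial\mathcal R\subseteq\{Q\ge 8-4\nu\}\subseteq\cl(B)$ with $B=\set{(z,w)\in\Omega^o\colon Q>6-4\nu}$. In particular all three branches $\phi_0,\phi_1,\phi_2$ of Proposition~\ref{prop:preimage-branch} are defined on $\partial\mathcal R$, each is a topological embedding into $\Omega$ with $\phi_i(\partial\mathcal R)\subseteq\cl(A_i)$, and $A_0\subseteq\{z<\nu-1\}$, $A_1\subseteq\{|z|<1-\nu\}$, $A_2\subseteq\{z>1-\nu\}$ are pairwise disjoint.

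Next I would check the self-similarity axioms. Since a tail of an orbit has the same limiting behaviour, both $\mathcal R$ and $\Omega\setminus\mathcal R$ are forward invariant under $F$, and $\mathcal R$ is also backward invariant; a short continuity argument upgrades this to $F(\partial\mathcal R)\subseteq\partial\mathcal R$, and using that $F$ is an open map off its critical locus and that this locus is contained in $\operatorname{int}\mathcal R$ (the critical parabola $\iota$ satisfies $Q<6-4\nu$ there, by the computation in the proof of Proposition~\ref{prop:preimage-branch}), one also gets $F^{-1}(\partial\mathcal R)=\partial\mathcal R$. Then $\phi_i(\partial\mathcal R)\subseteq F^{-1}(\partial\mathcal R)=\partial\mathcal R$, which is condition (i) for each branch. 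For the covering, let $x\in\partial\mathcal R$; then $F(x)\in\partial\mathcal R\subseteq\cl(B)$, and where $F(x)\in B$, Proposition~\ref{prop:preimage-branch} gives $F^{-1}(F(x))=\set{\phi_0(F(x)),\phi_1(F(x)),\phi_2(F(x))}$, so $x=\phi_i(F(x))$ for one $i$, proving $\partial\mathcal R=\phi_0(\partial\mathcal R)\cup\phi_1(\partial\mathcal R)\cup\phi_2(\partial\mathcal R)$. For the open set condition I would take $O=\partial\mathcal R\cap\{Q>8-4\nu\}$, which is relatively open and nonempty (the convergence region, hence $\partial\mathcal R$, is unbounded, so $Q$ is unbounded on it): the $\phi_i(O)$ are pairwise disjoint since $\phi_i(O)\subseteq A_i$, and $\phi_i(O)\subseteq O$ because $\{Q\le 8-4\nu\}$ is forward invariant (Lemma~\ref{lem:lya}), hence $\{Q>8-4\nu\}$ is backward invariant.

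The part I expect to be delicate is the bookkeeping at the loci where $F$ fails to be a local homeomorphism: the parabola $\iota$, the lines $\{z=\pm(1-\nu)\}$ that $F$ collapses onto $\partial\Omega$, and $\partial\Omega$ itself. The first is handled by the observation above that $\iota\subseteq\{Q<6-4\nu\}\subseteq\operatorname{int}\mathcal R$, and the low-$Q$ part of $\partial\Omega$, where $Q=4|\mu|<8-4\nu$, is similarly absorbed into $\operatorname{int}\mathcal R$; the remaining high-$Q$ part of $\partial\Omega$, where $F$ reduces to the explicit one-dimensional map $w\mapsto w(\tfrac w2-1+|\mu|+\nu)^2$, needs the covering argument re-run there using the bijectivity of $F|_{\cl(A_i)}$ on $\partial\Omega$ and the continuous extensions of the $G_i$ to closures, both provided by Proposition~\ref{prop:preimage-branch}. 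Finally I would note that $T(\partial\mathcal D_\eta'')=\partial\mathcal R$ (in the affine-conjugate coordinates) holds up to the measure-zero set where $T$ is not a submersion, which pulls back to a measure-zero set by Proposition~\ref{prop:preimage-null}; combining everything gives the claim.
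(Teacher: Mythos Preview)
Your outline follows the same route as the paper: pass to the quotient system $F$, locate the boundary inside $\cl(B)$ via Proposition~\ref{prop:conv-reg}, show the boundary is fully $F$-invariant from forward/backward invariance of $\mathcal R$, and then read off the three-fold self-similarity from the inverse branches $G_0,G_1,G_2$ of Proposition~\ref{prop:preimage-branch}. The skeleton is right.

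There is, however, one real gap. You need $T(\partial\mathcal D_\eta'')=\partial T(\mathcal D_\eta'')$ to pass from the statement (about $T(\partial\mathcal D_\eta'')$) to your $\partial\mathcal R$, and your justification---``holds up to the measure-zero set where $T$ is not a submersion''---does not establish this. Commuting $T$ with $\partial$ is a topological assertion, and $T$ being a submersion almost everywhere is neither sufficient nor relevant here. The paper handles this step directly: $\mathcal D_\eta''$ is \emph{saturated} for $T$ (i.e.\ $T^{-1}(T(\mathcal D_\eta''))=\mathcal D_\eta''$), which is immediate from Proposition~\ref{prop:app-quotient}, and $\mathcal D_\eta''$ is \emph{open} (any point eventually entering $\{Q<8-4\nu\}$ has a whole neighborhood that does). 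From saturation and continuity one gets $T(\partial\mathcal D_\eta'')\subseteq\partial T(\mathcal D_\eta'')$; the reverse inclusion uses openness of $\mathcal D_\eta''$ together with properness of $T$. You should replace the measure-zero remark by this argument.

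A secondary issue is your choice of the open set $O=\partial\mathcal R\cap\{Q>8-4\nu\}$. First, your nonemptiness argument appeals to unboundedness of the convergence region, but Proposition~\ref{prop:unbound} is only proved for $\mu=0$. Second, $O$ may meet $\partial\Omega$, and on $\partial\Omega$ the images $G_i(O)$ can touch along $\{z=\pm(1-\nu)\}$, so the disjointness claim ``$\phi_i(O)\subseteq A_i$'' is not quite right as stated. The paper's (implicit) choice $O=\partial\mathcal R\cap\Omega^o$ avoids both problems: it is nonempty because, e.g., $G_1(\xi_1)\in\Omega^o\cap\partial\mathcal R$ (with $\xi_1$ as in Proposition~\ref{prop:p+p-}); the images $G_i(O)$ lie in the pairwise disjoint open sets $A_i$; and $\cup_i G_i(O)\subseteq O$ since each $A_i\subseteq\Omega^o$.
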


\begin{proof}
    We use $\mathcal{D}$ for $\mathcal{D}_\eta''$ for notation simplicity. 
    First, we prove that $T(\partial \mathcal{D})=\partial T(\mathcal{D})$. 
    Notice that, by Proposition~\ref{prop:app-quotient}, orbit under $\GD_\eta$ converges to $\mathcal{M}$ or to the saddle if and only if the corresponding orbit under $F$ converges to $T(\mathcal{M})$ or to $(-\mu,0)$, respectively. 
    This implies that $T^{-1}(T(\mathcal{D}))=\mathcal{D}$ and $T(\mathcal{D}^c)=T(\cd)^c$. 
    Since $T$ is continuous, 
    we have that $\partial T^{-1}(T(\cd))=\partial \cd \subset T^{-1}(\partial T(\cd))$. 
    Hence, $T(\partial \cd)\subset \partial T(\cd)$.

    For the other direction, consider any point $y\notin T(\partial \cd)$. Then we have $T^{-1}(y)\cap \partial \cd = \varnothing$. Note $T^{-1}(y)$ is connected and compact by Proposition~\ref{prop:fiber}. Thus, there exists an open neighborhood $O$ of $T^{-1}(y)$ such that $O \subset \cd$ or $O \subset \cd^c$. Hence, $T(O)\subset T(\mathcal{D})$ or $T(O)\subset T(\cd^c)=T(\cd)^c$. 
    We claim that for any $y'$ that is sufficiently close to $y$, 
    $y' \in T(O)$. 
    Notice that, if $y \notin \partial \Omega$, we have that 
    $T^{-1}(y) \cap \set{u=\pm v}=\varnothing$.  
    Thus, $JT$ is non-singular at $T^{-1}(y)$ and, by constant rank theorem, $T(O)$ contains a neighborhood of $y$, which gives the claim. 
    If $y\in \partial \Omega$, without loss of generality, assume that $y=(\frac{w_0}{2} -\mu, w_0)$ for some $w_0\geq0$. 
    Then $F^{-1}(y)=\set{ u=v, \|u\|^2= w_0/2}$. 
    Consider any $(u',v')\in T^{-1}(y')$. 
    As shown in the proof of Proposition~\ref{prop:fiber}, 
    when $\|y'-y\|$ tends to zero, $\|u'-v'\|^2$ must tend to $0$ and $\|u'+v'\|^2$ to $2w_0$. 
    This implies that $(u',v')$ tends to $F^{-1}(y)$. 
    Thus, if $y'$ is sufficiently close to $y$, we have $T^{-1}(y)\subset O$ and thus $y'\in T(O)$. 
    Now, given that $T(O)\subset T(\mathcal{D})$ or $T(O)\subset T(\cd)^c$, 
    we have $y\notin \partial T(\cd)$. 
    Hence, $\partial T(\cd)\subset T(\partial \cd)$ and $\partial T(\cd)= T(\partial \cd)$.

    Next, we prove that $F(\partial T(\cd))=\partial T(\cd)$. 
    Let $A=T(\cd)$. 
    Notice that, by Proposition~\ref{prop:convergence-reg}, 
    $\set{Q(z,w)<8-4\nu}$ is an attracting neighborhood of $T(\mathcal{M}) \cup \{(\mu,0)\}$. Thus, the corresponding basin of attraction, $A$, is an open set due to the continuity of $F$. 
    It follows that $\partial A \subset A^c \subset \set{Q\geq 8-4\nu}$. 
    Now consider any point $x\in F^{-1}(\partial A)$. 
    We claim that a small enough neighborhood $O$ of $x$ is mapped to a neighborhood of $F(x)$. 
    Note the claim holds trivially if $x$ is a regular point of $F$. 
    Assume that $\det JF(x)=0$ and recall that the critical points of $F$ is given in \eqref{eq:Fsingular}. 
    Since $F(x) \in \partial A \subset A^c$ and $F^{-1}(A^c)\subset A^c$, 
    we have that $x \in A^c \subset \set{Q\geq 8-4\nu}$. 
    Hence, as we shown in the proof of Proposition~\ref{prop:preimage-branch}, 
    $x\in \partial A_0$ or $x\in \partial A_2$, and the claim holds according to Proposition~\ref{prop:preimage-branch}. 
    Now, since $F(x)\in\partial A$, $F(O)$ contains a point $y\in A$ and a point $z\in A^c$. 
    By the previous claim, $F^{-1}(y)\cap O\neq \varnothing $ and 
    $F^{-1}(z)\cap O \neq \varnothing$. 
    As $F^{-1}(A)\subset A$ and $F^{-1}(A^c)\subset A^c$, 
    $x\in \partial A$ and $F^{-1}(\partial A)\subset \partial A$. 
    Since $F$ is surjective by Proposition~\ref{prop:preimage-branch}, we have $F\circ F^{-1}(\partial A)=\partial A$. 
    It follows that 
    $$
    \partial A = F\circ F^{-1}(\partial A) \subset F(\partial A).
    $$

    For the other direction, note that for any $y\in \partial A$, since $y\in \cl(A)$ and $F$ is continuous, we have that 
    $F(y)\in \cl(F(A))\subset \cl(A)$. 
    Therefore, $y \in \cl(A)\setminus A^o = \cl(A)\setminus A$. 
    Since $F^{-1}(A)\subset A$, $F(y) \notin A$. 
    Then we have that $F(y)\in \cl(A)\setminus A=\partial A$ and $F(\partial A)\subset \partial A$. 
    Therefore, $F(\partial A)=\partial A$.

    Since $F(\partial A)=\partial A$ and 
    $\partial A \subset \set{Q\geq 8-4\nu}$, Proposition~\ref{prop:preimage-branch} gives that 
    $$
    \partial T(\mathcal{D}_\eta'') = \cup_{k=0,1,2} G_i(\partial T(\mathcal{D}_\eta'')), 
    $$
    where $G_i$'s are homeomorphisms.
    As shown in Proposition~\ref{prop:preimage-branch}, $G_i(\Omega^o)\cap G_j(\Omega^o)$ is empty whenever $i\neq j$. 
    Therefore, $T(\partial \mathcal{D}_\eta'')$ is self-similar with degree three. 
    This completes the proof. 
\end{proof}

In the following, we show that $\mathcal{D}_\eta$ has an unbounded interior, up to a measure-zero set.

\begin{proposition}[Unboundedness]\label{prop:unbound}
    When $\mu=0, 0\leq \nu<1$, 
    there exists $a,b>0$ such that, for almost all initializations that lie in $\set{(z,w)\in\Omega\colon |z|<a\exp(-bw)}$, the orbit converges to the minimizer. 
\end{proposition}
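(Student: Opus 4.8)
The plan is to use that, with $\mu=0$, the quotient map \eqref{eq:conj} reduces on $\Omega=\{w\ge 2|z|\}$ to
\[
F(z,w)=\Bigl(\,z\bigl(z^2+(1-\nu)(1-\nu-w)\bigr),\ \bigl((1-\nu)^2+z^2\bigr)w-4(1-\nu)z^2\,\Bigr),
\]
that the global minimizer is $(0,0)$, and that $Q(z,w)=2w$. I would first handle the easy part of $\Omega$: by Lemma~\ref{lem:lya} and Proposition~\ref{prop:conv-reg}, every orbit started in $R:=\{Q<8-4\nu\}=\{w<4-2\nu\}\cap\Omega$ converges to $(0,0)$ up to a null set $N$, and $F(\Omega)\subseteq\Omega$ by \eqref{eq:omega-in-out}. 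So the whole statement reduces to: for a suitable choice of $a,b>0$, the orbit of every $(z_0,w_0)$ in the spike $S:=\{|z|<a\,e^{-bw}\}\cap\Omega$ eventually enters $R$; the exceptional set inside $S$ is then contained in $\bigcup_{K\ge0}F^{-K}(N)$, which is null since the critical set \eqref{eq:Fsingular} is a finite union of curves, so $F$ carries null sets to null sets under preimage. I would also restrict attention to $\nu>0$, the only case of interest (for $\nu=0$ the convergence region is bounded, cf.\ Theorem~\ref{thm:scalar-unreg}).

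The core is a bootstrap along the orbit. Fix $\varepsilon>0$ with $\rho:=(1-\nu)^2+\varepsilon^2<1$, which is possible precisely because $0<\nu<1$. I claim $a,b$ can be chosen so that $|z_0|<a\,e^{-bw_0}$ forces $|z_k|\le\varepsilon$ for all $k$, proved by strong induction. Granting $|z_j|\le\varepsilon$ for $j\le k$, the $w$-recursion yields $w_{j+1}\le\bigl((1-\nu)^2+z_j^2\bigr)w_j\le\rho\,w_j$, hence $w_j\le\rho^{\,j}w_0$ for $j\le k+1$; and since $|z_j^2+(1-\nu)(1-\nu-w_j)|\le\varepsilon^2+(1-\nu)^2+(1-\nu)w_j=\rho+(1-\nu)w_j$, the $z$-recursion telescopes to
\[
|z_{k+1}|=|z_0|\prod_{j=0}^{k}\bigl|z_j^2+(1-\nu)(1-\nu-w_j)\bigr|\le|z_0|\prod_{j=0}^{k}\bigl(\rho+(1-\nu)\rho^{\,j}w_0\bigr)\le|z_0|\,P(w_0),
\]
where $P(w_0):=\prod_{j\ge0}\max\{1,\ \rho+(1-\nu)\rho^{\,j}w_0\}$ is a finite product, only finitely many factors exceeding $1$.

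The key estimate I would then establish is $P(w_0)\le\exp\!\bigl(C(\log(2+w_0))^2\bigr)$ for a constant $C=C(\nu,\varepsilon)$: the $j$-th factor exceeds $1$ only for $j$ of order $\log w_0/\log(1/\rho)$, each such factor is at most $(1-\rho)^{-1}(1-\nu)\rho^{\,j}w_0$, and summing the logarithms via an elementary arithmetic-series (or integral) bound gives a leading term of order $(\log w_0)^2/\log(1/\rho)$; this is exactly the place where the geometric decay $w_j\le\rho^{\,j}w_0$ beats the worst-case growth of $|z_j|$, so that $|z_k|$ inflates by at most a factor that is \emph{sub-exponential} in $w_0$. (One may also package this as: $\Phi(z,w):=\log|z|+C(\log(2+w))^2$ is nonincreasing along any orbit segment on which $|z|\le\varepsilon$.) I expect this estimate, together with making the choice of $a,b$ uniform in $w_0$, to be the only real obstacle; the rest is bookkeeping. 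Since $(\log(2+w_0))^2=o(w_0)$ I can pick $b>0$ and then $a\in(0,\varepsilon]$ so small that $a\,e^{-bw_0}\exp(C(\log(2+w_0))^2)\le\varepsilon$ for all $w_0\ge0$, which closes the induction; then $w_k\le\rho^{\,k}w_0\to0$, so $F^K(z_0,w_0)\in R$ for some $K$, and the first paragraph finishes the proof. (In fact, for $\mu=0$ the point $(0,0)$ is the minimizer rather than a saddle, so $N=\varnothing$ and the conclusion holds for every initialization in $S$.)
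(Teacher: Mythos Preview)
Your argument is correct and takes a genuinely different route from the paper's. The paper proves that the spike $\{|z|<ae^{-bw}\}$ is itself \emph{forward invariant} under $F$: writing $\alpha=1-\nu$, it bounds $|z'|\le a^3e^{-3bw}+ae^{-bw}(\alpha^2+\alpha w)$ and $w'\le(w+4\alpha)a^2e^{-2bw}+\alpha^2 w$, then reduces the one-step condition $|z'|<ae^{-bw'}$ to a pointwise inequality $p(w)>q(w)$ between two explicit elementary functions, which holds once $b$ is large and $a$ small; with invariance in hand, the spike sits inside the region where $Q=2w$ decreases, and convergence follows. You do not attempt invariance of the spike; instead you bootstrap the coarser invariant $|z_k|\le\varepsilon$ along the whole orbit, using the resulting contraction $w_j\le\rho^{\,j}w_0$ to control the cumulative growth factor $\prod_j(\rho+(1-\nu)w_j)$ by an infinite product $P(w_0)$ with $\log P(w_0)=O\bigl((\log w_0)^2\bigr)$. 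The paper's approach is a touch more elementary (a single one-step inequality) and delivers an honest trapping region; yours is more quantitative, since the sub-exponential bound on $P(w_0)$ shows that any spike $\{|z|<a\exp(-g(w))\}$ with $g(w)/(\log w)^2\to\infty$ would already suffice---so the convergence region in fact contains a much fatter spike than the stated exponential one. Both arguments implicitly require $\nu>0$ (you note this explicitly; the paper's line ``then $0<\alpha<1$'' is precisely this restriction), and your final observation that $N=\varnothing$ for $\mu=0$ is correct, since the exceptional set in Proposition~\ref{prop:conv-reg} is nonempty only when $|\mu|>\nu$.
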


\begin{proof}
    Let $(z',w')=F(z,w)$. 
    Note, when $\mu=0$, the unique global minimizer of $L$ corresponds to $(0,0)$. 
    Let $\alpha = 1-\nu$. Then $0< \alpha <1$. 
    Assume that $|z|<a\exp(-bw)$ for some $a,b>0$. 
    We aim to show that $|z'|<a\exp(-bw')$. 
    Notice that
    \begin{align*}
        |z'| &= | z^3 +(\alpha^2-\alpha w)z|\\
        &= |z|^3 +(\alpha^2+\alpha w)|z|\\
        &\le a^3 \exp(-3bw)+a\exp(-bw)(\alpha^2 + \alpha w). 
    \end{align*}
    Also, we have
    \begin{align*}
        w'&= w(z^2+\alpha^2)-4z^2 \alpha \\
        &\leq (w+4\alpha) a^2\exp(-2bw)+w\alpha^2. 
    \end{align*}
    Note $a\exp(-bw')$ decreases as $w'$ increases. 
    Then,
    \begin{equation}\label{eq:unbound}
    \begin{aligned}
        &a\exp(-bw')>|z'|\\
        \Leftarrow \ &a \exp\Big(-b\big( (w+4\alpha) a^2\exp(-2bw)+w\alpha^2 \big) \Big) > a^3 \exp(-3bw)+a\exp(-bw)(\alpha^2 + \alpha w)\\
        \Leftarrow \ & \exp\Big(-b(w+4\alpha) a^2\exp(-2bw)\Big)\cdot \exp(-\alpha^2 bw) > a^2 \exp(-3bw)+\exp(-bw)(\alpha^2 + \alpha w)\\
        \Leftarrow \ & \exp\Big(-b(w+4\alpha) a^2\exp(-2bw)\Big) > 
        a^2 \exp((\alpha^2-3)bw)+\exp((\alpha^2-1)bw)(\alpha^2 + \alpha w)\\
        \Leftarrow \ & 1-b(w+4\alpha) a^2\exp(-2bw) > 
        a^2 \exp((\alpha^2-3)bw)+\exp((\alpha^2-1)bw)(\alpha^2 + \alpha w). 
    \end{aligned}
    \end{equation}
    Let 
    $$
    p(w) = 1-b(w+4\alpha) a^2\exp(-2bw) , \quad  q(w)=a^2 \exp((\alpha^2-3)bw)+\exp((\alpha^2-1)bw)(\alpha^2 + \alpha w). 
    $$
    Note that
    \begin{align*}
        p'(w) &= -a^2b  \exp(-2bw) \Big( 1-2b(w+4\alpha)  \Big)\\
        &\propto 2bw-1+8\alpha b. 
    \end{align*}
    Therefore $p(w)$ decreases from $(-\infty, w_0)$ and increases on $[w_0,+\infty)$, where $w_0=(1-8\alpha b)/(2b)$. 
    Let $b>1/(8\alpha)$. Then we have that 
    $$
    \min_{w\in[0,+\infty)} p (w)= p(0) = 1-4\alpha a^2 b. 
    $$
    Note also that
    \begin{align*}
        q'(w) &= \exp((\alpha^2-1)bw)\Big( a^2 (\alpha^2-3)b \exp(-2bw) + (\alpha^2-1)b(\alpha^2+\alpha w)+ \alpha \Big)\\
        &\propto a^2 (\alpha^2-3)b \exp(-2bw) + (\alpha^2-1)b\alpha w+ (\alpha^2-1)b\alpha^2+ \alpha\\
        &\propto a^2 (\alpha^2-3)b \exp(-2bw) + \Big( (\alpha^2-1)\alpha w+ (\alpha^2-1)\alpha^2\Big)b+ \alpha. 
    \end{align*}
    Note, that $a^2 (\alpha^2-3)b \exp(-2bw)<0$ always holds. 
    Also, since $(\alpha^2-1)\alpha w+ (\alpha^2-1)\alpha^2\leq 0$ when $w\geq 0$, 
    we can choose $b$ sufficiently large so that $q'(w)<0$ holds on $[0,+\infty)$. 
    Then
    $$
    \max_{w\in [0,+\infty)} q(w)=q(0)=a^2 + \alpha^2. 
    $$
    Now fix $b$. Note $1-4\alpha a^2 b \to 1$ and $a^2 + \alpha^2\to \alpha^2$ as $a\to 0$. 
    We can always select $a$ small enough such that $q(0)<p(0)$. 
    Then we have that $p(w)>q(w)$ holds for all $w\ge 0$ and hence \eqref{eq:unbound} holds. 
    Therefore, the set $\set{|z|<a\exp(-bw)}$ is forward invariant under $F$. 
    Due to the exponential decay, we can always select $a$ small enough and $b$ large enough such that $\set{|z|<a\exp(-bw)}\subset \set{Q(z,w)>Q(F(z,w))}$, where the latter set is given in Lemma~\ref{lem:lya}. 
    Therefore, in this exponential cone, $Q$ decreases monotonically. 
    Then using similar arguments to those in Proposition~\ref{prop:convergence-reg}, we have that all almost all initializations in this cone converge to the minimizer. This completes the proof. 
\end{proof}

In the following, we show that when the initialization is near the boundary, the orbit can visit any point in the space. 

\begin{proposition}\label{prop:sensitive-reg}
    Assume $0\leq \nu<\min\{\frac{1}{2},1-|\mu|\}$. Consider arbitrary point $m_0=(z,w)\in \Omega$. 
    When $\mu\geq 0$, $\lim_{N\to \infty}G_0^N(m_0)=(-2+\nu,4-2(\nu+\mu))$. 
    When $\mu< 0$, $\lim_{N\to \infty}G_2^N(m_0)=(2-\nu,4+2(\mu-\nu))$. 
\end{proposition}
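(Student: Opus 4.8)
The plan is to prove that for $\mu\ge 0$ the point $P_0=(-2+\nu,\,4-2(\nu+\mu))$ is a \emph{globally} attracting fixed point of $G_0$ on $\Omega$, and then to deduce the case $\mu<0$ by a symmetry reduction. First I would check that $P_0$ is a fixed point. Since $z+\mu=-2+\nu+\mu<-1$ when $0\le\nu<1-\mu$, the point $P_0$ lies on the boundary component $\ell_-=\{w=-2(z+\mu),\,w\ge 0\}$ of $\Omega$ and inside $\cl(A_0)=\{z\le\nu-1\}\cap\Omega$; substituting into the formula for $F$ in \eqref{eq:conj} gives $F(P_0)=P_0$, hence $G_0(P_0)=P_0$ (recall $G_0$ from Proposition~\ref{prop:preimage-branch}). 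It then remains to show $G_0^N(m_0)\to P_0$ for every $m_0\in\Omega$.

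The next step is a transverse contraction. Write $m_k=G_0^k(m_0)=(z_k,w_k)$ and $d(z,w)=w+2(z+\mu)\ge 0$, which vanishes precisely on $\ell_-$. Identity \eqref{eq:omega-in-out} gives $d(F(z,w))=d(z,w)\,(z-(1-\nu))^2$, so along the backward orbit $d(m_{k+1})=d(m_k)/(z_{k+1}-(1-\nu))^2$. Since $m_{k+1}\in\cl(A_0)$ forces $z_{k+1}\le\nu-1$, we get $(z_{k+1}-(1-\nu))^2\ge 4(1-\nu)^2>1$, the strict inequality being exactly where the hypothesis $\nu<\tfrac12$ enters; hence $d(m_k)\to 0$ geometrically and the orbit approaches $\ell_-$, with $d(m_k)\le d(m_0)$ for all $k\ge 1$.

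Then I would establish boundedness of the orbit. Eliminating $z+\mu=\tfrac12(d-w)$ from the $w$-update in \eqref{eq:conj} turns it into $w_k=(1-\nu+z_{k+1})^2w_{k+1}-2(1-\nu)z_{k+1}\,d(m_{k+1})$. Since $z_{k+1}\le\nu-1<0$ and $d(m_{k+1})\ge 0$: either $z_{k+1}\le\nu-2$, so $(1-\nu+z_{k+1})^2\ge 1$ and $w_{k+1}\le w_k$; or $z_{k+1}\in(\nu-2,\nu-1]$, so $w_{k+1}=d(m_{k+1})-2(z_{k+1}+\mu)\le d(m_0)+4-2\nu-2\mu$. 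Either way $w_{k+1}\le\max\{w_k,\,d(m_0)+4-2\nu-2\mu\}$, so $\{w_k\}$ is bounded; together with $z_k\le\nu-1$ and $z_k=\tfrac12(d(m_k)-w_k)-\mu$ bounded below, the orbit stays in a compact subset of $\cl(A_0)$. Hence its $\omega$-limit set $\Lambda$ under $G_0$ is nonempty, compact, $G_0$-invariant, and (by the previous step) contained in $\ell_-\cap\cl(A_0)$. On this half-line $F$ reduces to the interval map $\kappa(w)=w\big((1-\nu-\mu)-\tfrac{w}{2}\big)^2$ for $w\in[2(1-\nu-\mu),\infty)$, which is an increasing bijection onto $[0,\infty)$, so $G_0$ reduces there to $\kappa^{-1}$; a short computation shows that $w_\ast=2(2-\nu-\mu)$ (the $w$-coordinate of $P_0$) is its unique fixed point and that $\kappa^{-1}(w)$ lies strictly between $w$ and $w_\ast$ for $w\neq w_\ast$, so $\{P_0\}$ is the only nonempty compact $G_0$-invariant subset of $\ell_-\cap\cl(A_0)$. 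This forces $\Lambda=\{P_0\}$, i.e.\ $G_0^N(m_0)\to P_0$. The case $\mu<0$ follows from the involution $(z,w)\mapsto(-z,w)$, which conjugates $F$ with parameter $\mu$ to $F$ with parameter $-\mu$ and maps $A_0$ onto $A_2$ (and $\ell_-$ onto $\{w=2(z+\mu)\}$); applying the $\mu\ge 0$ result to $-\mu$ and transporting back yields the claimed attracting fixed point of $G_2$.

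I expect the boundedness step to be the main obstacle: the transverse coordinate $d$ contracts automatically once $\nu<\tfrac12$, but a priori the orbit could escape to infinity while drifting along the attracting boundary line, and ruling this out is precisely where the cone geometry of $\Omega$ and the bound $\nu<\tfrac12$ are needed — a bare appeal to hyperbolicity of $P_0$ would only give convergence from a neighborhood of $P_0$. The remaining pieces (verifying $F(P_0)=P_0$, and the one-dimensional analysis of $\kappa$ and of its invariant sets) are routine but computation-heavy.
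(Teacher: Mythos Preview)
Your proof is correct and follows essentially the same strategy as the paper: both use the Lyapunov function $d(z,w)=E(z,w)=w+2(z+\mu)$ to drive the backward orbit onto $\ell_-$, establish boundedness, and finish with the one-dimensional analysis of the map $\kappa$ on $\ell_-$. The minor differences are presentational: you exploit the multiplicative identity from \eqref{eq:omega-in-out} to obtain a clean geometric contraction rate (and this is where $\nu<\tfrac12$ enters, exactly as in the paper), you give a direct two-case bound for $w_{k+1}$ where the paper argues boundedness by contradiction, and you handle $\mu<0$ by the explicit conjugation $(z,w)\mapsto(-z,w)$ whereas the paper simply says the case is analogous.
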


\begin{proof}
    We prove the case $\mu\geq0$. Note that the case of $\mu<0$ can be proved via analogous procedures. 
    Let $(z',w')=F(z,w)$ and $m_k=G^k_0(m_0)$ for $k\geq 1$. 
    Consider the function $E(z,w)=w+2(z+\mu)$. 
    We have
    \begin{equation}\label{eq:Elevelset}
    \begin{aligned}
        E(F(z,w))-E(z,w) =& w'+2(z'+\mu)-w-2(z+\mu) \\
         =& (w + 2 (z + \mu)) (z+\nu-2)(z+\nu). 
    \end{aligned}
    \end{equation}
    Note for $(z,w)\in \Omega$, $w+2(z+\mu)\geq 0$. 
    Also, $2-\nu >0 > -\nu$. 
    Then for $w>2(z+\mu)$, we have that  
    $$
        E(F(z,w))-E(z,w) > 0 \Leftarrow z<-\nu. 
    $$
    We have that $m_k\in \cl(A_0)$ for $k\geq 1$. 
    Hence, the $z$-coordinate of $m_k$ is smaller than $\nu-1$ for all $k\geq 1$. 
    Since $\nu<1/2$, $\nu-1<-\nu$. 
    Hence, $m_k\in \set{ E(F(z,w)) >E(z,w) }$ for all $k\geq 1$. 
    Note $(m_k)_{k\geq0}$ is a backward orbit. 
    Thus, $E(m_k)$ monotonically decreases. 
    Since $E$ is lower bounded by $0$ on $\Omega$, $E(m_k)$ converges to some finite value $E^*$. 
    For contradiction, assume $m_k$ is unbounded. 
    Note that, given arbitrary $m_1\in \cl(A_0)$, the set 
    $$
    \set{E(z,w)\leq E(m_1)}\cap \set{|z|<M}
    $$
    is bounded for any $M>0$. 
    Hence, we have the $z$-coordinate of $m_k$ tends to negative infinity. 
    Note that for sufficiently small negative $z$ and $(z,w)\in \Omega$, we have
    \begin{align*}
        w' > w &\Leftrightarrow  4 z (z + \mu) (-1 + \nu) + w (z^2 + (-2 + \nu) \nu) >0\\
        &\Leftarrow w> \frac{4 z (z + \mu) (1 - \nu) }{z^2 + (-2 + \nu) \nu} \\ 
        &\Leftarrow -2(z+\mu ) > \frac{4 z (z + \mu) (1 - \nu) }{z^2 + (-2 + \nu) \nu}\\
        &\Leftarrow -(z^2 + (-2 + \nu) \nu)<2z(1-\nu)\\
        &\Leftarrow z^2 +2(1-\nu)z+\nu(2-\nu)>0,
    \end{align*} 
    which clearly holds as $z$ tends to $-\infty$. 
    Therefore, $m_k$ must lie in the region $\set{w'>w}$ for all $k\geq K$ for some finite $K>0$. 
    Note this implies that the $w$-coordinate of $m_k$ starts to decrease from all $k\geq K$. 
    This conflicts the fact that $m_k$ is unbounded, as $\Omega \cap \set{w<C}$ is bounded for any $C>0$. 
    Hence, $m_k$ is bounded.

    According to \eqref{eq:Elevelset}, $m_k$ must converge to 
    $$
    \cl(A_0)\cap \{(w + 2 (z + \mu)) (z+\nu-2)(z+\nu)=0\} =\cl(A_0)\cap  \{(w + 2 (z + \mu)=0\}. 
    $$
    Otherwise, assume $m_k\subset K$ for all $k$ and some compact set $K$. 
    The function $E(F(z,w))-E(z,w)$ is continuous, so if $m_k$ does not converge to its zero level set, $E(m_k)-E(m_{k-1})$ is bounded below and $E(m_k)$ can not converge.

    Note, when restricting to $\{w=-2(z+\mu)\}$, the $w$-update under $F$ is given by
    $$
    w'= \kappa(w)= w(\frac{w}{2}-1+\nu+\mu)^2, \ w\geq0. 
    $$
    Solving $\kappa(w)=w$, we obtain that $\kappa$ has two fixed points on $w\geq0$: $w=0$ and $w=4-2(\mu+\nu)$. 
    It is straight forward to obtain that all backward orbits of $\kappa$ converges to $w=4-2(\mu+\nu)$ except the one initialized at $w=0$. 
    Meanwhile, note that $w=4-2(\mu+\nu)$ corresponds to $(\nu-2, 4-2(\mu+\nu))$. 
    The eigenvalues of the Jacobian of $F$ at this point are: $5 - 2 \mu - 2 \nu, (-3 + 2 \nu)^2$. 
    Therefore, the backward orbits of $F$, i.e., forward orbits of $G_0$, are locally attracted by $(\nu-2, 4-2(\mu+\nu))$. 
    Then, using omega-limit set arguments similar to those in the proof of Proposition~\ref{prop:convergence-unreg}, 
    we have that $m_k$ converges to $(-2+\nu, 4-2(\nu+\mu))$. 
    This completes the proof. 
\end{proof}

Using Proposition~\ref{prop:sensitive-reg}, we show that for the gradient descent system, the converged minimizer is unpredictable when the initialization is near the boundary.

\begin{proposition}\label{prop:p+p-}
    Assume $0\leq \nu<\min\{\frac{1}{2},1-|\mu|\}$.
    Consider $\xi_1=(-2+\nu, 4-2(\nu+\mu))$ and $\xi_2=(2-\nu,  4+2(\mu-\nu))$. 
    For $i=1,2$, we have that $\cupn F^{-N}(\xi_i)$ has infinitely many points and $\cupn F^{-N}(\xi_i)\subset T( \partial  \cd_\eta'')$. 
    When $y\geq0$, for any open set $O$ such that $O\cap (\cupn F^{-N}(\xi_1))\neq \varnothing$, there exists $(z',w'),(z'',w'')\in O$ such that, 
    for any $(u',v'),(u'',v'')$ that satisfy $T(u',v')=(z',w')$ and $T(u'',v'')=(z'',w'')$, we have $\GD_\eta^N(u',v')$ converges to $p^+(u',v')$ and $\GD_\eta^N(u'',v'')$ converges to $p^-(u'',v'')$. 
    When $y<0$, the same result holds for any open set $O$ such that $O\cap (\cupn F^{-N}(\xi_2))\neq \varnothing$. 
\end{proposition}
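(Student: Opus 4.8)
The plan is to argue entirely in the planar quotient system $F$ of Appendix~\ref{app:quotient}: by Proposition~\ref{prop:app-quotient} and Proposition~\ref{prop:preimage-null} everything transfers to $\GD_\eta$ up to null sets, and the only extra bookkeeping needed for the $p^\pm$ statement is which ``sheet'' of the $T$-fiber a convergent orbit selects. Recall from the proof of Proposition~\ref{prop:conv-reg} that in the coordinates $p=(u+v)/\sqrt2$, $q=(u-v)/\sqrt2$ one has $p_k=p_0\prod_{j<k}(1-\nu-z_j)$ and $q_k=q_0\prod_{j<k}(1+z_j-\nu)$ along the $F$-orbit; since the factors $1-\nu-z_j$ are negative exactly when $z_j>1-\nu$ (the iterates landing in the branch region $A_2$ of Proposition~\ref{prop:preimage-branch}) and there are only finitely many such indices for a convergent orbit, the limit of a minimizer-bound orbit is $p^-$ or $p^+$ of its initial point according to the parity of $\#\{j:z_j>1-\nu\}$. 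I also record: $\xi_1$ is a fixed point of the inverse branch $G_0$ (take $m_0=\xi_1$ in Proposition~\ref{prop:sensitive-reg} and use continuity), $\xi_1\in\partial\Omega$, and, analyzing the $1$-dimensional map $d(w)=w(\tfrac w2-1+\mu+\nu)^2$ on the invariant line $\{w=-2(z+\mu)\}$ (fixed points $0$, attracting, $=$ the saddle $(-\mu,0)$; and $w(\xi_1)=4-2(\mu+\nu)$, repelling, above which orbits diverge), $\xi_1$ lies on $\partial T(\mathcal D_\eta'')$; write $\partial A:=\partial T(\mathcal D_\eta'')$, which by Proposition~\ref{prop:self-similar} satisfies $F^{-1}(\partial A)\subset\partial A$.

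\emph{Step 1 (infinitude and location).} Since $G_0(\xi_1)=\xi_1$ while $G_2(\xi_1)\in\cl(A_2)$ has $z$-coordinate $\ge 1-\nu>0>-2+\nu$, we have $G_2(\xi_1)\ne\xi_1$; the points $G_0^k(G_2(\xi_1))$, $k\ge0$, are pairwise distinct (equality at $j<k$ would force $G_2(\xi_1)=G_0^{k-j}(G_2(\xi_1))$, a $G_0$-periodic point, contradicting $G_0^N(G_2(\xi_1))\to\xi_1$ from Proposition~\ref{prop:sensitive-reg}), and each satisfies $F^{k+1}(G_0^k(G_2(\xi_1)))=\xi_1$, so $\bigcup_N F^{-N}(\xi_1)$ is infinite. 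Inductively, $\xi_1\in\partial A$ and $F^{-1}(\partial A)\subset\partial A$ give $\bigcup_N F^{-N}(\xi_1)\subset\partial A$. The symmetric argument (swap $G_0\leftrightarrow G_2$) handles $\xi_2$.

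\emph{Step 2 (both sheets appear near $\xi_1$).} Fix $\zeta\in\Omega^o$ with $Q(\zeta)<4-4\nu$ (possible since $4-4\nu>4|\mu|=\min_\Omega Q$ by Lemma~\ref{lem:Q-levelset}). By Proposition~\ref{prop:conv-reg} the orbit of $\zeta$ converges to the minimizer with $|z_j|<1-\nu$ throughout, i.e.\ with zero $A_2$-excursions. Set $\check x_N=G_0^N(\zeta)$ and $\hat x_N=G_0^N(G_2(\zeta))$: both converge to $\xi_1$ (Proposition~\ref{prop:sensitive-reg}), both lie in $\Omega^o$, and since $G_0^k(\cdot)\in\cl(A_0)\subset\{z\le\nu-1\}$ for $k\ge1$ contributes no $A_2$-excursion, the forward orbit of $\check x_N$ has $0$ such excursions while that of $\hat x_N$ has exactly one (the visit $G_2(\zeta)\in A_2$). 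Hence, for every lift, the $\GD_\eta$-orbit through $\check x_N$ converges to $p^-$ of its initial point and through $\hat x_N$ to $p^+$ of its initial point: both sheets are realized in every neighborhood of $\xi_1$. Now, given an open $O$ meeting $\bigcup_N F^{-N}(\xi_1)$, pick $x_0\in O$ with $F^{N_0}(x_0)=\xi_1$, chosen (using that preimages of $\xi_1$ accumulate on $\partial A$, the self-similar analogue of the density exploited in Proposition~\ref{prop:sensitive}) so that its first $N_0$ iterates avoid the critical set of $F$; then $F^{N_0}$ is a homeomorphism of a neighborhood $O''\subset O$ of $x_0$ onto a neighborhood $V'$ of $\xi_1$, and the prefix excursion count $c_0$ is constant on $O''$. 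Pulling back $\check x_N,\hat x_N\in V'$ (for $N$ large) through $(F^{N_0}|_{O''})^{-1}$ gives $a,b\in O''\cap\Omega^o$ whose orbits have $c_0$ and $c_0+1$ excursions in total; since the limiting sign of $p$ picks up $(-1)^{c_0}$ resp.\ $(-1)^{c_0+1}$ relative to the initial point, one of $a,b$ converges (on the whole fiber) to $p^-$ of itself and the other to $p^+$ of itself. Relabeling them as $(z'',w'')$ and $(z',w')$ yields the claim for $\xi_1$ when $y\ge0$; the case $y<0$ is identical with $\xi_2$, $G_2$, and the $A_0$-region (whose excursions now flip the sheet, the minimizer sitting at $z=\nu$).

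\emph{Main obstacle.} The hard part is the genericity used in Step~3: one must know that every open set meeting $\bigcup_N F^{-N}(\xi_1)$ contains a preimage of $\xi_1$ whose orbit avoids the critical locus $\{(1+z-\nu)(-1+z+\nu)(\,\cdot\,)=0\}$ of $F$, so that a local inverse of $F^{N_0}$ exists and the prefix parity $c_0$ is well defined. Unlike the unregularized problem, $F|_{\partial A}$ admits no conjugacy to an interval map and $\partial A$ is unbounded, so this ``locally eventually onto'' property of $F|_{\partial A}$ must be extracted from the degree-three self-similarity (the branches $G_0,G_1,G_2$ of Proposition~\ref{prop:preimage-branch}) together with a contraction estimate on bounded parts of $\partial A$; alternatively one works with a possibly folded $F^{N_0}$, which only exposes a one-sided neighborhood of $\xi_1$ but still suffices because Step~2 supplies points of both sheets arbitrarily close to $\xi_1$ from inside $\Omega^o$.
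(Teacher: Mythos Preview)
Your approach is essentially the paper's, and the ``main obstacle'' you flag is not an obstacle at all. The missing observation is that every preimage of $\xi_1$ already sits in the region where Proposition~\ref{prop:preimage-branch} supplies the three inverse homeomorphisms. Indeed, by Proposition~\ref{prop:conv-reg} one has $\{Q<8-4\nu\}\subset A:=T(\mathcal D_\eta'')$ (up to a null set, and $A$ is open), hence $\partial A\subset\{Q\ge 8-4\nu\}\subset\cl(B)$. Since $F^{-1}(\partial A)\subset\partial A$ (shown in the proof of Proposition~\ref{prop:self-similar}) and $\xi_1\in\partial A$, every point of $\bigcup_N F^{-N}(\xi_1)$ lies in $\partial A\subset\cl(B)$, and therefore the entire backward orbit is described by compositions $G_{i_1}\circ\cdots\circ G_{i_k}(\xi_1)$ with each $G_{i_j}$ a homeomorphism. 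In particular, for \emph{any} $x_0\in O\cap\bigcup_N F^{-N}(\xi_1)$ the map $F^{N_0}$ is a local homeomorphism near $x_0$ (its local inverse is the corresponding word in the $G_i$'s), so your Step~3 reduces immediately to the case $O\ni\xi_1$ with no genericity argument needed. This is exactly how the paper proceeds, and it also simplifies your Step~1: infinitude is automatic from the three distinct branches at each level.

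One further remark on the parity bookkeeping: once you reduce to $O\ni\xi_1$, the prefix count $c_0$ is determined by the word $(i_1,\dots,i_{N_0})$ as $\#\{j:i_j=2\}$, since $\mathrm{im}\,G_2=\cl(A_2)\subset\{z\ge 1-\nu\}$ while $\mathrm{im}\,G_0,\mathrm{im}\,G_1\subset\{z\le 1-\nu\}$; this is a constant, so the pullbacks of your $\check x_N,\hat x_N$ still have opposite parity. The paper uses a slight variant of your Step~2 (it takes two explicit one-step preimages $m_*^\pm$ of the minimizer, one in $\{z>1-\nu\}$ and one in $\{z<1-\nu\}$, then applies $G_0^N$), but your construction with a generic $\zeta$ in $\{Q<4-4\nu\}$ works equally well.
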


\begin{proof}
    We prove the case $y\geq0$. Note that the case of $y<0$ can be proved via analogous procedures. 
    We first show that $\cupn F^{-N}(\xi_i)$ has infinitely many points and $\cupn F^{-N}(\xi_i)\subset \partial T(\cd_\eta'')$. 
    Notice that, $\xi_1$ lies in the set $\set{(z,w)\in \Omega \colon w=-2(z+\mu)}$. 
    By Proposition~\ref{lem:boundary-inv}, this set is invariant under $F$, where the $w$-update under $F$ is given by 
    $$
    w'= \kappa(w)=  w ( \frac{w}{2} -1 + \mu + \nu)^2.  
    $$
    By analyzing the one-dimensional cubic map $\kappa$, it is straight forward to obtain that, as $N\to \infty$, 
    for all $w$ with $w<4-2(\mu+\nu)$, $\kappa^N(w)\to 0$ and, for all $w$ with $w>4-2(\mu+\nu)$, $\kappa^N(w)\to +\infty$. 
    Note that $w$ converging to zero corresponds to $zw$-orbit converging to $(-\mu,0)$ and $uv$-orbit converging to the $(\boldsymbol{0},\boldsymbol{0})$. 
    Therefore, we have $\xi_1\in \partial T(\mathcal{D}_\eta'')=T(\partial \mathcal{D}_\eta'')$, where the equality was shown in the proof of Proposition~\ref{prop:self-similar}. 
    Note, $Q(\xi_1)=8-4\nu >6-4\nu$. 
    Therefore, by Proposition~\ref{prop:preimage-branch}, 
    Therefore, 
    \begin{equation}\label{eq:preimagexi}
        \cupn F^{-N}(\xi_1)= \set{ G_{i_1}\circ \cdots\circ  G_{i_k}(\xi_1) \colon \forall k\geq1, i_j\in \set{0,1,2}, \forall j},
    \end{equation}
    where $G_i$'s are homeomorphisms. 
    By the construction of $G_i$, the cardinality of this set is infinity. 
    Also, as each $G_i$ is a homeomorphism, any point in this set belongs to $T( \partial \mathcal{D}_\eta'')$.

    Next, we show that for any open set $O$ such that $O\cap \cupn F^{-N}(\xi_1)\neq \varnothing$, there exists $(z',w'),(z'',w'')\in O$ satisfying the claimed properties. 
    When $y\leq \lambda$, $L$ has a unique minimizer $(\boldsymbol{0},\boldsymbol{0})$ and the result holds according to Proposition~\ref{prop:sensitive-reg}. 
    Now consider $y>\lambda$. 
    Let $(u_k,v_k)_{k\geq0}$ be a gradient descent orbit that converges to a global minimizer, and $(z_k,w_k)=T(u_k,v_k)$. 
    As we shown in the proof of Proposition~\ref{prop:convergence-reg}, 
    the converged minimizer is the minimal distance solution if 
    $$
    \zeta(z_0,w_0)= \# \set{ j\geq 0\colon 1-z_j-\nu<0}
    $$
    is an even number; and the converged minimizer is the maximal distance solution if the above is an odd number.

    Let $m_*=(-\nu, 2(\mu-\nu))$, so that $\{m_*\}=T(\mathcal{M})$. 
    Assume that $O \ni  \set{ G_{i_1}\circ \cdots\circ  G_{i_k}(\xi_1)}$ for some fixed $i_1,\cdots, i_k$. 
    By the continuity of $G_{i_1}\circ \cdots\circ  G_{i_k}(\xi_1)$, there exists a neighborhood $\tilde{O}\ni \xi_1$ such that $G_{i_1}\circ \cdots\circ  G_{i_k}(\xi_1)(\tilde{O})\subset O$. 
    By Proposition~\ref{prop:sensitive-reg}, there exists $N'>0$ such that $G_0^{N'}(\Omega)\subset \tilde{O}$. 
    Then we have that 
    $$
    (z',w')\triangleq G_{i_1}\circ \cdots\circ  G_{i_k}\circ G_0^{N'}(m_*) \in O
    $$
    and 
    $$
    (z'',w'')\triangleq G_{i_1}\circ \cdots\circ  G_{i_k}\circ G_0^{N'}\circ  G_2(m_*) \in O. 
    $$
    By construction, $F^{k+N'}(z',w')=m_*$ and $F^{k+N'+1}(z'',w'')=m_*$. 
    Then it suffices to that one of $\zeta(z',w')$ and $\zeta(z'',w'')$ is odd and the other is even. 
    To see this, notice that $\set{w=-2(z+\mu)}$ is forward-invariant. 
    Then, as $m^*\notin \set{w=-2(z+\mu)}$, the orbit starting from $(z',w')$ and from $(z'',w'')$ can not visit $\set{w=-2(z+\mu)}$. 
    Therefore, by noticing that 
    $G_0(\Omega)\subset \set{z<1-\nu}$, 
    $G_1(\Omega\setminus \set{w=-2(z+\mu)})\subset \set{z<1-\nu}$,
    $G_2(\Omega\setminus \set{w=-2(z+\mu)})\subset \set{z>1-\nu}$, and 
    $m_* \in \set{z<1-\nu}$, 
    we have that 
    $$
    \zeta(z',w')=\# \set{1\leq j\leq k\colon i_j=2}, \ \zeta(z'',w'')=\# \set{1\leq j\leq k\colon i_j=2} + 1. 
    $$
    This completes the proof. 
\end{proof}

\subsubsection{Proofs of Theorem~\ref{thm:scalar-reg} and Theorem~\ref{thm:reg-small}}

\label{app:proof-reg}

\begin{proof}[Proof of Theorem~\ref{thm:scalar-reg}] 
    By Proposition~\ref{prop:app-quotient}, 
    $\mathcal{S}_\eta=T^{-1}(T(\mathcal{S}_\eta))$, and $T(\mathcal{S}_\eta)$ is the basin of attraction of the point $(-\mu,0)$ for system $F$. 
    As shown in the proof of Proposition~\ref{prop:convergence-reg}, $T(\mathcal{S}_\eta)$ has measure zero. 
    By Proposition~\ref{prop:preimage-null}, $\mathcal{S}_\eta$ also has measure zero. 
    The projected boundary $T(\mathcal{D}_\eta'')$ is self similar with degree three by Proposition~\ref{prop:self-similar}. 
    The unboundedness is given by Proposition~\ref{prop:unbound}.

    When $y\geq0$, consider the set $H=T^{-1}(\cupn F^{-N}(\xi_1))$, where $\xi_1$ is defined in Proposition~\ref{prop:p+p-}. 
    By Proposition~\ref{prop:p+p-} and since $T$ is surjective, $H$ has infinitely many elements. 
    By Proposition~\ref{prop:app-quotient}, $T^{-1}(T(\partial \mathcal{D}_\eta''))=\partial \mathcal{D}_\eta''$. 
    By Proposition~\ref{prop:p+p-}, $\cupn F^{-N}(\xi_1)\subset T(\partial \mathcal{D}_\eta'')$. 
    Together, we have that $H=T^{-1}(\cupn F^{-N}(\xi_1)) \subset \partial \mathcal{D}_\eta''$.

    Consider any open neighborhood $W\subset \br^{2d}$ such that $W\cap H \neq \varnothing$. 
    We will show that $T(W)$ contains an open neighborhood $O$ such that $O\cap (\cupn F^{-N}(\xi_1) ) \neq\varnothing$. 
    Notice the Jacobian of the map $T$ drops rank if and only if $u=\pm v$. 
    If $W\cap \set{u=\pm v}=\varnothing$, then by constant rank theorem, $T$ is locally a projection, which gives the claim.
    If $W\cap \set{u=\pm v}\neq \varnothing $, then, without loss of generality, assume $W=B((u_0,u_0), \delta)$. 
    Then $T(u_0,u_0)=(\|u_0\|^2,2\|u_0\|^2)$. 
    We show that for any point $(z',w')\in \Omega$ that is sufficiently close to $(\|u_0\|^2,2\|u_0\|^2)$, there exists a preimage under $T$ in $W$. 
    Note, as $T$ is surjective, there exists $(u',v')$ such that $T(u',v')=(z',w')$. 
    Note whenever $(z',w')$ tends to $(\|u_0\|^2,2\|u_0\|^2)$, 
    we have 
    $w'+2z'=\|u'+v'\|^2$ tends to $4\|u_0\|^2$ and 
    $w'-2z'=\|u'-v'\|^2$ tends to $0$. 
    Therefore, $(u',v')$ tends to $\set{u=v, \|u\|=\|u_0\|}$. 
    Note, the map $T$ is invariant under rotation. 
    Therefore, with proper rotation, we can select $(u',v')$ such that, as $(z',w')$ tends to $(\|u_0\|^2,2\|u_0\|^2)$, it tends to $\set{u=v, u=u_0}=(u_0,u_0)$. Thus, such $(u',v')$ lies in $W$. 
    This gives the claim.

    Finally, by Proposition~\ref{prop:p+p-}, there exist $\theta',\theta''\in W$ such that $\GD_\eta^N(\theta')$ converges to $p^+(\theta')$ and $\GD_\eta^N(\theta'')$ converges to $p^-(\theta')$. 
    The case of $y<0$ can be proved analogously using Proposition~\ref{prop:p+p-}. 
    This completes the proof. 
\end{proof}

\begin{proof}[Proof of Theorem~\ref{thm:reg-small}] 
    The results directly come from Proposition~\ref{prop:app-quotient} and Proposition~\ref{prop:convergence-reg}. 
\end{proof}


\section{Non-existence of continuous dynamical invariant}

\label{app:invariant}

Consider the scalar factorization problems: 
\begin{equation}\label{eq:app-scalar}
    \min_{\theta=(u,v)} L(\theta) =\frac{1}{2}(uv-y)^2 + \frac{\lambda}{2}(u^2+v^2),
\end{equation}
where $\lambda \geq0$ and $u,v,y\in \br$. 
We show that there is no simple quantity that remains invariant during training.

A \emph{dynamical invariant} is a map defined on the parameter space of the model whose values remain unchanged along optimization trajectories. 
Formally, for gradient descent applied to problem~\eqref{eq:app-scalar}, 
a map $I(u,v)\colon \br^{2} \to \br^k$ with $k\geq 1$ is a $\delta$-approximate invariant 
if $\|I(\GD_\eta^N(\baru,\barv))-I(\baru,\barv)\| \leq \delta$ holds for all $N\geq 1$ and initializations $(\baru,\barv)\in \br^{2d}$, where $\|\cdot\|$ is a norm on $\br^k$. 
When $\delta=0$, $I$ becomes a strict invariant. 
Invariants and approximate invariants have been used extensively to analyze the optimization dynamics of gradient flow and gradient descent in non-convex optimization problems. 
Particularly, for problem~\eqref{eq:general-fac} without regularization, 
the imbalance $I(U,V)=UU^\top - VV^\top$ is a well-known invariant of gradient flow \citep{du2018algorithmic} and an approximate invariant of gradient descent with small step sizes \citep{arora2018a, ye2021global, xu2023linear}. 
In contrast, the following result shows that no simple invariants exist under large step sizes.

\begin{theorem}[Non-Existence of Simple Dynamical Invariants]
\label{thm:inv}
    Consider gradient descent with step size $\eta$ applied to problem~\eqref{eq:app-scalar} with 
    $0\leq \lambda < \min\{(1/\eta)- |y|,1/(2\eta)\}$. 
    If $I(u,v)\colon \br^{2} \to \br^k$ is a continuous $\delta$-approximate invariant, 
    then $\sup_{(u,v), (u',v')\in \br^{2}} \|I(u,v)- I(u',v')\|\leq 2\delta$. 
    Consequently, the only continuous invariants are the constant functions. 
\end{theorem}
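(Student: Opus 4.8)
The plan is to prove the sharper statement that, up to an error of $\delta$, a continuous $\delta$-approximate invariant takes a single value on all of $\br^{2}$; the displayed bound and the rigidity of strict invariants then follow immediately. The argument combines an orbit estimate that is uniform in the number of iterations with the folding/reachability structure of $\GD_\eta$, accessed through the semiconjugacy to the planar quotient map $F$ (Proposition~\ref{prop:quotient}).

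First I would record the orbit estimate. Since every iterate $\GD_\eta^{M}(\theta)$ is itself an admissible initialization, applying the defining inequality at $\GD_\eta^{M}(\theta)$ gives $\|I(\GD_\eta^{N}(\theta))-I(\GD_\eta^{M}(\theta))\|\le\delta$ for all $N\ge M\ge0$; hence $I$ oscillates by at most $\delta$ along every forward orbit. In particular, if $\GD_\eta^{N}(\theta)\to q$, continuity of $I$ gives $\|I(\theta)-I(q)\|\le\delta$, so $I$ stays in the closed $\delta$-ball about $I(q)$ on the whole basin of $q$. Combined with the sensitivity statement of Theorem~\ref{thm:scalar-reg} (or Theorem~\ref{thm:scalar-unreg} when $\lambda=0$) — which at a suitable point of $\partial\mathcal{D}_\eta''$ produces nearby trajectories converging to the two distinct minimizers $p^{-}$ and $p^{+}$ — this already shows $\|I(p^{-})-I(p^{+})\|\le2\delta$: no continuous invariant can tell the minimizers apart, ruling out candidates such as the imbalance $u^{2}-v^{2}$.

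The heart of the proof is a global reachability fact: for every $\theta\in\br^{2}$ and every $\varepsilon>0$ there exist $N$ and $\phi$ with $\GD_\eta^{N}(\phi)=\theta$ and $\|\phi-\theta^{\ast}\|<\varepsilon$, where $\theta^{\ast}$ is a point of the $T$-fiber over $\xi_1=(-2+\nu,\,4-2(\nu+\mu))$ (taking $\mu=\eta y\ge0$; for $\mu<0$ use $\xi_2$, and for $\lambda=0$ the corresponding point from Proposition~\ref{prop:sensitive}), which by Proposition~\ref{prop:p+p-} is itself a sensitivity point. Here $T(u,v)=(u^{\top}v-y,\|u\|^{2}+\|v\|^{2})$ is a proper surjection onto $\Omega$ semiconjugating $\GD_\eta$ to $F$; by Proposition~\ref{prop:preimage-branch} the branch $G_0$ is a global right inverse of $F$ on $\Omega$ ($F\circ G_0=\mathrm{id}$), and by Proposition~\ref{prop:sensitive-reg} one has $G_0^{N}(m)\to\xi_1$ for \emph{every} $m\in\Omega$. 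Thus $G_0^{N}(T\theta)\to\xi_1$ while $F^{N}(G_0^{N}(T\theta))=T\theta$; lifting this backward $F$-orbit through $T$ and correcting by a symmetry of $L$ so that the lift ends exactly at $\theta$ gives $\phi_N$ with $T(\phi_N)=G_0^{N}(T\theta)$ and $\GD_\eta^{N}(\phi_N)=\theta$, so $\phi_N$ accumulates on $T^{-1}(\xi_1)$; after passing to a subsequence and, if necessary, replacing $\phi_N$ by $\GD_\eta(\phi_N)$ to absorb the periodic action of $\GD_\eta$ on that fiber, $\phi_N\to\theta^{\ast}$. The orbit estimate gives $\|I(\theta)-I(\phi_N)\|\le\delta$ for all $N$, and continuity then yields $\|I(\theta)-I(\theta^{\ast})\|\le\delta$.

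Since $\theta$ is arbitrary, $I(\br^{2})$ lies in the closed $\delta$-ball about $c:=I(\theta^{\ast})$, hence $\sup_{\theta,\theta'}\|I(\theta)-I(\theta')\|\le2\delta$; taking $\delta=0$ forces $I\equiv c$. I expect the main difficulties to be bookkeeping rather than conceptual: arranging the lift $\phi_N$ so that $\GD_\eta^{N}(\phi_N)=\theta$ on the nose (using equivariance of $\GD_\eta$ under the symmetries of $L$ together with surjectivity and properness of $T$), and handling the degenerate fiber over the singular value $\xi_1\in\partial\Omega$, where $T$ is two-to-one and $\GD_\eta$ acts by a two-cycle. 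The one genuinely dynamical ingredient — that the inverse branch $G_0$ contracts all of $\Omega$ to the single point $\xi_1$ — is already supplied by Proposition~\ref{prop:sensitive-reg}.
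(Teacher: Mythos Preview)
Your proposal is correct and follows essentially the same route as the paper: both arguments hinge on the reachability fact that $G_0^{N}(m)\to\xi_1$ for every $m\in\Omega$ (Proposition~\ref{prop:sensitive-reg}, which covers $\nu=0$ as well), so that every point $\theta$ admits $\GD_\eta$-preimages accumulating on the fixed fiber $T^{-1}(\xi_1)$, and then combine this with the $\delta$-invariant inequality and continuity of $I$. The only difference is in the lifting step: the paper arranges $\GD_\eta^{N}(\phi)=\theta$ exactly by choosing the backward branch in $F$-space (inserting a $G_2$ factor before the $G_0$'s to control the parities of sign changes in $u\pm v$), whereas you lift arbitrarily and then post-compose with one of the four symmetries $(u,v)\mapsto(\pm u,\pm v),(\pm v,\pm u)$, using that $\GD_\eta$ is equivariant and that this group acts transitively on each $T$-fiber---a slightly cleaner packaging of the same correction.
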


\begin{proof}
    We use the notation $F, \mu, \nu, z,w$ as stated in the conjugacy~\eqref{eq:conj}. 
    Assume $I$ is a continuous $\delta$-approximate invariant. 
    For any $\varepsilon>0$, there exist $\theta',\theta''$ such that 
    $$
    \|I(\theta')-I(\theta'')\| > \sup_{(u,v), (u',v')\in \br^{2}} \|I(u,v)- I(u',v')\|-\varepsilon. 
    $$
    Without loss of generality, assume $y\geq0$. 
    Now fix any point $\theta \in T^{-1}(\lambda-2/\eta, 4/\eta-2(y+\lambda))$. 
    Under the conjugacy~\eqref{eq:conj}, we have that $T(\theta)=(\nu-2, 4-2(\mu+\nu))$. 
    Then according to Proposition~\ref{prop:sensitive-reg} for the regularized case and Proposition~\ref{prop:sensitive} for the unregularized case, 
    in any neighborhood $O$ of $T(\theta)$, there exists $\xi',\xi''$ and $N',N''$ such that $F^{N'}(\xi')=T(\theta')$ and $F^{N''}(\xi'')=T(\theta'')$. 
    Using similar arguments as those in Appendix~\ref{app:proof-reg} and in Appendix~\ref{app:proof-unreg}, 
    we have that there exists $\bar{\theta}', \bar{\theta}''$ such that $T(\GD_\eta^{N'}(\bar{\theta}'))=T(\theta')$ and $T(\GD_\eta^{N''}(\bar{\theta}''))=T(\theta'')$. 
    Next, we show that $\bar{\theta}'$ and $\bar{\theta}''$ can be chosen such that $\GD^{N'}_\eta(\bar{\theta}')=\theta'$ and $\GD^{N''}_\eta(\bar{\theta}'')=\theta''$. 
    To see this, notice that, for any $(u,v),(s,t)\in \br^2$, 
    $(u,v)=(s,t)$ if and only if $T(u,v)=T(s,t)$ and, the two pairs, $u+v$ and $s+t$, and, $u-v$ and $s-t$, have the same sign. 
    Consider the change of coordinates $p=(u+v)/\sqrt{2}$ and $q=(u-v)/\sqrt{2}$. 
    Let $(u_k,v_k)_{k\geq0}$ denote an orbit under $\GD_\eta$. 
    By direct computation, we have that 
    \begin{equation*}
        p_k = p_0\Pi_{j=0}^{k-1}(1-z_j-\nu).  
    \end{equation*}
    Therefore, the sign of $p_{N'}$ is fully determined by 
    whether 
    $$
    n_p = \# \set{j\in\set{0,\cdots,N'-1}\colon 1-\nu<z_j}
    $$
    is even or odd. 
    Similarly, we have
    $$
        q_k = q_0 \Pi_{j=0}^{k-1}(1+ z_j-\nu), 
    $$
    and the sign of $q_{N'}$ is fully determined by whether 
    $$
        n_q = \# \set{j\in\set{0,\cdots,N'-1}\colon 1-\nu>z_j} 
    $$
    is even or odd. 
    Notice that we can take $\xi'$ as follows
    $$
    \xi'=G_0^{m_q} \circ G_2^{m_p} (T(\theta')),
    $$
    where $G_0$ and $G_2$ are defined as in Proposition~\ref{prop:preimage-branch}, 
    and $m_p\in \set{0,1}$, and $m_q$ is a sufficiently large number. 
    Since the image of $G_0$ lies out side $\set{z>1-\nu}$, increasing $m_q$ does not affect $n_p$. 
    Also, since the image of $G_2$ is contained in $\set{z>1-\nu}$, 
    one can always select $m_p$ from $\set{0,1}$ to make $n_p$ even or odd as needed. 
    Now fix $m_p$.
    Since the image of $G_0$ is contained in $\set{z<\nu-1}$, 
    one can always select a sufficiently large $m_q$ to make $n_q$ even or odd as needed. 
    Consequently, by appropriate choices of $m_p$ and $m_q$, 
    the signs of $p_{N'}$ and $q_{N'}$ can be made arbitrary. 
    This implies that, one can always select 
    $\bar{\theta}'$ such that $\GD^{N'}_\eta(\bar{\theta}')=\theta'$. 
    A similar statement holds for $\bar{\theta}''$.

    Since $I$ is $\delta$-invariant, we have:
    $$
    \|I(\bar{\theta'})-I(\bar{\theta}'')\| > \|I(\theta')-I(\theta'')\| - 2\delta > \sup_{(u,v), (u',v')\in \br^{2}} \|I(u,v)- I(u',v')\|-\varepsilon - 2\delta.  
    $$
    Notice that $\bar{\theta'}, \bar{\theta}''$ can be arbitrarily close to $\theta$. 
    Since $I$ is continuous at $\theta$ and $\varepsilon$ is arbitrary, we have that
    $$
    \sup_{(u,v), (u',v')\in \br^{2}} \|I(u,v)- I(u',v')\| \leq 2\delta,
    $$
    which completes the proof. 
\end{proof}


\section{General matrix factorization}

\label{app:general}

We present the extensions of the results in Section~\ref{sec:scalar} to general matrix factorization.

In the following, we present the extension of Theorem~\ref{thm:scalar-unreg} to unregularized matrix factorization. 

\begin{theorem}[Unregularized Matrix Factorization]
    Consider gradient descent with step size $\eta$ applied to problem~\eqref{eq:general-fac} with $\lambda=0$ and $d\geq d_y$. 
    Let $Y=\operatorname{Diag}(y_1.\cdots,y_{d_y})$. 
    Consider the set 
    \begin{equation}\label{eq:Wset}
        \mathcal{W}=\set{(U,V)\in \br^{2d\cdot d_y }\colon\ \inner{u^i}{u^j} = \inner{u^i}{v^j}=\inner{v^i}{v^j}=0,\ \forall i\neq j},
    \end{equation}
    where $u^i,v^i$ denote the $i$th column of matrices $U, V$. 
    Assume the initialization $(\bar{U}, \bar{V})\in \mathcal{W}$. 
    The following holds: 

    \begin{itemize}[leftmargin=*]
        \item \textbf{Critical Step Size}: 
        Define the critical step size
        $$
        \eta^*(\bar{U},\bar{V})
        =
        \min_{i}
        \min\set{\frac{1}{|y_i|}, \ 
        \frac{8}{ \norm{\baru^i}_2^2+\norm{\barv^i}_2^2+  \sqrt{(\norm{\baru^i}_2^2+\norm{\barv^i}_2^2)^2- 16 y_i((\bar{u}^i)^\top\bar{v}^i-y_i)}}}.
        $$
        For almost all initializations (under surface measure on $\mathcal{W}$), the algorithm converges to a global minimum if $\eta < \eta^*(\bar{U},\bar{V})$, and it does not converge to a global minimum if $\eta > \eta^*(\bar{U},\bar{V})$. 
        Therefore, when $\eta$ satisfies $\eta \|Y\|_2<1$, 
        the convergence region restricted to $\mathcal{W}$, 
        $\mathcal{D}_\eta \cap \mathcal{W}$, is equal almost everywhere (under surface measure on $\mathcal{W}$) to the following set:
        $$
        \mathcal{D}'_\eta=\set{(U,V)\in\mathcal{W}\colon \|u^i\|_2^2+\|v^i\|_2^2+\sqrt{(\|u^i\|_2^2+\|v^i\|_2^2)^2-16y((u^i)^\top v^i-y_i)}<\frac{8}{\eta},\ \forall i }. 
        $$

        \item \textbf{Sensitivity to Initialization}: 
        Fix a step size $\eta$ that satisfies $\eta \|Y\|_2<1$. 
        Given arbitrary 
        $\theta \in \partial \mathcal{D}'_\eta$ (here boundary is taken with respect to the subspace topology on $\mathcal{W}$), $\varepsilon >0$ and $K_1,K_2>0$,  
        there exist 
        $\theta', \theta'',\theta''' \in B(\theta,\varepsilon)$ such that, as $N$ tends to infinity, 
        $\GD^N_\eta(\theta')$ converges to a global minimizer with norm larger than $K_1$, 
        $\GD^N_\eta(\theta'')$ converges to a global minimizer with $\|UU^\top-VV^\top\|_F>K_2$, 
        and $\GD_\eta^N(\theta''')$ converges to a stationary point, which is saddle point when $\min\{|y_i|\}>0$.

        \item \textbf{Trajectory Complexity}: 
        Assume $\eta \|Y\|_2<1$. 
        The topological entropy of the gradient descent system $\GD_\eta$ satisfies $h(\GD_\eta) \geq \log 3$. 
        Moreover, $\GD_\eta$ has periodic orbits of any positive integer period. 
    \end{itemize}
\end{theorem}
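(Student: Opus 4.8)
The plan is to obtain the theorem as a corollary of Theorem~\ref{thm:scalar-unreg} by means of the decoupling in Proposition~\ref{prop:general-to-scalar}: on the forward-invariant set $\mathcal{W}$ the gradient descent trajectory is the product of $d_y$ independent scalar-factorization trajectories, the $i$-th one acting on the column pair $(u^i,v^i)$ with target $y_i$. Since on $\mathcal{W}$ one has $U^\top V=\operatorname{Diag}((u^1)^\top v^1,\dots,(u^{d_y})^\top v^{d_y})$, the trajectory converges to $\mathcal{M}$ if and only if each block converges to $\{u^\top v=y_i\}$, and it converges to a point with all columns zero if and only if each block does. Every clause of the theorem is therefore a conjunction over $i$ of the corresponding scalar statement, applied blockwise.

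For the critical step size and convergence region, the scalar critical-step-size statement gives, for each fixed $i$, a.e.\ convergence of block $i$ iff $\eta<\eta^*(\bar u^i,\bar v^i)$ and failure iff $\eta>\eta^*(\bar u^i,\bar v^i)$; taking $\min_i$ yields $\eta^*(\bar U,\bar V)$, and the conjunction over $i$ of the conditions $\{\cdot<8/\eta\}$ yields $\mathcal{D}'_\eta$. The only point requiring care is the a.e.\ transfer: the exceptional set in $\br^{2d}$ for block $i$ (the basins of unstable minimizers and of the saddle, which by Appendix~\ref{app:unreg} are countable unions of iterated preimages, under the quotient map, of lower-dimensional sets) must be shown to pull back to a surface-measure-null subset of $\mathcal{W}$. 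I would establish this by a fibering argument in the spirit of Proposition~\ref{prop:preimage-null}: the map $\mathcal{W}\to\br^{2}$, $(U,V)\mapsto((u^i)^\top v^i-y_i,\ \|u^i\|_2^2+\|v^i\|_2^2)$ is a submersion off a lower-dimensional locus of the top stratum of $\mathcal{W}$, so it pulls null sets back to null sets; a finite union over $i$ of null sets is null.

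For sensitivity to initialization, note that $\mathcal{D}'_\eta\cap\mathcal{W}=\bigcap_i\{q_i<8/\eta\}$ is open in $\mathcal{W}$, so a boundary point $\theta$ has all $q_i\le 8/\eta$ with some block $i_0$ at equality. Within $B(\theta,\varepsilon)\cap\mathcal{W}$ I would first move, by a small perturbation supported on individual blocks (which preserves membership in $\mathcal{W}$), to a boundary point at which block $i_0$ is critical, lies off the degenerate locus $u^{i_0}=\pm v^{i_0}$, and all other blocks are strictly interior — exactly the reduction used in the proof of Theorem~\ref{thm:scalar-unreg}. There, perturbations in the block-$i_0$ directions inside $\mathcal{W}$ realize any nearby value of the block-$i_0$ quotient state, so Proposition~\ref{prop:sensitive} applied to block $i_0$ (with all other blocks converging to minimizers) produces $\theta',\theta'',\theta'''$ sending block $i_0$ respectively to a minimizer of arbitrarily large norm, of arbitrarily large imbalance, and to the origin. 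On $\mathcal{W}$ one has $\|U\|_F^2+\|V\|_F^2\ge\|u^{i_0}\|_2^2+\|v^{i_0}\|_2^2$ and, because cross terms $\operatorname{Tr}\big((u^i(u^i)^\top-v^i(v^i)^\top)(u^j(u^j)^\top-v^j(v^j)^\top)\big)$ vanish for $i\ne j$ by orthogonality of blocks, $\|UU^\top-VV^\top\|_F^2=\sum_i\|u^i(u^i)^\top-v^i(v^i)^\top\|_F^2\ge\|u^{i_0}(u^{i_0})^\top-v^{i_0}(v^{i_0})^\top\|_F^2$; these give norm $>K_1$ and imbalance $>K_2$. For $\theta'''$ the limit is a stationary point of $L$ whose restriction to the block-$i_0$ plane has a strict-saddle escape direction when $y_{i_0}\ne 0$, hence a strict saddle of $L$ when $\min_i|y_i|>0$ by the landscape dichotomy recalled in Section~\ref{sec:pre}.

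For trajectory complexity, fix all blocks except $i_0$ at rank-$\le 1$ minimizers of their scalar problems (these exist and are fixed points of $\GD_\eta$, and span at most $d_y-1$ dimensions, leaving an ambient space of dimension $\ge d-(d_y-1)\ge 1$ for block $i_0$ since $d\ge d_y$); by Proposition~\ref{prop:general-to-scalar} this set is forward-invariant and $\GD_\eta$ restricted to it is conjugate to gradient descent in a single scalar factorization in dimension $\ge1$. Monotonicity of topological entropy under restriction to invariant subsets (as used in Proposition~\ref{prop:topo-entropy}) together with $h\ge\log 3$ for the scalar system gives $h(\GD_\eta)\ge\log 3$. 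Likewise, taking a period-$n$ orbit of block $i_0$'s scalar dynamics (which exists for every $n$ by Theorem~\ref{thm:scalar-unreg}, since $\eta|y_{i_0}|\le\eta\|Y\|_2<1$) with the other blocks at fixed minimizers yields a period-$n$ orbit of $\GD_\eta$ for every $n\ge1$. I expect the main obstacle to be purely bookkeeping: carrying out the a.e.\ statements and the local perturbation argument on the genuine (stratified) submanifold $\mathcal{W}$ rather than on a product space, so that the Fubini-type reductions and the submersion conditions are set up intrinsically on $\mathcal{W}$.
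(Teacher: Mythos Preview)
Your proposal is correct and follows essentially the same approach as the paper: the paper's own proof is a one-liner stating that all results follow directly from Theorem~\ref{thm:scalar-unreg} together with the decoupling Proposition~\ref{prop:general-to-scalar}, plus the remark that topological entropy is monotone under restriction to invariant subsets. You supply considerably more detail than the paper does (the a.e.\ transfer via a submersion argument on $\mathcal{W}$, the blockwise perturbation for sensitivity, and the explicit construction of the invariant subset for the entropy and periodic-orbit claims), but the underlying strategy is identical.
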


All of the above results follow directly from Theorem~\ref{thm:scalar-unreg} and Proposition~\ref{prop:general-to-scalar}. 
We remark that, for a dynamical system $F\colon X\to X$, if $S\subset X$ is an invariant set, i.e., $F(S)\subset S$, then we have $h(F)\geq h(F|_{S})$. 
This gives the result for topological entropy. 

We now present the extensions of Theorem~\ref{thm:scalar-reg} and Theorem~\ref{thm:reg-small} to regularized matrix factorization.

\begin{theorem}[Regularized Matrix Factorization]\label{thm:general-reg}
    Consider gradient descent with step size $\eta$ for problem~\eqref{eq:scalar-fac-reg}. 
    Let $Y=\operatorname{Diag}(y_1.\cdots,y_{d_y})$. 
    Assume that 
    $0< \lambda \leq \min_{i=1,\cdots,d_y}\{(1/\eta)- |y_i|,1/(2\eta)\}$. 
    Let $\mathcal{W}$ be defined as in \eqref{eq:Wset}. 
    Assume the initialization $(\bar{U}, \bar{V})\in \mathcal{W}$. 
    Consider the map $T_i(U,V)=((u^i)^\top v^i, \|u^i\|^2_2+\|v^i\|^2_2)$. 
    Let $\mathcal{S}_\eta$ denote 
    the set of initializations $(U,V)$ that converges to $(\boldsymbol{0},\boldsymbol{0})$. 
    Let $\mathcal{D}_\eta''=\mathcal{D}_\eta \cup \mathcal{S}_\eta$. 
    The following holds:

    \begin{itemize}[leftmargin=*]
        
        \item \textbf{Self-similarity:} 
        For any $i\in \set{1,\cdots,d_y}$, $T_i(\partial (\cd_\eta'' \cap \mathcal{W}))$ is self-similar with degree three (here boundary is taken with respect to the subspace topology on $\mathcal{W}$).

        \item \textbf{Unboundedness}: 
        When $Y=0$, there exist constants $a,b>0$ such that 
        almost all initializations $(\bar{U}, \bar{V})\in \mathcal{W}$ (under surface measure on $\mathcal{W}$) with 
        $|(\baru^i)^\top \barv^i| < a \exp(-b (\|\baru^i\|_2^2+\|\barv^i\|_2^2))$ for all $i\in \set{1,\cdots,d_y}$ converge to a global minimizer.

        \item \textbf{Sensitivity to Initialization}: 
        Let $(u^i_t,v^i_t)_{t\geq0}$ denote the gradient descent trajectory of the pair $(u^i,v^i)$, with $(u^i_0,v_0^i)=(\bar{u}^i,\barv^i)$. 
        Let $\mathcal{M}_i$ denote the set of global minimizers for the scalar problem $L_i(u,v)= \frac{1}{2}(u^\top v-y_i)^2+\frac{\lambda}{2}(\|u\|_2^2+\|v\|_2^2)$. 
        Then $\mathcal{M}\cap \mathcal{W}= \mathcal{M}_1\times \cdots\times \mathcal{M}_{d_y}$, where $\mathcal{W}$ denotes the set of global minimizers for problem~\eqref{eq:scalar-fac-reg}.  
        We have that, for any $(U,V) \in \mathcal{D}_\eta\cap \mathcal{W}$ and any $i\in \set{1,\cdots,d_y}$, 
        as $t$ tends to infinity, 
        $(u^i_t,v^i_t)$ converges either to 
        $$
        p^-(u^i_0,v^i_0)=\arg\min_{(u,v)\in \mathcal{M}_i} \|(u,v)-(u^i_0,v^i_0)\|^2,$$ 
        or to
        $$
        p^+(u^i_0,v^i_0)=\arg\max_{(u,v)\in\mathcal{M}_i}\|(u,v)-(u^i_0,v^i_0)\|^2. 
        $$
        Moreover, there exist infinitely many points on 
        $\partial( \mathcal{D}_\eta'' \cap \mathcal{W}) $ 
        (here boundary is taken with respect to the subspace topology on $\mathcal{W}$) 
        such that for any open set $O$ containing such a point, 
        there exist $i\in \set{1,\cdots,d_y}$, 
        $(U',V'), (U'',V'')\in O$ such that, 
        as $t$ tends to infinity, $(u^{i,\prime}_t, v^{i,\prime}_t)$ converges to $p^-(u^{i,\prime}_0, v^{i,\prime}_0)$ and $(u^{i,\prime\prime}_t, v^{i,\prime\prime}_t)$ converges to $p^+(u^{i,\prime\prime}_0, v^{i,\prime\prime}_0)$.

        \item \textbf{Stable Dynamics Under Small Step Size}: 
        Consider the function 
        $$
        Q(u,v)= \norm{u}_2^2+\norm{v}_2^2+  \sqrt{(\norm{u}_2^2+\norm{v}_2^2)^2- 16 y(u^\top v-y)}.
        $$
        Then the following holds for almost all initializations $(\bar{U},\bar{V})\in \mathcal{W}$ (under surface measure on $\mathcal{W}$): 
        If $\eta< \min_{i=1,\cdots,d_y} 8/(4\lam+Q(\baru^i,\barv^i))$, then gradient descent converges to a global minimizer; 
        If $\eta < \min_{i=1,\cdots,d_y} 4/(4\lam+Q(\baru^i,\barv^i))$, 
        then for all $i$, $(u^i_t, v^i_t)$ converges to $p^-(u^i_0, v^i_0)$. 
    \end{itemize}
\end{theorem}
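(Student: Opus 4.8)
The plan is to deduce every clause of Theorem~\ref{thm:general-reg} from its scalar counterpart by means of the decoupling in Proposition~\ref{prop:general-to-scalar}. Since $\mathcal{W}$ is forward-invariant and, for an initialization $(\baru,\barv)\in\mathcal{W}$, the trajectory of the $i$-th column pair $(u^i,v^i)$ is exactly the gradient descent trajectory for the scalar problem $L_i(u,v)=\tfrac12(u^\top v-y_i)^2+\tfrac{\lambda}{2}(\|u\|_2^2+\|v\|_2^2)$ on $\br^{2d}$ started at $(\baru^i,\barv^i)$, the fate of the full trajectory is determined column by column. Writing $E_i\subset\br^{2d}$ for the scalar analogue of $\cd''_\eta$ attached to $L_i$, one gets $\cd''_\eta\cap\mathcal{W}=\{(U,V)\in\mathcal{W}\colon(u^i,v^i)\in E_i\ \forall i\}$, and $\mathcal{M}\cap\mathcal{W}=\mathcal{M}_1\times\cdots\times\mathcal{M}_{d_y}$ follows from the explicit description of the minimizer set recorded before Proposition~\ref{prop:quotient} together with the orthogonality of columns enforced on $\mathcal{W}$. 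The standing hypothesis on $\lambda$ is precisely that of Theorem~\ref{thm:scalar-reg} and Theorem~\ref{thm:reg-small} for every index $i$, so each scalar result is available for each $L_i$.

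First I would dispose of the ``almost everywhere'' statements. Away from a lower-dimensional (hence surface-measure-zero) singular locus where the orthogonality constraints become dependent, $\mathcal{W}$ is a smooth submanifold, and locally the map $(U,V)\mapsto\big((u^1,v^1),\dots,(u^{d_y},v^{d_y})\big)$ realizes $\mathcal{W}$ as a fibration over a product of per-column copies of $\br^{2d}$; by the coarea formula the surface measure is locally equivalent to the corresponding product measure, so any set whose $i$-th column data is Lebesgue-null in $\br^{2d}$ is surface-measure-null in $\mathcal{W}$. Applying this to the measure-zero sets excised in the scalar statements --- the basins of the unstable minimizers and of the saddle $(\mathbf 0,\mathbf 0)$, and the finite-step preimages of the minimizer set, all null by the Jacobian-rank argument of Proposition~\ref{prop:preimage-null} and Proposition~\ref{prop:conv-reg} --- yields the $\mathcal{S}_\eta$-is-null claim, the unboundedness clause (take the product over $i$ of the scalar exponential cones from Theorem~\ref{thm:scalar-reg}, intersect with $\mathcal{W}$; each column converges, hence so does the trajectory), and the stable-dynamics clause (if $\eta<\min_i 8/(4\lambda+Q(\baru^i,\barv^i))$, resp.\ $<\min_i 4/(4\lambda+Q(\baru^i,\barv^i))$, then every column meets the hypothesis of Theorem~\ref{thm:reg-small}, so every column converges, resp.\ converges to $p^-$).

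The sensitivity clause is again columnwise. Theorem~\ref{thm:scalar-reg}, via Proposition~\ref{prop:p+p-}, supplies for each $i$ infinitely many points of $\partial E_i$ near which the $i$-th column can be steered either to $p^-(u^i_0,v^i_0)$ or to $p^+(u^i_0,v^i_0)$; lifting such a point by fixing interior, mutually orthogonal data in the columns $j\ne i$ produces a point of $\partial(\cd''_\eta\cap\mathcal{W})$ (subspace topology) with the asserted property, since a perturbation of the $i$-th column inside a prescribed open subset of $\mathcal{W}$ can be chosen to coincide with the scalar perturbation while leaving the remaining columns strictly inside their $E_j$. The closest-or-farthest dichotomy for an arbitrary $(U,V)\in\cd_\eta\cap\mathcal{W}$ is just the scalar dichotomy of Theorem~\ref{thm:scalar-reg} invoked in each coordinate.

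The main obstacle is the self-similarity clause, where the interplay between the fibered structure of $\mathcal{W}$ and the projection $T_i$ must be handled with care. The crux is to show that, in the subspace topology, $\partial(\cd''_\eta\cap\mathcal{W})=\bigcup_i\{(u^i,v^i)\in\partial E_i,\ (u^j,v^j)\in\operatorname{cl}(E_j)\ \forall j\ne i\}$ --- this uses the local product description of $\mathcal{W}$ above together with openness of each $E_j$ so that closure and boundary commute with the local trivialization --- and then, isolating the $i$-th factor (if $d_y\ge2$, by restricting to the fiber over fixed interior data in the columns $j\ne i$, so that the relevant piece of the boundary is literally a copy of $\partial E_i$), to transport through $T_i$ the scalar identity $T^{(i)}(\partial E_i)=\partial T^{(i)}(E_i)$ of Proposition~\ref{prop:self-similar} and its degree-three covering by the three homeomorphic preimage branches $G_0,G_1,G_2$ of the quotient map $F$. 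The delicate part is controlling the contribution of boundary points whose $i$-th column is interior --- irrelevant when $d_y=1$ but not in general --- and pinning down exactly which notion of boundary makes the clause clean; I expect this fiber-slicing argument, rather than any fresh dynamical input, to be where the real work lies, while the singular locus of $\mathcal{W}$ is discarded throughout as a lower-dimensional, null set.
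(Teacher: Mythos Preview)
Your approach is exactly the paper's: it states that all clauses follow directly from Theorem~\ref{thm:scalar-reg}, Theorem~\ref{thm:reg-small}, and Proposition~\ref{prop:general-to-scalar}, with no further argument. You have in fact been considerably more careful than the paper, which does not address the boundary subtlety you flag for the self-similarity clause (how $T_i(\partial(\cd_\eta''\cap\mathcal{W}))$ relates to the scalar $T(\partial E_i)$ when the boundary is hit at some column $j\ne i$); the paper simply asserts the transfer without comment.
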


All of the above results follow directly from Theorem~\ref{thm:scalar-reg}, Theorem~\ref{thm:reg-small} and Proposition~\ref{prop:general-to-scalar}. 
We remark that, while the above Theorems are presented for initializations in $\mathcal{W}$, chaotic phenomena are observed under generalization initializations. 
Experiments are provided in Appendix~\ref{app:additional}.

\begin{proposition}\label{prop:general-to-scalar}
    Consider gradient descent with step size $\eta$ for problem~\eqref{eq:general-fac} with $d\geq d_y$. 
    Consider the set $\mathcal{W}=\set{(U,V)\in \br^{2d\cdot d_y }\colon\ \inner{u^i}{u^j} = \inner{u^i}{v^j}=\inner{v^i}{v^j}=0,\ \forall i\neq j}$,
    where $u^i,v^i$ denote the $i$th column of matrices $U, V$. 
    The set $\mathcal{W}$ is forward-invariant, i.e., $\GD_\eta(\mathcal{W})\subset \mathcal{W}$. 
    Moreover, if the initialization $(\bar{U}, \bar{V})\in \mathcal{W}$, 
    then for $i=1,\cdots,r$, 
    the trajectory of the columns $(u^i, v^i)$ is identical to the 
    trajectory of gradient descent applied to the scalar factorization problem $L_i(u,v)= \frac{1}{2}(u^\top v-y_i)^2+\frac{\lambda}{2}(\|u\|_2^2+\|v\|_2^2)$, with step size $\eta$ and initialization $(\bar{u}^i, \bar{v}^i)$. 
\end{proposition}

\begin{proof}[Proof of Proposition~\ref{prop:general-to-scalar}]
    Recall the update:
    $$
        U_{t+1} = U_t - \eta V_t(V_t^\top U_t-Y^\top)-\eta \lambda U_t, \quad V_{t+1}=V_t - \eta U_t(U_t^\top V_t-Y)-\eta \lambda V_t. 
    $$
    For $(U_t,V_t)\in \mathcal{W}$, we have that
    \begin{align*}
        U_{t+1} &= U_t - \eta V_tV_t^\top U_t +\eta V_t Y^\top - \eta \lambda U_t\\        
        &= U_t - \eta \sum_{k=1}^{d_y} v_t^k (v_t^k)^\top \cdot U_t +\eta V_t Y^\top -\eta \lambda U_t. 
    \end{align*}
    Therefore, for $j=1,\cdots,d_y$, 
    \begin{align*}
        u_{t+1}^j &= u_t^j - \eta \sum_{k=1}^{d_y} v_t^k (v_t^k)^\top u^j_t + \eta y_j v^j_t - \eta \lambda u_t^j \\
        &= u_t^j - \eta v_t^j (v_t^j)^\top u_t^j + \eta y_j v^j_t - \eta \lambda u_t^j \\
        &= u_t^j - \eta \big( (v_t^j)^\top u_t^j - y_j\big) v^j_t - \eta \lambda u_t^j. 
    \end{align*}
    Therefore, the one-step $u^j$-update aligns with that in scalar factorization problem. 
    Similarly, we can show this holds for $v^j$-update. 
    Now it suffices to verify that $\mathcal{W}$ is forward invariant. 
    Assume $(U_t,V_t)\in \mathcal{W}$. 
    Notice that both $u^j_{t+1}$ and $v^j_{t+1}$ are linear combinations of $u^j_t$ and $v^j_t$. 
    Then it clear that 
    $$
    \inner{u^j_{t+1}}{u^k_{t+1}}= \inner{u^j_{t+1}}{v^k_{t+1}} = \inner{v^j_{t+1}}{v^k_{t+1}}=0
    $$
    whenever $j\neq k$. 
    This completes the proof. 
\end{proof}

The gradient descent update map $\GD_\eta$ is non-invertible in general.
Nevertheless, we show that the parameter space can be partitioned into small pieces, so that when restricted to each piece, $\GD_\eta$ has a simple behavior.

\begin{proof}[Proof of Proposition~\ref{prop:fold}]     
    For notational simplicity, we let $G=\GD_\eta$. 
    Under the assumptions, 
    $G$ is a map with polynomial coordinates and $\det JG$ is a polynomial. 
    Then either $\det J G$ is the zero function or it has a measure-zero zero locus. 
    To reject the first case, it suffices to have that 
    $\det JG(0)=\det(I-\eta \nabla^2L(0) )\neq 0$. 
    Note $\nabla^2 L(0)$ is a fixed positive semi-definite matrix. Then if $\eta$ is not the inverse of one of the eigenvalues of $\nabla^2L(0)$, we have $\det JG(0)\neq 0$. Therefore, for all $\eta>0$ except for finitely many values, $\det JG$ has a measure-zero zero locus. 
    For such $\eta$, $G$ is a non-constant map. 
    By \citet{jelonek2002geometry}, there exists a semi-algebraic, measure-zero set $S\subset \br^{p}$ such that, $G|_{\br^{p}\setminus G^{-1}(S)} \colon \br^{p}\setminus G^{-1}(S) \to \br^{p} \setminus S $ is a proper map. 
    Let $S'=G(\set{\det JG=0})$ denote the set of critical values of $G$. Then $S'$ has measure zero by Sard's theorem and is also semi-algebraic. 
    Since $\det JG$ is non-zero almost everywhere, by~\citet{ponomarev1987submersions}, 
    $\mathcal{K}_\eta=G^{-1}(S)\cup G^{-1}(S')$ is a measure-zero set. 
    Since $\mathcal{K}_\eta$ is semi-algebraic, 
    $\br^{p}\setminus \mathcal{K}_\eta$ has finitely many connected components. Fix a connected component $\mathcal{C}$. 
    For any compact set $K \subset G(\mathcal{C})$, 
    since $K\cap S = \varnothing$, $(G|_{\br^{p}\setminus G^{-1}(S)})^{-1}(K)$ is compact. 
    Meanwhile, since $\partial \mathcal{C} \subset G^{-1}(S)\cup G^{-1}(S')$ and $K \cap (S\cup S')=\varnothing$, 
    $(G|_{\mathcal{C}})^{-1}(K)=(G|_{\br^{p}\setminus G^{-1}(S)})^{-1}(K)\cap \cl(\mathcal{C})$ is compact. 
    Therefore, $G|_{\mathcal{C}}$ is a proper map between connected manifolds that has full-rank Jacobian everywhere. 
    Hence, $G|_{\mathcal{C}}$ is a smooth covering map \citep[see, e.g.,][]{lee2012introduction}. This completes the proof. 
\end{proof}

\section{Experiment details}
\label{app:experiment}

For Figure~\ref{fig:intro} left panel, 
we consider the problem $L(u,v)=(u^\top v-1)^2+0.3(\|u\|_2^2+\|v\|_2^2)$ with $(u,v)\in \br^{10}$. 
We randomly sampled two orthogonal unit vectors in $\br^{10}$. 
Viewing the two vectors as new axes, we evenly sampled $600^2$ initial points in the range $[-4,4]^2$. 
We then ran gradient descent with step size $1$ for $1000$ iterations. 
The training stops if the loss is below $L_{\min} + 10^{-6}$, where $L_{\min}$ is the global minimum or if it is above $100$. 
For Figure~\ref{fig:intro} right panel, we consider $L(u,v)=(uv-1)^2$ with $(u,v)\in \br^{2}$. 
We evenly sampled $800^2$ initial points in the range $[-4.5, 4.5]^2$. 
We ran gradient descent with step size $0.2$ for $6$ iterations and recorded the final squared distances to the two minimizers, $m_1=(1,1)$ and $m_2=(2.9, 1/2.9)$. 
Viewing the final distances as functions of the initial point, 
we used the ``contourf'' function from the Matplotlib package (version $3.5.2$) to draw the sublevel sets of the distances. 
For the minimizer $m_1$, we drew the sublevel set of $[0, 0.15)$ to get the preimage of $\GD^{-6}(B(m_1, \sqrt{0.15}))$. 
For the minimizer $m_2$, we drew the sublevel set of $[0, 0.25)$ to get the preimage of $\GD^{-6}(B(m_2, \sqrt{0.25}))$.

For Figure~\ref{fig:sec31}, we consider $L(u,v)=(uv-1)^2$ with $(u,v)\in \br^{2}$. 
For the left panel, we evenly sampled $800^2$ initial points in the range $[-4.5, 4.5]^2$ and ran gradient descent with step size $0.2$ for $6$ iterations. 
To visualize the basin for unstable minimizers, 
note, as shown in Corollary~\ref{coro:unstable}, 
converging to unstable minimizers can only occur within finitely many steps. 
We therefore recorded the final loss value and used the ``contour'' function from the Matplotlib package (version $3.5.2$) to collect points in the level set of $0$ for the loss. 
Those points correspond to convergence to a global minimizer within $6$ or less steps. 
We then filtered out and visualized points that converge to an unstable global minimizer, i.e., a minimizer with squared norm larger than $2/\eta$ (see Corollary~\ref{coro:unstable}). 

In Figure~\ref{fig:sec31}, to visualize the basin for the saddle $(\boldsymbol{0},\boldsymbol{0})$, 
note, as shown in the proof of Proposition~\ref{prop:convergence-unreg}, this basin can be given by $\cupn F^{-N}(O\cap \set{u=-\sgn(y)v})$ for some neighborhood of $(\boldsymbol{0},\boldsymbol{0})$. 
Then we also recorded the final distance to the set $\set{u=-v}$ and used the ``contour'' function from the Matplotlib package (version $3.5.2$) to collect points in the level set of $0$ for the distance. 
Then we filtered out the points that lie in $\mathcal{D}_\eta'$ (as defined in Theorem~\ref{thm:scalar-unreg}). 
This yield the basin associated with the saddle. 
To justify this procedure, note, as shown in Proposition~\ref{prop:long-term-unreg}, any point outside $\mathcal{D}_\eta'$ either converges to a minimizer within finite steps or diverges. Also note, by the analysis in Lemma~\ref{lem:boundary-inv}, points on $\set{u=-\sgn(y)v}$ either converge to the saddle or diverge. 
For the right panel of Figure~\ref{fig:sec31}, we evenly sampled 
$800^2$ points in $[-0.9, -0.6]\times [-4.55, -4.25]$. 
We ran gradient descent with step size $0.2$ for $250$ iterations. 
The training stops if the loss value is below $10^{-8}$ or above $100$.

For Figure~\ref{fig:sec32}, we consider $L(u,v)=(uv-0.5)^2/2+0.1(u^2+v^2)$ where $(u,v)\in \br^2$. 
For the left panel, we consider the dynamical system defined by $F$ 
(see Proposition~\ref{prop:quotient}) with $\eta=1, \lambda = 0.2$ and $y=1$. 
In the $zw$-space, we evenly sampled $2000^2$ initial points in $[-2.5, 3]\times [0,10]$ and filtered out those in $\set{w\geq 2|z|}$. 
We applied $F^{200}$ to those sampled points and filtered out initial points that lead to loss value below $L_{\min}+10^{-5}$ where $L_{\min}$ is the global minimum of $L$. 
Those points come from the projected convergence region $T(\mathcal{D}_\eta'')$. 
Then we used the ``ndimage.binary\_erosion'' function from the SciPy package (version 1.9.1) to find the boundary of those points. 
The coloring of the boundary is based on the preimage structure of $F$, which is described in Proposition~\ref{prop:preimage-branch}. 
For the middle panel, we evenly sampled $800^2$ initial points in $[-4,4]^2$ and ran gradient descent for $100$ iterations. 
For the right panel, we estimated the box-counting dimension for the boundary points found in the left panel. 
We first normalized these points to fit within $[0,1]^2$. 
We then computed the number of boxes $N(\epsilon)$ needed to cover all the points, with the box width $\varepsilon$ ranging from $1/2^2$ to $1/2^8$. 
We then performed linear regression on $\log N(\varepsilon)$ versus $\log(1/\varepsilon)$.

\section{Additional experiments on matrix factorization}

\label{app:additional}

\paragraph{The folding behavior of GD in scalar factorization} 
In Figure~\ref{fig:toyfold}, we illustrate the folding behavior of the map $\GD_\eta$ in $L(u,v)=(uv-1)^2/2$ with $(u,v)\in\br^2$ and $\eta=0.2$. 
The map $\GD_\eta$ is a $3$-covering map from the light blue region $\mathcal{C}$ in the left panel to the light orange region $\GD_\eta(\mathcal{C})$ in the middle panel. 
Notice that $\mathcal{C}\subset \GD_\eta(\mathcal{C})$ and $\mathcal{C}$ contains the convergence boundary (black ellipsoid). 
The right panel shows that $\GD_\eta$ is transitive on the boundary: the trajectory of an initialization on this boundary appears to wander along it in a seemingly random manner. 
For theoretical justifications of these behaviors, see Proposition~\ref{prop:preimage-branch} and Proposition~\ref{prop:bry-dynamic}.

\begin{figure}[ht]
    \centering
    \includegraphics[width=.83\linewidth]{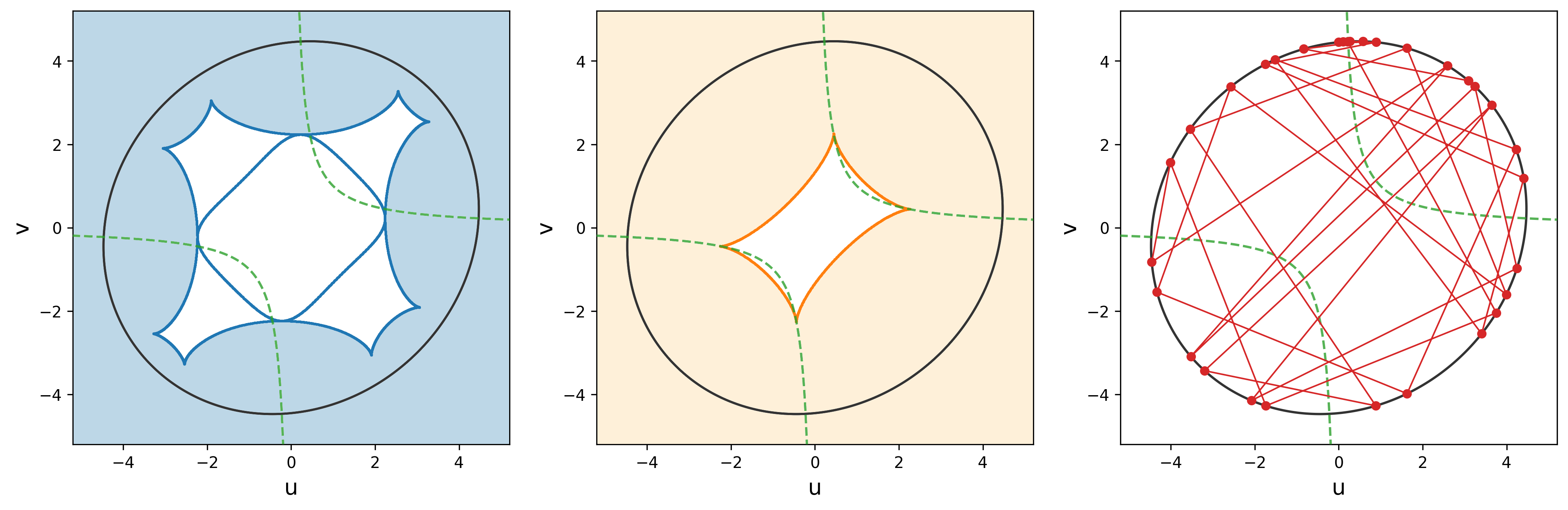}
    \caption{Folding behavior of the map $\GD_\eta$ in $L(u,v)=(uv-1)^2/2$ with $(u,v)\in\br^2$ and $\eta=0.2$. 
    In all panels, the black ellipsoid is the convergence boundary and the green hyperbola the set of global minimizers. 
    The orange line in the middle panel is the set $\mathrm{Crit}$, which consists of all critical values of $\GD_\eta$. 
    The blue lines in the left panel are the set $\GD_\eta^{-1}(\mathrm{Crit})$, i.e., the preimage of critical values. 
    The map $\GD_\eta$ is a $3$-covering map from the light blue region in the left panel to the light orange region in the middle panel. 
    The right panel shows the training trajectory for an initialization on the convergence boundary. 
    }
    \label{fig:toyfold}
\end{figure}

\paragraph{Comparison with \citet{wang2022large}}
Figure~\ref{fig:wang} provides a visual comparison between our Theorem~\ref{thm:scalar-unreg} and \citet[Theorem 3.1]{wang2022large}, for the objective $L(u,v)=(uv-y)^2/2$ with $(u,v)\in \br^2$. 
Recall that the convergence condition provided in the work of \citet{wang2022large} is: 
\begin{equation}\label{eq:app-wang}
    \eta < \eta^*_1(\baru, \barv)= \min\big\{\frac{1}{3|y|},\frac{4}{\|\baru\|^2+\|\barv\|^2+4|y|}\big\}.  
\end{equation}
The left penal of Figure~\ref{fig:wang} shows the case $\eta=1, y=0.3$.
Here, the first condition in \eqref{eq:app-wang} is satisfied. 
As shown, initializations satisfying the second condition $\eta< 4/(u^2+v^2+4|y|)$ form a strict subset of the convergence region $\mathcal{D}_\eta'$ in Theorem~\ref{thm:scalar-unreg}. 
The right panel shows the case $\eta=1, y=0.9$. 
Note that this setting falls outside the analysis of \citet{wang2022large}, as $\eta> 1/(3|y|)$. 
Meanwhile, initializations satisfying $\eta< 4/(u^2+v^2+4|y|)$ form an even smaller subset of $\mathcal{D}_\eta'$.

\begin{figure}[ht]
    \centering
    \includegraphics[width=0.63\linewidth]{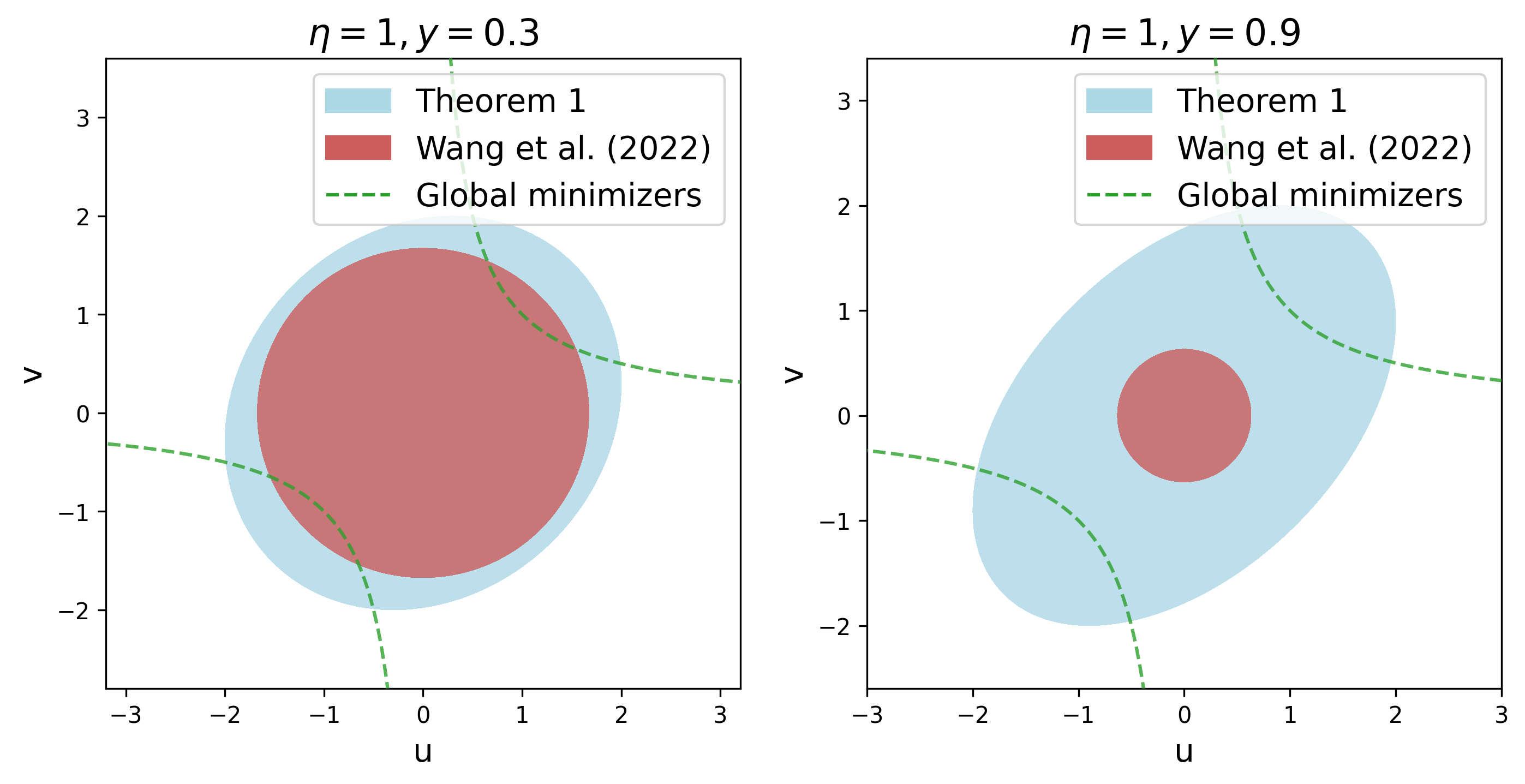}
    \caption{
    Comparison between Theorem~\ref{thm:scalar-unreg} and \citet[Theorem 3.1]{wang2022large}. 
    Gradient descent with step size $\eta=1$ is applied to $L(u,v)=(uv-y)^2/2$ with $(u,v)\in \br^2$. 
    In both panels, the light blue region is the convergence region $\mathcal{D}_\eta'$ described in Theorem~\ref{thm:scalar-unreg}, 
    and the red region is the set of initializations satisfying $\eta< 4/(u^2+v^2+4|y|)$, a conditions required by \citet{wang2022large}. 
    }
    \label{fig:wang}
\end{figure}

\paragraph{How convergence boundary and basin of saddle evolve with $\lambda$}  
In Figure~\ref{fig:bry-lambda}, we illustrate how the convergence boundary and the basin of attraction of the saddle point evolve as the regularization parameter $\lambda$ increases in the scalar factorization problem. 
As shown in the figure, and consistent with our theoretical results, the convergence boundary is smooth (in the almost everywhere sense) when $\lambda=0$. When $\lambda$ is just above zero, the boundary is close to a smooth and bounded set, with the fractal spikes so thin that they are barely visible. 
As $\lambda$ increases, the fractal structure becomes more pronounced, and the spikes gradually get wider. 
Also, the basin of attraction of the saddle does not separate points inside the convergence region from points outside.

\begin{figure}[ht]
    \centering
    \includegraphics[width=1.\linewidth]{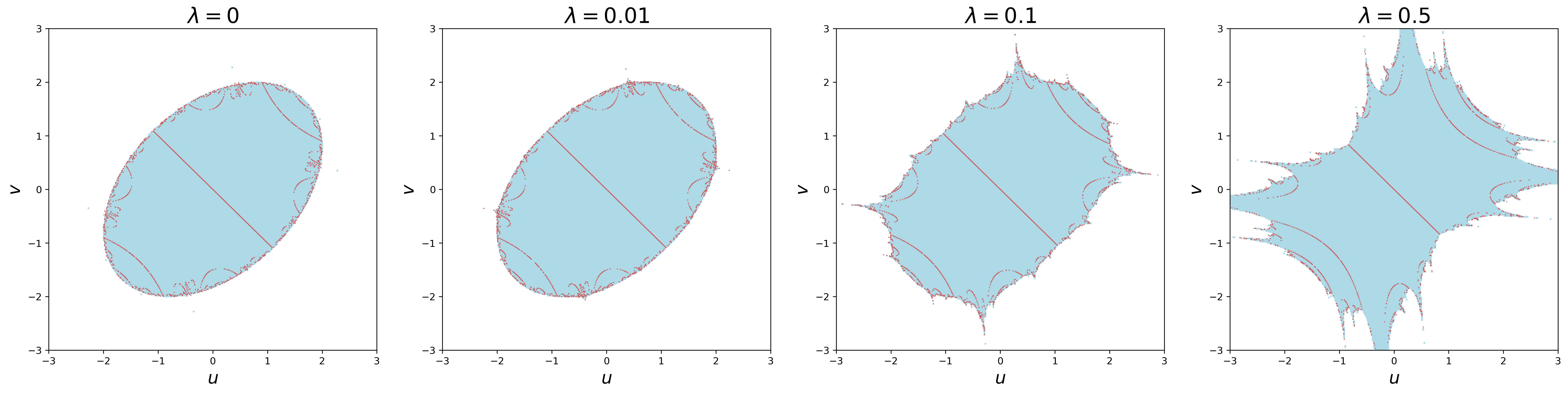}
    \caption{Gradient descent is applied to $L(u,v)=(uv-0.8)^2/2+\frac{\lambda}{2}(u^2+v^2)$, where $u,v\in \br$ and $\lambda\in \set{0, 0.01, 0.1,0.5}$. 
    Blue points represent initializations that converge to a global minimizer; 
    uncolored points represent initializations that do note converge. 
    Red lines represent the basin of attraction of the saddle point $(0,0)$. 
    }
    \label{fig:bry-lambda}
\end{figure}

\paragraph{Unregularized matrix factorization with general initialization} 
In Figure~\ref{fig:unreg-MF}, we ran gradient descent for shallow matrix factorization without regularization under general initialization.  
We observe that,  on a random slice of the parameter space, the convergence boundary is non-smooth, suggesting that a smooth convergence boundary is a special property of the invariant subspace $\mathcal{W}$. 
This also implies that, globally, the critical step size might depend intricately on the initialization. 
However, sensitivity to initialization is common: on all the random slices, the converged minimizer is unpredictable near the convergence boundary. 
This suggests that chaotic dynamics always exists near the global convergence boundary.

\begin{figure}[ht]
    \centering
    \includegraphics[width=1.\linewidth]{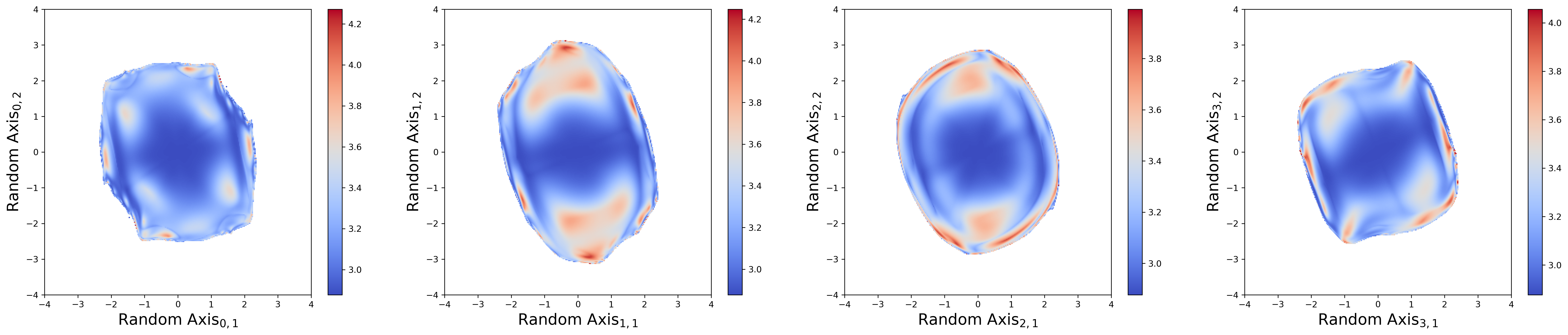}
    Unregularized
    \caption{Gradient descent is applied to $L(U,V)=\|U^\top V-Y\|_F^2/2$, where $U,V\in \br^{5 \times 4}$ and $Y$ is a diagonal matrix whose diagonal elements are randomly sampled from $[0,1]$. 
    Four randomly sampled two-dimensional slices of the parameter space $\br^{40}$ are shown. 
    The points are colored according to the squared Frobenius norm of the converged minimizer; uncolored points represent initializations that do not converge. 
    }
    \label{fig:unreg-MF}
\end{figure}

\paragraph{Regularized matrix factorization with general initialization} 
In Figure~\ref{fig:reg-MF}, we ran gradient descent for shallow matrix factorization with regularization under general initialization. 
We observe that in the random slices of the parameter space, 
the convergence boundary exhibits fractal-like geometry, 
although it appears to be less spiky than the boundary in scalar factorization. 
Also, as shown in the figure, sensitivity to initialization persist under general initialization. 
Together, Figure~\ref{fig:reg-MF} and Figure~\ref{fig:unreg-MF} suggest that chaos and unpredictability are global properties of gradient descent in shallow matrix factorization.

\begin{figure}[ht]
    \centering
    \includegraphics[width=1.\linewidth]{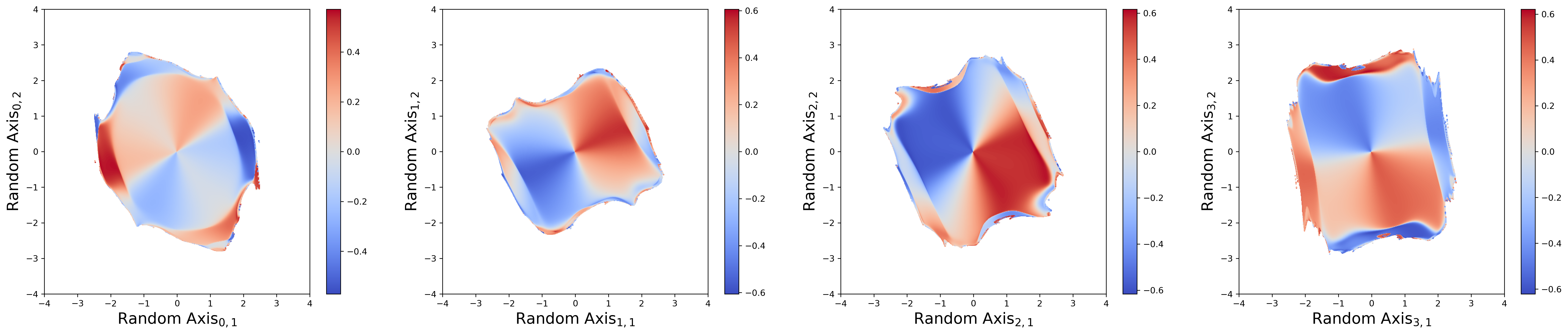}
    Regularized
    \caption{Gradient descent is applied to $L(U,V)=\|U^\top V-Y\|_F^2/2+0.25(\|U\|_F^2+\|V\|_F^2)$ where $U,V\in \br^{5 \times 2}$ and $Y=\operatorname{Diag}(0.9, 0.8)$. 
    Four randomly sampled two-dimensional slices of the parameter space $\br^{20}$ are shown. 
    The points are colored according to the one coordinate value of the converged minimizer; 
    uncolored points represent initializations that do not converge. 
    }
    \label{fig:reg-MF}
\end{figure}

\paragraph{Deep matrix factorization}
In Figure~\ref{fig:DMF}, we ran gradient descent in depth-three matrix factorization under generalization initialization. 
For deep matrix factorization we observe that already for the unregularized problem, the convergence boundary exhibits fractal-like structure, as fine-scale structures emerge. 
We report how the squared norm of the converged minimizer depends on the initialization, for two random slices of the parameter space. 
As shown in the figure, while points near the origin converge to minimizers of small norm, sensitivity to initialization occurs in the vicinity of the boundary. 
For the regularized problem, we observe that not only the convergence boundary has a fractal-like structure, but the convergence region also becomes disconnected, with intricate connected components. 
The disconnectedness can be explained by the emergence of local minimizers, which attracts nearby trajectories, and non-strict saddles, which trap trajectories for long periods before they escape. 
For a detailed discussion of the landscape geometry of regularized deep matrix factorization, see the work of \citet{chen2025complete}. 
Additionally, we observe sensitivity to initialization near the convergence boundary.

\begin{figure}[ht]
    \centering
    \includegraphics[width=1.\linewidth]{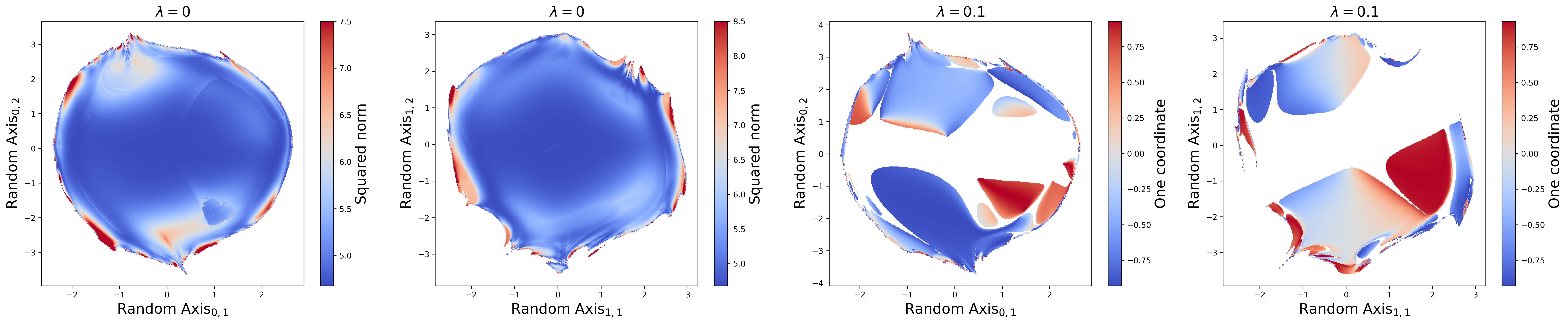}
    Unregularized \hspace{.35\textwidth} Regularized
    \caption{Gradient descent is applied to $L(U,V,W)=\|UVW-Y\|_F^2/2+\frac{\lambda}{2}(\|U\|_F^2+\|V\|_F^2+\|W\|_F^2)$, where $U,V,W\in \br^{2 \times 2}$ and $Y=\operatorname{Diag}(0.9, 0.5)$. 
    The left two panels show two randomly sampled two-dimensional slices of the parameter space $\br^{12}$ for the unregularized problem. 
    Points are colored according to the squared norm of the converged minimizer. 
    The right two panels show the same random slices for the regularized problem. 
    Points are colored according to one coordinate of the converged minimizer. 
    In all panels, uncolored points represent initializations that do not converge to a global minimizer. 
    }
    \label{fig:DMF}
\end{figure}

\section{Additional experiments on real-world data}

\label{app:additional-cifar}

\paragraph{Chaotic regime vs.\ small-step-size regime} 
In Figures~\ref{fig:relumse} and \ref{fig:reluce}, we considered a $2$-class subset of CIFAR-10, each containing $25$ randomly sampled images. 
We trained a neural networks with two hidden layers, 
each with $100$ neurons, for $5000$ epochs. 
Whenever Polyak momentum is used, $\beta$ is set as $0.9$. 
For mean-squared error, the training stops if the training loss is below $10^{-6}$ or is above $10^4$; 
For cross-entropy, the training stops if the training loss is below $10^{-2}$ or is above $10^4$. 
The results for cross entropy are shown in Figure~\ref{fig:reluce}. 
Note when the cross-entropy is employed, the final sharpness is lower than the $2/\eta$ curve, which aligns with the 
observations of \citet{cohen2021gradient}. 
We also clearly observe two distinct step-size regimes, associated with EoS and Chaos, respectively, similar to what we had seen for the mean squared error in Figure~\ref{fig:relumse}. 

\begin{figure}[ht]
    \centering
    \includegraphics[width=1.0\linewidth]{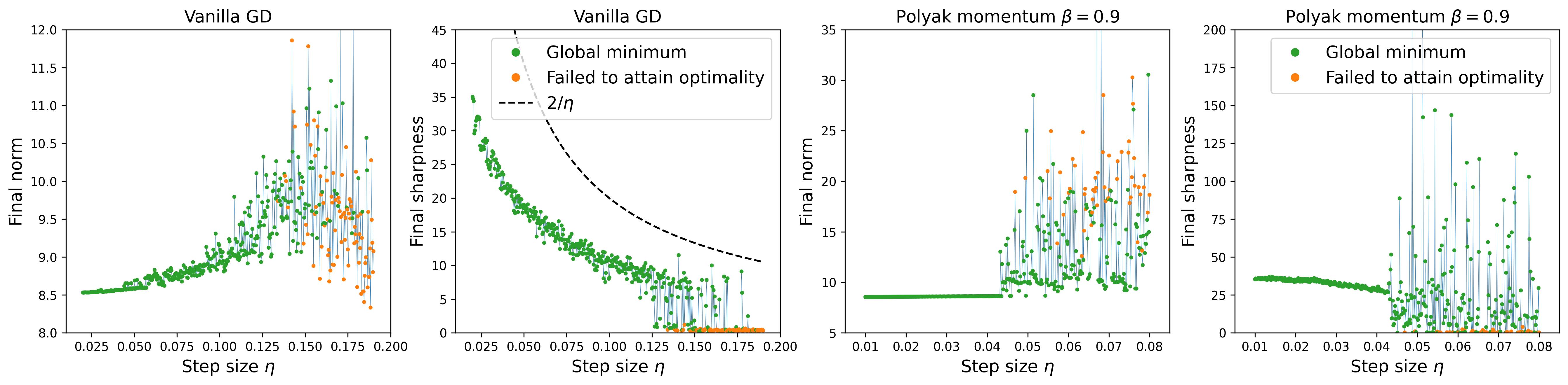}
    \caption{
    GD without or with momentum in training a depth-3 ReLU network on a subset of CIFAR-10 for $5000$ iterations using the cross-entropy loss. 
    The initialization is randomly sampled once and then kept fixed across all panels. 
    At large step sizes, the final norm and final sharpness 
    are highly sensitive to the step size. }
    \label{fig:reluce}
\end{figure}

\paragraph{Fractal structure in parameter space} 
In Figures~\ref{fig:fractal-weightdecay} and \ref{fig:fractal-noweightdecay}, 
we considered a $2$-class subset of CIFAR-10, each containing $25$ randomly sampled images. 
We trained a neural networks with two hidden layers, 
each with $25$ neurons, for $3000$ epochs under mean squared error. 
The weight decay parameter for Figures~\ref{fig:fractal-weightdecay} and \ref{fig:fractal-noweightdecay} is set as $10^{-3}$ and $0$, respectively. 
The step size is set as $\eta = 0.05$. 
We randomly sampled two orthogonal unit vectors in the parameter space and kept them fixed as the coordinate axes for the random slice. 
On that slice, we evenly sampled $300 \times 300$ points from $[32, 42]\times [16,24]$ (with respect to the random axes) and used them as initializations. 
The training stops if the training loss is below $5\times 10^{-4}$ or is above $10^4$. 
Notice that in Figure~\ref{fig:fractal-noweightdecay}, the fractal structures are qualitatively similar to those in Figure~\ref{fig:fractal-weightdecay}.

\begin{figure}
    \centering
    \includegraphics[width=.7\linewidth]{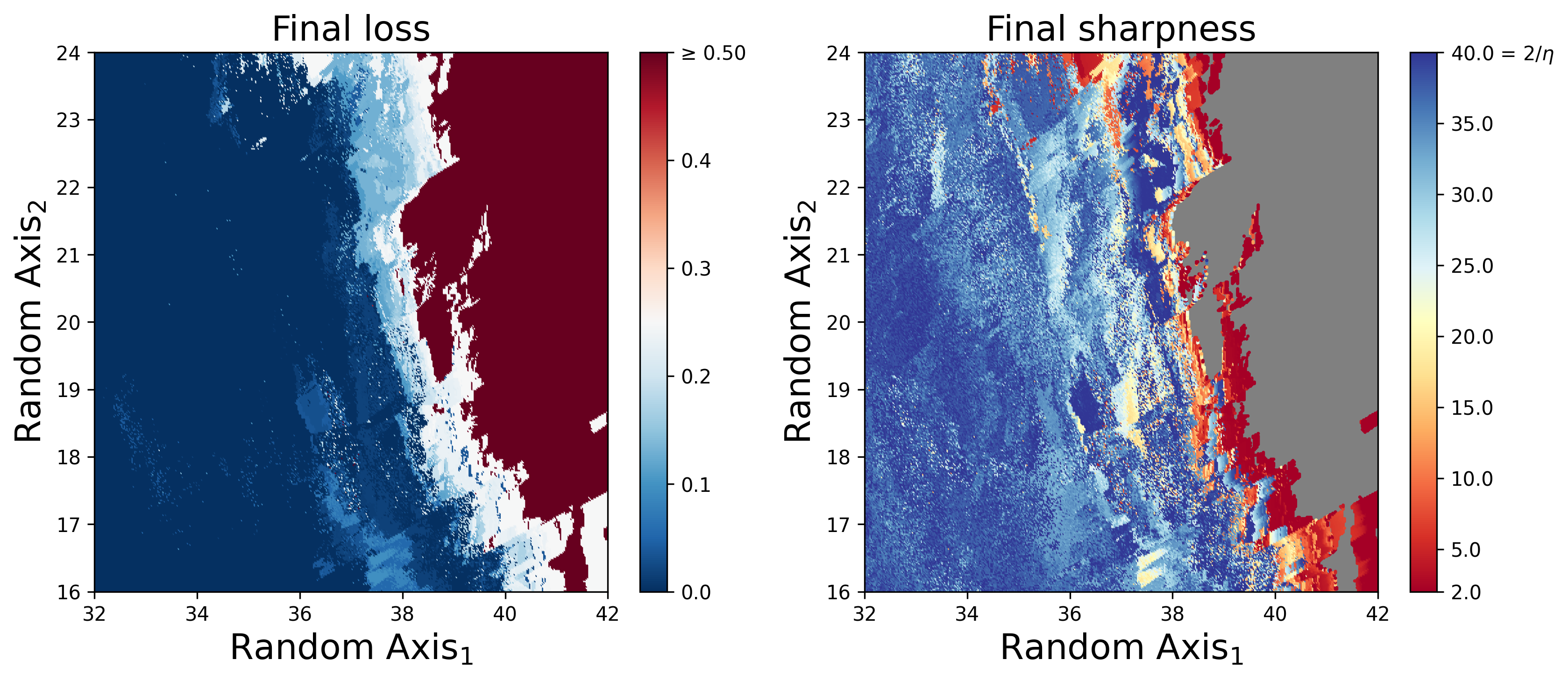}
    \caption{
    GD without weight decay in training a depth-3 ReLU network on a subset of CIFAR-10 for $3000$ iterations. 
    Shown is a random two-dimensional slice of the parameter space. Each point is a parameter initialization, colored according to the final value of the loss (left) and sharpness (right). 
    Gray points are initializations from which the algorithm diverges. 
    }
    \label{fig:fractal-noweightdecay}
\end{figure}

\end{document}